\newcommand{\BlackBox}{\rule{1.5ex}{1.5ex}}  % end of proof
\newenvironment{proof}{\par\noindent{\bf Proof\ }}{\hfill\BlackBox\\[2mm]}
\newtheorem{example}{Example}[section] 
\newtheorem{theorem}{Theorem}[section]
\newtheorem{lemma}[theorem]{Lemma} 
\newtheorem{proposition}[theorem]{Proposition} 
\newtheorem{remark}{Remark}[section]
\newtheorem{corollary}[theorem]{Corollary}
\newtheorem{definition}[theorem]{Definition}
\newcommand{\dino}[1]{   {\bf \color{magenta}{DS: #1}}  }
\newcommand{\moto}[1]{   {\bf \color{blue}{MK: #1}}  }
\newcommand{\ph}[1]{   {\bf \color{green}{PH: #1}}  }
\newcommand{\g}{\mid}
\newcommand{\Cov}{\operatorname{cov}} 
\newcommand{\cov}{\operatorname{cov}} 
\newcommand{\Var}{\operatorname{var}} 
\newcommand{\var}{\operatorname{var}} 
\newcommand{\mean}{\operatorname{mean}} 
\renewcommand{\H}{\mathcal{H}}
\renewcommand{\Re}{\mathbb{R}}
\newcommand{\N}{\mathcal{N}} 
\newcommand{\cT}{\mathcal{T}}
\newcommand{\cA}{\mathcal{A}}
\newcommand{\Trans}{^{\intercal}} 
\newcommand{\eps}{\epsilon}
\newcommand{\argmin}{\operatorname*{arg\:min}}
\newcommand{\argmax}{\operatorname*{arg\:max}}
\newcommand{\ce}{\colonequals}
\newcommand{\q}{\quad}
\newcommand{\qq}{\qquad}
\renewcommand{\vec}{\boldsymbol}
\newcommand{\GP}{\mathcal{GP}}
\newcommand{\GPDC}{\mathrm{GPDC}}
\newcommand{\HSIC}{\mathrm{HSIC}}
\newcommand{\KSD}{\mathrm{KSD}}
\newcommand{\MMD}{\mathrm{MMD}}
\newcommand{\Id}{I}
\newcommand{\superimpose}[2]{%s
 {\ooalign{$#1\@firstoftwo#2$\cr\hfil$#1\@secondoftwo#2$\hfil\cr}}}
\newcommand{\ssw}{\mathsf{w}}
\newcommand{\ssf}{\mathsf{f}}
\newcommand{\ssF}{\mathsf{F}}
\newcommand{\ssg}{\mathsf{g}}
\newcommand{\ff}{\mathfrak{f}}
\newcommand{\ssy}{\mathsf{y}}
\newcommand{\X}{\vec{X}}
\newcommand{\cH}{\mathcal{H}}
\newcommand{\cF}{\mathcal{F}}
\newcommand{\cG}{\mathcal{G}}
\newcommand{\cX}{\mathcal{X}}
\newcommand{\cY}{\mathcal{Y}}
\newcommand{\bE}{\mathbb{E}}
\newcommand{\tX}{\tilde{X}}
\newcommand{\fz}{\mathsf{z}}
\newcommand{\bmg}{{\bm g}}
\newcommand{\bmf}{{\bm f}}
\newcommand{\bs}{{\bm s}}
\newcommand{\bK}{{\bm K}}
\newcommand{\bL}{{\bm L}}
\newcommand{\iid}{\overset{i.i.d.}{\sim}}
\newcommand{\te}{\tilde{e}}
\newcommand{\hmu}{\hat{\mu}}
\newcommand{\cmu}{\check{\mu}}
\newcommand{\red}{\textcolor{red}}
\newcommand{\blue}{\textcolor{blue}}
\newcommand{\hrarrow}{\hookrightarrow}
\tikzset{>=stealth'} 
\tikzstyle{graphnode} = 
\tikzstyle{var}   =[graphnode,fill=white]
\tikzstyle{obs}   =[graphnode,fill=black,text=white]
\tikzstyle{act}   =[rectangle,draw=black,text=white,minimum
\tikzstyle{fac}   =[rectangle,draw=black,fill=black!25,minimum size=5pt]
\tikzstyle{facprior} =[rectangle,draw=black,fill=black,text=white,minimum size=5pt]
\tikzstyle{edge}  =[draw=white,double=black,thick,-]
\tikzstyle{prior} =[rectangle, draw=black, fill=black, minimum size=
\tikzstyle{dirprior} = [circle, draw=black, fill=black, minimum
\DeclareSymbolFont{stmry}{U}{stmry}{m}{n}
\DeclareMathSymbol\leftarrowtriangle\mathrel{stmry}{"5E}
\DeclareMathSymbol\rightarrowtriangle\mathrel{stmry}{"5F}
\DeclareMathSymbol\sslash\mathrel{stmry}{"0C}
\renewcommand{\to}{\operatorname*{\rightarrowtriangle}}
\pgfplotsset{compat=newest}
\pgfplotsset{
 every axis legend/.append style =
   {
     cells = { anchor = east },
     draw  = none
   },
}  
\pgfplotsset{ 
   range frame/.style={
       tick align = outside,
       axis line style={opacity=0},
       after end axis/.code={
           \draw ({rel axis cs:0,0}-|{axis cs:\pgfplots@data@xmin,0}) -- ({rel axis cs:0,0}-|{axis cs:\pgfplots@data@xmax,0});
           \draw ({rel axis cs:0,0}|-{axis cs:0,\pgfplots@data@ymin}) -- ({rel axis cs:0,0}|-{axis cs:0,\pgfplots@data@ymax});
       }
   }
}
\algrenewcommand{\algorithmiccomment}[1]{\hfill {\footnotesize $\sslash$ #1}}
\algrenewcommand\alglinenumber[1]{\scriptsize\texttt{#1}}
\def\therule{\makebox[\algorithmicindent][l]{\hspace*{.5em}\vrule height .75\baselineskip depth .25\baselineskip}}%
\newtoks\therules% Contains rules
\def\appendto#1#2{\expandafter#1\expandafter{\the#1#2}}% Append to token list
\def\gobblefirst#1{% Remove (first) from token list
  #1\expandafter\expandafter\expandafter{\expandafter\@gobble\the#1}}%
\def\LState{\State\unskip\the\therules}% New line-state
\def\pushindent{\appendto\therules\therule}%
\def\popindent{\gobblefirst\therules}%
\def\printindent{\unskip\the\therules}%
\def\printandpush{\printindent\pushindent}%
\def\popandprint{\popindent\printindent}%
\newlength{\figheight}
\newlength{\figwidth}
\begin{document}
\title{Gaussian Processes and Kernel Methods:\\ A Review on Connections and Equivalences}
%\title{Kernels and Gaussian processes -- a bilingual di%ctionary}

\author{Motonobu Kanagawa$^1$, Philipp Hennig$^1$,\medskip{}\\ Dino Sejdinovic$^2$, and Bharath K Sriperumbudur$^3$ \medskip{} \\ \\
$^1$University of Tübingen and Max Planck Institute for Intelligent Systems,\\ Max-Planck-Ring 4, 72076 Tübingen, Germany \\
\url{{motonobu.kanagawa, ph}@tue.mpg.de}
\medskip{} \\
$^2$Department of Statistics, University of Oxford,\\
       24-29 St Giles', Oxford OX1 3LB, UK\\
       \url{dino.sejdinovic@stats.ox.ac.uk} \medskip{} \\
$^3$Department of Statistics, Pennsylvania State University,\\
       University Park, PA 16802, USA \\
       \url{bks18@psu.edu} \\
}

\begin{comment}

\author{\name Motonobu Kanagawa \email motonobu.kanagawa@tue.mpg.de %%\\
%      \addr Max Planck Institute for Intelligent Systems\\
%      Max-Planck-Ring 4, 
%      72076 Tübingen, Germany
      \AND
	 \name Philipp Hennig \email ph@tue.mpg.de \\
      \addr University of Tübingen and Max Planck Institute for Intelligent Systems\\
      Max-Planck-Ring 4, 
      72076 Tübingen, Germany
%      \AND
%      \name Mark van der Wilk \email mv310@cam.ac.uk\\
%      \addr Department of Engineering, University of Cambridge\\
%      Trumpington Street, Cambridge, CB2 1PZ, UK
      \AND
      \name Dino Sejdinovic \email dino.sejdinovic@stats.ox.ac.uk \\
      \addr Department of Statistics, University of Oxford\\
       24-29 St Giles', 
       Oxford OX1 3LB, UK
      \AND
      \name Bharath K Sriperumbudur \email bks18@psu.edu \\
      \addr Department of Statistics, 
      Pennsylvania State University\\
       University Park, PA 16802, USA}

%\editor{}

\end{comment}

\maketitle

\begin{abstract}%   <- trailing '%' for backward compatibility of .sty file
  This paper is an attempt to bridge the conceptual gaps between researchers working on the two widely used  approaches based on positive definite kernels: Bayesian  learning or inference using Gaussian processes on the one side, and frequentist kernel methods based on reproducing kernel Hilbert spaces on the other. 
  It is widely known in machine learning that these two formalisms are closely related; for instance, the estimator of kernel ridge regression is identical to the posterior mean of Gaussian process regression.
  However, they have been studied and developed almost independently by two essentially separate communities, and this makes it difficult to seamlessly transfer results between them.
  Our aim is to overcome this potential difficulty. 
 To this end, we review several old and new results and concepts from either side, and juxtapose algorithmic quantities from each framework to highlight close similarities.
 We also provide discussions on subtle philosophical and theoretical differences between the two approaches.
\end{abstract}

\tableofcontents

\section{Introduction}
\label{sec:introduction}
In machine learning, two nonparametric approaches based on positive definite kernels have been widely used for the purpose of modeling nonlinear functional relationships.
On the one side, there is Bayesian machine learning with Gaussian processes (GP), which models a problem at hand probabilistically and produces a posterior distribution for an unknown function of interest.
On the other, frequentist kernel methods with reproducing kernel Hilbert spaces (RKHS) usually take a decision theoretic approach by defining a loss function and optimizing the empirical risk.
These two approaches have been shown to be practically powerful and theoretically sound, and have found a wide range of practical applications in dealing with nonlinear phenomena.

It is well known that the two approaches are intimately connected.
The most notable example is that, if both use the same kernel, the posterior mean of Gaussian process regression equals the estimator of kernel ridge regression \citep{kimeldorf1970correspondence}.
Another connection is between Bayesian quadrature \citep{Oha91} and kernel herding \citep{CheWelSmo10}, which are in fact equivalent approaches to numerical integration or deterministic sampling \citep{huszar2012optimally}. 
These equivalences arise from the more fundamental connection that the notion of positive definite kernels is leveraged both  in Gaussian processes as covariance functions, and in RKHSs as reproducing kernels. 

There are also less deeply studied connections between the Bayesian and the frequentist approaches.
The posterior variance plays a fundamental role in the Bayesian approach, where it quantifies uncertainty over latent quantities of interest.
As we show in Section \ref{sec:posterior_variance}, posterior variance can be interpreted as a worst case error in an RKHS. This frequentist interpretation is much less widely known, and less well understood. It is rarely mentioned in the literature on frequentist kernel methods, and some of its potential applications therein may have been missed. 

The two approaches also subtly differ in how they define hypothesis spaces, which is a core aspect of statistical methods.
Consider for instance the regression problem, which involves a hypothesis space for the unknown regression function.
The Bayesian approach defines a hypothesis space through a GP prior distribution, treating the true function as a random function.
The support of the GP then expresses the knowledge or belief over the truth, and the probability mass expresses the degree of belief.
On the other hand, the frequentist approach expresses one's prior knowledge or belief by assuming the truth belongs to an RKHS or can be approximated well by functions in the RKHS.
Even though the use of the same kernel leads to similar structural assumptions about the function of interest in both approaches, e.g., periodicity or smoothness, there is a fundamental modeling difference, because the support of a GP is {\em not} identical to the corresponding RKHS (e.g.,~\citet[Corollary 7.1]{LukBed01}; see also Section \ref{sec:driscol-sample-path}).
In fact, sample paths of the GP fall outside the RKHS of the covariance kernel with probability one.
This fact might give researchers an impression 
that differences outweigh the similarities and that the known connections between the Bayesian and frequentist approaches are rather superficial. However, as we show in Sections \ref{sec:theory} and \ref{sec:rates-and-posterior-contraction}, a closer look reveals further deep connections.

This text reviews known, and establishes new, equivalences between the Bayesian and frequentist approaches.
Our aim is to help researchers working in both fields gain mutual understanding, and be able to seamlessly transfer results in either side to the other. In fact there are some quantities that are almost exclusively studied and utilized on one side of the debate, and this may highlight interesting directions for the other community. 
Our second motivation is that, while the connections between the two approaches are found and mentioned individually in papers or books, we are not aware of thorough texts focusing specifically on this topic from a modern perspective.
%\footnote{Our aim is close in spirit to \citet{Berlinet2004}, which mainly reviews classic results by people such as Emanuel Parzen and Grace Wahba. Our text provides a more up-to-date review, and includes results from, e.g., \citet{VarZan11} and \citet{SteHusSco09}.} 
We thus aim to collect a short yet systematic overview of the connections. Finally, we also hope that this text offers a short pedagogical introduction to researchers and students who are new to and interested in either of the two fields.
%None of the statements presented below are meant to paint either side of the discussion as less informed, less general, or even ignorant of certain facts.  Where we state that certain concepts are less known on one side of the debate, we do not mean to imply that they are wholly unknown, or impossible to phrase in that language. 
% To this end, we may make slightly generalizing or limiting characterizations of particular algorithms to highlight similarities. 

\subsection{Contents of the Paper}

The principal results mentioned in the later text can be summarized as follows.

\paragraph{Section \ref{sec:definition}: Gaussian Processes and RKHSs: Preliminaries} 
As a starting point, we review basic definitions and properties of GPs and RKHSs with illustrative examples.
We also provide a characterization of RKHSs based on Fourier transforms of kernels, which helps the reader to understand the structure of RKHSs in terms of smoothness of functions.

\paragraph{Section \ref{sec:regression}: Connections between Gaussian Process and Kernel Ridge Regression}
Regression is arguably one of the most basic and practically important problems in machine learning and statistics.
We consider Gaussian process regression and kernel ridge regression, and discuss equivalences between the two methods.
As mentioned above, it is well known that the posterior mean in GP-regression coincides with the estimator of kernel ridge regression.
We furthermore show that there is a frequentist interpretation for posterior variance in GP-regression, as a worst case error in kernel ridge regression.
In this sense, average-case and worst-case error are equivalent in the least-squares setting, which is key to understanding the connections between the Bayesian and frequentist approaches.

We also discuss the role of additive Gaussian noise in GP-regression and regularization in kernel ridge regression, showing that they are essentially equivalent as a mechanism to make regressors smoother.
We then discuss the noise-free setting, where regression becomes interpolation.
Here the equivalence between the two approaches can be useful: an upper-bound on posterior variance is derived, transferring a result from the frequentist literature on scattered data approximation to the Bayesian setting, as shown in Section \ref{sec:upper-bound-variance}.

%\blue{
\paragraph{Section \ref{sec:theory}: Hypothesis Spaces: Do Gaussian Process Draws Lie in an RKHS?}
We review the properties of GPs and RKHSs as hypothesis spaces, that is, as a way of expressing prior knowledge or belief.
We begin with characterizations of GPs and RKHSs by orthonormal expansions, known respectively as the Mercer representation and the Karhunen-Lo\'eve expansion. 
These characterizations allow us to phrase quantities of interest in terms of eigenvalues and eigenfunctions of an integral operator defined by the kernel.
We then discuss previous results of \citet{driscoll1973reproducing,LukBed01} providing a necessary and sufficient condition for a given GP to be a member of a given RKHS (which can be different from the RKHS associated with the covariance kernel of the GP).
This implies that, while GP sample paths are almost surely outside of the corresponding RKHS, they lie in a function space ``slightly larger'' than the RKHS, which is itself a certain RKHS \citep{SteSco12,Ste17}. In this sense, the Bayesian prior and the frequentist hypothesis space, while not identical, are arguably closer to each other than is often acknowledged.
%}

\paragraph{Section \ref{sec:rates-and-posterior-contraction}: Convergence and Posterior Contraction Rates in Regression}

We compare convergence properties of GP-regression and kernel ridge regression.
Specifically, we show that convergence rates for GP-regression obtained in \citet{VarZan11} can be recovered from those for kernel ridge regression obtained in \cite{SteHusSco09}, considering the situation where a regression function is assumed to have a finite degree of smoothness.
Since the GP prior is supported on a slightly larger space than the RKHS, to recover convergence rates matching that of GP regression, we need to assume that the regression function belongs to this slightly larger space.
Even in this case, one can obtain a convergence rate for kernel ridge regression, thanks to the approximation capability of the RKHS.
That is, the regression function can be approximated well by functions in the RKHS, with the accuracy of approximation  determined by the choice of a regularization constant.
Interestingly, the asymptotically optimal schedule of regularization constants translates to the assumption in GP regression that noise variance remains constant with increasing sample size.
In this sense, a Bayesian assumption of additive noise is related to regularization in the frequentist approach.

\begin{comment}
\paragraph{Section \ref{sec:GKS}: Gaussian Hilbert Spaces}
This section can be read rather independently to the other sections.
It deals with Gaussian Hilbert spaces \citep{Janson1997}, which are Hilbert spaces whose elements are Gaussian random variables. 
While Gaussian Hilbert spaces have been rarely mentioned in the machine learning literature, they are useful in providing a natural way to connect GPs and RKHSs, serving as an interface.
That is, on the one hand, Gaussian Hilbert spaces and RKHSs are isometrically isomorphic, and thus a Gaussian Hilbert space provides an alternative representation for the corresponding RKHS.
On the other hand, the Gaussian Hilbert space naturally provides a definition for the corresponding GP, since it consists of Gaussian random variables.
Gaussian Hilbert spaces also provide geometrical interpretations for operations with GPs.
For instance, prediction and the associated posterior variance can be characterized in terms of projection onto a subspace spanned by data points in a Gaussian Hilbert space.
For these reasons, some connections between GPs and RKHS are illuminated through the theory of Gaussian Hilbert Spaces, making it a worthwhile topic for study in both fields. 
\end{comment}

\paragraph{Section \ref{sec:integral_transforms}: Integral Transforms}

This section deals with somewhat more exotic topics than regression, where connections between the Bayesian and frequentist literature have not been studied as deeply. 
We discuss integral transforms of (probability) measures with kernels, a framework known as {\em kernel mean embeddings of distributions} \citep{SmoGreSonSch07}.
This approach provides a nonparametric way of representing probability distributions, and of measuring a distance between them. The former are called {\em kernel means}, and the latter the {\em maximum mean discrepancy} (MMD).
These have been widely used in machine learning, and interested readers may have a look at the recent survey by \cite{MuaFukSriSch17}.

While the MMD is characterized as the \emph{worst-case} error of integrals with respect to functions in an RKHS, it can also be characterized as the {\em average-case error} of integrals with respect to draws from the corresponding GP \citep[Corollary 7 in p.40]{Rit00}.
This viewpoint provides an alternative way to understand kernel embeddings in the language of Bayesian quadrature for Bayesians who are familiar with GPs but not with RKHSs, and vice versa.
We also review a shrinkage estimator for kernel means proposed by \citet{JMLR:v17:14-195} and the corresponding GP-based Bayesian interpretation \citep{FlaSejCunFil2016}.
We then discuss the problem of sampling or numerical integration, for which GPs and kernel methods have also played fundamental roles in the form of integral transforms \citep{Oha91,Hic98,BriOatGirOsbSej15,DicKuoSlo13}.

\begin{comment}
prior assumption on the difference between two distributions: prior belief on the witness function, while this does not employ Bayes' rule, it makes use of one's prior knowledge or belief on the difference between distributions.

the purpose, show that GP-based or Bayesian views can be introduced, place where the kernel or frequentist view is dominant, modeling, understanding, Bayesian test (Berger 1985), show that there are potentials for Bayesians to contribute
\end{comment}

Finally, we study the connections between GPs and kernel methods as applied to the problem of measuring dependence between random variables.
We consider the Hilbert-Schmidt independence criterion (HSIC), a kernel-based measure for dependency between two random variables \citep{GreBouSmoSch05}, which has a wide range of applications including independence testing, variable selection and causal discovery.
HSIC is defined in terms of RKHSs via the cross-covariance operator or via the joint kernel embedding of two random variables; this definition makes HSIC difficult to interpret without close familiarity with RKHSs. 
We give an alternative formulation of HSIC in terms of GPs, which is to the best of our knowledge novel, and recovers Brownian distance covariance proposed by \citet{SzeRiz09}.
%, known to be a special case of HSIC \citep{SejSriGreFuk13}. 
We believe this result provides an intuitive explanation for people whose backgrounds are from GPs about why HSIC is a sensible dependency measure.

\paragraph{Related Literature}
%\blue{
We collect here a few key related literature on GPs and kernel methods that may be helpful for further reading.
%We do not claim that the list is complete and extensive, but hope that it serves as a guideline to the reader for further reading.
Our aim is the closest in spirit to \citet{Berlinet2004}, who collected classic results on the use of RKHSs in statistics and probability; these include the results by \citet{kolmogorov1941interpolation,Par61,matheron1962traite,kimeldorf1970correspondence,larkin1972gaussian}, who made the earliest contributions to the field.
Other related books and monographs include \citet{wahba1990spline,Ad90,Janson1997,Ste99,Rit00,scholkopf2002learning,Wen05,SchWen06,RasmussenWilliams,AdlJon07,Steinwart2008,VarZan08,NovWoz08,NovWoz10,Stu10,SchSchSch13,HenOsbGirRSPA2015,MuaFukSriSch17}.

\subsection{Notation and %Basic Mathematical 
Definitions} % (fold)
\label{sec:notation}
We collect the notation and definitions that will be used throughout the paper.
\paragraph{Basics}
For a matrix (or a vector) $M$, its transpose is denoted by $M\Trans$.
Let $\mathbb{N}$ be the set of natural numbers, $\mathbb{N}_0 := \mathbb{N} \cup \left\{ 0 \right\}$ and $\mathbb{N}_0^d$ be the $d$-dimensional Cartesian product of $\mathbb{N}_0$ with $d \in \mathbb{N}$.
For a multi-index $\alpha := (\alpha_1,\dots,\alpha)\Trans \in \mathbb{N}_0^d$, define $| \alpha | := \sum_{i=1}^d \alpha_i$.
$\Re$ denotes the real line,  $\Re^d$ the $d$-dimensional Euclidean space for $d \in \mathbb{N}$, and $\| \cdot \|$ the Euclidean norm.
For $\alpha \in \mathbb{N}_0^d$ and a function $f$ defined on $\Re^d$, let $\partial^\alpha f$ and $D^\alpha f$ be the $\alpha$-th partial derivative and the $\alpha$-th partial weak derivative, respectively.
For $i,j \in \mathbb{N}$, we define $\delta_{ij} \in \{0, 1\}$ as $\delta_{ij} = 1$ if $i = j$, and $\delta_{ij} = 0$ otherwise.

\paragraph{Probability}
For a random variable $x$ and a probability distribution $P$, writing $x \sim P$ means that $x$ has distribution $P$.
$\bE[\cdot]$ denotes the expectation of the argument in the bracket, with respect to a random variable concerned.
Depending on the context, we may write $\bE_{x}[\cdot]$ or $\bE_{x \sim P}[\cdot]$ to make the random variable and the distribution explicit.

\paragraph{Matrices}
Throughout, $\cX$ will denote a set of interest. 
Given two subsets $A\ce (a_1,\dots,a_M)$ and $B\ce(b_1,\dots,b_N)$ of $\cX$, $k_{AB}\in\Re^{M\times N}$ denotes the matrix with elements $[k_{AB}]_{ij} = k(a_i,b_j)$. For a real-valued function $f:\cX\to\Re$, $f_A\in\Re^{M}$ denotes the vector with elements $[f_A]_i= f(a_i)$. 
%The determinant of a square matrix $M$ is $|M|$.

\paragraph{Function spaces}
For a topological space $\cX$, let $C(\cX)$ denote a set of continuous functions.
For a measurable space $\cX$, a measure $\nu$ on $\cX$ and a constant $p > 0$, let $L_p(\nu)$ be the Banach space of ($\nu$-a.e.~equivalent classes of) $p$-integrable functions with respect to $\nu$:
\begin{equation} \label{eq:lp-space}
L_p(\nu) := \left\{f:\cX \to \Re: \| f \|_{L_p(\nu)}^p := \int |f(x)|^p d\nu(x) < \infty \right\}.
\end{equation}
Denote by $L_\nu(\cX) := L_2(\nu)$ the one when $\cX \subset \Re^d$ and $\nu$ is the Lebesgue measure. 
%\blue{
For $f \in L_1(\Re^d)$, we define its Fourier transform by
$$
\cF[f](\omega) := \frac{1}{(2\pi)^{d/2}}\int f(x) e^{ - \sqrt{-1}\ x\Trans \omega } dx, \quad \omega \in \Re^d. 
$$
%}
\begin{comment}
and its inverse Fourier transform by
$$
\cF^{-1}[f](\omega) :=  \frac{1}{(2\pi)^{d/2}} \int f(x) e^{  \sqrt{-1}\ x\Trans \omega } dx, \quad \omega \in \Re^d.
$$
\end{comment}

%\dino{this notation is a bit unusual in that the uppercases usually denote random variables in this context} 

%\section{Definitions and Basic Properties of Gaussian Processes and RKHSs}
\section{Gaussian Processes and RKHSs: Preliminaries}\label{sec:definition}

We re-state standard definitions for Gaussian processes (GPs) and RKHSs, reviewing basic properties. Section \ref{sec:pd-kernel} defines positive definite kernels, Sections \ref{sec:GP-intro} and \ref{sec:RKHS-def} introduce GPs and RKHSs, respectively. Detailed characterizations of GPs and RKHSs can be found in Section \ref{sec:theory}.

\subsection{Positive Definite Kernels} \label{sec:pd-kernel}
{\em Positive definite kernels} play a key role in both Gaussian processes and RKHSs. 

\begin{definition}[Positive definite kernels] \label{def:pd-kernel}
Let $\cX$ be a nonempty set.
A symmetric function $k:\cX\times\cX \to \Re$ is called a positive definite kernel, if for any $n \in \mathbb{N}$, $(c_1,\ldots,c_n)\subset\Re$ and $(x_1,\dots,x_n) \subset \cX$,
$$
\sum_{i=1}^n \sum_{j=1}^n  c_i c_j k(x_i,x_j) \geq 0.
%\quad \forall c_1,\dots,c_n \in \Re.
$$
\end{definition}

\begin{remark}\rm
Definition \ref{def:pd-kernel} can be equivalently stated thus:
A symmetric function $k$ is positive definite if the matrix $k_{XX} \in\Re^{n\times n}$ with elements $[k_{XX}]_{ij}=k(x_i,x_j)$ is positive semidefinite for any finite set $X \ce (x_1,\dots,x_n) \in \cX^n$ of any size $n \in \mathbb{N}$.
\end{remark}

In the remainder, for simplicity, \emph{kernel} always means \emph{positive definite kernel}.
For $X = (x_1,\dots,x) \in \cX^n$, the matrix $k_{XX}$ is the \emph{kernel matrix} or \emph{Gram matrix}.

\begin{example}[Gaussian RBF/Square-Exponential Kernels] \label{ex:square-exp-kernel}
Let $\cX \subset \Re^d$.
For $\gamma > 0$, a Gaussian RBF kernel or a square exponential kernel $k_\gamma : \cX \times \cX \to \Re$ is defined by
\begin{equation} \label{eq:square-exp-kernel}
k_\gamma(x,x') = \exp\left( - \frac{ \| x - x' \|^2 } {\gamma^2 } \right), \quad x,x' \in \cX.
\end{equation}
\end{example}

In the Gaussian processes literature, to avoid confusion about the term ``Gaussian'', the kernel (\ref{eq:square-exp-kernel}) is often referred to as the \emph{square-exponential} kernel,\footnote{Sometimes it is also called ``square{\em d} exponential'' or ``exponentiated quadratic.''} while in the kernel literature it is called Gaussian, or Gaussian \emph{radial basis function} (RBF) kernel. 
%(nb.~not square\emph{d})
The parameter $\gamma$ determines the length-scale of the associated hypothesis space of functions: 
%\citep[Section 2]{ElbSte13}: 
As $\gamma$ increases, the kernel \eqref{eq:square-exp-kernel} and induced functions change less rapidly, and thus get ``smoother'' (while they are always infinite differentiable). See Section \ref{sec:theory} for details.

Another popular kernel is the Mat\'ern class of functions \citep{Mat60}, which is a standard in the spatial statistics literature \citep[Sections 2.7, 2.10]{Ste99}: In fact, \citet[Sec. 1.7]{Ste99} said ``Use the Mat\'ern model'' as a summary of practical suggestions for modeling spatial data.

\begin{example}[Mat\'ern kernels] \label{ex:matern-kernel}
Let $\cX \subset \Re^d$. 
For constants $\alpha > 0$ and $h > 0$, the Mat\'ern kernel $k_{\alpha,h}:\cX \times \cX \to \Re$ is defined by
\begin{equation} \label{eq:matern-kernel}
k_{\alpha, h}(x,x') = \frac{1}{2^{\alpha-1} \Gamma(\alpha)} \left( \frac{\sqrt{2\alpha} \| x - x' \|}{h} \right)^\alpha K_\alpha \left( \frac{ \sqrt{2\alpha} \| x - x' \| }{h} \right), \quad x,x' \in \cX,
\end{equation}
where $\Gamma$ is the gamma function, and $K_\alpha$ is the modified Bessel function of the second kind of order $\alpha$,
\end{example}

\begin{remark}\rm
If $\alpha$ can be written as $\alpha = m + 1/2$ for a non-negative integer $m$, then expression (\ref{eq:matern-kernel}) reduces to a product of the exponential function and a polynomial of degree $m$, which can be computed easily \citep[Section 4.2.1 and Eq.~4.16]{RasmussenWilliams}:
$$
k_{\alpha, h} (x,x') = \exp\left( - \frac{ \sqrt{2\alpha} \| x - x' \| }{h} \right) \frac{\Gamma(m + 1)}{\Gamma(2m + 1)} \sum_{i=1}^m \frac{ (m+1)! }{ i! (m-1)! } \left( \frac{ \sqrt{8\alpha} \| x - x' \| }{ h } \right)^{m-i} .
$$
For instance, if $\alpha = 1/2$, $\alpha = 3/2$ or $\alpha = 5/2$, the corresponding Mat\'ern kernels are 
\begin{eqnarray}
k_{1/2,h}(x,x') &=& \exp\left( - \frac{ \| x - x' \| }{h} \right), \label{eq:Laplace-kernel} \\
k_{3/2,h}(x,x') &=& \left( 1 + \frac{\sqrt{3} \| x - x' \| }{ h } \right) \exp\left( - \frac{ \sqrt{3} \| x - x' \| }{h} \right), \nonumber \\
k_{5/2,h}(x,x') &=& \left( 1 + \frac{\sqrt{5} \| x - x' \| }{ h } + \frac{ 5 \| x - x' \|^2 }{ 3h^2 } \right) \exp\left( - \frac{ \sqrt{5} \| x - x' \| }{h} \right). \nonumber
\end{eqnarray}
In particular, \eqref{eq:Laplace-kernel} is known as the Laplace or exponential kernel.
\end{remark}

%Mat\'ern kernels have been popular not only in machine learning, but also in other fields such as geostatistics.
In the expression \eqref{eq:matern-kernel}, the parameter $h$ determines the scale, and $\alpha$ the {\em smoothness} of functions in the associated hypothesis class: as $\alpha$ increases, the induced functions get smoother.
Mat\'ern kernels are appropriate when dealing with ``reasonably smooth'' (but not very smooth) functions, as the functions in the induced hypothesis class have a finite degree of smoothness \citep[Section 6.5]{Ste99}; this is in contrast to a square-exponential kernel, which induces functions with infinite smoothness (i.e., infinite differentiable functions).

%This fact is mentioned in \citet[p.~85]{RasmussenWilliams}, but they did not include an explicit expression for the limiting square-exponential kernel, thus we provide it here for completeness.  
%The proof can be found in Appendix \ref{sec:proof-matern-gauss}.

\begin{remark}\rm \label{rem:matern-gauss}
Square-exponential kernels in Example \ref{ex:square-exp-kernel} can be obtained as limits of Mat\'ern kernels for $\alpha \to \infty$ \citep[p.~50]{Ste99}.
That is, for a Mat\'ern kernel $k_{\alpha,h}$ with $h > 0$ being fixed, we have  
$$
\lim_{\alpha \to \infty}k_{\alpha,h}(x,x') =  \exp \left(- \frac{\| x - x'\|^2}{2 h^2}  \right), \quad x, x' \in \mathbb{R}^d.
$$
\end{remark}

Our last example here is polynomial kernels \citep[Lemma 4.7]{Steinwart2008}, which induce hypothesis spaces consisting of polynomials.
This class of kernels have been popular in machine learning.
\begin{example}[Polynomial kernels]
Let $\cX \subset \Re^d$.
For $c > 0$ and $m \in \mathbb{N}$, the polynomial kernel $k_{m,c}: \cX \times \cX \to \Re$ is defined by
$$
k_{m,c}(x,x') = (x\Trans x' + c)^m, \quad x,x' \in \cX.
$$
\end{example}

While we have reviewed here only kernels defined on $\cX \subset \Re^d$, there are also various kernels defined on non-Euclidian spaces, such as sequences, graphs and distributions; see e.g.~\cite{scholkopf2002learning,SchTsuVer04,HofSchSmo08} and \citet[Section 4.2]{RasmussenWilliams}.
In fact, as Definition \ref{def:pd-kernel} indicates, positive definite kernels can be defined on any nonempty set.

%\begin{example}[Splines]
%Integrated Wiener processes
%\end{example}

\subsection{Gaussian Processes} \label{sec:GP-intro}
%\dino{my attempt at a slightly different definition, which may be more useful for some later comparisons}
For Gaussian processes, positive definite kernels serve as {\em covariance functions} of random function values, so they are also called {\em covariance kernels}.
The following definition is taken from \citet[p.~443]{Dud02}.
\begin{definition}[Gaussian processes] \label{def:GP}
Let $\cX$ be a nonempty set, $k\colon\cX\times\cX\to\Re$ be a positive definite kernel and $m:\cX\to\Re$ be any real-valued function. 
Then a random function $\ssf:\cX \to \Re$ is said to be a Gaussian process (GP) with mean function $m$ and covariance kernel $k$, denoted by $\GP(m,k)$, if the following holds: For any finite set $X = (x_{1},\ldots,x_{n}) \subset \cX$ of any size $n \in \mathbb{N}$, the random vector $$\ssf_X = (\ssf(x_1),\dots,\ssf(x_n))\Trans \in \Re^n$$ follows the multivariate normal distribution $\N(
m_X,k_{XX})$ with covariance matrix $k_{XX} = (k(x_i,x_j))_{i,j = 1}^n \in \Re^{n \times n}$ and mean vector $m_X = (m(x_1),\dots,m(x_n))\Trans$.
\end{definition}

\begin{comment}
In other words, the density function of $\ssf_X$ is given by
\begin{equation*}
p(\ssf_X; m_X,k_{XX})  = \frac{1}{(2\pi)^{n/2} |k_{XX}|^{1/2}} \exp\left((\ssf_X - m_X)\Trans k_{XX} ^{-1} (\ssf_X - m_X) \right).
\end{equation*}
\end{comment}

\begin{remark}\rm \label{rem:kernel-defines-GP}
Definition \ref{def:GP} implies that if $\ssf$ is a Gaussian process then there exists a mean function $m:\cX \to \Re$ and a covariance kernel $k:\cX \times \cX \to \Re$.
On the other hand, it is also true that for {\em any} positive definite kernel $k$ and mean function $m$, there exists a corresponding Gaussian process $\ssf \sim \GP(m,k)$ \citep[Theorem 12.1.3]{Dud02}.
There is thus a one-to-one correspondence between Gaussian processes $\ssf \sim \GP(m,k)$ and pairs $(m, k)$ of mean function $m$ and positive definite kernel $k$.
\end{remark}

\begin{remark}\rm
Since $k$ is the covariance function of a Gaussian process, by definition it can be written as 
\begin{equation} \label{eq:kerenl-gp-expect}
k(x,y) = \bE_{\ssf \sim \GP(m,k)} \left[ ( \ssf(x) - m(x) ) ( \ssf(y) - m(y) ) \right], \quad x,y \in \cX,
\end{equation}
where the expectation is with respect to the random function $\ssf \sim \GP(m,k)$.
This important property will be used extensively throughout this text.
\end{remark}

\begin{remark}\rm
In Definition \ref{def:pd-kernel}, the kernel matrix $k_{XX}$ may be singular: For instance when the kernel $k$ is a polynomial kernel or when some of the points $x_1,\dots,x_n$ are identical. 
Even in this case, the normal distribution $\mathcal{N}(m_X,k_{XX})$ is well-defined (and thus Definition \ref{def:pd-kernel} makes sense), while it does not have a density function with respect to the Lebesgue measure; see \citet[Theorem 9.5.7]{Dud02}.
\end{remark}

As mentioned in Remark \ref{rem:kernel-defines-GP}, the use of a specific kernel $k$ and a mean function $m$ implicitly leads to the use of the corresponding $\GP(k,m)$.
Therefore it is practically important to understand the properties of $\GP(k,m)$ (such as smoothness) that are induced by the specification of $k$ and $m$.
For example, if $k$ is a square-exponential kernel on an open set $\cX \subset \Re^d$, then a sample path $\ssf \sim \GP(0,k)$ is infinitely continuously differentiable, i.e., $\ssf$ is very smooth.
In Section \ref{sec:theory}, we will provide other examples as well as various characterizations for Gaussian processes.

\begin{comment}

\begin{example}[\blue{Gaussian white noise process}]
\label{ex:white-noise}
Let $\cX$ be any nonempty set.
$$
\bE[\eta (x) \eta(x')] = \delta(x,x')
$$
where $\delta: \cX \times \cX \to \Re$ is a covariance kernel defined as 
$$
\delta(x,x') = 
\begin{cases}
1 \quad {\rm if}\ x = x' \\
0 \quad {\rm if}\ x \not= x'.
\end{cases}
$$
In other words, there is no correlation between function values at different locations.
\end{example}

\begin{remark}\rm
Distinction to the white noise process, whose covariance is given as a generalized function $k(x,x') = \delta(x-x')$, where $\delta$ denotes Dirac's delta function, not Kronecker's.
\end{remark}
\end{comment}

%\blue{
For most practitioners, Gaussian process models manifest themselves in practice much as in their definition, through their finite-dimensional restriction to a concrete set of evaluation nodes; for example a plotting grid. In this case, Gaussian process models are actually very concrete models that are easy to handle on a computer. This fact is sometimes lost in theoretical texts, so we stress it in the following example.
\begin{example}[GP restricted to finite discrete domain]
Let $m:\cX \to\Re$ and $k:\cX \times \cX\to\Re$ be a mean and covariance function(kernel), respectively---the arguably most prominent choices are $m(x)\equiv 0$ and $k(x,x') = \exp(-(x-x')^2 / 2)$. Given a \emph{finite} set of representer points $X=(x_1,\dots,x_n)\subset\cX$, the Algorithm \ref{alg:GP_draws} produces a valid draw from the function $f\sim\GP(k,m)$, evaluated at the locations $[f(x_1),\dots,f(x_N)]$. 
For example, this is how the green draws in Figures \ref{fig:GP_intro} and \ref{fig:RKHS_v_GP} were produced.
\end{example}
%}

\begin{algorithm}
\begin{algorithmic}[1]
\Procedure{GPsample}{$k,m,X$}
\LState \raisebox{0pt}[-1pt][-1pt]{$\phantom{[R,y]}\mathllap{m_X} = [m(x_i)]_{i=1,\dots,n}\in\Re^n$} \Comment{build mean vector}
\LState \raisebox{0pt}[-1pt][-1pt]{$\phantom{[R,y]}\mathllap{k_{XX}} = [k(x_i,x_j)]_{i,j=1,\dots,n}\in\Re^{n\times n}$} \Comment{build covariance matrix}
\LState \raisebox{0pt}[-1pt][-1pt]{$\phantom{[R,y]}\mathllap{R} = \Call{Cholesky}{k_{XX}}$} \Comment{compute Cholesky decomposition $k_{XX}=R\Trans R$}
\LState \raisebox{0pt}[-1pt][-1pt]{$\phantom{[R,y]}\mathllap{u} = \Call{randn}{n,1}$} \Comment{draw $u\sim\N(0,\Id_n)$.}
\LState \raisebox{0pt}[-1pt][-1pt]{$\phantom{[R,y]}\mathllap{f_X} = R\Trans u + m_X$} \Comment{affine transformation of $u$ gives sample from GP}
\EndProcedure
\end{algorithmic}
\caption{Concrete algorithm producing independent samples from $\GP(k,m)$ on the finite domain $X\in\cX^n$. }
\label{alg:GP_draws}
\end{algorithm}

%\blue{
The following more abstract example, taken from  \citet{LinRueLin11}, explains that Gaussian processes of Mat\'ern kernels are given as solutions of certain stochastic partial differential equations (SPDE).
This was first shown by \citet[Section 9]{Whi54} for the special case of $d = 2$; see \citet{LinRueLin11} and references therein for further details.
\begin{example}[GPs of Mat\'ern kernels] \label{ex:matern-spde}
Let $k_{\alpha,h}$ be a Mat\'ern kernel in Example \ref{ex:matern-kernel} with parameters $\alpha > 0$ and $h > 0$ defined on $\mathcal{X} = \Re$.
Then the corresponding Gaussian process $\ssf \sim \GP(0,k_{\alpha, h})$ is the only stationary solution to the following SPDE
$$
\left( \frac{2\alpha}{h^2} - \Delta \right)^{ (\alpha + d/2) /2} \ssf (x) = \ssw(x), \quad x \in \mathbb{R}^d,
$$
where $\Delta := \sum_{i=1}^d \frac{\partial^2}{\partial x_i^2}$ is the Laplacian and $\eta$ is the Gaussian white noise process with unit variance.  
\end{example}
%}

\begin{comment}
%
\begin{remark}
\red{Mention implicit boundary implicit conditions...
smoothness of the process...
For $d = 1$ and $\alpha = \frac{1}{2}$, the solution is known as the Ornstein–Uhlenbeck process.}
\end{remark}

\begin{example}[Wiener processes]
$\cX = [0,1]$.
Let $\eta$ be the white noise process defined in Example \ref{ex:white-noise}.
$$
\ssf(x) = \int_0^1 \frac{ (x-u)_+^m }{m!} \eta(u) du
$$
non-stationary \textcolor{red}{Incomplete!!}
\end{example}

\end{comment}

%The \emph{Gaussian process} (GP) $\GP(m,k)$ is the stochastic process producing random functions $\ssf:\cX \to \Re$ such that for any finite set $X=[x_1,\dots,x_N]$, the random numbers $\ssf_X$ are jointly Gaussian distributed with mean $m_X$ and covariance $k_{XX}$. That is, they have the density

\subsection{Reproducing Kernel Hilbert Spaces} \label{sec:RKHS-def}

Reproducing kernel Hilbert spaces are defined as follows, where positive definite kernels serve as reproducing kernels.

\begin{definition}[RKHS] \label{def:RKHS}
Let $\cX$ be a nonempty set and $k$ be a positive definite kernel on $\cX$.
A Hilbert space $\cH_k$ of functions on $\cX$ equipped with an inner-product $\left< \cdot,\cdot\right>_{\cH_k}$ is called a reproducing kernel Hilbert space (RKHS) with reproducing kernel $k$, if the following are satisfied:
\begin{enumerate}
 \item For all $x\in\cX$, we have $k(\cdot,x) \in \cH_k$;
 \item For all $x \in \cX$ and for all $f\in\cH_k$,
 $$f(x) = \langle f,k(\cdot,x)\rangle_{\cH_k} \quad {\rm (Reproducing\ property)}. $$
\end{enumerate}
\end{definition}

\begin{remark}\rm
In Definition \ref{def:RKHS}, $k(\cdot,x)$ with $x$ being fixed is a real-valued function such that $y \mapsto k(y,x)$ for $y \in \cX$.
$k(\cdot,x)$ is called the canonical feature map of $x$, since $k$ can be written as an inner-product in the RKHS as
\begin{equation*} \label{eq:kernel-inner-prod}
k(x,y) = \left<k(\cdot,x), k(\cdot,y)\right>_{\cH_k}, \quad x,y \in \cX,
\end{equation*}
which follows from the reproducing property.
Therefore $k(\cdot,x)$ is a (possibly infinite dimensional) feature representation of $x$.
\end{remark}

\begin{remark}\rm
By the Moore-Aronszajn theorem \citep{Aronszajn1950}, for every positive definite kernel $k$, there exists a unique RKHS $\cH_k$ for which $k$ is the reproducing kernel.
In this sense, RKHSs and positive definite kernels are one-to-one: for each kernel $k$ there exists a uniquely associated RKHS $\cH_k$, and vice versa.
\end{remark}

Given a positive definite kernel $k$, its RKHS $\cH_k$ can be constructed as follows. 
Let $\cH_0$ be the linear span of feature vectors, that is, each function in $\cH_0$ can be expressed as a finite linear combination of feature vectors:
$$
\cH_0 := {\rm span}\left\{ k(\cdot,x) : x \in \cX \right\} = \left\{ f = \sum_{i=1}^n c_i k(\cdot,x_i) : n \in \mathbb{N},\ c_1,\dots,c_n \in \Re,\ x_1,\dots,x_n \in \cX  \right\}.
 $$
One can make $\cH_0$ a pre-Hilbert space, by defining an inner-product as follows: For any 
$f := \sum_{i=1}^n a_i k(\cdot,x_i) \in \cH_0$ and $g := \sum_{j=1}^m b_j k(\cdot,y_j) \in \cH_0$
with $n,m \in \mathbb{N}$, $a_1,\dots,a_n, b_1,\dots,b_m \in \Re$ and $x_1\dots,x_n, y_1,\dots,y_m \in \cX$, the inner-product is defined by
$$
\left< f, g \right>_{\cH_0} := \sum_{i=1}^n \sum_{j=1}^m a_i b_j k(x_i, y_j).
$$
%Note that for the special case $f := k(\cdot,x)$ and $g := k(\cdot,y)$ with some $x,y \in \cX$, the inner-product is $\left< f, g \right>_{\cH_0} = k(x,y)$: This is consistent with (\ref{eq:kernel-inner-prod}) derived from the definition of the reproducing kernel.
The norm of $\cH_0$ is induced by the inner-product, i.e., $\| f \|_{\cH_0}^2 = \left< f, f \right>_{\cH_0}$.
The RKHS $\cH_k$ associated with $k$ is then defined as the closure of $\cH_0$ with respect to the norm $\| \cdot \|_{\cH_0}$, i.e, $\cH_k := \overline{\cH_0}$.
That is, 
\begin{eqnarray}
\cH_k = \Biggl\{ f = \sum_{i=1}^\infty c_i k(\cdot,x_i) &:& (c_1, c_2 \dots) \subset \Re,\ (x_1, x_2, \dots)  \subset \cX,\ {\rm such\ that} \label{eq:RKHS-linear-span} \\
&& \left\| f \right\|_{\cH_k}^2 := \lim_{n \to \infty} \left\| \sum_{i=1}^n c_i k(\cdot,x_i) \right\|_{\cH_0}^2 = \sum_{i,j = 1}^\infty c_i c_j k(x_i,x_j) < \infty  \Biggr\}. \nonumber
\end{eqnarray}
\begin{comment}
The inner-product of $\cH_k$ can then be written as follows.
Let $f, g \in \cH_k$ be such that
$f = \sum_{i=1}^\infty a_i k(\cdot,x_i)$ and $g = \sum_{i=1}^\infty b_i k(\cdot,y_i)$
for some $c_1,c_2,\dots, b_1,b_2,\dots \in \Re$ and $x_1,x_2,\dots, y_1,y_2,\dots \in \cX$.
Then the inner-product between $f$ and $g$ is given by 
$$
\left< f, g \right>_{\cH_k} = \sum_{i,j=1}^\infty a_i b_j k(x_i,y_j).
$$
\end{comment}

\begin{remark}\rm
From (\ref{eq:RKHS-linear-span}), it is easy to see that functions $f$ in the RKHS $\cH_k$ inherit the properties of the kernel $k$.
For instance, if the kernel $k$ is $s$-times differentiable for $s \in \mathbb{N}$, then so are the functions in $\cH_k$ \citep[Corollary 4.36]{Steinwart2008}.
\end{remark}

An important property of the RKHS norm $\| f \|_{\cH_k}$ is that it captures not only the magnitude of a function $f \in \cH_k$, but also its {\em smoothness}: $f$ gets smoother as  $\| f \|_{\cH_k}$ decreases, and vice versa.
This is particularly important in understanding why regularization is required for kernel ridge regression, to avoid overfitting.
This smoothness property of the RKHS norm can be seen in the following example on the RKHS of a Mat\'ern kernel, which follows from \citet[Eq.~4.15]{RasmussenWilliams} and \citet[Corollary 10.48]{Wen05}.
A complete characterization of RKHSs of Mat\'ern kernels involve Fourier transforms the kernels; see Section \ref{sec:fourier_RKHS} for details.

%\blue{
\begin{example}[RKHSs of Mat\'ern kernels: Sobolev spaces] \label{ex:matern-rkhs}
Let $k_{\alpha,h}$ be the Mat\'ern kernel on $\cX \subset \Re^d$ with Lipschitz boundary\footnote{For the definition of Lipschitz boundary, see e.g., \citet[p.189]{Ste70}, \citet[Definition 4.3]{Tri06} and \citet[Definition 3]{KanSriFuk17}.} in Example \ref{ex:matern-kernel} with parameters $\alpha > 0$ and $h > 0$ such that $s := \alpha + d/2$ is an integer.
Then the RKHS $\cH_{k_{\alpha, h}}$ of $k_{\alpha, h}$ is norm-equivalent\footnote{Normed vector spaces $\cH_1$ and $\cH_2$ are called norm-equivalent, if $\cH_1 = \cH_2$ as a set, and if there are constants $c_1, c_2 > 0$ such that $c_1 \| f \|_{\cH_2} \leq \| f \|_{\cH_1} \leq c_2 \| f \|_{\cH_2}$ holds for all $f \in \cH_1 = \cH_2$, where $\| \cdot \|_{\cH_1}$ and $\| \cdot \|_{\cH_2}$ denote the norms equipped with $\cH_1$ and $\cH_2$, respectively.} to the Sobolev space $W_2^s (\cX)$ of order $s$ defined by 
%which of functions admitting square-integrable $s$-th order (weak) derivatives:
\begin{equation} \label{eq:sobolev_space}
W_2^s (\cX) := \left\{ f \in L_2(\cX) :\ \| f \|_{W_2^s (\cX)}^2 := \sum_{\beta \in \mathbb{N}_0^d: | \beta | \leq s} \left\| D^\beta f \right\|_{L_2(\cX)}^2 < \infty \right\}.
\end{equation}
That is, we have $\cH_{k_{\alpha,h}} = W_2^s (\cX)$ as a set of functions, and there exist constants $c_1, c_2 > 0$ such that 
\begin{equation} \label{eq:matern_rkhs-norm}
c_1 \| f \|_{W_2^s(\cX)} \leq  \| f \|_{\cH_{k_{\alpha,h}}} \leq c_2 \| f \|_{W_2^s(\cX)}, \quad \forall f \in \cH_{k_{\alpha,h}},
\end{equation}
\end{example}
\begin{remark}\rm
The inequality \eqref{eq:matern_rkhs-norm} shows the equivalence of the RKHS norm  $\| f \|_{\cH_{k_{\alpha,h}}}$ and the Sobolev norm $\| f \|_{W_2^s (\cX)}^2$ defined in \eqref{eq:sobolev_space}.
Thus the RKHS norm $\| f \|_{\cH_{k_{\alpha,h}}}$ captures the smoothness of the function $f$ with parameter $\alpha$ specifying the order of differentiability.
That is, $\| f \|_{\cH_{k_{\alpha,h}}}$ takes into account weak derivatives up to order $s = \alpha + d/2$ of the function $f$.
For details of Sobolev spaces, see e.g.~\citet{AdaFou03} .
\end{remark}
\begin{remark}\rm
The RKHS $\cH_{k_{\alpha,h}}$ consists of functions that are weak differentiable up to order $s = \alpha + d/2$.
Here one should not confuse the weak differentiability with the classic notion of differentiability.
In the classical sense, functions in $\cH_{k_{\alpha,h}}$ are only guaranteed to be differentiable up to order $\alpha$, not $s = \alpha + d/2$; this is a consequence of the Sobolev embedding theorem \citep[Theorem 4.12]{AdaFou03}. 
For definition of weak derivatives, see e.g.~\citet[Section 1.62]{AdaFou03}.
For instance, consider the case $\alpha = 1/2$, where the kernel is given by \eqref{eq:Laplace-kernel} and is not differentiable at origin.
By Definition \ref{def:RKHS}, we have $k_{1/2,h}(\cdot,x) \in \cH_{k_{1/2,h}}$ for any $x \in \mathbb{R}^d$; this implies that $\cH_{k_{1/2,h}}$ contain functions that are not differentiable in the classical sense.
\end{remark}
%}

%Note an interesting difference between RKHS and GP: While the RKHS consists of functions with smoothness $s = \nu + \frac{d}{2}$, corresponding GP sample paths only have smoothness $\alpha$. \moto{Give a reference}
%This difference will be discussed in more detail in Section \ref{sec:theory}.

\subsection{A Spectral Characterization for RKHSs Associated with Shift-Invariant Kernels}
\label{sec:fourier_RKHS}

We provide a characterization of RKHSs associated with shift-invariant kernels on $\cX = \Re^d$.
Recall that a kernel $k$ is called shift-invariant, if it can be written as $k(x,y) = \Phi(x-y)$ for all $x,y \in \Re^d$ with a positive definite function $\Phi:\Re^d \to \Re$.
In the following, the key role is played by the Fourier transform $\cF[\Phi]$ of this positive definite function.
%Based on them we provide a simple condition to guarantee that GP-sample paths belong to an RKHS with probability $1$.

Theorem \ref{theo:RKHS-shift-invariant} below provides a characterization of the RKHS of a shift-invariant kernel in terms of the Fourier transform $\cF[\Phi]$ of $\Phi$.
This result is available from, e.g., \citet[Lemma 3.1]{kimeldorf1970correspondence} and \citet[Theorem 10.12]{Wen05}.
\begin{theorem} \label{theo:RKHS-shift-invariant}
Let $k$ be a shift-invariant kernel on $\cX = \Re^d$ such that $k(x,y) := \Phi(x-y)$ for $\Phi \in C(\mathbb{R}^d) \cap L_1(\mathbb{R}^d)$. 
Then the RKHS $\cH_k$ of $k$ is given by
\begin{equation} \label{eq:RKHS-shift-invariant}
\cH_k = \left\{ f \in L_2( \Re^d ) \cap C(\Re^d):\ \| f \|_{\cH_k}^2 = \frac{1}{ (2\pi)^{d/2} }  \int \frac{ |\cF[f](\omega)|^2 }{\cF[\Phi](\omega)} d\omega < \infty \right\}, 
\end{equation}
with the inner-product being
$$
\left<f, g \right>_{\cH_k} =  \frac{1}{ (2\pi)^{d/2} } \int \frac{\cF [f] (\omega) \overline{\cF[g](\omega)}}{\cF [\Phi](\omega) }  d\omega,\quad f,g \in \cH_k,
$$
where $\overline{\cF[g](\omega)}$ denotes the complex conjugate of $\cF[g](\omega)$.
\end{theorem}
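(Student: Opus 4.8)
\section*{Proof proposal}

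The plan is to exhibit the right-hand side of \eqref{eq:RKHS-shift-invariant} as a Hilbert space of functions carrying $k$ as its reproducing kernel, and then invoke the uniqueness clause of the Moore--Aronszajn theorem (already cited above) to identify it with $\cH_k$. Write $\cG$ for the candidate space equipped with the stated inner-product. As a preliminary step I would record, via Bochner's theorem, that positive definiteness of $\Phi$ forces $\cF[\Phi] \geq 0$; combined with $\Phi \in C(\Re^d) \cap L_1(\Re^d)$ and continuity of $\Phi$ at the origin, this yields $\cF[\Phi] \in L_1(\Re^d)$ together with the pointwise inversion $\Phi(t) = (2\pi)^{-d/2}\int \cF[\Phi](\omega)\, e^{\sqrt{-1}\, t\Trans \omega}\, d\omega$. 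I will tacitly assume $\cF[\Phi] > 0$ almost everywhere (as holds, e.g., for the Gaussian and Mat\'ern kernels), so that the weighted quotient defining $\| \cdot \|_{\cG}$ is meaningful.

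Next I would verify that $\cG$ is a genuine Hilbert space of \emph{functions}. The assignment $f \mapsto (2\pi)^{-d/4}\,\cF[f]/\sqrt{\cF[\Phi]}$ is a linear isometry of $\cG$ into $L_2(\Re^d)$, so completeness of $\cG$ reduces to that of $L_2$ once one checks that limits remain continuous functions on which pointwise evaluation is well defined. The crucial bound enabling this is that $\cF[f] \in L_1$ for every $f \in \cG$: by Cauchy--Schwarz with the weight $\cF[\Phi]$,
\[
\int |\cF[f](\omega)|\, d\omega \;\leq\; \left( \int \frac{|\cF[f](\omega)|^2}{\cF[\Phi](\omega)}\, d\omega \right)^{1/2} \left( \int \cF[\Phi](\omega)\, d\omega \right)^{1/2} < \infty .
\]
Hence each element of $\cG$ coincides with a continuous function recovered by Fourier inversion, and a dominated-convergence argument shows the inverse transforms of a Cauchy sequence converge uniformly.

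The heart of the proof is the reproducing property. Using the shift rule $\cF[\Phi(\cdot - x)](\omega) = e^{-\sqrt{-1}\, x\Trans \omega}\, \cF[\Phi](\omega)$ and the fact that $\cF[\Phi]$ is real (as $\Phi$ is real and even), I would compute
\[
\| k(\cdot,x) \|_{\cG}^2 = \frac{1}{(2\pi)^{d/2}} \int \cF[\Phi](\omega)\, d\omega = \Phi(0) = k(x,x) < \infty,
\]
so $k(\cdot,x) \in \cG$, and then, for arbitrary $f \in \cG$,
\[
\langle f, k(\cdot,x) \rangle_{\cG} = \frac{1}{(2\pi)^{d/2}} \int \cF[f](\omega)\, e^{\sqrt{-1}\, x\Trans \omega}\, d\omega = f(x),
\]
the last equality being the inversion formula, licensed by the $L_1$ control above. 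Thus $\cG$ is an RKHS with reproducing kernel $k$, and Moore--Aronszajn forces $\cG = \cH_k$ with matching inner-products.

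I expect the main obstacle to be the functional-analytic bookkeeping rather than any single computation: establishing that $\cG$ is complete and that its members are bona fide continuous functions (not mere $L_2$-equivalence classes) requires marrying Plancherel's identity with the $L_1$ estimate on $\cF[f]$, and one must carefully handle the null set where $\cF[\Phi]$ might vanish. By contrast, the reproducing-property calculation is short once Fourier inversion has been justified.
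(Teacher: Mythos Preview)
Your proposal is correct and follows the standard route. Note, however, that the paper does not actually supply its own proof of this theorem: it merely states the result and cites \citet[Lemma~3.1]{kimeldorf1970correspondence} and \citet[Theorem~10.12]{Wen05}. Your argument---building the candidate space $\cG$, using Bochner together with $\Phi\in L_1$ to get $\cF[\Phi]\ge 0$ and $\cF[\Phi]\in L_1$, obtaining $\cF[f]\in L_1$ via the weighted Cauchy--Schwarz estimate so that Fourier inversion is legitimate, then checking the reproducing property directly and closing with Moore--Aronszajn uniqueness---is essentially the proof given in Wendland's Theorem~10.12, so you are aligned with the reference the paper defers to. Your explicit flag that $\cF[\Phi]>0$ a.e.\ is tacitly needed for the quotient to be defined is a fair caveat; Wendland handles this by interpreting membership in $\cG$ as requiring $\cF[f]$ to vanish wherever $\cF[\Phi]$ does.
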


\begin{remark}\rm
Theorem \ref{theo:RKHS-shift-invariant} shows that the Fourier transform $\cF[\Phi]$ determines the members of the RKHS. 
More specifically, the requirement in \eqref{eq:RKHS-shift-invariant} shows that, if $\cF[\Phi](\omega)$ decays quickly as $|\omega| \to \infty$, the Fourier transform $\cF[f](\omega)$ of each $f \in \cH_k$ should also decay quickly as $|\omega| \to \infty$. 
Since the tail behaviors of $\cF[\Phi]$ and $\cF[f]$ determines the smoothness of $\Phi$ and $f$ respectively, this implies that if $\Phi$ is smooth, $f$ should also be smooth; see examples below.
\end{remark}
%\blue{
\begin{remark}\rm
The Fourier transform $\cF[\Phi](\omega)$ is known as the {\em power spectral density} in the stochastic process literature; see e.g.~\citet[Section 3.3]{Bre14}.
It can be written in terms of a certain Fourier transform  of the Gaussian process $\ssf \sim \GP(0,k)$ \citep[p.161]{Bre14}.
We do not explain it in detail, since it requires an explanation of a certain stochastic integral \citep[Theorem 3.4.1]{Bre14}, which is out of the scope of this paper.
\end{remark}
The following examples illustrate Theorem \ref{theo:RKHS-shift-invariant}, providing spectral characterizations for RKHSs of square-exponential and Mat\'ern kernels.
\begin{example}[RKHSs of square-exponential kernels]
Let $k_\gamma(x,y) := \Phi_\gamma(x-y) := \exp( - \| x - y \|^2 / \gamma^2 )$ be the square-exponential kernel with bandwidth $\gamma > 0$ in Example \ref{ex:square-exp-kernel}, and let $\cH_{k_\gamma}$ be the associated RKHS.
The Fourier transform of $\Phi_\gamma$ is given by
\begin{equation*} \label{eq:Gaussian-fourier}
\cF[\Phi_\gamma](\omega) = C_{d,\gamma} \exp(-\gamma^2 \| \omega \|^2 / 4), \quad \omega \in \Re^d,
\end{equation*}
where $C_{d,\gamma}$ is a constant depending only on $d$ and $\gamma$; see e.g.~\citet[Theorem 5.20]{Wen05}.
Therefore the RKHS $\cH_{k_\gamma}$ can be written as
$$
\cH_{k_\gamma} = \left\{ f \in L_2( \Re^d ) \cap C(\Re^d):\ \| f \|_{\cH_{k_\gamma}}^2 = \frac{1}{ (2\pi)^{d/2} C_{d,\gamma} }  \int  |\cF[f](\omega)|^2  \exp(\gamma^2 \| \omega \|^2 / 4)  d\omega < \infty \right\},
$$
which shows that, for any $f \in \cH_{k_\gamma}$, the magnitude of its Fourier transform $|\cF[f](\omega)|$ decays exponentially fast as $|\omega| \to \infty$, and the speed of decay gets quicker as $\gamma$ increases.
%Informally, this means that $\cH_{k_\gamma}$ consists of very smooth functions. 
\end{example}
\begin{example}[RKHSs of Mat\'ern kernels]
Let $k_{\alpha, h}$ the Mat\'ern kernel on $\Re^d$ with parameters $\alpha > 0$ and $h > 0$ in Example \ref{ex:matern-kernel}, and let $\cH_{k_{\alpha,h}}$ of $k_{\alpha,h}$ be the associated RKHS.
Then $k_{\alpha,h}(x,y) = \Phi_{\alpha,h} (x-y)$ with $\Phi_{\alpha,h}(x) := \frac{2^{1-\alpha}}{\Gamma(\alpha)} ( \sqrt{2\alpha} \| x \| / h ) K_\alpha( \sqrt{2\alpha} \|x\| / h )$, and the Fourier transform of $\Phi_{\alpha, h}$ is given by  
\begin{equation} \label{eq:Fourier-matern}
\cF[\Phi_{\alpha, h}](\omega) = C_{\alpha,h,d}\ \left( \frac{2\alpha}{h^2} + 4 \pi^2 \| \omega \|^2 \right)^{- \alpha - d/2}, \quad \omega \in \Re^d,
\end{equation}
where $C_{\alpha,h,d}$ is a constant depending only on $\alpha$, $h$ and $d$; see, e.g., \citet[Eq.~4.15]{RasmussenWilliams}.
Therefore the RKHS $\cH_{k_{\alpha,h}}$ can be written as
\begin{eqnarray*}
 \cH_{k_{\alpha,h}} & = & \Biggl\{  f \in L_2( \Re^d ) \cap C(\Re^d):  \\ 
&& \| f \|_{\cH_{k_{\alpha,h}}}^2 =  \frac{1}{ (2\pi)^{d/2} C_{\alpha,h,d} }  \int  |\cF[f](\omega)|^2  \left( \frac{2\alpha}{h^2} + 4 \pi^2 \| \omega \|^2 \right)^{\alpha + d/2}  d\omega < \infty \Biggr\},
\end{eqnarray*}
which shows that, for any $f \in \cH_{k_{\alpha,h}}$, the magnitude of its Fourier transform $|\cF[f](\omega)|$ decays polynomially fast as $|\omega| \to \infty$, and the speed of decay gets quicker as $\alpha$ increases.
Moreover, from \eqref{eq:Fourier-matern} and \citet[Corollary 10.48]{Wen05}, it follows that $\cH_{k_{\alpha,h}}$ is norm-equivalent to the Sobolev space of order $\alpha+d/2$. %; this is how Example \ref{ex:matern-rkhs} is given.
\end{example}
%}

\section{Connections between Gaussian Process and Kernel Ridge Regression} 
\label{sec:regression}

%\begin{itemize}
%\item (Bharath's comments) At some places, I see that $E_f$ and $E_X$ are exchanged. When is this true?
%\end{itemize}

\begin{figure}[t]
  \centering \scriptsize
	\includegraphics[width=\linewidth]{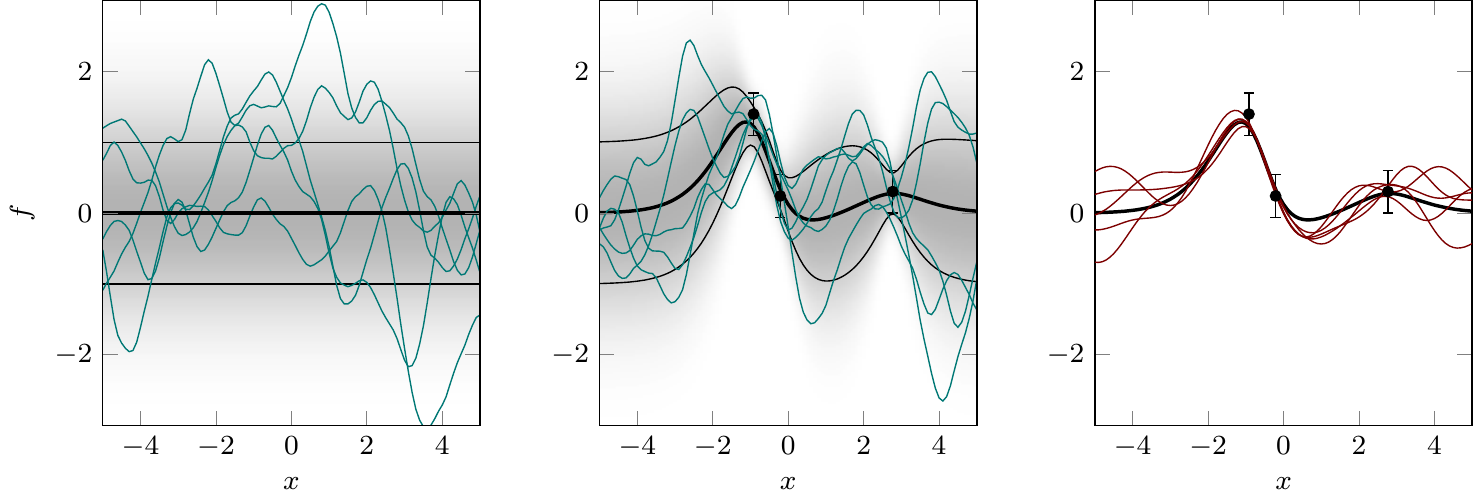}
  \caption{Conceptual sketches of Gaussian process regression (left, center) and kernel ridge regression (right).
  \textbf{Left:} Prior measure $\ff \sim \GP(0,k)$ with vanishing prior mean and the Matérn-class kernel $k(x,x') = (1 + \sqrt{5}r + 5/3 r^2) \exp(-\sqrt{5}r)$ with $r\ce |x-x'|$. Prior mean function in thick black. 
  Two marginal standard deviations in thin black. 
  Marginal densities as gray shading. 
  5 samples from prior as green lines. 
  \textbf{Center:} Given a dataset $(X,Y)$ of $n=3$ data points with i.i.d.~zero-mean normal noise of standard deviation $\sigma=0.1$, the posterior measure is also a Gaussian process, with updated mean and covariance functions (all quantities as on the left). 
  \textbf{Right:} Kernel ridge regression yields a point estimate (thick black) that is exactly equal to the Gaussian process posterior mean. 
  In contrast to Gaussian process regression, an error estimate is usually not provided. 
  This absence can be deliberate, as one may not be willing to impose the assumptions necessary to define such an estimate (e.g., additive Gaussian noise assumption).
  For comparison with the GP samples, the plot also shows some functions with the property that $f_X\Trans k_{XX}^{-1} f_X = \|f_X\|_{k_{XX}^{-1}} ^2 = 1$ (but KRR does not assume the true function is of this form).} 
  \label{fig:GP_intro}
\end{figure}

\emph{Regression} is a fundamental task in statistics and machine learning.
The {\em interpolation} problem is regression with noise-free observations and has been studied mainly in the literature on numerical analysis, and more recently in the context of Bayesian optimization.
We compare two approaches to these problems based on Gaussian processes and kernel methods, namely {\em Gaussian process regression} and {\em kernel ridge regression} (see also Figure \ref{fig:GP_intro} for illustration). 

We first describe the problem of regression, and set notation.
Let $\cX$ be a nonempty set and $\ff:\cX \to \Re$ be a function.
Assume that one is given a set of pairs $(x_i,y_i)_{i=1}^n \subset \cX \times \Re$ for $n \in \mathbb{N}$, which is referred to as {\em training data}, such that 
\begin{equation} \label{eq:regres_noise_model}
 y_i = \ff(x_i) + \xi_i,\qq i=1,\dots,n,
\end{equation}
where $\xi_i$ is a zero-mean random variable that represents ``noise'' in the output, or the variability in the responses which is not explained by the input vectors.
The task of regression is to estimate the unknown function $\ff$ based on the training data $(x_i,y_i)_{i=1}^n$.
The function $\ff$ is called the {\em regression function}, and is the conditional expectation of the output given an input:
$$
\ff(x) = \bE[ y | x ],
$$
where $(x,y)$ is a random variable with the conditional distribution of $y$ given $x$ following the model \eqref{eq:regres_noise_model}.

If there is no output noise, i.e., $\xi_i = 0$, the problem is called {\em interpolation}; in this case one can obtain exact function values $y_i = \ff(x_i)$ for training.
We will frequently use the notation $X := (x_1,\dots,x_n) \in \cX^n$ for the set of input data points, and $Y := (y_1,\dots,y_n)\Trans \in \Re^n$ for the set of outputs (or $\ff_X := (\ff(x_1),\dots,\ff(x_n))\Trans \in \Re^n$ in the noise free case).

%Given input-output pairs $(x_1,\ff(x_1)), \dots, (x_n, \ff(x_n))$, construct an estimator $\hat{f}$ of the true function $\ff$ so that the values $\hat{f}(x)$ at test points $x$ become accurate approximations for the true values $\ff(x)$.

%\subsection{Regression Estimators}

This section first reviews Gaussian process regression and interpolation in Section \ref{sec:GP-regression}, and kernel ridge regression and kernel interpolation in Section \ref{sec:kernel-ridge-regression}.
We summarize and discuss equivalences between the two approaches in Section \ref{sec:GP-KRR-equivalence}.
In GP-regression, the {\em posterior variance} function plays a fundamental role, but its kernel interpretation has not been well understood.
In Section \ref{sec:posterior_variance}, we show that there exists an interpretation of the posterior variance function as a certain {\em worst case error} in the RKHS.
Coming back to regression itself, in Section \ref{sec:weight-vec-KRR} we provide a weight-vector viewpoint for the regression problem, and discuss the equivalence between regularization and an additive noise assumption.

\subsection{Gaussian Process Regression and Interpolation} \label{sec:GP-regression}

\begin{comment}
\begin{itemize}
\item  \dino{I think we should start talking about hyperparameters here, since it's never really assumed that $k$ is fixed.}
\item \dino{also, we should probably discuss M-open/M-closed scenarios here, \cite{Bissiri2016} which interpolate somewhere between the ``GP model true'' vs KRR approach}
\end{itemize}
\end{comment}

Gaussian process regression, also known as \emph{Kriging} or \emph{Wiener-Kolmogorov prediction}, is a Bayesian nonparametric method for regression.
Being a Bayesian approach, GP-regression produces a {\em posterior distribution} of the unknown regression function $\ff$, provided the training data $(X,Y)$, a prior distribution $\Pi_0$ on $\ff$, and a likelihood function denoted by $\ell_{X,Y}(\ff)$.
%Thus, the regression $\ff$ is treated as a {\em random} object.
More specifically, the prior $\Pi_0$ is defined as a Gaussian process $\GP(m,k)$ with  mean function $m : \cX \to \Re$ and  covariance kernel $k: \cX \times \cX \to \Re$, i.e., 
\begin{equation} \label{eq:GP-prior}
\ff \sim \GP(m,k).
\end{equation}
Since this GP serves as a prior, the mean function $m$ and the kernel $k$ should be chosen so that they reflect one's prior knowledge or belief about the regression function $\ff$; this will be discussed later.

On the other hand, a likelihood function is defined by a probabilistic model $p(y_i|\ff(x_i))$ for the noise variables $\xi_1,\dots,\xi_n$, since this determines the distribution of the observations $Y = (y_1,\cdots,y_n)\Trans$ with the additive noise model (\ref{eq:regres_noise_model}).
% it might be worth making things a bit more general and talk about general likelihoods (which need not have additive noise)
It is typical to assume that $\xi_1,\dots,\xi_n$ are i.i.d.~centered Gaussian random variables with variance $\sigma^2 > 0$:
\begin{equation} \label{eq:Gaussian-noise}
\xi_i \iid \N(0,\sigma^2), \quad i=1,\dots,n. 
\end{equation}
% not sure whether "for simplicity..." is fair here - this is a typical regression model, and if we have any other likelihood (apart from correlated noise) we end up with something other than  standard GP regression?
%So we could say that we will use conjugate Gaussian likelihood and that it is typically assumed that noise is iid?
Thus the likelihood function is defined as
\begin{equation} \label{eq:GP-likelihood}
\ell_{X,Y}(\ff) = \prod_{i=1}^n \mathcal{N}(y_i|\ff(x_i),\sigma^2),
\end{equation}
where $\mathcal{N}(\cdot|\mu,\sigma^2)$ denotes the density function of the normal distribution $\N(\mu,\sigma^2)$ of mean $\mu$ and variance $\sigma^2$.
In general however, GP-regression allows the noise variables to be correlated Gaussian with varying magnitudes of variances.

By Bayes' rule, the posterior distribution $\Pi_n(\ff|Y,X)$ is then given as
\begin{equation}
 \label{eq:GP-posterior-bayes}
 d\Pi_n(\ff | X, Y) \propto \ell_{X,Y}(\ff) d\Pi_0(\ff) = \prod_{i=1}^n \mathcal{N}(y_i|\ff(x_i),\sigma^2) d\Pi_0(\ff).
\end{equation}
As shown in the following theorem, which is well known in the literature, the posterior $\Pi_n(\ff|X,Y)$ is again a Gaussian process, whose mean function and covariance function are obtained by simple linear algebra.

\begin{theorem} \label{theo:GP-posterior}
Assume \eqref{eq:regres_noise_model}, \eqref{eq:GP-prior} and \eqref{eq:Gaussian-noise}, and let $X = (x_1,\dots,x_n) \in \cX^n$ and $Y=(y_1,\dots,y_n)\Trans \in \Re^n$.
Then we have
\begin{equation*} %\label{eq:GP-posterior}
\ff | Y \sim \GP(\bar{m},\bar{k}), 
\end{equation*}
where $\bar{m}:\cX \to \Re$ and $\bar{k}:\cX \times \cX \to \Re$ are given by
\begin{eqnarray} 
 \bar{m}(x) &=& m(x) + k_{xX}(k_{XX} + \sigma^2 I_n)^{-1}(Y - m_X),\quad x \in \cX,  \label{eq:posteior_mean} \\
  \bar{k}(x,x') &=& k(x,x') - k_{xX}(k_{XX} + \sigma^2 I_n)^{-1} k_{Xx'}, \quad x,x' \in \cX,
  \label{eq:posterior-variance}
\end{eqnarray}
where $k_{Xx}=k^\top_{xX}=(k(x_1,x),\ldots,k(x_n,x))^\top.$
\end{theorem}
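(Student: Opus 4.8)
The plan is to exploit the defining property of a Gaussian process---that it is determined by its finite-dimensional marginals---so that the infinite-dimensional posterior statement reduces to a finite-dimensional Gaussian conditioning computation. Concretely, I would fix an arbitrary finite set of test points $Z = (z_1,\dots,z_m) \in \cX^m$ and show that the posterior marginal law of $\ff_Z = (\ff(z_1),\dots,\ff(z_m))\Trans$ is the multivariate normal whose mean and covariance entries are $\bar m(z_i)$ and $\bar k(z_i,z_j)$, with $\bar m$ and $\bar k$ as in \eqref{eq:posteior_mean}--\eqref{eq:posterior-variance}. Since $Z$ is arbitrary and the resulting family of marginals is Kolmogorov-consistent, the one-to-one correspondence between Gaussian processes and pairs of mean/covariance functions (Remark \ref{rem:kernel-defines-GP}) then identifies the posterior as $\GP(\bar m, \bar k)$.

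First I would assemble the joint prior law of the latent values and the observations. By Definition \ref{def:GP}, the stacked vector $(\ff_Z\Trans, \ff_X\Trans)\Trans$ is jointly Gaussian with mean $(m_Z\Trans, m_X\Trans)\Trans$ and covariance built from the blocks $k_{ZZ}, k_{ZX}, k_{XX}$. Writing the noise model \eqref{eq:regres_noise_model}--\eqref{eq:Gaussian-noise} as $Y = \ff_X + \xi$ with $\xi \sim \N(0,\sigma^2 I_n)$ independent of $\ff$, the triple $(\ff_Z, \ff_X, \xi)$ is jointly Gaussian, and hence so is the affine image $(\ff_Z\Trans, Y\Trans)\Trans$. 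Its blocks are $\cov(\ff_Z) = k_{ZZ}$, $\cov(\ff_Z, Y) = \cov(\ff_Z, \ff_X) = k_{ZX}$ (the cross term with $\xi$ vanishing by independence), and $\cov(Y) = k_{XX} + \sigma^2 I_n$, with means $m_Z$ and $m_X$ respectively.

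Next I would invoke the standard Gaussian conditioning formula: for a jointly Gaussian $(U,V)$ the conditional law $U \mid V$ is again Gaussian with mean $\mu_U + \Sigma_{UV}\Sigma_{VV}^{-1}(V-\mu_V)$ and covariance $\Sigma_{UU} - \Sigma_{UV}\Sigma_{VV}^{-1}\Sigma_{VU}$. Substituting $U = \ff_Z$, $V = Y$ and the blocks above yields exactly the posterior mean $m_Z + k_{ZX}(k_{XX}+\sigma^2 I_n)^{-1}(Y - m_X)$ and posterior covariance $k_{ZZ} - k_{ZX}(k_{XX}+\sigma^2 I_n)^{-1}k_{XZ}$, whose entries are precisely $\bar m(z_i)$ and $\bar k(z_i,z_j)$. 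This is consistent with the Bayes-rule definition \eqref{eq:GP-posterior-bayes}: restricting that posterior to the coordinates $Z$ reproduces the same conditioning, since a Gaussian prior with Gaussian likelihood produces a Gaussian posterior whose $Z$-marginal is the normal law just computed.

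Two points need care rather than the algebra itself. The inverse in the conditioning formula requires $k_{XX} + \sigma^2 I_n$ to be nonsingular; this is where $\sigma^2 > 0$ is essential, as $k_{XX}$ is only positive semidefinite (and may be singular, cf.\ the discussion of singular $k_{XX}$ following Definition \ref{def:GP}), whereas adding $\sigma^2 I_n$ makes it strictly positive definite. The other, and the main conceptual obstacle, is justifying that checking finite-dimensional marginals suffices---that the posterior obtained by applying Bayes' rule to the full process agrees, on every finite set $Z$, with the finite-dimensional conditional laws computed above. This holds because the conditioning is on the finite vector $Y$ alone, so each posterior finite-dimensional distribution is obtained by conditioning the corresponding prior finite-dimensional distribution; consistency of the resulting family and existence of the associated process then follow from the Kolmogorov extension theorem together with Remark \ref{rem:kernel-defines-GP}. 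In particular $\bar k$ is automatically a valid positive definite kernel, being a genuine conditional covariance (a Schur complement of a positive semidefinite block matrix).
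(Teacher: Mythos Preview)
Your proposal is correct and follows essentially the same route as the paper: fix an arbitrary finite test set $Z$, compute the joint Gaussian law of $(\ff_Z, Y)$, apply the finite-dimensional Gaussian conditioning formula (the paper's Proposition \ref{eq:Gauus-formula}), identify the resulting mean and covariance with $\bar m_Z$ and $\bar k_{ZZ}$, and then invoke the Kolmogorov extension theorem to conclude that the posterior is $\GP(\bar m, \bar k)$. Your additional remarks on the invertibility of $k_{XX}+\sigma^2 I_n$ and on why checking finite-dimensional marginals suffices are correct and slightly more explicit than the paper's version.
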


As $\GP(\bar{m},\bar{k})$ is a posterior Gaussian process, $\bar{m}$ is referred to as the {\em posterior mean function} and $\bar{k}$ the {\em posterior covariance function}.
It is instructive to see how the Gaussian noise assumptions \eqref{eq:Gaussian-noise} and the GP prior \eqref{eq:GP-prior} lead to the closed form expressions \eqref{eq:posteior_mean} and \eqref{eq:posterior-variance}, because this can be done {\em without relying on Bayes' rule}.
This is important for the following two reasons: (i) Since the prior and posterior are defined on an {\em infinite} dimensional space of functions, Bayes' rule is more involved and thus does not produce the expressions \eqref{eq:posteior_mean} and \eqref{eq:posterior-variance} directly (see e.g.~\citealt[Theorem 6.31]{Stu10}); (ii) When dealing with the {\em noise-free} setting where $\sigma^2 = 0$, Bayes' rule cannot be used because the likelihood function is degenerate \citep{CocOatSulGir17}.

To prove Theorem \ref{theo:GP-posterior}, first recall a basic formula for conditional distributions of Gaussian random vectors (see e.g.~\citealt[Appendix A.2]{RasmussenWilliams}).
\begin{proposition} \label{eq:Gauus-formula}
Let $a \in \Re^n$ and $b \in \Re^m$ be Gaussian random vectors such that
\begin{equation} \label{eq:Gaussian-joint}
\begin{bmatrix} a \\ b \end{bmatrix} \sim \N\left( \begin{bmatrix} \mu_a \\ \mu_b \end{bmatrix}, \begin{bmatrix} A & C \\ C\Trans & B  \end{bmatrix} \right),
\end{equation}
where $\mu_a \in \Re^n$, $\mu_b \in \Re^m$ are the mean vectors,  $A \in \Re^{n \times n}$, $B \in \Re^{m \times m}$ are the covariance matrices (where $A$ is strictly positive definite), and $C \in \Re^{n \times m}$.
Then the conditional distribution of $b$ given $a$ is
\begin{equation} \label{eq:Gaussian-cond}
b | a \sim \N \left( \mu_b + C\Trans A^{-1} (a-\mu_a), B - C\Trans A^{-1} C \right).
\end{equation}
\end{proposition}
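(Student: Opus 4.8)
The plan is to establish the conditioning formula by the classical \emph{decorrelation} (Schur-complement) argument, which relies only on the invertibility of $A$ guaranteed by the hypothesis that $A$ is strictly positive definite, and never requires the full joint covariance to be nonsingular. This generality matters: in the intended application to Theorem \ref{theo:GP-posterior} the lower block $B$ (and hence the joint covariance) may well be singular, so a proof that completes the square in a joint density would be less robust than one built on independence of a linear residual.

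First I would reduce to the centered case by setting $\tilde{a} := a - \mu_a$ and $\tilde{b} := b - \mu_b$, so that $(\tilde{a},\tilde{b})$ is jointly Gaussian with mean zero and covariance blocks $A$, $B$, $C$. The crucial construction is the residual
$$
z := \tilde{b} - C\Trans A^{-1}\tilde{a},
$$
which is the part of $\tilde{b}$ left after removing its best linear predictor from $\tilde{a}$. Since $z$ is an affine image of the jointly Gaussian pair $(\tilde{a},\tilde{b})$, the stacked vector $(z, \tilde{a})$ is itself jointly Gaussian.

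Next I would prove that $z$ and $\tilde{a}$ are independent. Because $(z,\tilde{a})$ is jointly Gaussian, it suffices to verify that they are uncorrelated, and a one-line computation using $\bE[\tilde{b}\,\tilde{a}\Trans] = C\Trans$ and $\bE[\tilde{a}\,\tilde{a}\Trans] = A$ gives $\bE[z\,\tilde{a}\Trans] = C\Trans - C\Trans A^{-1} A = 0$. A similarly short calculation yields $\bE[z] = 0$ and $\Cov(z) = B - C\Trans A^{-1} C$, so $z \sim \N(0, B - C\Trans A^{-1} C)$ independently of $\tilde{a}$, equivalently of $a$.

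Finally I would read off the conditional law: rearranging gives $b = \mu_b + C\Trans A^{-1}(a - \mu_a) + z$, and conditioning on $a$ freezes the first two terms into a constant while leaving $z$ with its unconditional law (by independence), whence $b \mid a \sim \N\bigl(\mu_b + C\Trans A^{-1}(a - \mu_a),\, B - C\Trans A^{-1} C\bigr)$. The one step deserving genuine care, and thus the main obstacle, is the upgrade from uncorrelatedness to true independence; this is where joint Gaussianity is essential, since the vanishing cross-covariance block makes the covariance of $(z,\tilde{a})$ block diagonal and hence forces the joint characteristic function to factorize. I would phrase this via the characteristic function rather than a density, precisely so that the argument remains valid when $B$ is singular and no joint Lebesgue density exists.
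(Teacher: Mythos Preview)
Your proof is correct and is the standard decorrelation (Schur-complement) argument; in particular your care to argue via characteristic functions rather than densities, so that singular $B$ is allowed, is well placed given the downstream use in Theorem \ref{theo:GP-posterior}. The paper, however, does not give its own proof of this proposition at all: it simply states the formula as a well-known fact and refers the reader to \citet[Appendix A.2]{RasmussenWilliams}. So there is no approach to compare against; your write-up supplies exactly the kind of self-contained derivation the paper elects to outsource.
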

%Then Theorem \ref{theo:GP-posterior} can be proven in the following way. 
\begin{proof}[Theorem \ref{theo:GP-posterior}]
Let $m \in \mathbb{N}$, and let $Z = (z_1,\dots,z_m) \in \cX^m$ be any finite set of points.
Then the observations $Y \in \Re^n$ and GP-function values $\ff_Z = (\ff(z_1),\dots,\ff(z_m))\Trans \in \Re^m$ are jointly Gaussian random vectors such that
$$
\begin{bmatrix}
Y \\
\ff_Z
\end{bmatrix} 
\sim 
\N\left( \begin{bmatrix} m_X \\ m_Z \end{bmatrix},\  \begin{pmatrix} k_{XX} + \sigma^2 I_n & k_{XZ} \\ k_{ZX} & k_{ZZ} \end{pmatrix} \right).
$$
In the notation of \eqref{eq:Gaussian-joint}, this corresponds to $a = Y|X$, $b = \ff_Z$, $\mu_a = m_X$, $\mu_b = m_Z$, $A = k_{XX} + \sigma^2 I_n$,  $B = k_{ZZ}$ and $C =  k_{XZ}$.
Applying the formula \eqref{eq:Gaussian-cond} in Proposition \ref{eq:Gauus-formula}, the conditional distribution of $\ff_Z$ given $Y$ is then given as
$$
\ff_Z | Y \sim \N(\bar{\mu}, \bar{\Sigma}),
$$
where 
\begin{eqnarray*}
\bar{\mu} &:=& m_Z + k_{ZX} (k_{XX} + \sigma^2 I_n)^{-1} (Y - m_X) \in \Re^m, \label{eq:mean-vec-pos} \\
\bar{\Sigma} &:=& k_{ZZ} - k_{ZX} (k_{XX} + \sigma^2 I_n)^{-1} k_{XZ} \in \Re^{m \times m}. \label{eq:cov-mat-pos}
\end{eqnarray*}
This mean vector and the covariance matrix can be written as  $\bar{\mu} := \bar{m}_Z$, $\bar{\Sigma} = \bar{k}_{ZZ}$, where  $\bar{m}:\cX \to \Re$ and $\bar{k}:\cX \times \cX \to \Re$ are  defined as \eqref{eq:posteior_mean} and \eqref{eq:posterior-variance}.
In other words,
\begin{equation} \label{eq:finite-post-dist}
\ff_Z|Y \sim \N(\bar{m}_Z, \bar{k}_{ZZ}).
\end{equation}
Note that \eqref{eq:finite-post-dist} holds for any set of points $Z = (z_1,\dots,z_m) \in \cX^m$ of any size $m \in \mathbb{N}$.
Therefore, by the Kolmogorov extension theorem \citep[Theorems 12.1.2]{Dud02} and the definition of GPs (Definition \ref{def:GP}), this implies that the process $\ssf \sim \GP(m,k)$ {\em conditioned on} the training data $X,Y$ is a draw from $\GP(\bar{m},\bar{k})$.
%We have thus shown how \eqref{eq:posteior_mean}\eqref{eq:posterior-variance} are derived.
\end{proof}

\begin{remark}\rm
Given a test input $x$, prediction of the output value $\ssf(x)$ is carried out by evaluating the posterior mean function \eqref{eq:posteior_mean}, as we have $\bE[\ff(x) | X, Y ] = \bar{m}(x)$ by definition.
On the other hand, the posterior covariance $\bar{k}$ can be used to quantify uncertainties over output values; this will be discussed in Section \ref{sec:posterior_variance}.
\end{remark}

\begin{remark}\rm
As it can be seen from the expressions \eqref{eq:posteior_mean} and \eqref{eq:posterior-variance}, $\bar{m}$ and $\bar{k}$ depend on the choice of the prior mean function $m$, the kernel $k$ and the noise variance $\sigma^2$.
These are hyper-parameters of GP-regression, and the determination of them can be carried out, for example, by the empirical Bayes method, i.e., maximization of the marginal likelihood of the data given hyperparameters (for regression this is available in closed form); see \citet{RasmussenWilliams} for details.
\end{remark}

%\paragraph{Derivations of \eqref{eq:posteior_mean}\eqref{eq:posterior-variance}.}

\paragraph{Noise-free case: Gaussian process interpolation.}
Consider the noise-free case where exact function values $y_i = \ff(x_i)$, $i=1,\dots,n$ are provided for training.
In this case, the likelihood function \eqref{eq:GP-likelihood} is degenerate and thus not well-defined, since the distribution of $y_i$ given $\ff(x_i)$ is the Dirac distribution at $\ff(x_i)$, which has no density function.
Thus, it is not possible to apply Bayes' rule to derive the posterior distribution of $\ff$ as in \eqref{eq:GP-posterior-bayes}; see also \citet[Section 2.5]{CocOatSulGir17}.
However, as the proof for Theorem \ref{theo:GP-posterior} indicates, the conditional distribution of $\ff$ given training data $(x_i,\ff(x_i))_{i=1}^n$ can be derived based on Gaussian calculus, without relying on Bayes' rule.
The resulting posterior mean function and covariance function are respectively given as \eqref{eq:posteior_mean} and \eqref{eq:posterior-variance} with $\sigma^2 = 0$, as shown in the following theorem.
\begin{theorem} \label{theo:GP-interpolation}
Assume \eqref{eq:GP-prior}, and let $X = (x_1,\dots,x_n) \in \cX^n$ and $\ff_X = (\ff(x_1),\dots,\ff(x_n))\Trans \in \Re^n$.
Moreover, assume that the kernel matrix $k_{XX} = (k(x_i,x_j))_{i,j=1}^n \in \Re^{n \times n}$ is invertible.
Then the conditional distribution of  $\ff$ given $(X,\ff_X)$ is a Gaussian process
$$
\ff~|~\ff_X \sim \GP(\bar{m},\bar{k}), 
$$
where $\bar{m}:\cX \to \Re$ and $\bar{k}:\cX \times \cX \to \Re$ are given by
\begin{eqnarray} 
 \bar{m}(x) &=& m(x) + k_{xX} k_{XX}^{-1}(\ff_X - m_X),\quad x \in \cX, \label{eq:posteior_mean_interp} \\
  \bar{k}(x,x') &=& k(x,x') - k_{xX} k_{XX}^{-1} k_{Xx'}, \quad x,x' \in \cX. \label{eq:posterior-variance_interp}
\end{eqnarray}
\end{theorem}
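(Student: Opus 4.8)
The plan is to mimic the proof of Theorem \ref{theo:GP-posterior} almost verbatim, exploiting the fact that the obstruction noted in the remarks---namely that the degenerate likelihood prevents a direct application of Bayes' rule---is irrelevant to the purely linear-algebraic derivation. The conditional law of a jointly Gaussian vector is always well-defined as long as the conditioning block of the covariance is invertible, regardless of whether a density-based likelihood exists. So I would reduce the infinite-dimensional statement to finite-dimensional Gaussian conditioning, apply Proposition \ref{eq:Gauus-formula}, and then lift back to a process statement via the Kolmogorov extension theorem.

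Concretely, first I would fix an arbitrary finite set of test points $Z = (z_1,\dots,z_m) \in \cX^m$ and observe that, by Definition \ref{def:GP}, the stacked vector $(\ff_X\Trans, \ff_Z\Trans)\Trans$ is jointly Gaussian with mean $(m_X\Trans, m_Z\Trans)\Trans$ and covariance having blocks $k_{XX}$, $k_{XZ}$, $k_{ZX}$, $k_{ZZ}$. This is the exact analogue of the joint distribution in the noisy proof, except that the top-left block is $k_{XX}$ rather than $k_{XX} + \sigma^2 I_n$. I would then invoke Proposition \ref{eq:Gauus-formula} with the identifications $a = \ff_X$, $b = \ff_Z$, $A = k_{XX}$, $B = k_{ZZ}$, $C = k_{XZ}$, yielding $\ff_Z \mid \ff_X \sim \N(\bar\mu, \bar\Sigma)$ with $\bar\mu = m_Z + k_{ZX} k_{XX}^{-1}(\ff_X - m_X)$ and $\bar\Sigma = k_{ZZ} - k_{ZX} k_{XX}^{-1} k_{XZ}$. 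Reading off components, these are precisely $\bar m_Z$ and $\bar k_{ZZ}$ for $\bar m$ and $\bar k$ as defined in \eqref{eq:posteior_mean_interp} and \eqref{eq:posterior-variance_interp}.

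The one genuinely load-bearing point---and the reason the invertibility hypothesis appears in the statement---is that Proposition \ref{eq:Gauus-formula} requires the conditioning covariance block $A$ to be \emph{strictly} positive definite. In the noisy case this was automatic, since $k_{XX} + \sigma^2 I_n \succ 0$ for any positive semidefinite $k_{XX}$; here it must be assumed, and I would note that for a Gram matrix invertibility is equivalent to strict positive definiteness, so the stated hypothesis is exactly what the formula needs. Having established $\ff_Z \mid \ff_X \sim \N(\bar m_Z, \bar k_{ZZ})$ for \emph{every} finite $Z$, the consistency of these finite-dimensional conditional distributions lets me apply the Kolmogorov extension theorem \citep[Theorem 12.1.2]{Dud02} together with Definition \ref{def:GP} to conclude that $\ff \mid \ff_X \sim \GP(\bar m, \bar k)$, which is the claim. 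I expect no serious obstacle beyond being careful that the invertibility assumption is invoked precisely where the Gaussian conditioning formula demands it.
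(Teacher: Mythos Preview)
Your proposal is correct and follows exactly the approach the paper takes: the paper's proof simply states that the assertion is obtained by modifying the proof of Theorem~\ref{theo:GP-posterior}, replacing $k_{XX}+\sigma^2 I_n$ by $k_{XX}$ and $Y$ by $\ff_X$, which is precisely what you spell out. Your additional remark about why invertibility of $k_{XX}$ is the load-bearing hypothesis (since it plays the role that positivity of $k_{XX}+\sigma^2 I_n$ played before) is a helpful clarification that the paper leaves implicit.
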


\begin{proof}
Since $k_{XX}$ is assumed to be invertible, the assertion can be proven by modifying the proof of Theorem \ref{theo:GP-posterior}.
Specifically, this can be done by replacing $k_{XX}+\sigma^2 I_n$ in the proof of Theorem \ref{theo:GP-posterior} by $k_{XX}$, and $Y$ by $\ssf_X$.%\vspace{-3mm}
\end{proof}

\begin{remark}\rm
In Theorem \ref{theo:GP-interpolation}, the kernel matrix $k_{XX}$ is required to be invertible. 
If this condition is not satisfied, then the expressions \eqref{eq:posteior_mean_interp} are \eqref{eq:posterior-variance_interp} not well-defined.
For instance, $k_{XX}$ is not invertible, if some of the points in $X = (x_1,\dots,x_n)$ are identical, or if the kernel $k$ is a polynomial kernel of order $m$ such that $n > m$.
\end{remark}

This way of using Gaussian processes in modeling {\em deterministic} functions is becoming popular in machine learning, in particular in the context of Bayesian optimization (e.g.,  \citeauthor{Bul11}, \citeyear{Bul11}) as well as in the emerging field of probabilistic numerics \citep{HenOsbGirRSPA2015}: For instance, Bayesian quadrature, a probabilistic numerics approach to numerical integration, involves integration of a fixed deterministic function, which is modeled as a Gaussian process with noise-free outputs; see Section \ref{sec:kernel_and_bayesian_quadrature} for details.

The noise-free situation appears for instance when a measurement equipment for the output values is very accurate, or when the function values are obtained as a result of computer experiments.
In the latter case, GP-interpolation is often called {\em emulation} in the literature.
In these situations, typically the function of interest is very expensive to evaluate, so inference should be done based on a small number of function evaluations.
Gaussian processes are useful for this purpose, since one can gain statistical efficiency by incorporating available prior knowledge about the function via the choice of a covariance kernel.

% maybe worth saying that this is useful when these functions are very expensive to evaluate... 

\begin{remark}\rm
For the noise-free case, a posterior distribution may be well-defined by assuming the existence of very small noise in outputs, which corresponds to applying regularization with a very small regularization constant; this is called ``jitter'' in the kriging literature.
This is practically reasonable, since if the kernel matrix $k_{XX}$ is singular (or close to singular,  leading to numerical issues), then the posterior mean \eqref{eq:posteior_mean_interp} as well as the posterior variance \eqref{eq:posterior-variance_interp} are not well-defined without regularization. 
\end{remark}

\subsection{Kernel Ridge Regression and Kernel Interpolation}
\label{sec:kernel-ridge-regression}
Kernel ridge regression (KRR), which is also known as \emph{regularized least-squares} \citep{CapDev07} or \emph{spline smoothing} \citep{wahba1990spline}, arises as a regularized empirical risk minimization problem where the hypothesis space is chosen to be an RKHS $\cH_k$.
That is, we are interested in solving the problem
\begin{equation*}
 \label{eq:erm}
 \hat{f} = \argmin_{f\in\cH_k} \frac{1}{n} \sum_{i=1}^n L\left(x_i,y_i,f(x_i)\right) + \lambda \|f\|_{\cH_k} ^2. 
\end{equation*}
where $L\colon \cX \times \Re \times \Re \to \Re^+$ is a loss function, and $\lambda > 0$ is a regularization constant. 
The loss function penalizes the deviations between predicted outputs $f(x_i)$ and true outputs $y_i$. 
The regularization constant $\lambda$ controls the  smoothness of the estimator, to avoid overfitting: the larger the $\lambda$ is, the smoother the resulting estimator $\hat{f}$ becomes.
Regularization is necessary, as nonparametric estimation of a function from a finite sample is an ill-posed inverse problem, given also that output values are contaminated by noise.

%Dino: I don't think this is the main reason, i.e. we do not necessarily need to assume that there is any noise and that it is additive. One can equally well use KRR on noiseless y's - where the role of the ridge is to avoid overfitting since if H_k is complex enough, it is easy to drive empirical risk to zero.? Also, representer theorem needs the regularizer!

The KRR estimator then arises when using the square loss $L(x,y,y')=(y-y')^2$:
\begin{equation}
 \label{eq:square-loss}
 \hat{f} = \argmin_{f\in\cH_k} \frac{1}{n} \sum_{i=1}^n ( f(x_i) - y_i )^2 + \lambda \|f\|_{\cH_k} ^2. 
\end{equation}
While this least-square problem is over the function space $\cH_k$, which may be infinite dimensional, its solution can be obtained by simple linear algebra, as the following theorem shows.
As it is simple and instructive, we show its proof based on the representer theorem \citep{SchHerSmo01}.
%\citep[Theorem 5.5]{Steinwart2008}

\begin{theorem} \label{theo:KRR-estimator}
%\blue{
If $\lambda > 0$, the solution to \eqref{eq:square-loss} is unique as a function, and is given by
\begin{equation} \label{eq:KRR_estimator}
   \hat{f} (x) = k_{xX}(k_{XX} + n \lambda \Id_n)^{-1} Y = \sum_{i=1}^n \alpha_i k(x,x_i), \quad x \in \cX,
\end{equation}
where 
\begin{equation} \label{eq:KRR-coefficients}
(\alpha_1,\dots,\alpha_n)\Trans := (k_{XX} + n \lambda \Id_n)^{-1} Y \in \Re^n.
\end{equation}
If we further assume that $k_{XX}$ is invertible, then the coefficients $(\alpha_1,\dots,\alpha_n)$ in \eqref{eq:KRR_estimator} are uniquely given by \eqref{eq:KRR-coefficients}.
%}
%where $k_{xX} = (k(x,x_1),\dots,k(x,x_n)) \in \Re^{1\times n}$. 
\end{theorem}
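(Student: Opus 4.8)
The plan is to follow the route advertised before the statement: invoke the representer theorem to collapse the infinite-dimensional problem \eqref{eq:square-loss} into a finite-dimensional quadratic program, solve that program in closed form, and only then address the two separate uniqueness claims. First I would recall why the minimizer must lie in the span $V := \spa\{k(\cdot,x_1),\dots,k(\cdot,x_n)\}$. Writing any $f \in \cH_k$ as $f = f_\parallel + f_\perp$ with $f_\parallel \in V$ and $f_\perp \perp V$, the reproducing property gives $f(x_i) = \langle f, k(\cdot,x_i)\rangle_{\cH_k} = f_\parallel(x_i)$, so the data-fit term depends on $f_\parallel$ only, while $\|f\|_{\cH_k}^2 = \|f_\parallel\|_{\cH_k}^2 + \|f_\perp\|_{\cH_k}^2$ shows the penalty strictly increases with $\|f_\perp\|_{\cH_k}$. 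Hence any minimizer has $f_\perp = 0$, which is exactly the representer theorem \citep{SchHerSmo01}: $\hat f = \sum_{i=1}^n \alpha_i k(\cdot,x_i)$ for some $\alpha = (\alpha_1,\dots,\alpha_n)\Trans \in \Re^n$.

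Substituting this ansatz and using $f_X = k_{XX}\alpha$ together with $\|f\|_{\cH_k}^2 = \alpha\Trans k_{XX}\alpha$ reduces \eqref{eq:square-loss} to minimizing
\[
J(\alpha) = \tfrac{1}{n}\,\| k_{XX}\alpha - Y \|^2 + \lambda\, \alpha\Trans k_{XX}\alpha
\]
over $\alpha \in \Re^n$. Setting the gradient to zero, I would factor it as
\[
\nabla_\alpha J = \tfrac{2}{n}\, k_{XX}\bigl[(k_{XX} + n\lambda I_n)\alpha - Y\bigr] = 0,
\]
from which $\alpha = (k_{XX} + n\lambda I_n)^{-1} Y$ is immediately a stationary point. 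The required inverse exists because $k_{XX}$ is positive semidefinite (it is a Gram matrix) and $n\lambda > 0$, so every eigenvalue of $k_{XX} + n\lambda I_n$ is at least $n\lambda$. Substituting back yields the stated closed form $\hat f(x) = k_{xX}(k_{XX} + n\lambda I_n)^{-1} Y$.

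The step I expect to be the main obstacle, and would treat most carefully, is the twofold uniqueness assertion, because uniqueness of $\hat f$ as a \emph{function} is genuinely stronger than uniqueness of the coefficient vector $\alpha$: when $k_{XX}$ is singular the stationarity equation only forces $(k_{XX}+n\lambda I_n)\alpha - Y \in \ker k_{XX}$, so $\alpha$ is \emph{not} pinned down. To get uniqueness of the function I would argue directly on $\cH_k$ rather than on $\alpha$. Each evaluation $f \mapsto f(x_i)$ is a bounded linear functional (indeed $|f(x_i)| \le \|f\|_{\cH_k}\sqrt{k(x_i,x_i)}$), so $f \mapsto \tfrac1n\sum_i (f(x_i)-y_i)^2$ is convex, while $\lambda\|f\|_{\cH_k}^2$ is strictly convex for $\lambda > 0$; their sum is strictly convex on $\cH_k$ and hence has at most one minimizer. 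Combined with the explicit minimizer already exhibited, this gives uniqueness of $\hat f$ as a function. For the final claim, if $k_{XX}$ is additionally invertible then the stationarity condition collapses to $(k_{XX}+n\lambda I_n)\alpha = Y$, which determines $\alpha = (k_{XX}+n\lambda I_n)^{-1}Y$ uniquely, completing the argument.
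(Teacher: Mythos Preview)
Your proposal is correct and follows essentially the same route as the paper: representer theorem, substitution to a finite-dimensional quadratic in $\alpha$, first-order condition factored through $k_{XX}$, then separate treatment of function-uniqueness versus coefficient-uniqueness. The only notable difference is that the paper obtains function-uniqueness by citing \citet[Theorem 5.5]{Steinwart2008}, whereas you argue strict convexity of $f \mapsto \lambda\|f\|_{\cH_k}^2$ on $\cH_k$ directly---a slightly more self-contained variant of the same idea.
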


\begin{proof}
Because of the regularization term in \eqref{eq:square-loss}, one can apply the representer theorem \citep[Theorem 1]{SchHerSmo01}.
This implies that the solution to \eqref{eq:square-loss} 
%is unique, and 
can be written as a weighted sum of feature vectors $k(\cdot,x_1),\dots,k(\cdot,x_n)$, i.e.,
\begin{equation} \label{eq:KRR-estimator-weighted}
\hat{f} = \sum_{i=1}^n \alpha_i k(\cdot,x_i),
\end{equation}
for some coefficients $\alpha_1,\dots,\alpha_n \in \Re$.
Let ${\bm \alpha} := (\alpha_1,\dots,\alpha_n)\Trans \in \Re^n$.
By substituting the expression \eqref{eq:KRR-estimator-weighted} in \eqref{eq:square-loss}, the optimization problem now becomes
\begin{equation} \label{eq:KRR-finite-dim-objective}
\min_{{\bm \alpha} \in \Re^n} \frac{1}{n} \left[ {\bm \alpha}\Trans k_{XX}^2 {\bm \alpha} - 2 {\bm \alpha}\Trans k_{XX} Y + \| Y \|^2 \right] + \lambda {\bm \alpha}\Trans k_{XX} {\bm \alpha},
\end{equation}
where we used ${\bm \alpha}\Trans k_{XX} {\bm \alpha} = \| \hat{f} \|_{\cH_k}^2$, which follows from the reproducing property.
Differentiating this objective function with respect to ${\bm \alpha}$, setting it equal to $0$ and arranging the resulting equation yields
\begin{equation} \label{eq:KRR-finite-dim-opt}
k_{XX} (k_{XX} + n \lambda I_n) {\bm \alpha} = k_{XX} Y.
\end{equation}
Obviously ${\bm \alpha} = (k_{XX} + n \lambda \Id_n)^{-1} Y$ is one of the solutions to \eqref{eq:KRR-finite-dim-opt}. 
Since the objective function in \eqref{eq:KRR-finite-dim-objective} is a convex function of ${\bm \alpha}$ (while it may not be strictly convex unless $k_{XX}$ is strictly positive definite or invertible), ${\bm \alpha}$ attains the minimum of the objective function.
Since the objective function in \eqref{eq:KRR-finite-dim-objective} is equal to that of \eqref{eq:square-loss}, the function \eqref{eq:KRR-estimator-weighted} with ${\bm \alpha} = (k_{XX} + n \lambda \Id_n)^{-1} Y$ attains the minimum of \eqref{eq:square-loss}.

Note that since the square loss is convex\footnote{A loss function $L: \cX \times \Re \times \Re \to \Re$ is called convex, if $L(x,y,\cdot): \Re \to \Re$ is convex for all fixed $x \in \cX$ and $y \in \Re$ \citep[Definition 2.12]{Steinwart2008}}, the solution to \eqref{eq:square-loss} is unique as a function \citep[Theorem 5.5]{Steinwart2008}.
Hence \eqref{eq:KRR-estimator-weighted} with ${\bm \alpha} = (k_{XX} + n \lambda \Id_n)^{-1} Y$ gives the unique solution to \eqref{eq:square-loss} as a function, and this proves the first claim.
If $k_{XX}$ is further invertible, \eqref{eq:KRR-finite-dim-opt} reduces to $ (k_{XX} + n \lambda I_n) {\bm \alpha} =  Y$, from which the second claim follows.

\end{proof}
%\textcolor{red}{I commented bunch of things in the above proof as representer theorem does not guarantee that the solution is unique. It just says that the solution lies in the finite dimensional RKHS. So we need the invertibility of $k_{XX}$ to obtain a unique solution.}

%(Recall the notation defined in Section \ref{sec:notation}.). 

\begin{comment}
To guarantee the consistency of the KRR estimator, $\lambda$ should decrease as $n$ increases.
The consistency can be guaranteed, if the kernel is universal and $\ff$ is in the $L_2(P_\cX)$ space, where $P_\cX$ is the distribution of covariates (cite).
To derive convergence rates, one needs to assume that $\ff$ lies in the RKHS $\cH_k$ or that $\ff$ can be approximated well by functions in the RKHS (See Section \ref{sec:theory} for details).
\end{comment}

\begin{comment}
\blue{
\begin{remark}\rm \label{rem:uniquness-KRR}
As shown in the proof, the uniqueness of the solution to \eqref{eq:square-loss} follows from i) the square-loss function being convex, and ii) the existence of the regularization term, the latter enabling the representor theorem.
Therefore, the uniqueness of the solution may not follow solely from the existence of the regularization term, nor from the representor theorem; one also needs the convexity of the loss function.
See \citet[Theorem 5.5]{Steinwart2008} for details.
\end{remark}
}
\end{comment}

%\moto{Hi Bharath, I'm currently editing again the proof and remark. I'll let you know once I'm done, so please wait a bit.}

\begin{remark}\rm

While Theorem \ref{theo:KRR-estimator} shows that \eqref{eq:KRR_estimator} is the unique solution of \eqref{eq:square-loss} as a {\em function}, this does not mean that the {\em coefficients} $\alpha_1, \dots, \alpha_n$ in \eqref{eq:KRR_estimator} are uniquely determined, unless the kernel matrix $k_{XX}$ is invertible.
This is because there may be multiple solutions to the linear system \eqref{eq:KRR-finite-dim-opt}, if $k_{XX}$ is not invertible. 
(More precisely, if $(k_{XX} + n \lambda I_n){\bm \alpha}' - Y$ is in the null space of $k_{XX}$, such an ${\bm \alpha}'$ is a solution to \eqref{eq:KRR-finite-dim-opt}, even when $(k_{XX} + n \lambda I_n){\bm \alpha}' - Y \not= 0$.)
However, even when multiple solutions to \eqref{eq:KRR-finite-dim-opt} exist, they result in the same estimator \eqref{eq:KRR_estimator} as a function, which can be shown as follows.
Therefore one can always use the coefficients given in \eqref{eq:KRR-coefficients}.

Let ${\bm \alpha}' := (\alpha_1',\dots,\alpha_n')\Trans \in \Re^n$ be another solution to \eqref{eq:KRR-finite-dim-opt}.
As mentioned in the proof, since the objective function \eqref{eq:KRR-finite-dim-objective} is a convex function of ${\bm \alpha}$, this solution ${\bm \alpha}'$ also attains the minimum of the objective function in \eqref{eq:KRR-finite-dim-objective}, and thus the resulting function $\hat{f}' := \sum_{i=1}^n \alpha_i' k(\cdot,x_i)$ attains the minimum of the objective function in \eqref{eq:square-loss}.
However, since the solution to \eqref{eq:square-loss} is unique \citep[Theorem 5.5]{Steinwart2008}, we have $\hat{f} = \hat{f}'$, where $\hat{f}$ is the KRR estimator \eqref{eq:KRR_estimator}.

\end{remark}
%}

%For instance, assume that two points $x_1$ and $x_2$ are identical, in which case $k_{XX}$ is not invertible. In this case, since we have $\alpha_1 k(\cdot,x_1) + \alpha_2 k(\cdot,x_2) = (\alpha_1 + \alpha_2) k(\cdot,x_1)$, the solution \eqref{eq:KRR_estimator} remains the same as a function as long as the sum $\alpha_1 + \alpha_2$ is unchanged, and therefore $\alpha_1$ and $\alpha_2$ are not uniquely determined.

%\bk{Same as above. I agree that the form of $\hat{f}$ is unique but not the coefficients, which means $\hat{f}$ itself is not unique unless $N(k_{XX})$ is trivial. Interestingly, this problem does not arise if we directly work with the minimization of (25) as I said using covariance operators. See generalized representer in my exp family paper for more details. In any caes, your exposition is fine but we just have to careful about what we say.}
%\end{comment}

\paragraph{Noise-free case: Kernel interpolation}

In the noise-free case where $y_i = \ff(x_i)$, $i=1,\dots,n$, the estimator of $\ff$ is given by \eqref{eq:KRR_estimator} with $\lambda = 0$; that is, 
\begin{equation} \label{eq:KRR_interpolation}
   \hat{f} (x) = k_{xX}k_{XX}^{-1}  \ff_X, = \sum_{i=1}^n \alpha_i k(x,x_i), \quad x \in \cX,
\end{equation}
where $\ff_X := (\ff(x_1),\dots,\ff(x_n))\Trans \in \Re^n$ and $(\alpha_1,\dots,\alpha_n)\Trans := k_{XX}^{-1} \ff_X$.
Thus, in this case the kernel matrix $k_{XX}$ is required to be invertible.
The estimator \eqref{eq:KRR_interpolation} is obtained as a solution of the following optimization problem in the RKHS.
We provide a proof based on that of \citet[Theorem 58 in p.~112]{Berlinet2004}.

\begin{theorem} \label{theo:kernel-interpolation}
Let $k$ be a kernel on a nonempty set $\cX$, and $X = (x_1,\dots,x_n) \in \cX^n$ be such that the kernel matrix $k_{XX}$ is invertible.
Then \eqref{eq:KRR_interpolation} is the unique solution of the following optimization problem:
\begin{equation} \label{eq:kernel-interp-opt}
\hat{f} := \argmin_{f \in \cH_k} \| f \|_{\cH_k}\quad {\rm subject\ to}\ \quad f(x_i) = \ff(x_i), \quad i = 1,\dots,n.
\end{equation}
\end{theorem}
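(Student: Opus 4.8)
The plan is to recognize \eqref{eq:kernel-interp-opt} as a minimum-norm problem subject to finitely many linear (evaluation) constraints, and to solve it by an orthogonal-projection argument in $\cH_k$. First I would check that the candidate $\hat f$ of \eqref{eq:KRR_interpolation} is feasible. Writing $\hat f = \sum_{i=1}^n \alpha_i k(\cdot,x_i)$ with $(\alpha_1,\dots,\alpha_n)\Trans = k_{XX}^{-1}\ff_X$, evaluation at the design points gives $\hat f_X = k_{XX}\,k_{XX}^{-1}\ff_X = \ff_X$, where invertibility of $k_{XX}$ is exactly what is needed; hence $\hat f(x_i)=\ff(x_i)$ for all $i$. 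I would also record that $\hat f$ lies in the finite-dimensional subspace $V := \spa\{k(\cdot,x_1),\dots,k(\cdot,x_n)\}$.

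Next I would show that the difference between $\hat f$ and any competitor is orthogonal to $V$. Let $f\in\cH_k$ be any function satisfying the constraints $f(x_i)=\ff(x_i)$, and set $g := f - \hat f$. For each generator $k(\cdot,x_j)$ of $V$, the reproducing property of Definition \ref{def:RKHS} yields $\langle g, k(\cdot,x_j)\rangle_{\cH_k} = g(x_j) = f(x_j) - \hat f(x_j) = \ff(x_j) - \ff(x_j) = 0$. Therefore $g \perp V$, and in particular $g \perp \hat f$ since $\hat f\in V$. This is the crux of the argument, and it is precisely the step where the reproducing property, rather than any abstract Hilbert-space fact, does the work.

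Finally I would invoke the Pythagorean identity in $\cH_k$: since $f = \hat f + g$ with $g \perp \hat f$, we have $\|f\|_{\cH_k}^2 = \|\hat f\|_{\cH_k}^2 + \|g\|_{\cH_k}^2 \ge \|\hat f\|_{\cH_k}^2$, so $\hat f$ attains the minimum in \eqref{eq:kernel-interp-opt}. Equality holds if and only if $\|g\|_{\cH_k}=0$, i.e.\ $f=\hat f$, which gives uniqueness. I do not anticipate a genuine obstacle: the only points requiring care are that invertibility of $k_{XX}$ is what makes $\hat f$ both well-defined and feasible, and that the orthogonality computation must be phrased through the reproducing property. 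The overall structure mirrors the classical minimum-norm-interpolation-equals-projection computation, following \citet[Theorem 58 in p.~112]{Berlinet2004}.
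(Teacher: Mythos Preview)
Your proof is correct and rests on the same key observation as the paper's: for any two interpolants $f$ and $\hat f$, the reproducing property forces $f-\hat f$ to be orthogonal to $V=\spa\{k(\cdot,x_i)\}$. The paper, following \citet[Theorem 58]{Berlinet2004}, organizes the argument differently: it first shows $V\cap\{\text{interpolants}\}=\{\hat f\}$, then appeals to closedness/convexity of the interpolant set to obtain an abstract minimizer $f^*$, and finally shows $f^*\in V$ by checking that $\|f^*\|\le\|f^*+v\|$ for all $v\in V^\perp$. Your direct Pythagorean comparison of $\hat f$ against an arbitrary competitor is more economical---it dispenses with the abstract existence step and yields optimality and uniqueness in one stroke---while the paper's route makes the projection-onto-$V$ structure a bit more explicit. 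Both are standard and equally valid packagings of the same orthogonality fact.
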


\begin{proof}
Let $\mathcal{S}_0$ be the linear span of the feature vectors $k(\cdot,x_1),\dots,k(\cdot,x_n)$, that is,
$$
\mathcal{S}_0 := \left\{f = \sum_{i=1}^n \alpha_i k(\cdot, x_i): \quad \alpha_1,\dots,\alpha_n \in \Re  \right\}.
$$
Let $\cH_0$ be the set of all functions in $\cH_k$ that interpolate the data $(x_i,\ff(x_i))_{i=1}^n$:
$$
\cH_0 := \left\{ f \in \cH_k:\quad f(x_i) = \ff(x_i), \quad i = 1,\dots,n \right\}.
$$
It is easy to see that \eqref{eq:KRR_interpolation} satisfies $\hat{f}(x_\ell) = \ff(x_\ell)$ for all $\ell = 1,\dots,n$, and thus $\hat{f} \in \mathcal{S}_0 \cap \cH_0$. 

We first show that $\mathcal{S}_0 \cap \cH_0$ consists only of $\hat{f}$. 
To this end, assume that there exist another $g \in  \mathcal{S}_0 \cap \cH_0$, and let $g = \sum_{i=1}^n \beta_i k(\cdot,x_i)$ for $\beta_1,\dots,\beta_n \in \Re$.
Then
$$
\hat{f} - g = \sum_{i=1}^n (\alpha_i-\beta_i) k(\cdot, x_i) \in \mathcal{S}_0 .
$$
On the other hand, since $\hat{f}(x_\ell) = g(x_\ell) = \ff(x_\ell)$ for all $\ell=1,\dots,n$, we have 
$$
\hat{f}(x_\ell) - g(x_\ell) =  \left<\hat{f}-g, k(\cdot,x_\ell) \right>_{\cH_k} = 0, \quad \forall \ell = 1,\dots,n.
$$
This implies that $\hat{f}-g \in \mathcal{S}_0^\perp$, where $\mathcal{S}_0^\perp \subset \cH_k$ is the orthogonal complement of $\mathcal{S}_0$.
Therefore $\hat{f}-g \in \mathcal{S}_0 \cap \mathcal{S}_0^\perp = \{ 0 \}$, which implies that $\hat{f} = g$.
Thus, $\mathcal{S}_0 \cap \cH_0 = \{ \hat{f} \}$.

\begin{comment}
To this end, note that $\cH_0$ is closed, which can be shown as follows.
Let $h \in \cH_k$, and $f_1,f_2 \cdots \in \cH_0$ be such that $\lim_{m \to \infty} \| h - f_m \|_{\cH_k} = 0$.
Then by the reproducing property and the Cauchy-Schwartz inequality,
$$
|h(x_\ell) - f_m(x_\ell)| = |\left<h-f_m,  k(\cdot,x_\ell) \right>_{\cH_k}| \leq \left\| h-f_m \right\|_{\cH_k} \left\| k(\cdot,x_\ell) \right\|_{\cH_k}, \quad \ell = 1,\dots,n,
$$
which implies $h(x_\ell) = \lim_{n \to \infty} f_m(x_\ell) = \ff(x_\ell)$ for all $\ell = 1,\dots,n$.
Thus $h \in \cH_0$, and therefore $\cH_0$ is closed.
\end{comment}
Finally, we show that $\bar{f}$ is the solution of \eqref{eq:kernel-interp-opt}.
It is easy to show that $\cH_0$ is convex and closed.
Thus there exists an element $f^* \in \cH_0$ such that
$$
f^* = \argmin_{f \in \cH_0} \| f \|_{\cH_k}.
$$
For any $v \in \mathcal{S}_0^\perp$, we have 
$\left< f^* + v,  k(\cdot,x_\ell) \right>_{\cH_k} = \left< f^*,  k(\cdot,x_\ell) \right>_{\cH_k} = f^*(x_\ell) = \ff(x_\ell)$ for all $\ell = 1,\dots,n$, and thus $f^* + v \in \cH_0$. 
By definition, $\| f^* \|_{\cH_k} \leq \| f^* + v \|_{\cH_k}$, and this holds for all $v \in \mathcal{S}_0^\perp$.
This implies that $f^*$ belongs to the orthogonal complement of $\mathcal{S}_0^\perp$, which is $\mathcal{S}_0$ since $\mathcal{S}_0$ is closed.
That said, $f^* \in \mathcal{S}_0$ and thus $f^* \in \cH_0 \cap \mathcal{S}_0 = \{ \hat{f} \}$, which implies $f^* = \hat{f}$.
\end{proof}

\begin{remark}\rm
As the RKHS norm $\| f \|_{\cH_k}$ quantifies the smoothness of the function $f \in \cH_k$ (see Example \ref{ex:matern-rkhs} and Section \ref{sec:theory} for this property of the RKHS norm), the solution to the optimization problem \eqref{eq:kernel-interp-opt} is interpreted as the smoothest function in the RKHS that passes all the training data $(x_1, \ff(x_1)),\dots,(x_n, \ff(x_n))$.
\end{remark}

\begin{remark}\rm
In practice, regularization for matrix inversion in \eqref{eq:KRR_interpolation} may still be needed even if there is no output noise, for the sake of numerical stability when the kernel matrix is nearly singular.
For this purpose, nevertheless, the regularization constant may be chosen to be very small, since it is not relevant to the variance of output noise. 
See \citet[Section 3.4]{WenRie05} and \citet[Section 7.8]{SchWen06} for theoretical supports.
\end{remark}
% Dino: I would motivate KRR with this remark rather than with noise parallel and then point to the noise parallel in the next section - I think the parallel between noise stdev and lambda should not be taken literally. KRR assumes no noise model and for example it often requires that lambda->0 as n->infty in theoretical analyses 

\begin{comment}
the reason why the estimator can be obtained from this optimization problem.
Representation theorem cannot be used?
Connection of the minimum RKHS norm and the variance.
In the statistics literature this is also known as spline interpolation
\end{comment}

\subsection{Equivalences in Regression and Interpolation}
\label{sec:GP-KRR-equivalence}
%(RKHS norm = variance of the estimator? RKHS and GKS)
From the expressions \eqref{eq:posteior_mean} and \eqref{eq:KRR_estimator}, it is immediate that the following equivalence holds for GP-regression and kernel ridge regression.
While this result has been well known in the literature, we summarize it in the following proposition.
\begin{proposition} \label{prop:equivalnce}
Let $k$ be a positive definite kernel on a nonempty set $\cX$ and $(x_i,y_i)_{i=1}^n \subset \cX \times \Re$ be training data, and define $X:= (x_1,\dots,x_n) \in \cX^n$ and $Y:=(y_1,\dots,y_n)\Trans \in \Re^n$.
Then we have $\bar{m} = \hat{f}$ if $\sigma^2 = n\lambda$, where
\begin{itemize}
\item $\bar{m}$ is the posterior mean function \eqref{eq:posteior_mean} of GP-regression based on $(X,Y)$, the GP prior $\ff \sim \GP(0,k)$ and the modeling assumption \eqref{eq:regres_noise_model}, where $\xi_1,\dots,\xi_n \iid \mathcal{N}(0,\sigma^2)$ with variance $\sigma^2 > 0$;
\item $\hat{f}$ is the solution \eqref{eq:KRR_estimator} to kernel ridge regression \eqref{eq:square-loss} based on $(X,Y)$, the RKHS $\cH_k$, and  regularization constant $\lambda > 0$.
\end{itemize}
\end{proposition}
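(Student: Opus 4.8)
The plan is to prove the equivalence by direct comparison of the two closed-form expressions already established in Theorems~\ref{theo:GP-posterior} and~\ref{theo:KRR-estimator}; no further machinery is needed. First I would specialize the posterior mean formula \eqref{eq:posteior_mean} to the prior $\ff \sim \GP(0,k)$, so that the mean function vanishes, $m \equiv 0$, and the formula collapses to
\[
\bar{m}(x) = k_{xX}(k_{XX} + \sigma^2 I_n)^{-1} Y, \quad x \in \cX.
\]
Next I would recall the KRR estimator \eqref{eq:KRR_estimator},
\[
\hat{f}(x) = k_{xX}(k_{XX} + n\lambda I_n)^{-1} Y, \quad x \in \cX.
\]

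The remaining step is a single substitution: imposing $\sigma^2 = n\lambda$ makes the two regularized Gram matrices $k_{XX} + \sigma^2 I_n$ and $k_{XX} + n\lambda I_n$ literally identical, so the right-hand sides agree for every $x \in \cX$. Hence $\bar{m} = \hat{f}$ as functions on $\cX$, which is exactly the claim.

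There is essentially no obstacle here; the substantive content is carried entirely by the two prior theorems, and the proposition reduces to matching the regularizers $\sigma^2 I_n$ and $n\lambda I_n$. The only point worth flagging is that the equality is pointwise and therefore holds as functions, irrespective of whether $k_{XX}$ itself is invertible, since both expressions only ever invert the matrix $k_{XX} + \sigma^2 I_n = k_{XX} + n\lambda I_n$, which is strictly positive definite because $\sigma^2 = n\lambda > 0$. Thus neither the representer-theorem argument underlying Theorem~\ref{theo:KRR-estimator} nor the Gaussian-conditioning argument underlying Theorem~\ref{theo:GP-posterior} needs to be revisited; invoking their stated conclusions suffices.
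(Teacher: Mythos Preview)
Your proposal is correct and matches the paper's approach exactly: the paper does not give a separate proof but simply states that the equivalence is immediate from comparing the closed-form expressions \eqref{eq:posteior_mean} and \eqref{eq:KRR_estimator}, which is precisely what you do by specializing to $m\equiv 0$ and matching the regularizers under $\sigma^2 = n\lambda$.
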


\begin{remark}\rm
One immediate consequence of Proposition \ref{prop:equivalnce} is that the posterior mean function $\bar{m}$ belongs to the RKHS $\cH_k$, under the assumptions in Proposition \ref{prop:equivalnce}.
On the other hand, it is well known that a sample $\ssf \sim \GP(\bar{m},\bar{k})$ from the posterior GP does not belong to $\cH_k$ almost surely; we will discuss why this is the case in Section \ref{sec:theory}, and see nevertheless that the GP sample belongs to a certain RKHS induced by $\cH_k$, which is larger than $\cH_k$.
\end{remark}

\begin{remark}\rm
Proposition \ref{prop:equivalnce} implies that the additive Gaussian noise assumption \eqref{eq:regres_noise_model} in GP regression plays the role of regularization in KRR, as the two estimators are identical if $\lambda = \sigma^2/n$.
Recall that $\lambda$ controls the smoothness of $\hat{f}$: as $\lambda$ increases, $\hat{f}$ gets smoother.
Therefore, the assumption that the noise variance $\sigma^2$ is large amounts to the assumption that the latent function $\ff$ is smoother than the observed process.
%\blue{
This interpretation may be explained in the following way.
Let $\eta$ be the zero-mean Gaussian process with a covariance kernel $\delta: \cX \times \cX \to \Re$ defined by 
\begin{equation} \label{eq:Kronecker-delta-kernel}
\delta(x,x') = 
\begin{cases}
1 \quad (x = x') \\
0 \quad (x \not= x').
\end{cases}
\end{equation}
Note that this is a valid kernel, since it is positive definite.
Then the noise variable $\xi_i$ can be written as $\xi_i = \sigma \eta(x_i)$, since this results in $\xi_1,\dots,\xi_n \iid \mathcal{N}(0,1)$.
Define a Gaussian process $\ssy$ by 
\begin{equation} \label{eq:noise-contamination}
\ssy = \ff + \sigma \eta.
\end{equation}
where $\ff: \cX \to \Re$ is the latent function.
The training observations $y_i$ can then be given as evaluations of the process \eqref{eq:noise-contamination}, that is $y_i = \ssy(x_i)$, $i=1,\dots,n$.
Thus, the problem of regression is to infer the latent function $\ff$ based on evaluations of the process \eqref{eq:noise-contamination}, i.e., $(x_i,\ssy(x_i))_{i=1}^n$.
Knowing the model \eqref{eq:noise-contamination}, which states that $\ssy$ is a noisy version of $\ff$, one knows that the observed process $\ssy$ must be rougher than the latent function $\ff$, or that $\ff$ must be smoother than $\ssy$.
In other words, assuming the noise model  \eqref{eq:noise-contamination} amounts to assuming the latent function $\ff$ being smoother than the observed process $\ssy$; this is how the noise assumption plays the role of regularization.
%}
% the degree of relative smoothness is determined by the noise variance $\sigma^2$.
% and why $\sigma^2$ is proportional to the regularization constant $\lambda$.
%Then the training observation $y_i$ can be interpreted as an evaluation of $\ssy$ at $x = x_i$, i.e., 
%$$y_i =  \ff(x_i) + \xi_i  = \ff(x_i) + \xi(x_i) = \ssy(x_i).$$
\end{remark}

\paragraph{Noise-free case: interpolation.}

For the noise-free case, there also exists an equivalence between GP-interpolation and kernel interpolation, which we summarize in the following result.
\begin{proposition} \label{prop:equivalnce-interp}
Let $k$ be a positive definite kernel on a nonempty set $\cX$ and $(x_i,\ff(x_i))_{i=1}^n \subset \cX \times \Re$ be training data, and define $X:= (x_1,\dots,x_n) \in \cX^n$ and $\ff_X:=(\ff(x_1),\dots,\ff(x_n))\Trans \in \Re^n$.
Assume that the kernel matrix $k_{XX}$ is invertible.
Then we have $\bar{m} = \hat{f}$, where
\begin{itemize}
\item $\bar{m}$ is the posterior mean function \eqref{eq:posteior_mean_interp} of GP-interpolation based on $(X,\ff_X)$ and the GP prior $\ff \sim \GP(0,k)$;
\item $\hat{f}$ is the solution \eqref{eq:KRR_interpolation} to kernel interpolation \eqref{eq:kernel-interp-opt} based on $(X,\ff_X)$ and the RKHS $\cH_k$.
\end{itemize}
\end{proposition}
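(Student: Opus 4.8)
The plan is to prove this exactly as Proposition \ref{prop:equivalnce} was established in the noisy case: by writing down the two closed-form expressions and observing that they are literally identical. Since this is the noise-free analogue, I expect the argument to be even simpler, requiring no parameter-matching condition whatsoever.

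First I would invoke Theorem \ref{theo:GP-interpolation} to record the GP-interpolation posterior mean. Because the prior is $\ff \sim \GP(0,k)$, the prior mean function vanishes, so $m \equiv 0$ and hence $m_X = 0$ in \eqref{eq:posteior_mean_interp}, leaving
$$
\bar{m}(x) = k_{xX} k_{XX}^{-1} \ff_X, \quad x \in \cX.
$$
Next I would invoke Theorem \ref{theo:kernel-interpolation}, whose solution \eqref{eq:KRR_interpolation} to the minimum-norm interpolation problem \eqref{eq:kernel-interp-opt} is
$$
\hat{f}(x) = k_{xX} k_{XX}^{-1} \ff_X, \quad x \in \cX.
$$
Both formulas are well-defined precisely because $k_{XX}$ is assumed invertible, which is the standing hypothesis of the proposition; this is the only assumption that needs to be carried through, and it guarantees that neither expression involves inverting a singular matrix.

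Comparing the two displays gives $\bar{m}(x) = \hat{f}(x)$ for every $x \in \cX$, which is the assertion. There is essentially no obstacle: the equivalence is immediate once both expressions are on the table. I would additionally remark that the result can be read off as the $\lambda \to 0$ (equivalently $\sigma^2 \to 0$) limit of Proposition \ref{prop:equivalnce}, recalling the correspondence $\sigma^2 = n\lambda$ established there, so that the interpolation equivalence is simply the degenerate-noise boundary of the regression equivalence. The single subtlety worth flagging is that, unlike the regression case where the regularization term guarantees a unique solution for any $k_{XX}$, here invertibility of $k_{XX}$ is genuinely needed both to define the kernel interpolant and to apply Theorem \ref{theo:GP-interpolation}.
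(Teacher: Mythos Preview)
Your proposal is correct and matches the paper's approach exactly: the paper treats this result as immediate from comparing the closed-form expressions \eqref{eq:posteior_mean_interp} (with $m \equiv 0$) and \eqref{eq:KRR_interpolation}, and does not provide a separate proof. Your additional remarks about the $\sigma^2 \to 0$ limit and the role of the invertibility assumption are accurate and go slightly beyond what the paper states explicitly.
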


\subsection{Error Estimates: Posterior Variance and Worst-Case Error} 
\label{sec:posterior_variance}

\begin{figure}[t]
  \centering \scriptsize
	\includegraphics[width=\linewidth]{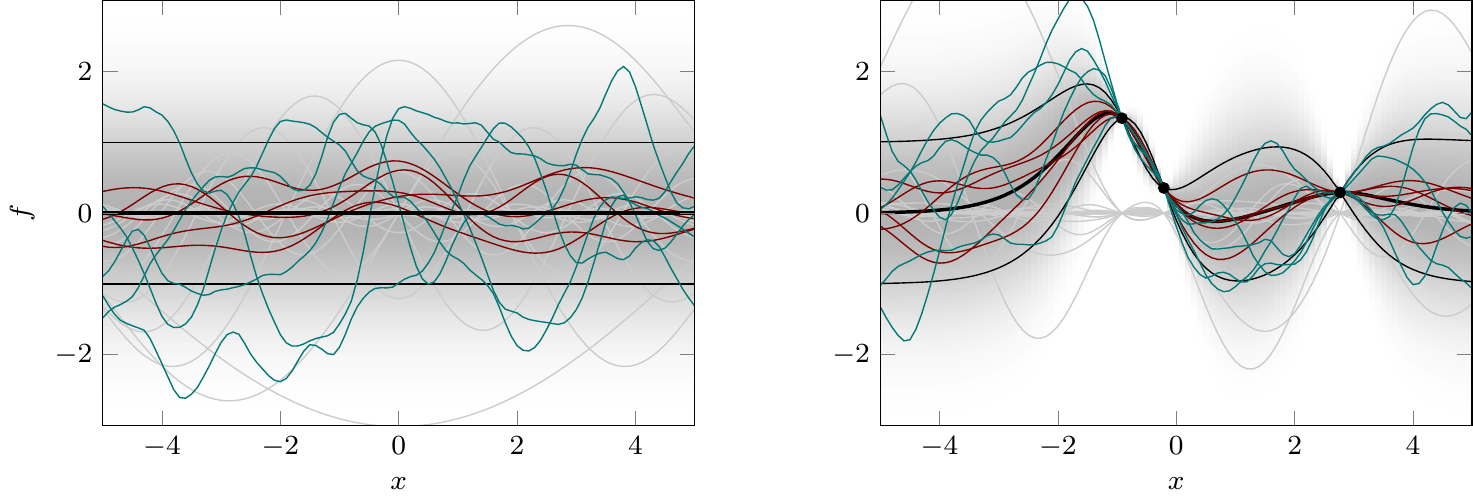}
  \caption{In-model error estimation. Plots similar to Fig.~\ref{fig:GP_intro}. \textbf{Left:} Hypothesis class/prior: The plot shows five sample paths from the GP prior in green and, for comparison, five functions with $f_X\Trans k_{XX} ^{-1} f_X=1$ in red. In light gray in the background: Eigenfunction spectrum (regular grid over the continuous space of such functions), scaled by their eigenvalues. (See Sec.~\ref{sec:Mercer} for eigen expansions of GP and RKHSs) \textbf{Right:} When constrained on noise-less observations, both Gaussian process regression and kernel ridge regression afford the same in-model error estimate, plotted as two thin black lines (Proposition \ref{prop:wce_pvar}). In the GP context, this is the error bar of one marginal standard deviation. In the kernel context, it is the worst case error if the true function has unit RKHS norm. The red functions (which approximate such unit-norm RKHS elements) lie entirely inside this region, while GP samples (green) lie inside it for $\sim 68\%$ of the path (the expected value, the Gaussian probability mass within one standard-deviation).
 Another visible feature is that the GP samples are rougher than the unit-norm representers.}
  \label{fig:RKHS_v_GP} 
\end{figure}

Gaussian process regression is usually employed in settings that also call for a notion of {\em uncertainty}, or {\em error estimate}. The object given this interpretation is the posterior covariance function $\bar{k}$ given by (\ref{eq:posterior-variance}) or its scalar value $\bar{k}(x,x)$ at a particular location $x \in \cX$; this is the \emph{(marginal) posterior variance}, the square root of which is interpreted as an ``error bar''. 
Such uncertainty estimates have numerous applications, one example being active learning, where one explores input locations where uncertainties over output values are high.

The posterior variance is, by definition, the posterior expected square difference between the posterior mean $\bar{m}(x)$ given by (\ref{eq:posteior_mean}) and the output $\ssf(x)$ of posterior GP sample $\ssf \sim \GP(\bar{m},\bar{k})$, that is,
\begin{equation} \label{eq:posterior_var_exp}
 \bar{k}(x,x) = \bE_{\ssf \sim \GP(\bar{m},\bar{k})}[(\ssf(x) - \bar{m}(x))^2].
\end{equation}
In other words, $\bar{k}(x,x)$ is interpreted as the \emph{average case error} at a location $x$ from the Bayesian viewpoint.
The purpose of this subsection is show that there exists a kernel/frequentist interpretation of $\bar{k}(x,x)$ as a certain \emph{worst case error}.
To the best of our knowledge, this fact has not been known in the literature.

%The expression~\eqref{eq:posterior-variance} does not usually feature in introductory texts on kernel ridge regression (in stark contrast to its central role in even elementary texts on Gaussian processes). 

For simplicity, we focus here on regression with a zero-mean GP prior $\ssf \sim \GP(0,k)$. 
We use the following notation.
Let $w^\sigma: \cX \to \Re^n$ be a vector-valued function defined by  
\begin{equation} \label{eq:weight-vec}
w^\sigma(x) := (k_{XX} + \sigma^2 I_n)^{-1}k_{Xx} \in \Re^n, \quad x \in \cX,
\end{equation}
so that the posterior mean function \eqref{eq:posteior_mean} can be written as a projection of $Y = (y_1,\dots,y_n)\Trans$ onto $w^\sigma(x)$:
\begin{equation} \label{eq:pmean-projection}
\bar{m}(x) = \sum_{i=1}^n w_i^\sigma(x) y_i =  Y\Trans w^\sigma(x).
\end{equation}
Moreover, define a new kernel $k^\sigma : \cX \times \cX \to \Re$ by
\begin{equation} \label{eq:reg-kernel}
k^\sigma(x,y) := k(x,y) + \sigma^2 \delta(x,y), \quad x,y \in \cX,
\end{equation}
where $\delta:\cX \times \cX \to \Re$ is the kernel defined in \eqref{eq:Kronecker-delta-kernel}.
Then, by definition, the kernel matrix of $k^\sigma$ for $X=(x_1,\dots,x_n)$ is given by $k_{XX}^\sigma = k_{XX} + \sigma^2 I_n$.
Note that \eqref{eq:reg-kernel} is understood as the covariance kernel of the contaminated observation process \eqref{eq:noise-contamination}, where $\ff := \ssf \sim \GP(0,k)$.
Let $\cH_{k^\sigma}$ be the RKHS of $k^\sigma$.

Given these preliminaries, we now present our result on the worst case error interpretation of the posterior variance \eqref{eq:posterior_var_exp}.
%The proof can be found in Appendix \ref{sec:proof-post-var-worst-noisy}.
\begin{proposition} \label{prop:post-var-worst-noisy}
Let $\bar{k}$ be the posterior covariance function \eqref{eq:posterior-variance} with noise variance $\sigma^2$.
Then, for any $x \in \cX$ with $x \neq x_i$, $i=1,\dots,n$, we have
\begin{equation}
\sqrt{ \bar{k}(x,x) + \sigma^2 }
= \sup_{  g \in \cH_{k^\sigma}: \| g \|_{\cH_{k^\sigma}} \leq 1 } \left( g(x) - \sum_{i=1}^n w_i^\sigma(x) g(x_i) \right)  \label{eq:cov-equi-noise}.
\end{equation}
\end{proposition}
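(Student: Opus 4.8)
The plan is to read the right-hand side of \eqref{eq:cov-equi-noise} as the dual norm of a bounded linear functional on the RKHS $\cH_{k^\sigma}$, and then evaluate that norm explicitly via the reproducing property. For fixed $x$ with $x \neq x_i$ for all $i$, define $L_x : \cH_{k^\sigma} \to \Re$ by $L_x(g) = g(x) - \sum_{i=1}^n w_i^\sigma(x) g(x_i)$. Since point evaluations are bounded in any RKHS, $L_x$ is a bounded linear functional; by the Riesz representation theorem it has a representer $h_x \in \cH_{k^\sigma}$ with $L_x(g) = \langle g, h_x \rangle_{\cH_{k^\sigma}}$, and the supremum over the unit ball $\{ g : \|g\|_{\cH_{k^\sigma}} \le 1 \}$ equals $\|h_x\|_{\cH_{k^\sigma}}$, attained at $g = h_x / \|h_x\|_{\cH_{k^\sigma}}$ by Cauchy--Schwarz. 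The whole statement thus reduces to showing $\|h_x\|_{\cH_{k^\sigma}}^2 = \bar{k}(x,x) + \sigma^2$.

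The next step is to write $h_x$ down. The reproducing property in $\cH_{k^\sigma}$ gives $g(x) = \langle g, k^\sigma(\cdot,x) \rangle_{\cH_{k^\sigma}}$ and $g(x_i) = \langle g, k^\sigma(\cdot,x_i) \rangle_{\cH_{k^\sigma}}$, so by linearity
$$ h_x = k^\sigma(\cdot,x) - \sum_{i=1}^n w_i^\sigma(x)\, k^\sigma(\cdot,x_i). $$
Expanding $\|h_x\|_{\cH_{k^\sigma}}^2$ via $\langle k^\sigma(\cdot,a), k^\sigma(\cdot,b) \rangle_{\cH_{k^\sigma}} = k^\sigma(a,b)$ yields three groups of terms, built respectively from $k^\sigma(x,x)$, $k^\sigma(x,x_i)$, and $k^\sigma(x_i,x_j)$.

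Here the hypotheses enter, and this is the one place needing care. By definition \eqref{eq:reg-kernel}, $k^\sigma(x,x) = k(x,x) + \sigma^2$ because $\delta(x,x) = 1$; this is precisely the source of the extra $+\sigma^2$ on the left-hand side of \eqref{eq:cov-equi-noise}. Crucially, the assumption $x \neq x_i$ forces $\delta(x,x_i) = 0$, so the cross terms collapse to $k^\sigma(x,x_i) = k(x,x_i)$, i.e.\ the cross-covariance vector is $k_{xX}$ with no noise contribution. Finally, taking the design points to be distinct gives $\delta(x_i,x_j) = \delta_{ij}$, so the Gram matrix is $k_{XX}^\sigma = k_{XX} + \sigma^2 I_n$. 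Writing $w := w^\sigma(x) = (k_{XX} + \sigma^2 I_n)^{-1} k_{Xx}$, the expansion becomes
$$ \|h_x\|_{\cH_{k^\sigma}}^2 = k(x,x) + \sigma^2 - 2\,k_{xX} w + w\Trans (k_{XX} + \sigma^2 I_n)\, w. $$

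The final step is routine linear algebra: substituting $w = (k_{XX} + \sigma^2 I_n)^{-1} k_{Xx}$ makes the quadratic term equal to $k_{xX}(k_{XX} + \sigma^2 I_n)^{-1} k_{Xx}$ and the linear term equal to twice that same quantity, so they combine to give
$$ \|h_x\|_{\cH_{k^\sigma}}^2 = k(x,x) + \sigma^2 - k_{xX}(k_{XX} + \sigma^2 I_n)^{-1} k_{Xx} = \bar{k}(x,x) + \sigma^2, $$
using definition \eqref{eq:posterior-variance}. Taking square roots yields \eqref{eq:cov-equi-noise}. The only genuine obstacle is bookkeeping the $\delta$-kernel contributions correctly: the hypothesis $x \neq x_i$ is exactly what removes the noise term from the cross-covariances while the diagonal term $\delta(x,x) = 1$ retains it in $k^\sigma(x,x)$, and overlooking this distinction would produce the wrong additive constant.
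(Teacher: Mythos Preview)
Your proof is correct and follows essentially the same route as the paper: the paper invokes its Lemma~\ref{lemma:RKHS-norm-worst} (which is proven via the reproducing property and Cauchy--Schwarz, i.e.\ the same content as your Riesz/dual-norm argument) to identify the supremum with $\| k^\sigma(\cdot,x) - \sum_i w_i^\sigma(x) k^\sigma(\cdot,x_i) \|_{\cH_{k^\sigma}}$, and then expands that norm exactly as you do, using $x \neq x_i$ to drop the $\delta$-contributions in the cross terms. Your explicit flagging of where the hypothesis $x \neq x_i$ is used and why the extra $+\sigma^2$ arises is a nice touch.
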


To prove the above proposition, we need the following lemma, which is useful in general.
The proof can be found in Appendix \ref{sec:proof-norm-worst}
\begin{lemma} \label{lemma:RKHS-norm-worst}
Let $k$ be a kernel on $\cX$ and $\cH_k$ be its RKHS.
Then for any $m \in \mathbb{N}$, $x_1,\dots,x_m \in \cX$ and $c_1,\dots,c_m \in \Re$, we have
\begin{equation} \label{eq:RKHS-norm-worst}
\left\| \sum_{i=1}^m c_i k(\cdot,x_i) \right\|_{\cH_k} = \sup_{f \in \cH_k: \| f \|_{\cH_k} \leq 1} \sum_{i=1}^m c_i f(x_i).
\end{equation}
\end{lemma}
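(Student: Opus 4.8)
The plan is to recognize the supremum on the right-hand side as the operator norm of a bounded linear functional and to evaluate it via the reproducing property together with the Cauchy--Schwarz inequality. First I would set $h := \sum_{i=1}^m c_i k(\cdot,x_i)$, which belongs to $\cH_k$ by the first axiom of Definition \ref{def:RKHS} and the fact that $\cH_k$ is a vector space. The key observation is that, by the reproducing property, for every $f \in \cH_k$ one has $\langle f, h \rangle_{\cH_k} = \sum_{i=1}^m c_i \langle f, k(\cdot,x_i)\rangle_{\cH_k} = \sum_{i=1}^m c_i f(x_i)$, so that the functional $f \mapsto \sum_{i=1}^m c_i f(x_i)$ is exactly the inner product against the fixed element $h$.

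Given this identification, the right-hand side of \eqref{eq:RKHS-norm-worst} becomes $\sup_{\|f\|_{\cH_k}\le 1} \langle f, h\rangle_{\cH_k}$. The upper bound ``$\le \|h\|_{\cH_k}$'' follows immediately from Cauchy--Schwarz, since $\langle f, h\rangle_{\cH_k} \le \|f\|_{\cH_k}\,\|h\|_{\cH_k} \le \|h\|_{\cH_k}$ whenever $\|f\|_{\cH_k}\le 1$. For the matching lower bound I would exhibit an explicit maximizer: when $h \ne 0$, the choice $f = h/\|h\|_{\cH_k}$ has unit norm and achieves $\langle f, h\rangle_{\cH_k} = \|h\|_{\cH_k}$, so the supremum equals $\|h\|_{\cH_k}$, which is precisely the left-hand side of \eqref{eq:RKHS-norm-worst}.

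The only case requiring separate attention is $h = 0$, where the left-hand side is $0$ and the functional $f \mapsto \langle f, h\rangle_{\cH_k}$ vanishes identically, so the supremum is likewise $0$ and the identity holds trivially. There is no genuine analytic obstacle here — the whole argument reduces to the reproducing property and the tightness of Cauchy--Schwarz, i.e.\ the self-duality of the Hilbert-space norm — so the main thing to be careful about is not overlooking this degenerate case and confirming that the supremum is actually attained rather than merely approached.
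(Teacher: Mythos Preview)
Your proof is correct and follows essentially the same approach as the paper: rewrite the functional via the reproducing property as an inner product against $h=\sum_i c_i k(\cdot,x_i)$, bound above by Cauchy--Schwarz, and attain the bound with $f=h/\|h\|_{\cH_k}$. Your explicit treatment of the degenerate case $h=0$ is in fact a small improvement over the paper, which implicitly assumes $h\neq 0$ when normalizing.
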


\begin{comment}

\begin{proof}
First note that, by the reproducing property, we have
\begin{equation} \label{eq:point_wce}
 e(x) = \sup_{\stackrel{f\in\cH_k}{\|f\|_{\cH} \leq 1}} \left\{\sum_{i=1}^N w_i(x) f(x_i) - f(x)\right\} = \sup_{\stackrel{f\in\cH_k}{\|f\|_{\cH} \leq 1}} \left\langle \sum_{i=1}^N w_i(x) k(x_i,\cdot) - k(x,\cdot),f \right\rangle_{\cH_k}.
\end{equation}
Therefore by the Cauchy-Schwarz inequality, we have 
\begin{eqnarray*}
e(x) 
&\leq& 
\left\| \sum_{i=1}^N w_i(x) k(x_i,\cdot) - k(x,\cdot) \right\|_{\cH_k} \\
&=& \left(  \sum_i\sum_j w_i(x) w_j(x) k(x_i,x_j) - 2 \sum_i w_i(x) k(x_i,x) + k(x,x) \right)^{1/2} \\
&=& \left( w(x)^T k_{XX}w(x) - 2 w(x)^T k_{Xx} + k(x,x) \right)^{1/2} \\
&=& \left( k_{xX}k_{XX}^{-1} k_{XX} k_{XX}^{-1} k_{Xx}  - 2 k_{xX}k_{XX}^{-1} k_{Xx} + k(x,x) \right)^{1/2} \\
&=& \left( k(x,x) - k_{xX}k_{XX}^{-1} k_{XX} k_{XX}^{-1} k_{Xx} \right)^{1/2}.
\end{eqnarray*}
Let $\bar{f} \ce \sum_i w_i k(x_i,\cdot) - k(x,\cdot) \in\cH_k$.
Then, by (\ref{eq:point_wce}) and $\frac{\bar{f}}{\| \bar{f} \|_{\cH_k}} = 1$, $e(x)$ can also be lower-bounded as
\begin{eqnarray*}
e(x) &\geq& \left\langle \sum_{i=1}^N w_i(x) k(x_i,\cdot) - k(x,\cdot), \frac{\bar{f}}{\| \bar{f} \|_{\cH_k}} \right\rangle_{\cH_k} \\
&=& \| \bar{f} \|_{\cH_k} = \left( k(x,x) - k_{xX}k_{XX}^{-1} k_{XX} k_{XX}^{-1} k_{Xx} \right)^{1/2},
\end{eqnarray*}
where the last equality follows from the argument above.
\end{proof}
\end{comment}

Lemma \ref{lemma:RKHS-norm-worst} shows that the RKHS norm of a linear combination of feature vectors can be written as a supremum over functions in the unit ball of the RKHS.
Based on this result, Proposition \ref{prop:post-var-worst-noisy} can be proven as follows.
\begin{proof}
By Lemma \ref{lemma:RKHS-norm-worst}, we have
\begin{equation}
\left\| k^\sigma(\cdot,x) - \sum_{i=1}^n w_i^\sigma(x) k^\sigma(\cdot,x_i) \right\|_{\cH_{k^\sigma}} = \sup_{\stackrel{g \in \cH_{k^\sigma} :} {\| g \|_{\cH_{k^\sigma}} \leq 1 }} \left( g(x) - \sum_{i=1}^n w_i^\sigma(x) g(x_i) \right)  \label{eq:cov-equi-noise2}.
\end{equation}
The left side of this equality can be expanded as
\begin{eqnarray*}
&& \left\| k^\sigma(\cdot,x) - \sum_{i=1}^n w_i^\sigma(x) k^\sigma(\cdot,x_i) \right\|_{\cH_{k^\sigma}}^2 \\
&=& k^\sigma(x,x) - 2 \sum_{i=1}^n w_i^\sigma(x) k^\sigma(x,x_i) + \sum_{i,j=1}^n w_i^\sigma(x) w_j^\sigma(x) k^\sigma(x_i,x_j) \\
&=& k(x,x) + \sigma^2 - 2 \sum_{i=1}^n w_i^\sigma(x) k(x,x_i) + w^\sigma(x) \Trans k^\sigma_{XX} w^\sigma(x) \\
&=& k(x,x) + \sigma^2 - 2 w^\sigma(x)\Trans k_{Xx} +  k_{xX} (k_{XX} + \sigma^2 I_n)^{-1} (k_{XX} + \sigma^2 I_n) (k_{XX} + \sigma^2 I_n)^{-1} k_{Xx} \\
&=&  k(x,x) + \sigma^2 - 2 k_{xX} (k_{XX} + \sigma^2 I_n)^{-1} k_{Xx} + k_{xX} (k_{XX} + \sigma^2 I_n)^{-1} k_{Xx} \\
&=& k(x,x) + \sigma^2 -  k_{xX} (k_{XX} + \sigma^2 I_n)^{-1} k_{Xx}\\
&=& \bar{k}(x,x) + \sigma^2,
\end{eqnarray*}
where we used the assumption $x \not= x_i$ for $i=1,\dots,n$ in the second equality.
The assertion follows from this last expression and \eqref{eq:cov-equi-noise2}.
\end{proof}

\begin{remark}\rm
To understand Proposition \ref{prop:post-var-worst-noisy}, we need to understand the structure of the RKHS $\cH_{k^\sigma}$.
First, the RKHS of $\sigma^2 \delta$, denoted by $\cH_{\sigma^2 \delta}$, is characterized as
\begin{equation} \label{eq:RKHS-delta-kernel}
\cH_{\sigma^2 \delta} = \left\{ h = \sum_{x \in \cX} c_x \sigma^2 \delta(\cdot,x) :\quad \| h \|_{\cH_{\sigma^2 \delta}}^2 = \sigma^4 \sum_{x \in \cX}  c_x^2 < \infty, \quad c_x \in \Re,\ x \in \cX \right\},
\end{equation}
where the summation $\sum_{x \in \cX}$ can be uncountable.
It is known that $\cH_{\sigma^2 \delta}$ is not separable; see e.g.~\citet[Example 3.9]{SteSco12}.
Note that the kernel $\sigma^2 \delta(x,y)$ is given as a multiplication of $\sigma^2$ to the kernel $\delta(x,y)$, so $\cH_{\sigma^2 \delta}$ is norm-equivalent to the RKHS $\cH_{\delta}$ of $\delta$.
Moreover, from \eqref{eq:RKHS-delta-kernel}, it is easy to see that for any $h \in \cH_{\delta}$, we have $\| h \|_{\cH_{\sigma^2 \delta}} = \| h \|_{\cH_{\delta}}$ for all $\sigma > 0$.
%Therefore we use $\cH_{\delta}$ in place of $\cH_{\sigma^2 \delta}$ in the following.

Since $k^\sigma$ is defined as the sum of two kernels $k$ and $\sigma^2 \delta$, the RKHS $\cH_{k^\sigma}$ consists of functions that can be written as a sum of functions from $\cH_k$ and $\cH_{\sigma^2 \delta}$ \citep[Section 6]{Aronszajn1950}:
\begin{equation} \label{eq:RKHS-new-kernel}
\cH_{k^\sigma} = \left\{g = f + h:\ \ f \in \cH_k,\ h \in \cH_{\sigma^2 \delta} \right\} = \left\{g = f + h:\ \ f \in \cH_k,\ h \in \cH_{\delta} \right\},
\end{equation}
where the corresponding RKHS norm is given by
$$
\| g \|_{\cH_{k^\sigma}} = \inf_{\substack{f \in \cH_k,\ h \in \cH_{\sigma^2 \delta} \\ g = f + h}} \| f \|_{\cH_k} + \| h \|_{\cH_{\sigma^2 \delta}} = \inf_{\substack{f \in \cH_k,\ h \in \cH_{\delta} \\ g = f + h}} \| f \|_{\cH_k} + \| h \|_{\cH_{\delta}}.
$$
\end{remark}

\begin{remark}\rm
In \eqref{eq:cov-equi-noise}, the quantity $\sum_{i=1}^n w_i^\sigma(x) g(x_i)$ for a fixed $g \in \cH_{k^\sigma}$ with $\| g \|_{\cH_{k^\sigma}} \leq 1$ is the posterior mean function given training data $(x_i, g(x_i))_{i=1}^n$.
Note that by \eqref{eq:RKHS-new-kernel}, $g$ can be written as $g = f + h$ for some $f \in \cH_k$ and $h \in \cH_{\sigma^2 \delta}$ such that $\| f \|_{\cH_k} + \| h \|_{\cH_{\sigma^2 \delta}} \leq 1$.
Therefore each training output $g(x_i)$ can be written as $g(x_i) = f(x_i) + h(x_i)$, where $h(x_i)$ is understood as ``independent noise.''
The supremum in \eqref{eq:cov-equi-noise} is thus the worst case error between the ``true'' value $g(x) = f(x) + h(x)$ and the estimate $\sum_{i=1}^n w_i^\sigma(x) g(x_i) = \sum_{i=1}^n w_i^\sigma(x) (f(x_i) + h(x_i)) $.
Note that since $h(x)$ is ``independent noise,'' it is not possible to estimate this value from the training data, and thus a certain amount of error is unavoidable.
This may intuitively explain why the additional $\sigma^2$ term appears in the left side of \eqref{eq:cov-equi-noise}, which shows that the worst case error is more pessimistic than the average case error, in the presence of noise.
\end{remark}

\begin{remark}\rm
From the proof of Lemma \ref{lemma:RKHS-norm-worst}, it is easy to see that the function $g \in \cH_{k^\sigma}$ that attains the supremum in \eqref{eq:cov-equi-noise} is 
\begin{eqnarray*}
g &=& C \left( k^\sigma(\cdot,x) - \sum_{i=1}^n w_i^\sigma(x) k^\sigma(\cdot,x_i) \right) \\
&=& C \left( k(\cdot,x) - \sum_{i=1}^n w_i^\sigma(x) k(\cdot,x_i) \right) + C \sigma^2 \left( \delta(\cdot,x) - \sum_{i=1}^n w_i^\sigma(x) \delta(\cdot,x_i)  \right).
\end{eqnarray*}
where $C := \left\| k^\sigma(\cdot,x) - \sum_{i=1}^n w_i^\sigma(x) k^\sigma(\cdot,x_i) \right\|_{\cH_{k^\sigma}}^{-1}$ is a normalizing constant that ensures that $\| f \|_{\cH_{k^\sigma}} = 1$.
Thus, the worst adversarial function can be written as $g = f + h$, where 
$$
f := C \left( k(\cdot,x) - \sum_{i=1}^n w_i^\sigma(x) k(\cdot,x_i) \right) \in \cH_k\,\,\,\text{and}\,\,\, h:= C \sigma^2 \left( \delta(\cdot,x) - \sum_{i=1}^n w_i^\sigma(x) \delta(\cdot,x_i)  \right) \in \cH_{\sigma^2 \delta}.
$$
This implies that, as the noise variance $\sigma^2$ increases, the relative contribution of the noise term $h$ to the adversarial function $g$ increases; this makes it more difficult to fit the adversarial function, and thus the worst case error becomes more pessimistic than the average case error, as shown in \eqref{eq:cov-equi-noise}.
\end{remark}

\paragraph{Noise-free case.}
We consider the important special case of noise-free observations, that is the case where $\sigma^2 = 0$, to further illustrate Proposition \ref{prop:post-var-worst-noisy}.
In this case, the posterior standard deviation $\sqrt{k(x,x)}$, or (the square-root of) the average case error, is identical to the worst case error; see Fig.~\ref{fig:RKHS_v_GP} for illustration.
The following result, which does not require $x \not= x_i$ for $i = 1,\dots,n$ as opposed to Proposition \ref{prop:post-var-worst-noisy}, can be proven in a similar way to that of Proposition \ref{prop:post-var-worst-noisy}. 
\begin{proposition} \label{prop:wce_pvar}
Assume that $\sigma^2 = 0$, and that the kernel matrix $k_{XX}$ is invertible.
Then we have
\begin{equation} \label{eq:noiseless-identity}
  \sqrt{\bar{k}(x,x)} = \sup_{\stackrel{f\in\cH_k}{\|f\|_{\cH_k} \leq 1}} \left(\sum_{i=1}^n w_i(x) f(x_i) - f(x)\right), \quad x \in \cX,
\end{equation}
where $(w_1(x),\dots,w_n(x))\Trans = k_{XX}^{-1}k_{Xx} \in \Re^n$ and $\bar{k}$ is given by \eqref{eq:posterior-variance} with $\sigma^2 = 0$.
\end{proposition}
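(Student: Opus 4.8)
The plan is to mimic the proof of Proposition \ref{prop:post-var-worst-noisy}, but to work directly in $\cH_k$ (rather than in the enlarged space $\cH_{k^\sigma}$) and to exploit the invertibility of $k_{XX}$. The central observation is that the quantity inside the supremum in \eqref{eq:noiseless-identity} is the evaluation of a fixed element of $\cH_k$: writing $w(x) = (w_1(x),\dots,w_n(x))\Trans = k_{XX}^{-1} k_{Xx}$, the reproducing property gives, for every $f \in \cH_k$,
\[
\sum_{i=1}^n w_i(x) f(x_i) - f(x) = \left\langle f,\ \sum_{i=1}^n w_i(x) k(\cdot,x_i) - k(\cdot,x) \right\rangle_{\cH_k}.
\]
Hence the right-hand side of \eqref{eq:noiseless-identity} is the supremum of a fixed linear functional over the unit ball of $\cH_k$, which is exactly the situation addressed by Lemma \ref{lemma:RKHS-norm-worst}.

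First I would apply Lemma \ref{lemma:RKHS-norm-worst} to the $n+1$ points $x_1,\dots,x_n,x$ with coefficients $w_1(x),\dots,w_n(x),-1$, obtaining
\[
\sup_{\stackrel{f\in\cH_k}{\|f\|_{\cH_k} \leq 1}} \left(\sum_{i=1}^n w_i(x) f(x_i) - f(x)\right) = \left\| \sum_{i=1}^n w_i(x) k(\cdot,x_i) - k(\cdot,x) \right\|_{\cH_k}.
\]
Then I would expand the squared norm on the right via the reproducing property,
\[
\left\| \sum_{i=1}^n w_i(x) k(\cdot,x_i) - k(\cdot,x) \right\|_{\cH_k}^2 = w(x)\Trans k_{XX}\, w(x) - 2\, w(x)\Trans k_{Xx} + k(x,x),
\]
and substitute $w(x) = k_{XX}^{-1} k_{Xx}$, where the invertibility of $k_{XX}$ is used. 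This collapses the first two terms, since $w(x)\Trans k_{XX}\, w(x) = k_{xX} k_{XX}^{-1} k_{Xx}$ and $2\, w(x)\Trans k_{Xx} = 2\, k_{xX} k_{XX}^{-1} k_{Xx}$, so the whole expression reduces to $k(x,x) - k_{xX} k_{XX}^{-1} k_{Xx} = \bar{k}(x,x)$ by \eqref{eq:posterior-variance} with $\sigma^2 = 0$. Taking square roots yields \eqref{eq:noiseless-identity}.

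I expect no serious obstacle here; the argument is a direct specialization of Proposition \ref{prop:post-var-worst-noisy} to the case $\sigma^2 = 0$, with $\cH_{k^\sigma}$ replaced by $\cH_k$ and with the regularized weights $w^\sigma$ replaced by $w$. The only point worth verifying is that the identity persists when $x$ coincides with a design point $x_i$, which is precisely why the hypothesis $x \neq x_i$ of Proposition \ref{prop:post-var-worst-noisy} can be dropped. If $x = x_j$, then invertibility of $k_{XX}$ forces $w(x)$ to equal the $j$-th standard basis vector, so both sides vanish: the supremum is zero because $f(x_j) - f(x_j) = 0$, and $\bar{k}(x_j,x_j) = k(x_j,x_j) - k_{x_jX} k_{XX}^{-1} k_{Xx_j} = 0$. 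Since Lemma \ref{lemma:RKHS-norm-worst} is insensitive to whether the evaluation points are distinct, the computation above requires no modification in this degenerate case.
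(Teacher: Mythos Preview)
Your proposal is correct and follows exactly the approach the paper indicates: the paper does not spell out a full proof but simply states that the result ``can be proven in a similar way to that of Proposition \ref{prop:post-var-worst-noisy},'' and your argument is precisely that specialization---apply Lemma \ref{lemma:RKHS-norm-worst} in $\cH_k$ and expand the squared norm with $w(x)=k_{XX}^{-1}k_{Xx}$. Your extra check of the degenerate case $x=x_j$ is a nice addition that explains explicitly why the hypothesis $x\neq x_i$ from Proposition \ref{prop:post-var-worst-noisy} is no longer needed here.
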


By applying the Cauchy-Schwartz inequality to \eqref{eq:noiseless-identity}, we have the following corollary.
It shows that the posterior variance $\bar{k}(x,x)$ provides an upper-bound on the error of kernel-based interpolation for a fixed target function.
\begin{corollary} \label{coro:upper-bound-mean-func}
Assume that  $\sigma^2 = 0$, and that the kernel matrix $k_{XX}$ is invertible. 
Then for all $f \in \cH_k$, we have
\begin{equation*}
 ( \bar{m}(x) - f(x) )^2 \leq \| f \|_{\cH_k}^2 \bar{k}(x,x), \quad x \in \cX.
\end{equation*}
where $\bar{m}(x) = \sum_{i=1}^n w_i(x)  f(x_i)$.
\end{corollary}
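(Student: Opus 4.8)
The plan is to derive this directly from Proposition \ref{prop:wce_pvar}, since the quantity being bounded is already visible inside the supremum in \eqref{eq:noiseless-identity}. Note first that with $\bar{m}(x) = \sum_{i=1}^n w_i(x) f(x_i)$ and $(w_1(x),\dots,w_n(x))\Trans = k_{XX}^{-1}k_{Xx}$, the error at a point is $\bar{m}(x) - f(x) = \sum_{i=1}^n w_i(x) f(x_i) - f(x)$, which is precisely the functional whose supremum over the unit ball of $\cH_k$ equals $\sqrt{\bar{k}(x,x)}$ by Proposition \ref{prop:wce_pvar}. The idea is therefore to pass from the supremum (attained over all unit-norm functions) to the inequality for one fixed $f$ by homogeneity.

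Concretely, if $\|f\|_{\cH_k} = 0$ then $f = 0$ and both sides vanish, so assume $\|f\|_{\cH_k} > 0$ and set $g := f / \|f\|_{\cH_k}$, which satisfies $\|g\|_{\cH_k} = 1$. Since the map $f \mapsto \sum_{i=1}^n w_i(x) f(x_i) - f(x)$ is linear in $f$, evaluating \eqref{eq:noiseless-identity} at $g$ and at $-g$ (both unit-norm) gives
$$
\left| \frac{\bar{m}(x) - f(x)}{\|f\|_{\cH_k}} \right| = \left| \sum_{i=1}^n w_i(x) g(x_i) - g(x) \right| \leq \sqrt{\bar{k}(x,x)}.
$$
Multiplying through by $\|f\|_{\cH_k}$ and squaring yields $(\bar{m}(x) - f(x))^2 \leq \|f\|_{\cH_k}^2\, \bar{k}(x,x)$, as claimed.

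Equivalently, one can invoke Cauchy--Schwarz directly, which is the cleaner route and the one the surrounding text alludes to: by the reproducing property, $\bar{m}(x) - f(x) = \bigl\langle f,\ \sum_{i=1}^n w_i(x) k(\cdot,x_i) - k(\cdot,x) \bigr\rangle_{\cH_k}$, so that
$$
\bigl| \bar{m}(x) - f(x) \bigr| \leq \|f\|_{\cH_k} \left\| \sum_{i=1}^n w_i(x) k(\cdot,x_i) - k(\cdot,x) \right\|_{\cH_k} = \|f\|_{\cH_k}\sqrt{\bar{k}(x,x)},
$$
where the final norm identity is exactly the computation underlying Lemma \ref{lemma:RKHS-norm-worst} and Proposition \ref{prop:wce_pvar} (expanding the squared norm reproduces $k(x,x) - k_{xX}k_{XX}^{-1}k_{Xx} = \bar{k}(x,x)$, using invertibility of $k_{XX}$). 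There is no real obstacle here; the only points requiring a word of care are the degenerate case $\|f\|_{\cH_k}=0$ and the fact that the inequality from the supremum is one-sided, which is why the argument is applied to both $g$ and $-g$ (or, in the Cauchy--Schwarz version, why one takes the absolute value) before squaring to obtain the stated symmetric bound.
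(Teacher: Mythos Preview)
Your proposal is correct and matches the paper's own approach: the paper states the corollary follows by applying the Cauchy--Schwarz inequality to \eqref{eq:noiseless-identity}, which is precisely what your second (and implicitly your first) argument does.
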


\subsection{A Weight Vector Viewpoint of Regularization and the Additive Noise Assumption}
\label{sec:weight-vec-KRR}

Based on the weight vector \eqref{eq:weight-vec} and the worst case error in the right side of \eqref{eq:noiseless-identity}, we provide another interpretation of the equivalence between regularization and the additive noise assumption in regression. This is given by the following result.
\begin{proposition} \label{prop:weight-interpret}
Let $X = (x_1,\dots,x_n) \in \cX^n$ be fixed, and let $Y = (y_1,\dots,y_n)\Trans \in \Re^n$ with $y_i = \ff(x_i) + \xi_i$, $i=1,\dots,n$, where $\ff:\cX \to \Re$ is a fixed function and $\xi_1,\dots,\xi_n$ are random variables such that $\bE[\xi_i] = 0$ and $\bE[\xi_i \xi_j] = \sigma^2 \delta_{ij}$ for $\sigma > 0$. Let $w^\sigma: \cX \to \Re^n$ be the vector-valued function as defined in \eqref{eq:weight-vec} with  $\sigma$, $X$ and kernel $k:\cX \times \cX \to \Re$.
Then we have 
\begin{eqnarray} 
w^\sigma(x) &=& \argmin_{w \in \Re^n} \left[ \sup_{\| f \|_{\cH_k} \leq 1} \left( f(x) - f_X\Trans w \right) \right]^2 + \sigma^2 \| w \|^2 \label{eq:weight-worst-norm} \\
&=& \argmin_{w \in \Re^n} \left[ \sup_{\| f \|_{\cH_k} \leq 1} \left( f(x) - f_X\Trans w \right) \right]^2 + \var_{\xi_1,\dots,\xi_n} [ Y\Trans w ], \quad x \in \cX, \label{eq:weight-worst-var}
\end{eqnarray}
where $f_X := (f(x_1),\dots,f(x_n))\Trans \in \Re^n$.
\end{proposition}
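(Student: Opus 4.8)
The plan is to show that the two objective functions appearing in \eqref{eq:weight-worst-norm} and \eqref{eq:weight-worst-var} are one and the same strictly convex quadratic in $w$, and then to minimize it explicitly. Since a strictly convex quadratic has a unique minimizer, characterized by a vanishing gradient, identifying that minimizer with $w^\sigma(x)$ from \eqref{eq:weight-vec} proves both equalities simultaneously.

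First I would rewrite the worst-case error term in closed form. By the reproducing property, $f(x) - f_X\Trans w = \left< f,\ k(\cdot,x) - \sum_{i=1}^n w_i k(\cdot,x_i)\right>_{\cH_k}$, so applying Lemma \ref{lemma:RKHS-norm-worst} with the signed coefficients ($1$ attached to $x$ and $-w_i$ attached to $x_i$) gives
$$
\sup_{\|f\|_{\cH_k}\le 1}\left(f(x) - f_X\Trans w\right) = \left\| k(\cdot,x) - \sum_{i=1}^n w_i k(\cdot,x_i)\right\|_{\cH_k}.
$$
Expanding the squared norm via the reproducing property yields $k(x,x) - 2 w\Trans k_{Xx} + w\Trans k_{XX} w$, so that the objective of \eqref{eq:weight-worst-norm} becomes
$$
J(w) = k(x,x) - 2 w\Trans k_{Xx} + w\Trans (k_{XX} + \sigma^2 I_n) w.
$$
This is strictly convex because $k_{XX} + \sigma^2 I_n$ is strictly positive definite for $\sigma > 0$. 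Setting $\nabla_w J(w) = -2 k_{Xx} + 2(k_{XX} + \sigma^2 I_n) w = 0$ gives the unique minimizer $w = (k_{XX} + \sigma^2 I_n)^{-1} k_{Xx} = w^\sigma(x)$, which establishes \eqref{eq:weight-worst-norm}.

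To obtain \eqref{eq:weight-worst-var} I would verify that the two penalty terms coincide, namely that $\var_{\xi_1,\dots,\xi_n}[Y\Trans w] = \sigma^2\|w\|^2$. Writing $Y\Trans w = \sum_{i=1}^n \ff(x_i) w_i + \sum_{i=1}^n \xi_i w_i$, the first sum is deterministic (since $X$, $\ff$ and $w$ are fixed), so only the noise contributes to the variance; using $\bE[\xi_i]=0$ and $\bE[\xi_i\xi_j]=\sigma^2\delta_{ij}$ gives $\var[Y\Trans w] = \sum_{i,j=1}^n w_i w_j \sigma^2 \delta_{ij} = \sigma^2\|w\|^2$. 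Hence the objectives of \eqref{eq:weight-worst-norm} and \eqref{eq:weight-worst-var} are identical functions of $w$ and share the minimizer $w^\sigma(x)$.

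I do not expect a serious obstacle here: the argument reduces to a single gradient computation once the supremum is converted to an RKHS norm. The only points requiring care are invoking Lemma \ref{lemma:RKHS-norm-worst} with the correctly signed coefficients, and observing that the strict positive definiteness of $k_{XX} + \sigma^2 I_n$ (guaranteed by $\sigma > 0$) is precisely what makes both the minimizer unique and the inverse well-defined, so that no separate invertibility assumption on $k_{XX}$ is needed.
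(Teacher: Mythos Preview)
Your proposal is correct and follows essentially the same approach as the paper: both use Lemma \ref{lemma:RKHS-norm-worst} to convert the supremum into the RKHS norm $\|k(\cdot,x)-\sum_i w_i k(\cdot,x_i)\|_{\cH_k}$, then identify the resulting regularized quadratic with \eqref{eq:weight-vec}, and separately verify $\var[Y\Trans w]=\sigma^2\|w\|^2$ from the noise assumptions. The only cosmetic difference is that you expand the quadratic and solve $\nabla J(w)=0$ explicitly, whereas the paper asserts \eqref{eq:weight-vec-RKHS} directly and then substitutes; the content is the same.
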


\begin{proof}
First, by the reproducing property it is easy to show that 
\begin{equation} \label{eq:weight-vec-RKHS}
w^\sigma(x) = \argmin_{w \in \Re^n} \left\| k(\cdot,x) - \sum_{i=1}^n w_i k(\cdot,x_i) \right\|_{\cH_k}^2  + \sigma^2 \| w \|^2, \quad x \in \cX.
\end{equation}
For the first term in the right side, we have by Lemma \ref{lemma:RKHS-norm-worst}, 
$$
\left\| k(\cdot,x) - \sum_{i=1}^n w_i k(\cdot,x_i) \right\|_{\cH_k} = \sup_{\| f \|_{\cH_k} \leq 1} \left( f(x) - f_X\Trans w \right).
$$
On the other hand, for the second term in the right side of \eqref{eq:weight-vec-RKHS}, we have
\begin{eqnarray} \label{eq:square-weight-variance}
\var_{\xi_1,\dots,\xi_n} [ Y\Trans w ] = \bE_{\xi_1,\dots,\xi_n} [ ( \xi\Trans w  )^2 ] = \bE_{\xi_1,\dots,\xi_n} [ w\Trans \xi \xi\Trans w ] = \sigma^2 \| w \|^2,
\end{eqnarray}
where $\xi := (\xi_1,\dots,\xi_n)\Trans \in \Re^n$.
The assertion follows by inserting these identities in \eqref{eq:weight-vec-RKHS}. 
\end{proof}

%\dino{I find this confusing. Isn't $Y \sim N(0,K_{xx}+\sigma^2 I)$ This variance is thus conditional on $f(x_i)$'s?}

%\blue{
\begin{remark}\rm
Note that in \eqref{eq:weight-worst-var} and \eqref{eq:square-weight-variance}, the noise variables $\xi_1,\dots,\xi_n$ are independent of the function values $\ff(x_1),\dots,\ff(x_n)$, because of our assumption in Proposition \ref{prop:weight-interpret}; note also that the function $\ff$ is fixed, and is not assumed to be a Gaussian process.
\end{remark}
%}

\begin{remark}\rm
To discuss Proposition \ref{prop:weight-interpret}, let us fix a weight vector $w \in \Re^n$.
Then the first term in the right side of \eqref{eq:weight-worst-norm} is the worst case error in the noise-free setting, since $f_X\Trans w = \sum_{i=1}^n w_i f(x_i)$ can be considered as an estimator of $f(x)$ based on noise-free observations $f(x_1),\dots,f(x_n)$, where $f$ is taken from the unit ball in the RKHS; recall also \eqref{eq:pmean-projection}.
On the other hand, the second term in \eqref{eq:weight-worst-norm} is a regularizer that makes the squared Euclidian norm of the weight vector $w$ not too large.
Importantly, it shows that the noise variance $\sigma^2$ serves as a regularization constant.

Regarding the second term in \eqref{eq:weight-worst-var}, $Y\Trans w$ is an estimator of $\ff(x)$, where $\ff$ is the (fixed) latent regression function; see again \eqref{eq:pmean-projection}.
Thus $\var_{\xi_1,\dots,\xi_n} [ Y\Trans w ]$ is the variance of the regression estimator, which is equal to the regularization term $\sigma^2 \| w \|^2$ in  \eqref{eq:weight-worst-norm}.
Therefore, \eqref{eq:weight-worst-var} shows that the weight vector \eqref{eq:weight-vec} is obtained so as to minimize the sum of the noise-free worst case error and the variance of the regression estimator based on noisy observations.

%As mentioned later, this interpretation can also be related to the robustness to the misspecification of the GP-prior, and to the stability of computing the matrix inverse.

\end{remark}

\section{Hypothesis Spaces: Do Gaussian Process Draws Lie in an RKHS?}
\label{sec:theory}

%\dino{As suggested previously, I would suggest merging 4.3 together with other contraction results in previous section into a separate section and then keep the focus of 4 on the issue of when GP is in the RKHS (in fact, maybe that could be section title: "Hypothesis spaces: Do Gaussian process draws lie in an RKHS?"}

\begin{comment}
\begin{itemize}
\item Why a prior can be rougher than the actual function? Difference between Bayesian inference in infinite and finite dimensional parameter spaces.
\item The following paper by G.\ Wahba should be discussed

\verb|http://www.jstor.org/stable/2345632?seq=1#page_scan_tab_contents|

\item Decision making viewpoint: the ``best prior'' is the one that yields the best predictor / estimator. In this sense, it can be justified even when one uses a prior that may not contain the parameter of interest.
\end{itemize}

\end{comment}

In discussions about the similarity between GPs and kernel methods, it is often pointed out that the hypothesis space of Gaussian processes is not equal to that of kernel ridge regression (i.e., the corresponding RKHS). 
For instance, \citet[Section 7]{Nea98} discussed this topic, arguing why GP models had not been widely used at the time of his writing:
\begin{quotation}\it 
I speculate that a more fundamental reason for the neglect of Gaussian process models is a widespread preference for simple models, resulting from a confusion between prior beliefs regarding the true function being modeled and expectations regarding the properties of the best predictor for this function (the posterior mean, under squared error loss). These need not be at all similar. For example, our beliefs about the true function might sometimes be captured by an Ornstein-Uhlenbeck process, a Gaussian process with covariance function $\exp( - | x^{(i)} - x^{(j)} |)$. 
Realizations from this process are nowhere differentiable, but the predictive mean function will consist of pieces that are sums of exponentials, as can be seen from equation (3). 
\end{quotation}
%\blue{
As explained in Section \ref{sec:GP-KRR-equivalence}, the posterior mean function of GP-regression (which is the ``predictive mean function'' in the above quotation) lies in the RKHS of the GP covariance kernel.
On the other hand, if one considers a sample path of the GP prior (the Ornstein-Uhlenbeck process in the quotation, which is the GP of the Mat\'ern kernel with $\alpha = 1/2$ and $d = 1$; see also Example \ref{ex:matern-spde}), it is almost surely less smooth than functions in the RKHS (which is norm-equivalent to the first-order Sobolev space; see Example \ref{ex:matern-rkhs}.) and hence does belong to that RKHS almost surely.
This is the difference \citet{Nea98} mentioned.
%}

%\blue{
The purpose of this section is to explain why the above mentioned difference exists, by reviewing sample path properties of GPs and how they are related to RKHSs.
To this end, in Section \ref{sec:hypothesis-mercer} we first look at characterizations of GPs and RKHSs by orthonormal expansions, namely the {\em Karhunen-Loève expansion} for GPs and {\em Mercer representation} for RKHSs.
We then review Driscoll's theorem \citep{driscoll1973reproducing,LukBed01}, which provides a necessary and sufficient condition for a GP sample path to lie in a {\em given} RKHS (which can be different from the RKHS associated with the GP covariance kernel) in Section \ref{sec:driscol-sample-path}.
Using this result, we show in Section \ref{sec:power-RKHS-sample-path} that GP sample spaces can be constructed as {\em powers of RKHSs} defined from the Mercer representation \citep{SteSco12}; this recovers a special case of recent generic results by \citet{Ste17} on sample path properties.
We conclude this section by using these results to derive GP sample path properties for square-exponential kernels and Mat\'ern kernels in Section \ref{sec:examples-sample-path}, the latter providing a theoretical explanation of the above difference mentioned by \citet{Nea98}.
%}

%\blue{
The main message of this section may be summarized as follows: While GP sample paths fall outside of the RKHS of the GP covariance kernel almost surely, they actually lie on certain RKHSs defined as powers of that RKHS; therefore GPs and RKHSs are still deeply connected in terms of the induced hypothesis spaces, and the difference such as the one mentioned by \citet{Nea98} should not warrant strong conceptual separation between the two frameworks.
We will also use the sample path properties in this section to discuss the equivalence between convergence properties of GP and kernel ridge regression in Section \ref{sec:rates-and-posterior-contraction}.
%}

%While there \emph{are} differences between the space of GP sample paths and the associated RKHS, these differences are arguably of a rather technical nature and perhaps do not warrant, in and of themselves, a strong conceptual separation between the two frameworks.

\begin{comment}

\paragraph{Discretization.}
The first is one of simple practicality: 
In any realistic setting, GP models are not used over an infinite-dimensional domain, but always discretized to a finite set of representer points. 
For example, the figures in this text are all plotted on a regular grid of one hundred points across the domain $[-5,5]\subset\Re$. On such domains, GP samples are always arbitrarily close to an RKHS element. 
\moto{Bayesian optimization, thompson sampling}

\paragraph{The posterior mean is in the RKHS.}
If the concrete empirical task calls for a GP prediction at a test location $x_*\in\cX$, then the predictive density is given by $p(f_{x_*}\g X,Y)=\N(f_{x_*},\bar{m}(x_*),\bar{k}(x_*, x_*))$ for $f_{x_*} \in \mathbb{R}$, where $\bar{k}$ and $\bar{m}$ are respectively the posterior mean and variance defined in Eqs.~(\ref{eq:posteior_mean})(\ref{eq:posterior-variance}).
We saw in Section~\ref{sub:posterior_variance_} that the posterior mean function $\bar{m}$ is an RKHS element. 

\paragraph{The GP sample space is ``only slightly larger'' than the RKHS}

\end{comment}

\subsection{Characterizations via Orthonormal Expansions} \label{sec:hypothesis-mercer}
To gain intuition and understanding on the structure of RKHS and GP, we review here their expressions via orthonormal functions, that is, {\em Karhunen-Loève expansion} for GPs and {\em Mercer representation} for RKHSs.
These expressions are given in terms of the eigenvalues and eigenfunctions of a kernel integral operator defined below.
For simplicity, we assume here that $\cX$ is a compact metric space (e.g., a bounded and closed subset of $\Re^d$), and $k$ is a continuous kernel on $\cX$.

\subsubsection{Mercer's Theorem}
%\paragraph{Mercer's theorem.} 
\label{sec:Mercer}

Let $\nu$ be a finite Borel measure on $\cX$ with $\cX$ being its support (e.g., the Lebesgue measure on $\cX \subset \Re^d$). 
Let $L_2(\nu)$ be the Hilbert space of square-integrable functions\footnote{Strictly, here each $f \in L_2(\nu)$ represents the class of functions that are equivalent $\nu$-almost everywhere.} with respect to $\nu$, as defined in (\ref{eq:lp-space}) with $p = 2$.
Define an operator $T_k: L_2(\nu) \to L_2(\nu)$ as the integral operator with the kernel $k$ and the measure $\nu$:
\begin{equation} \label{eq:integral_operator}
T_k f := \int k(\cdot,x) f(x) d \nu(x), \quad f \in L_2(\nu).
\end{equation}
If the kernel $k$ is defined on $\cX \subset \Re^d$ and shift-invariant (that is, it can be written in the form $k(x,y) = \phi(x-y)$ for some positive definite function $\phi$), then this operator is a convolution of $k$ and a function $f$.\footnote{Or more precisely, a convolution of the positive definite function $\phi$ and a measure $d\eta := fd\nu$ that has $f$ as a Radon-Nikodym derivative w.r.t.~$\nu$}
%input function $f$.
Therefore the output function $T_k f$ can be seen as a smoothed version of $f$, if $k$ is smooth.

Since $T_k$ is compact, positive and self-adjoint, the spectral theorem (see e.g.~\citealt[Theorem A.5.13]{Steinwart2008}) guarantees an eigen-decomposition of $T_k$ in the form
\begin{equation} \label{eq:integral-eigen-decomp}
T_k f = \sum_{i \in I} \lambda_i \left< \phi_i, f \right>_{L_2(\nu)} \phi_i,
\end{equation}
where  the convergence is in $L_2(\nu)$.
Here $I \subset \mathbb{N}$ is a set of indices (e.g., $I = \mathbb{N}$ when the RKHS is infinite dimensional, and $I=\{1,...,K\}$ with $K \in \mathbb{N}$ when the RKHS if $K$-dimensional), and $(\phi_i,\lambda_i)_{i \in I} \subset L_2(\nu) \times (0,\infty)$ are (countable) eigenfunctions and the associated eigenvalues of $T_k$ such that $\lambda_1 \geq \lambda_2 \geq \cdots > 0$:
\begin{equation*}
  T_k  \phi_i  = \lambda_i \phi_i, \quad i \in I.
\end{equation*}
The eigenfunctions $(\phi_i)_{i\in\mathbb{N}}$ form an orthonormal system in $L_2(\nu)$, i.e., $\left< \phi_i, \phi_j\right>_{L_2(\nu)} = \delta_{ij}$, where $\delta_{ij} = 1$ if $i = j$ and $\delta_{ij} = 0$ otherwise.

Mercer's theorem, which is named after \citet{Mercer1909}, states the kernel $k$ can be expressed in terms of the eigensystem $(\phi_i,\lambda_i)_{i \in I}$ in \eqref{eq:integral-eigen-decomp}.
This expression of the kernel provides useful ways of constructing GPs and RKHSs, as described shortly.
The following form of Mercer's theorem is due to \citet[Theorem 4.49]{Steinwart2008}, while we note that Mercer's theorem holds under weaker assumptions than those considered here \citep[Section 3]{SteSco12}. 

\begin{theorem}[Mercer's theorem] \label{theo:mercer}
Let $\cX$ be a compact metric space, $k:\cX \times \cX \to \Re$ be a continuous kernel, $\nu$ be a finite Borel measure whose support is $\cX$, and $(\phi_i,\lambda_i)_{i \in I}$ be as in \eqref{eq:integral-eigen-decomp}. 
Then we have 
\begin{equation}  \label{eq:mercer}
  k(x,x') =  \sum_{i \in I} \lambda_i \phi_i(x) \phi_i(x'), \quad x,x' \in \cX,
  \end{equation}
  where the convergence is absolute and uniform over $x, x' \in \cX$.
\end{theorem}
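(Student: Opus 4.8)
The plan is to prove two claims separately: that the series $\sum_{i\in I}\lambda_i\phi_i(x)\phi_i(x')$ converges absolutely and uniformly on $\cX\times\cX$, and that its sum equals $k(x,x')$ everywhere (not merely $\nu$-almost everywhere). First I would record two preliminaries. Since $\cX$ is compact and $k$ is continuous, $\kappa:=\sup_{x\in\cX}k(x,x)<\infty$. Moreover each eigenfunction with $\lambda_i>0$ has a continuous representative: from $T_k\phi_i=\lambda_i\phi_i$ we get $\phi_i=\lambda_i^{-1}\int k(\cdot,y)\phi_i(y)\,d\nu(y)$, and the right-hand side is continuous because $k$ is uniformly continuous on the compact set $\cX\times\cX$ and $\nu$ is finite. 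I would work throughout with these continuous versions.

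The crucial step is to control the remainder. For each $n$ define $r_n(x,x'):=k(x,x')-\sum_{i=1}^n\lambda_i\phi_i(x)\phi_i(x')$. I claim $r_n$ is a positive definite kernel; in particular $r_n(x,x)\ge 0$. To see this, observe that the integral operator with continuous symmetric kernel $r_n$ equals $\sum_{i>n}\lambda_i\langle\phi_i,\cdot\rangle_{L_2(\nu)}\phi_i$, which is positive on $L_2(\nu)$. A continuous symmetric kernel whose integral operator is positive on $L_2(\nu)$ is pointwise positive definite whenever $\operatorname{supp}\nu=\cX$: one approximates point masses by normalized indicators of shrinking balls (which have positive $\nu$-measure, by full support) and passes to the limit using continuity of $r_n$. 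Evaluating on the diagonal gives $\sum_{i=1}^n\lambda_i\phi_i(x)^2\le k(x,x)\le\kappa$ for every $n$ and $x$, so the diagonal series converges pointwise with sum at most $k(x,x)$.

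With the diagonal bound in hand, absolute and uniform convergence of the full series follows from Cauchy--Schwarz: for $m<n$,
\[
\sum_{i=m+1}^n\lambda_i\,|\phi_i(x)\phi_i(x')|\le\Big(\sum_{i=m+1}^n\lambda_i\phi_i(x)^2\Big)^{1/2}\Big(\sum_{i=m+1}^n\lambda_i\phi_i(x')^2\Big)^{1/2}\le\kappa^{1/2}\Big(\sup_{z\in\cX}\sum_{i=m+1}^n\lambda_i\phi_i(z)^2\Big)^{1/2}.
\]
Thus it suffices to show the diagonal partial sums $g_n(x):=\sum_{i=1}^n\lambda_i\phi_i(x)^2$ converge uniformly. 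The $g_n$ are continuous, nonnegative, and increase pointwise to a limit $g$ with $g\le k(\cdot,\cdot)$ on the diagonal. If $g$ is continuous, Dini's theorem upgrades this monotone pointwise convergence to uniform convergence on the compact space $\cX$, which by the displayed estimate makes the two-variable series uniformly Cauchy, hence uniformly and absolutely convergent to a continuous limit $\tilde k(x,x')$.

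The two delicate points are where the real work lies. First, Dini requires the limit to be continuous; the standard device is to secure the continuity of $g$ and the uniform convergence of the series \emph{simultaneously}, arguing that a uniformly convergent series of continuous functions has a continuous sum and that the diagonal tails can therefore be driven to zero uniformly. Second, I must identify $\tilde k$ with $k$ pointwise rather than merely up to $\nu$-equivalence. For this I would verify that $\tilde k$ and $k$ induce the same integral operator on $L_2(\nu)$: by construction the operator with kernel $\tilde k$ agrees with $T_k$ on each $\phi_i$ and on the orthogonal complement, so $T_{\tilde k}=T_k$. Since both $\tilde k$ and $k$ are continuous and $\operatorname{supp}\nu=\cX$, equality of the operators forces $\tilde k=k$ everywhere: were they to differ at some $(x_0,x_0')$, continuity would give a neighborhood of positive $\nu\otimes\nu$-measure on which the difference keeps its sign, contradicting $T_{\tilde k}=T_k$. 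Securing continuity of the limit for Dini and this full-support pointwise identification are the main obstacles; the convergence estimates themselves are routine once the positivity of $r_n$ has been established.
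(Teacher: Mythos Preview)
The paper does not actually prove Mercer's theorem; it simply states the result and cites Theorem~4.49 of Steinwart (2008). So there is no in-paper proof to compare your proposal against.

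Your outline follows the classical Mercer argument (positivity of the remainder kernel, the diagonal bound $\sum_i\lambda_i\phi_i(x)^2\le k(x,x)$, Cauchy--Schwarz to reduce joint uniform convergence to uniform convergence on the diagonal, then Dini), and your identification $\tilde k=k$ via equality of integral operators together with $\operatorname{supp}\nu=\cX$ is correct. The genuine gap is exactly where you flag it: your proposed way to close the Dini step is circular. You write that one should ``secure the continuity of $g$ and the uniform convergence of the series simultaneously,'' but Dini needs the continuity of $g$ as a hypothesis, and you are proposing to deduce that continuity from the uniform convergence that Dini is supposed to deliver.

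The standard resolution is to insert a one-variable step before Dini. From the pointwise convergence of the diagonal series you already have, for each fixed $x$, that $\sum_{i>n}\lambda_i\phi_i(x)^2\to 0$; the same Cauchy--Schwarz estimate then gives uniform-in-$x'$ convergence of $\sum_i\lambda_i\phi_i(x)\phi_i(x')$ for that fixed $x$, so $\tilde k(x,\cdot)$ is continuous. Now carry out your operator identification to get $\tilde k(x,\cdot)=k(x,\cdot)$ pointwise (this only needs separate continuity and full support). Specializing to the diagonal yields $g(x)=k(x,x)$, which is continuous, and \emph{only then} is Dini legitimately invoked to upgrade to uniform convergence on the diagonal and hence jointly. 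In short: pointwise diagonal $\Rightarrow$ one-variable uniform $\Rightarrow$ separate continuity of $\tilde k$ $\Rightarrow$ $\tilde k=k$ $\Rightarrow$ $g$ continuous $\Rightarrow$ Dini $\Rightarrow$ joint uniform. Your proposal has all the ingredients but is missing this ordering that breaks the circularity.
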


%\blue{
\begin{remark}\rm \label{rem:mercer-uniequness}
The expansion in \eqref{eq:mercer} depends on the measure $\nu$, since $(\phi_i,\lambda_i)_{i \in I}$ is an eigensystem of the integral operator \eqref{eq:integral_operator}, which is defined with $\nu$.
However, the kernel $k$ in the left side is unique, irrespective of the choice of $\nu$.
In other words, a different choice of $\nu$ results in a different eigensystem $(\phi_i,\lambda_i)_{i \in I}$, and thus results in a different basis expression of the same kernel $k$.
\end{remark}
%}

%\blue{
\begin{remark}\rm \label{rem:mercer-measure}
In Theorem \ref{theo:mercer}, the assumption that $\nu$ has $\cX$ as its support is important, since otherwise the equality  \eqref{eq:mercer} may not hold for some $x \in \cX$.
For instance, assume that there is an open set $N \subset \cX$ such that $\nu(N) = 0$.
Then the integral operator \eqref{eq:integral_operator} does not take into account the values of a function $f$ on $N$, and therefore the eigenfunctions $\phi_i$ are only uniquely defined on $\cX \backslash N$, in which case, the equality in \eqref{eq:mercer} holds only on $\cX \backslash N$.
We refer to \citet[Corollaries 3.2 and 3.5]{SteSco12} for precise statements of Mercer's theorem in such a case.
\end{remark}

\subsubsection{Mercer Representation of RKHSs}
%\paragraph{Mercer Representation of RKHSs.}

The eigensystem of the integral operator \eqref{eq:integral_operator} provides a series representation of the RKHS, which is called the Mercer representation \citep[Theorem 4.51]{Steinwart2008}.
\begin{theorem}[Mercer Representation] \label{theo:mercer-RKHS}
Let $\cX$ be a compact metric space, $k:\cX \times \cX \to \Re$ be a continuous kernel, $\nu$ be a finite Borel measure whose support is $\cX$, and $(\phi_i,\lambda_i)_{i \in I}$ be as in \eqref{eq:integral-eigen-decomp}. 
Then the RKHS $\cH_k$ of $k$ is given by 
\begin{equation}
  \label{eq:RKHS-by-EFs}
 \cH_k=\left\{f := \sum_{i \in I} \alpha_i \lambda_i^{1/2} \phi_i\,:\ \| f \|_{\cH_k}^2 := \sum_{i \in I} \alpha_i^2 < \infty \right\},
\end{equation}
and the inner-product is given by
%And the associated inner product is,
\begin{equation*}
  \left< f,g \right>_{\cH_k} = \sum_{i \in I} \alpha_i\beta_i \q\text{for}
  \q f := \sum_{i \in I} \alpha_i \lambda_i^{1/2} \phi_i \in \cH_k,\q g := \sum_{i\in I} \beta_i \lambda_i^{1/2} \phi_i \in \cH_k. %\q\text{given by}\q
%  \langle f,g \rangle_\cH = \sum_i %\frac{\alpha_i\beta_i}{\lambda^2}.
\end{equation*}
In other words, $(\lambda_i^{1/2} \phi)_{i \in I}$ forms an orthonormal basis of $\cH_k$.
\end{theorem}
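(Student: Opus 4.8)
The plan is to exhibit the right-hand side of \eqref{eq:RKHS-by-EFs} as a Hilbert space of functions on $\cX$ admitting $k$ as a reproducing kernel, and then to invoke the uniqueness part of the Moore--Aronszajn theorem (cited earlier in the excerpt) to conclude that this space must coincide with $\cH_k$. Write $e_i := \lambda_i^{1/2}\phi_i$ and let $\cG$ denote the candidate space equipped with the proposed inner product. The key device is the coordinate map $\Psi$ sending a square-summable coefficient sequence $(\alpha_i)_{i\in I}$ to the function $\sum_{i\in I}\alpha_i e_i$; the whole argument reduces to showing that $\Psi$ is a well-defined bijection onto $\cG$ that makes $\cG$ an isometric copy of the sequence space, after which the two RKHS axioms follow quickly.

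First I would check that $\Psi$ is pointwise well-defined. Evaluating Mercer's theorem (Theorem \ref{theo:mercer}) on the diagonal gives $\sum_{i\in I} e_i(x)^2 = \sum_{i\in I}\lambda_i \phi_i(x)^2 = k(x,x) < \infty$ for every $x\in\cX$. Hence for any $(\alpha_i)$ with $\sum_i \alpha_i^2 < \infty$, the Cauchy--Schwarz inequality yields $\sum_i |\alpha_i e_i(x)| \le (\sum_i \alpha_i^2)^{1/2}\,k(x,x)^{1/2} < \infty$, so the series converges absolutely at each point and defines a genuine function $f = \Psi((\alpha_i))$ with bounded pointwise evaluation $|f(x)| \le \|(\alpha_i)\|\,k(x,x)^{1/2}$.

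The crux, and the step I expect to be the main obstacle, is injectivity of $\Psi$: that the coefficients, hence the norm $\|f\|_{\cH_k}^2 := \sum_i \alpha_i^2$, are actually determined by $f$ as a function. Here I would pass to $L_2(\nu)$: since the $(\phi_i)$ are orthonormal in $L_2(\nu)$ and the eigenvalues satisfy $\lambda_1\geq\lambda_2\geq\cdots>0$, one has $\sum_i \alpha_i^2 \lambda_i \le \lambda_1 \sum_i \alpha_i^2 < \infty$, so the same series also converges in $L_2(\nu)$ to (the class of) $f$, with $L_2(\nu)$-coefficients $\left<f,\phi_i\right>_{L_2(\nu)} = \alpha_i \lambda_i^{1/2}$. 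If $\Psi((\alpha_i))$ vanishes as a function, it vanishes in $L_2(\nu)$, forcing $\alpha_i\lambda_i^{1/2}=0$ and thus $\alpha_i = 0$. This is precisely where the strict positivity of the eigenvalues is used; the full-support hypothesis on $\nu$ enters through Mercer's theorem itself, which guarantees that the expansion is valid at every point of $\cX$ (cf.\ Remark \ref{rem:mercer-measure}). Injectivity makes the inner product well-defined and renders $\Psi$ an isometric isomorphism from the square-summable sequences onto $\cG$; in particular $\cG$ is complete, and $(e_i)_{i\in I}$, being the image of the standard orthonormal basis, is an orthonormal basis of $\cG$.

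It then remains to verify the reproducing-kernel axioms. For fixed $x$, Mercer's expansion reads $k(\cdot,x) = \sum_{i\in I}\bigl(\lambda_i^{1/2}\phi_i(x)\bigr)\,e_i$, whose coefficient sequence $\bigl(\lambda_i^{1/2}\phi_i(x)\bigr)_i$ is square-summable with squared norm $k(x,x)$; hence $k(\cdot,x)\in\cG$. For the reproducing property, given $f=\sum_i \alpha_i e_i\in\cG$, the defining inner product gives $\langle f, k(\cdot,x)\rangle_{\cG} = \sum_{i\in I}\alpha_i\lambda_i^{1/2}\phi_i(x) = f(x)$, the last equality being exactly the pointwise value established above. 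Thus $\cG$ is an RKHS with reproducing kernel $k$, and by the uniqueness in the Moore--Aronszajn theorem $\cG = \cH_k$ with the stated inner product, so that $(\lambda_i^{1/2}\phi_i)_{i\in I}$ is the claimed orthonormal basis of $\cH_k$.
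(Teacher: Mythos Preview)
Your argument is correct, and it is essentially the standard proof of this result; the paper itself does not provide a proof but simply cites \citet[Theorem 4.51]{Steinwart2008}, whose argument follows the same outline (building a Hilbert space from the eigenexpansion, verifying the reproducing kernel axioms, and invoking the uniqueness of the RKHS), albeit in greater generality. One minor point you could make more explicit is why the pointwise limit and the $L_2(\nu)$ limit of the partial sums coincide---this follows from the uniform pointwise bound $|f_N(x)|\le \|(\alpha_i)\|\,k(x,x)^{1/2}$ (with $k$ continuous on compact $\cX$) together with dominated convergence, or alternatively by passing to an a.e.-convergent subsequence of the $L_2(\nu)$-convergent partial sums---but the rest of the argument goes through without issue.
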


%\blue{
\begin{remark}\rm
As mentioned in Remark \ref{rem:mercer-uniequness} for the expansion of a kernel \eqref{eq:mercer},
the Mercer representation in \eqref{eq:RKHS-by-EFs} depends on the measure $\nu$, since $(\phi_i,\lambda_i)_{i \in I}$ depends on $\nu$.
Under the assumptions in Theorem \ref{theo:mercer-RKHS}, however, a different choice of $\nu$, which results in a different eigensystem $(\phi_i,\lambda_i)_{i \in I}$, results in the same RKHS $\cH_k$.
Note also here that, as mentioned in Remark \ref{rem:mercer-measure}, the assumption that $\nu$ has its support on $\cX$ is crucial.
\end{remark}
%}

\subsubsection{Karhunen-Loève Expansion of Gaussian Processes}
\label{sec:KL-expansion}
%\blue{
Corresponding to the Mercer representation of RKHSs, there exists a series representation of Gaussian processes known as the \emph{Karhunen-Loève (KL) expansion}.
The KL-expansion is based on the eigensystem of the integral operator in \eqref{eq:integral_operator}, as for the Mercer representation.
This is a consequence of the canonical isometric isomorphism between an RKHS and the corresponding {\em Gaussian Hilbert space} \citep{Janson1997}.
The following result, which is well known in the literature, follows from \citet[Lemmas 3.3 and 3.7]{Ste17}; see also e.g., \citet[Sections 3.2 and 3.3]{Ad90}
and \citet[Section 2.3]{Berlinet2004}.
\begin{theorem}[Karhunen-Loève Expansion] \label{theo:KL-expansion}
Let $\cX$ be a compact metric space, $k:\cX \times \cX \to \Re$ be a continuous kernel, $\nu$ be a finite Borel measure whose support is $\cX$, and $(\phi_i,\lambda_i)_{i \in I}$ be as in \eqref{eq:integral-eigen-decomp}. 
For a Gaussian process $\ssf\sim\GP(0,k)$, define 
\begin{equation} \label{eq:KL-normal-varibles}
\fz_i := \lambda_i^{-1/2} \int \ssf(x) \phi_i(x) d\nu(x), \quad i \in I.
\end{equation}
Then the following are true:
\begin{enumerate}
\item We have
\begin{equation} \label{eq:normal-variables-KL-expansion}
\fz_i \sim \mathcal{N}(0, 1)\quad \mathrm{and} \quad \bE [\fz_i \fz_j] = \delta_{ij},\quad  i,j \in I. 
\end{equation} 
\item For all $x \in \cX$ and for all finite $J \subset I$, we have
\begin{equation} \label{eq:KL-expansion}
  \bE\left[ \left( \ssf(x) - \sum_{i \in J} \fz_i \lambda_i^{1/2} \phi_i(x) \right)^2 \right] = k(x,x) - \sum_{j \in J} \lambda_j e_j^2(x).
\end{equation}
\item If $I = \mathbb{N}$, we have
\begin{equation} \label{eq:KL-uniform-mean-squre}
 \lim_{n \to \infty} \bE\left[ \left( \ssf(x) - \sum_{i = 1}^n \fz_i \lambda_i^{1/2} \phi_i(x) \right)^2 \right] = 0, \quad x \in \cX,
\end{equation}
where the convergence is uniform in $x \in \cX$.
\end{enumerate}
\end{theorem}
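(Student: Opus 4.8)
The plan is to prove the three claims in the order stated, since each rests on the previous one, and to reduce everything to the eigenrelation $T_k\phi_i=\lambda_i\phi_i$, the orthonormality $\langle\phi_i,\phi_j\rangle_{L_2(\nu)}=\delta_{ij}$, and Mercer's theorem (Theorem \ref{theo:mercer}). The one preliminary I would settle first is the meaning of the random variables $\fz_i$ in \eqref{eq:KL-normal-varibles}. Since $\cX$ is a compact metric space and $k$ is continuous, $\ssf\sim\GP(0,k)$ is mean-square continuous, so the integral $\int\ssf(x)\phi_i(x)\,d\nu(x)$ can be defined as a limit in $L_2(\Omega)$ of Riemann-type sums $\sum_m\ssf(x_m)\phi_i(x_m)\nu(A_m)$. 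Because each such sum is a finite linear combination of the jointly Gaussian values $\ssf(x_m)$, and $L_2(\Omega)$-limits of Gaussians are Gaussian, every $\fz_i$ is automatically a centred Gaussian random variable and any finite family $(\fz_i)$ is jointly Gaussian; this mean-square construction is also what legitimates the interchanges of $\bE$ and $\int$ below.

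For claim 1, I would compute the covariance directly. Writing out $\bE[\fz_i\fz_j]$ and exchanging expectation with the two integrals gives $\lambda_i^{-1/2}\lambda_j^{-1/2}\iint\bE[\ssf(x)\ssf(y)]\phi_i(x)\phi_j(y)\,d\nu(x)\,d\nu(y)$. Using $\bE[\ssf(x)\ssf(y)]=k(x,y)$ and then $\int k(x,y)\phi_i(x)\,d\nu(x)=(T_k\phi_i)(y)=\lambda_i\phi_i(y)$ collapses the inner integral, leaving $\lambda_i^{1/2}\lambda_j^{-1/2}\langle\phi_i,\phi_j\rangle_{L_2(\nu)}=\delta_{ij}$; the mean vanishes because $\ssf$ is centred. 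The interchange is justified since $\bE|\ssf(x)\ssf(y)|\le\sqrt{k(x,x)k(y,y)}$ is bounded ($k$ continuous on a compact set), $\nu$ is finite, and $\phi_i,\phi_j\in L_2(\nu)\subset L_1(\nu)$, so the double integral of absolute values is finite and Fubini applies.

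For claim 2 I would expand the square and invoke claim 1. The cross term needs $\bE[\ssf(x)\fz_i]$, which by the same interchange equals $\lambda_i^{-1/2}\int k(x,y)\phi_i(y)\,d\nu(y)=\lambda_i^{-1/2}(T_k\phi_i)(x)=\lambda_i^{1/2}\phi_i(x)$; here I would work with the continuous representative of $\phi_i$ (available since $\phi_i=\lambda_i^{-1}T_k\phi_i$ and $T_k\phi_i$ is continuous) so that the eigenrelation holds pointwise. Expanding $\bE[(\ssf(x)-\sum_{i\in J}\fz_i\lambda_i^{1/2}\phi_i(x))^2]$ then produces $k(x,x)$ from $\bE[\ssf(x)^2]$, the term $-2\sum_{i\in J}\lambda_i\phi_i(x)^2$ from the cross term, and $+\sum_{i\in J}\lambda_i\phi_i(x)^2$ from the quadratic term (using $\bE[\fz_i\fz_j]=\delta_{ij}$), which combine to $k(x,x)-\sum_{i\in J}\lambda_i\phi_i(x)^2$, the asserted identity (reading $e_j$ as $\phi_j$).

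Finally, claim 3 follows at once by taking $J=\{1,\dots,n\}$ in claim 2, so the left-hand side equals $k(x,x)-\sum_{i=1}^n\lambda_i\phi_i(x)^2$, and then applying Mercer's theorem, which gives $k(x,x)=\sum_{i=1}^\infty\lambda_i\phi_i(x)^2$ with convergence absolute and uniform in $x$. The tail $\sum_{i=n+1}^\infty\lambda_i\phi_i(x)^2$ therefore tends to $0$ uniformly, yielding the claimed uniform mean-square convergence. I expect the only genuinely delicate step to be the rigorous construction of the stochastic integrals $\fz_i$ together with the accompanying measurability and Fubini justifications; once the mean-square framework is in place, the remaining computations are mechanical applications of the eigenrelation and of Mercer's theorem.
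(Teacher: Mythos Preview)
Your proposal is correct and follows the standard route. The paper does not actually present its own proof of this theorem: it simply cites \citet[Lemmas 3.3 and 3.7]{Ste17} and other references. The one point where the paper does indicate an argument is in the remark immediately following the theorem, which notes that \eqref{eq:KL-uniform-mean-squre} is an immediate consequence of \eqref{eq:KL-expansion} together with Mercer's theorem---precisely your derivation of claim 3 from claim 2. Your treatment of claims 1 and 2 via the eigenrelation $T_k\phi_i=\lambda_i\phi_i$, orthonormality, and the Fubini-justified interchange of expectation and integration is the standard computation one finds in the cited references, so there is nothing to contrast.
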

\begin{remark}\rm \label{rem:KL-expnasion-convergence}
Informally, Theorem \ref{theo:KL-expansion} shows that the Gaussian process $\ssf \sim \GP(0,k)$ can be expressed using ONB $(\lambda_i^{1/2} \phi_i)_{i \in I}$ of $\cH_k$ and standard Gaussian random variables $\fz_i \sim \mathcal{N}(0,1)$ as
\begin{equation} \label{eq:KL-exp-informal}
\ssf(x) = \sum_{i \in I} \fz_i \lambda_i^{1/2} \phi_i(x), \quad x \in \cX
\end{equation}
where the convergence is in the mean square sense and uniform over $x \in \cX$, as shown in \eqref{eq:KL-uniform-mean-squre}.
Note that \eqref{eq:KL-uniform-mean-squre} is an immediate consequence of \eqref{eq:KL-expansion} and Mercer's theorem (Theorem \ref{theo:mercer}).
\citet[Theorem 3.5]{Ste17} shows that, under the same conditions, the convergence in \eqref{eq:KL-exp-informal} also holds in $L_2(\nu)$.
The expression \eqref{eq:KL-exp-informal} is what is often called the KL expansion.
\end{remark}
\begin{remark}\rm
\eqref{eq:normal-variables-KL-expansion} shows that $(\fz_i)_{i \in I}$ as defined in \eqref{eq:KL-normal-varibles} are standard normal, and are independent to each other.
Note that $(\fz_i)_{i \in I}$ are dependent to the given $\ssf \sim \GP(0,k)$, as can be seen from \eqref{eq:KL-normal-varibles}; otherwise \eqref{eq:KL-expansion} and \eqref{eq:KL-uniform-mean-squre} do not hold.
On the other hand, {\em given} i.i.d.~standard normal random variables $\fz_1,\dots,\fz_n \iid \mathcal{N}(0,1)$ (i.e., independent to a {\em specific} realization $\ssf \sim \GP(0,k)$), one can construct a finite dimensional Gaussian process in the form $\sum_{i=1}^n \fz_i \lambda_i^{1/2} \phi_i$ that is approximately distributed as $\GP(0,k)$; this is often called a {\em truncated} KL expansion.
\end{remark}
%}

\subsection{Sample Path Properties and the Zero-One Law} \label{sec:driscol-sample-path}
%\blue{
We review Driscoll's theorem, which provides a necessary and sufficient condition for a Gaussian process $\ssf \sim \GP(0,k)$ to belong to an RKHS $\cH_r$ with kernel $r$ with probability $1$ or $0$ \cite[Theorem 3]{driscoll1973reproducing}.
Here the kernels $r$ and $k$ can be different in general, but are defined on the same space $\cX$.
Since these probabilities (i.e., $1$ or $0$) are the only options, the theorem is called Driscoll's zero-one law. (In other words, a statement like ``$\ssf \sim \GP(0,k)$ belongs to $\cH_r$ with probability $0.3$'' is false.)
We review in particular a generalization of Driscoll's theorem by \citet[Theorem 7.4]{LukBed01}, which holds under weaker assumptions than the original theorem by \citet{driscoll1973reproducing}.
Our presentation below also uses some facts pointed out by \citet{Ste17}.
To state the result of \citet{LukBed01}, we need to introduce the notion of the {\em dominance operator}, whose existence is shown by \citet[Theorem 1.1]{LukBed01}.
\begin{theorem}[Dominance operator] \label{theo:dominance}
Let $k$ and $r$ be positive definite kernels on a set $\cX$, and let $\cH_k$ and $\cH_r$ be their respective RKHSs.
Assume $\cH_k \subset \cH_r$, and let $I_{kr}: \cH_k \to \cH_r$ be the natural inclusion operator, i.e., $I_{kr}g := g$ for $g \in \cH_k$.
Then $I_{kr}$ is continuous.
Moreover, there exists a unique linear operator $L: \cH_r \to \cH_k$ such that 
\begin{equation} \label{eq:dominance-op}
\left< f, g \right>_{\cH_r} = \left< Lf, g \right>_{\cH_k}, \quad \forall f \in \cH_r,\ \forall g \in \cH_k.
\end{equation}
In particular, we have
$$
L r(\cdot,x) = k(\cdot,x), \quad \forall x \in \cX.
$$
Furthermore, $I_{kr}L: \cH_r \to \cH_r$ is bounded, positive and symmetric.
\end{theorem}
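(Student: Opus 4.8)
The plan is to identify the dominance operator $L$ with the Hilbert-space adjoint of the inclusion $I_{kr}$, once we know that $I_{kr}$ is bounded. The whole argument then reduces to standard facts about bounded operators between Hilbert spaces, combined with repeated use of the reproducing property. So the crux is really the continuity claim; everything else is bookkeeping.

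First I would establish continuity of $I_{kr}$ via the closed graph theorem. The essential observation is that in any RKHS norm convergence implies pointwise convergence: for $f \in \cH_k$ and $x \in \cX$, the reproducing property and Cauchy--Schwarz give $|f(x)| = |\langle f, k(\cdot,x)\rangle_{\cH_k}| \le \sqrt{k(x,x)}\,\|f\|_{\cH_k}$, and likewise in $\cH_r$. Hence if $g_n \to g$ in $\cH_k$ and $I_{kr}g_n = g_n \to h$ in $\cH_r$, then $g_n(x) \to g(x)$ and $g_n(x) \to h(x)$ for every $x$, forcing $g = h$ as functions. Thus the graph of $I_{kr}$ is closed, and since $\cH_k$ and $\cH_r$ are Hilbert spaces, the closed graph theorem yields that $I_{kr}$ is bounded.

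Next I would set $L := I_{kr}^*$, the adjoint of the now-bounded inclusion. By the defining property of the adjoint, for all $f \in \cH_r$ and $g \in \cH_k$ we have $\langle Lf, g\rangle_{\cH_k} = \langle f, I_{kr}g\rangle_{\cH_r} = \langle f, g\rangle_{\cH_r}$, which is precisely \eqref{eq:dominance-op}; uniqueness is immediate, since any two operators satisfying \eqref{eq:dominance-op} have the same inner product against every $g \in \cH_k$ and hence coincide. The identity $L r(\cdot,x) = k(\cdot,x)$ then follows by testing against an arbitrary $g \in \cH_k$ and applying the reproducing property in each space: $\langle L r(\cdot,x), g\rangle_{\cH_k} = \langle r(\cdot,x), g\rangle_{\cH_r} = g(x) = \langle k(\cdot,x), g\rangle_{\cH_k}$, and as this holds for all $g$, the two elements of $\cH_k$ agree. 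Finally, for $I_{kr}L = I_{kr}I_{kr}^*$: boundedness is clear as a composition of bounded operators; symmetry follows from $(I_{kr}I_{kr}^*)^* = I_{kr}I_{kr}^*$; and positivity from $\langle I_{kr}I_{kr}^* f, f\rangle_{\cH_r} = \|I_{kr}^* f\|_{\cH_k}^2 \ge 0$ for all $f \in \cH_r$.

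The main obstacle is the continuity of $I_{kr}$. The subtlety is that $\cH_k \subset \cH_r$ is given only as a \emph{set} inclusion, with no a priori comparison between the two norms, so one cannot simply estimate $\|g\|_{\cH_r}$ by $\|g\|_{\cH_k}$ directly. The closed graph theorem, powered by the pointwise-evaluation bound available in every RKHS, is exactly the device that upgrades the bare set inclusion into a bounded embedding; after that the existence, uniqueness, and explicit action of $L$, as well as the properties of $I_{kr}L$, are routine consequences of adjoint operator theory.
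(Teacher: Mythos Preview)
Your proof is correct. The paper does not actually prove Theorem~\ref{theo:dominance} itself but cites \citet[Theorem~1.1]{LukBed01} for it; the identification $L = I_{kr}^*$ that you take as the \emph{definition} of $L$ appears in the paper only afterwards, as Lemma~\ref{lemma:dominance-op-inclusion} (attributed to \citet{Ste17}), where it is derived from the already-granted existence and uniqueness of $L$. So your route and the paper's are essentially the same in spirit---both hinge on recognizing $L$ as the adjoint of the inclusion---but you run the logic in the opposite order: you establish boundedness of $I_{kr}$ via the closed graph theorem, \emph{define} $L := I_{kr}^*$, and then read off \eqref{eq:dominance-op}, uniqueness, the action on $r(\cdot,x)$, and the properties of $I_{kr}L$ as immediate consequences. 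This has the advantage of being fully self-contained and making transparent why continuity of $I_{kr}$ is the only nontrivial ingredient; the paper's presentation, by contrast, separates the abstract existence statement (outsourced to the literature) from the concrete identification with the adjoint.
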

The key concept in Driscoll's theorem is the {\em nuclear dominance}, which is defined in the following way.
As explained shortly, the nuclear dominance serves as a necessary and sufficient condition in Driscoll's zero-one law.
\begin{definition}[Nuclear dominance]
Under the same notation as in Theorem \ref{theo:dominance}, assume $\cH_k \subset \cH_r$.
Then $r$ is said to dominate $k$, and the operator $L$ in Theorem \ref{theo:dominance} is called the dominance operator of $\cH_r$ over $\cH_k$.
Moreover, the dominance is called nuclear, in which case it is written as $r \gg k$, if $I_{kr}L :\cH_r \to \cH_r$ is nuclear (or of trace class), i.e., 
$$
{\rm Tr}(I_{kr}L) = \sum_{i \in I} \left<I_{kr}L \psi_i, \psi_i \right>_{\cH_r} < \infty,
$$
where $(\psi_i)_{i \in I} \subset \cH_r$ is an ONB of $\cH_r$.
\end{definition}
Before stating Driscol's theorem, we mention that the dominance operator in Theorem \ref{theo:dominance} can be written in terms of the inclusion operator $I_{kr}$.
That is, \citet[Section 2]{Ste17} pointed out that the dominance operator $L$ is identical to $I_{kr}^*$, the adjoint operator of $I_{kr}$, as summarized in the following lemma.
\begin{lemma} \label{lemma:dominance-op-inclusion}
Under the same notation as in Theorem \ref{theo:dominance}, assume $\cH_k \subset \cH_r$.
Let $L$ be the dominance operator as given in Theorem \ref{theo:dominance}.
Then we have $L = I_{kr}^*$.
\end{lemma}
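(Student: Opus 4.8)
The plan is to show that $I_{kr}^*$ satisfies the very identity that uniquely characterises the dominance operator $L$, and then to conclude by the uniqueness asserted in Theorem \ref{theo:dominance}. First I would note that this uniqueness is exactly the lever that makes the lemma trivial once the right computation is in place: it suffices to verify that $I_{kr}^*$ obeys \eqref{eq:dominance-op}. Since Theorem \ref{theo:dominance} already guarantees that the inclusion $I_{kr}: \cH_k \to \cH_r$ is continuous, its Hilbert-space adjoint $I_{kr}^*: \cH_r \to \cH_k$ is a well-defined bounded operator whose domain and codomain match those of $L$, so the comparison is meaningful.

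Next I would write out the defining relation of the adjoint: for every $g \in \cH_k$ and $f \in \cH_r$ one has $\langle I_{kr} g, f \rangle_{\cH_r} = \langle g, I_{kr}^* f \rangle_{\cH_k}$. Because $I_{kr}$ is the inclusion, $I_{kr} g = g$ as an element of $\cH_r$, so the left-hand side is simply $\langle g, f \rangle_{\cH_r}$. As all spaces here are real Hilbert spaces (the kernels and functions being real-valued), the inner products are symmetric, and the displayed equation can be rewritten as $\langle f, g \rangle_{\cH_r} = \langle I_{kr}^* f, g \rangle_{\cH_k}$ for all $f \in \cH_r$ and $g \in \cH_k$.

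Comparing this with the dominance identity $\langle f, g \rangle_{\cH_r} = \langle L f, g \rangle_{\cH_k}$ from \eqref{eq:dominance-op}, I obtain $\langle I_{kr}^* f, g \rangle_{\cH_k} = \langle L f, g \rangle_{\cH_k}$ for every $g \in \cH_k$; since $\cH_k$ is a Hilbert space this forces $I_{kr}^* f = L f$, and as $f \in \cH_r$ was arbitrary, $L = I_{kr}^*$. Equivalently, and perhaps more cleanly, one simply observes that $I_{kr}^*$ satisfies relation \eqref{eq:dominance-op}, which by Theorem \ref{theo:dominance} determines $L$ uniquely, so the two operators must coincide.

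I do not anticipate a genuine obstacle: the statement is essentially a transcription of the adjoint's defining property into the bilinear form used to introduce $L$. The only step demanding a little care is the symmetry manipulation—one must use that the RKHSs are \emph{real}, so that transposing the arguments of the inner products introduces no complex conjugation; in a complex setting one would identify $L$ with $I_{kr}^*$ only up to the usual conjugate-linear adjustments.
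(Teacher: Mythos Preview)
Your proposal is correct and follows essentially the same route as the paper: write down the defining property of the adjoint $I_{kr}^*$, use that $I_{kr}g=g$, and invoke the uniqueness of $L$ from Theorem \ref{theo:dominance}. Your version is in fact a bit more careful (noting that continuity of $I_{kr}$ ensures $I_{kr}^*$ is well-defined, and flagging the use of real inner-product symmetry), but the argument is the same.
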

\begin{proof}
Let $I_{kr}^*$ be the adjoint of $I_{kr}$.
%By Theorem \ref{theo:dominance}, $I_{kr}$ is continuous, and therefore the adjoint $I_{kr}^*$ exists. 
%{\bk{Do we really need an operator to be continuous for its adjoint to exist? I dont think so. Also Lemma 54 is pretty trivial and since Ingo already showed it, we can remove this result.}}
Then we have
$$
\left< g, f \right>_{\cH_r} = \left<I_{kr}g, f \right>_{\cH_r} = \left<g, I_{kr}^* f \right>_{\cH_r},  \quad \forall f \in \cH_r,\ \forall g \in \cH_k,
$$
which is the property \eqref{eq:dominance-op} of the dominance operator.
Since the dominance operator is unique by Theorem \ref{theo:dominance}, we have $L = I_{kr}^*$.
\end{proof}
The following result shows that the nuclear dominance is equivalent to the inclusion operator $I_{kr}$ being Hilbert-Schmidt.
The result is essentially given by the proof of \citet[Lemma 7.4, equivalence of (i) and (iii)]{Ste17}.
\begin{lemma} \label{lemma:equiv-nuclear-HS}
Under the same notation as in Theorem \ref{theo:dominance}, assume $\cH_k \subset \cH_r$.
Then the following statements are equivalent:
\begin{enumerate}
\item The nuclear dominance holds: $r \gg k$, i.e.,  $I_{kr}L:\cH_r \to \cH_r$ is nuclear.
\item The inclusion operator $I_{kr}: \cH_k \to \cH_r$ is Hilbert-Schmidt.
\end{enumerate}
\end{lemma}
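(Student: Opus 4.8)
The plan is to reduce the claim to the standard operator-theoretic fact that a bounded operator $T$ is Hilbert–Schmidt precisely when $TT^{*}$ (equivalently $T^{*}T$) is trace class. The bridge to that fact is Lemma \ref{lemma:dominance-op-inclusion}, which identifies the dominance operator as $L = I_{kr}^{*}$. Substituting this into the operator appearing in the definition of nuclear dominance gives $I_{kr}L = I_{kr}I_{kr}^{*}$. Since Theorem \ref{theo:dominance} tells us $I_{kr}L$ is bounded, positive and symmetric, nuclearity of $I_{kr}L$ is the same as finiteness of its trace, ${\rm Tr}(I_{kr}I_{kr}^{*}) < \infty$. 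So everything comes down to relating ${\rm Tr}(I_{kr}I_{kr}^{*})$ to the Hilbert–Schmidt norm of $I_{kr}$.

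First I would recall the definition of the Hilbert–Schmidt norm applied to $I_{kr}\colon \cH_k \to \cH_r$: for any orthonormal basis $(e_j)_{j\in J}$ of $\cH_k$,
$$
\| I_{kr} \|_{\mathrm{HS}}^{2} = \sum_{j\in J} \| I_{kr} e_j \|_{\cH_r}^{2} = \sum_{j\in J} \langle I_{kr}^{*} I_{kr} e_j, e_j \rangle_{\cH_k} = {\rm Tr}(I_{kr}^{*} I_{kr}),
$$
so that statement (2) is equivalent to ${\rm Tr}(I_{kr}^{*} I_{kr}) < \infty$.

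Next I would prove the trace identity ${\rm Tr}(I_{kr}^{*} I_{kr}) = {\rm Tr}(I_{kr} I_{kr}^{*})$. Fixing the basis $(e_j)_{j\in J}$ of $\cH_k$ and the basis $(\psi_i)_{i\in I}$ of $\cH_r$ already used in the definition of nuclear dominance, I would expand via Parseval and interchange the order of summation, which is permissible because all summands are nonnegative (Tonelli):
$$
\sum_{j\in J} \| I_{kr} e_j \|_{\cH_r}^{2} = \sum_{j\in J}\sum_{i\in I} |\langle I_{kr} e_j, \psi_i \rangle_{\cH_r}|^{2} = \sum_{i\in I}\sum_{j\in J} |\langle e_j, I_{kr}^{*} \psi_i \rangle_{\cH_k}|^{2} = \sum_{i\in I} \| I_{kr}^{*} \psi_i \|_{\cH_k}^{2} = {\rm Tr}(I_{kr} I_{kr}^{*}).
$$
Here the last equality uses $\|I_{kr}^{*}\psi_i\|_{\cH_k}^{2} = \langle I_{kr} I_{kr}^{*}\psi_i, \psi_i\rangle_{\cH_r}$ together with the basis-independence of the trace of the positive operator $I_{kr}I_{kr}^{*}$.

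Finally I would chain the equivalences: $I_{kr}$ is Hilbert–Schmidt $\iff {\rm Tr}(I_{kr}^{*} I_{kr}) < \infty \iff {\rm Tr}(I_{kr} I_{kr}^{*}) < \infty \iff I_{kr}L$ is nuclear $\iff r \gg k$, where the third step inserts $I_{kr}L = I_{kr}I_{kr}^{*}$ and uses that this operator is positive. I do not anticipate a genuine obstacle; the only points needing a word of care are the justification of the double-sum interchange (nonnegativity, hence Tonelli, valid even for uncountable index sets since only countably many terms are nonzero) and the use of positivity of $I_{kr}L$ to pass from ``finite trace in the chosen basis'' to ``nuclear,'' the latter being strictly stronger than the former for general non-positive operators. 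Both are standard, so the argument is essentially a careful bookkeeping of the identity $I_{kr}L = I_{kr}I_{kr}^{*}$.
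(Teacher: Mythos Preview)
Your proof is correct and follows essentially the same approach as the paper: both invoke Lemma \ref{lemma:dominance-op-inclusion} to rewrite $I_{kr}L = I_{kr}I_{kr}^{*}$ and then reduce to the identity $\|I_{kr}\|_{\rm HS} = \|I_{kr}^{*}\|_{\rm HS}$. The only difference is cosmetic: the paper quotes this identity from a reference, whereas you prove it directly via the double-sum/Parseval argument; your added remark about positivity being needed to pass from ``finite diagonal sum'' to ``nuclear'' is a nice point of care that the paper leaves implicit.
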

\begin{proof}
%{\bk{This result directly follows from a result in Ingo's book. We can state the result and quote the result in Ingo's book.}}
From Lemma \ref{lemma:dominance-op-inclusion}, we have
$$
{\rm Tr}(I_{kr}L) = {\rm Tr}(I_{kr}I_{kr}^*) := \sum_{i \in I} \left<I_{kr} I_{kr}^* \psi_i, \psi_i \right>_{\cH_r} = \sum_{i \in I}  \left<I_{kr}^* \psi_i, I_{kr}^*\psi_i \right>_{\cH_r}  =:   \| I_{kr}^* \|_{\rm HS}^2,  
$$
where $\| \cdot \|_{\rm HS}$ denotes the Hilbert-Schmidt norm and ${\rm Tr}(\cdot)$ the trace, and $(\psi_i)_{i \in I} \subset \cH_r$ is an ONB of $\cH_r$.
Since we have $\| I_{kr}^* \|_{\rm HS} = \| I_{kr} \|_{\rm HS}$ (see e.g., \citealt[p.506]{Steinwart2008}), the assertion immediately follows.
\end{proof}
Using Lemma \ref{lemma:equiv-nuclear-HS}, Theorem 7.4 of \citet{LukBed01}, which is a generalization of the zero-one law of \citet[Theorem 3]{driscoll1973reproducing}, can be stated as Theorem \ref{theo:gen-Driscol-theorem} below.
To state it, we need to introduce a definition of a stochastic process being a {\em version} of a GP \citep[Definition 3.1.9]{Bre14}.
\begin{definition}[A version of a GP] \label{def:version-gp}
Let $\ssf \sim \GP(m,k)$ be a Gaussian process with mean function $m:\cX \to \Re$ and covariance kernel $k: \cX \times \cX \to \Re$, where $\cX$ is a nonempty set.
Then a stochastic process $\tilde{\ssf}$ on $\cX$ is called {\em a version of $\ssf$}, if 
$\ssf (x) = \tilde{\ssf}(x)$ holds with probability $1$ for all $x \in \cX$.
\end{definition}
\begin{theorem}[A generalized Driscol's theorem] \label{theo:gen-Driscol-theorem}
Let $k$ and $r$ be positive definite kernels on a set $\cX$, and let $\cH_k$ and $\cH_r$ be their respective RKHSs.
Assume $\cH_k \subset \cH_r$, and let $I_{kr}: \cH_k \to \cH_r$ be the natural inclusion operator.
Let $\ssf \sim \GP(m,k)$ be a Gaussian process such that $m \in \cH_r$.
Then the following statements are true.
\begin{enumerate}
\item If $I_{kr}$ is Hilbert-Schmidt, then there is a version $\tilde{\ssf}$ of $\ssf$ such that $\tilde{\ssf} \in \cH_r$ holds with probability $1$.
\item If $I_{kr}$ is not Hilbert-Schmidt, then $\ssf \in \cH_r$ holds with probability $0$.
\end{enumerate}
\end{theorem}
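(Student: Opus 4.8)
The plan is to reduce both statements to a zero-one law for Gaussian random series in the Hilbert space $\cH_r$, via the Karhunen-Lo\`eve representation of $\ssf$. First I would dispose of the mean function: since $m \in \cH_r$ by assumption, the event $\{\ssf \in \cH_r\}$ coincides with $\{\ssf - m \in \cH_r\}$, and $\ssf - m \sim \GP(0,k)$, so I may assume without loss of generality that $m \equiv 0$. Next I would fix an orthonormal basis $(e_i)_{i \in I}$ of $\cH_k$ (concretely $e_i = \lambda_i^{1/2}\phi_i$ from the Mercer representation of Theorem \ref{theo:mercer-RKHS} when those hypotheses hold; in general one uses the canonical isometry between $\cH_k$ and the Gaussian Hilbert space of $\ssf$ invoked in Section \ref{sec:KL-expansion}). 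Writing $\fz_i$ for the associated i.i.d.~$\N(0,1)$ variables as in Theorem \ref{theo:KL-expansion}, the process admits the expansion $\ssf = \sum_{i \in I} \fz_i e_i$. Since $\cH_k \subset \cH_r$ and $I_{kr}$ is the inclusion, each $e_i$ is also an element of $\cH_r$, so membership of a version of $\ssf$ in $\cH_r$ becomes the question of whether the Gaussian series $\sum_{i \in I} \fz_i\, I_{kr} e_i$ converges in the norm of $\cH_r$.

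The dichotomy itself I would explain first: altering finitely many $\fz_i$ changes $\ssf$ only by a finite linear combination of the $e_i$, which remains in $\cH_k \subset \cH_r$, so $\{\ssf \in \cH_r\}$ is a tail event of the independent sequence $(\fz_i)$; the Kolmogorov zero-one law then forces its probability to be $0$ or $1$. To decide which, I would appeal to the It\^o--Nisio theorem for Gaussian series in a Hilbert space (restricting, if needed, to the separable closed span of the $v_i := I_{kr}e_i$): the series $\sum_i \fz_i v_i$ converges almost surely iff $\sum_{i} \|v_i\|_{\cH_r}^2 < \infty$, and otherwise diverges almost surely. The key identity is that $\sum_i \|I_{kr} e_i\|_{\cH_r}^2$ is exactly $\|I_{kr}\|_{\mathrm{HS}}^2$ evaluated in the basis $(e_i)$, so finiteness of this sum is equivalent to $I_{kr}$ being Hilbert-Schmidt.

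For part 1, in the Hilbert-Schmidt case the series converges almost surely to a random element $\tilde{\ssf} \in \cH_r$, and I would verify it is a version of $\ssf$: for fixed $x$, convergence in $\cH_r$ and the reproducing property give $\tilde{\ssf}(x) = \langle \tilde{\ssf}, r(\cdot,x)\rangle_{\cH_r} = \lim_n \langle \sum_{i=1}^n \fz_i e_i, r(\cdot,x)\rangle_{\cH_r} = \lim_n \sum_{i=1}^n \fz_i e_i(x)$, while the same partial sums converge to $\ssf(x)$ in mean square by Theorem \ref{theo:KL-expansion}; matching the two limits along a subsequence yields $\tilde{\ssf}(x) = \ssf(x)$ almost surely. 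For part 2, when $I_{kr}$ is not Hilbert-Schmidt the series diverges almost surely, which combined with the tail-event argument above forces $\mathbb{P}(\ssf \in \cH_r) = 0$.

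The main obstacle I anticipate is the bookkeeping linking abstract series convergence in $\cH_r$ to the literal statement about sample paths of $\ssf$ --- in particular, showing that divergence of the representation genuinely precludes $\ssf$ itself from lying in $\cH_r$ and not merely the chosen series, and doing so when $\cX$ is an arbitrary set with neither Mercer's theorem nor a countable basis available a priori. This is precisely the delicate functional-analytic content of \citet[Theorem 7.4]{LukBed01}; given the equivalence between nuclear dominance and the Hilbert-Schmidt property of $I_{kr}$ already established in Lemma \ref{lemma:equiv-nuclear-HS}, the cleanest route in full generality is to invoke that theorem and translate its nuclear-dominance hypothesis into the Hilbert-Schmidt condition.
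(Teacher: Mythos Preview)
The paper does not actually prove this theorem: it simply states it as a reformulation of \citet[Theorem 7.4]{LukBed01}, using Lemma~\ref{lemma:equiv-nuclear-HS} to replace the nuclear-dominance hypothesis by the Hilbert--Schmidt condition on $I_{kr}$. Your final paragraph lands exactly there, so your proposal is correct and agrees with the paper's approach.

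Your preceding sketch via the Karhunen--Lo\`eve expansion, the It\^o--Nisio theorem, and Kolmogorov's zero-one law is additional content the paper does not supply. It is a reasonable heuristic route and gives good intuition for why the Hilbert--Schmidt condition is the right one, but you have correctly identified its limitation: Theorem~\ref{theo:KL-expansion} in the paper assumes $\cX$ compact metric, $k$ continuous, and a Borel measure $\nu$ with full support, whereas Theorem~\ref{theo:gen-Driscol-theorem} is stated for an arbitrary set $\cX$ and arbitrary positive definite $k,r$. In that generality the Mercer machinery is unavailable, and the passage from ``the series $\sum_i \fz_i I_{kr}e_i$ diverges in $\cH_r$'' to ``no version of $\ssf$ lies in $\cH_r$'' requires exactly the functional-analytic care that Luki\'c and Beder provide. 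So the direct argument buys intuition but not the full statement; the citation route buys generality at the cost of a black box.
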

\begin{remark}\rm
In \citet[Theorem 3]{driscoll1973reproducing}, it is assumed that $\cX$ is a separable metric space, $k$ is a continuous kernel on $\cX$ and $\ssf \sim \GP(m,k)$ is almost surely continuous.
Under this assumption, \citet[Theorem 3]{driscoll1973reproducing} showed that a condition equivalent to the nuclear dominance condition \citep[Proposition 4.5]{LukBed01} implies that the {\em given} Gaussian process $\ssf$ belongs to $\cH_r$ with probability $1$.
That is, in this case one does not need to consider a version $\tilde{\ssf}$ of it.
\end{remark}
\begin{remark}\rm
In \citet[Theorem 5.1]{LukBed01}, it is shown that the nuclear dominance condition (which is equivalent to $I_{kr}$ being Hilbert-Schmidt) implies that {\em any} second-order process $\ssf$ with covariance kernel $k$ (i.e., $\ssf$ does not necessarily be Gaussian) belongs to $\cH_r$ with probability $1$.
\end{remark}
\begin{remark} \rm
One way to check whether $I_{kr}$ is Hilbert-Schmidt is given by \citet[Theorem A]{GonDud93}: They provide a necessary and sufficent  for $I_{kr}$ to be Hilbert-Schmidt in terms of an integral of the metric entropy of the embedding $I_{kr}: \cH_k \to \cH_r$.
See also \citet[Corollary 5.4]{Ste17} for a similar condition.
\end{remark}
From Theorem \ref{theo:gen-Driscol-theorem}, it is easy to show that a GP sample path $\ssf \sim \GP(0,k)$ does {\em not} belong to the corresponding RKHS $\cH_k$ with probability $1$ if $\cH_k$ is infinite dimensional, as summarized in Corollary \ref{coro:GP-path-not-in-RKHS} below.
This implies that GP samples are ``rougher'', or less regular, than RKHS functions (see also Figure~\ref{fig:RKHS_v_GP}).
Note that this fact has been well known in the literature; see e.g., \citet[p.~5]{wahba1990spline} and \citet[Corollary 7.1]{LukBed01}.
\begin{corollary} \label{coro:GP-path-not-in-RKHS}
Let $k$ be a positive definite kernel on a set $\cX$ and $\cH_k$ be its RKHS, and consider $\ssf \sim \GP(m,k)$ with $m:\cX \to \Re$ satisfying $m \in \cH_k$.
Then if $\cH_k$ is infinite dimensional, then $\ssf \in \cH_k$ with probability $0$.
If $\cH_k$ is finite dimensional, then there is a version $\tilde{\ssf}$ of $\ssf$ such that $\tilde{\ssf} \in \cH_k$ with probability $1$.
\end{corollary}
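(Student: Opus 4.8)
The plan is to invoke the generalized Driscoll theorem (Theorem \ref{theo:gen-Driscol-theorem}) with the special choice $r = k$. With this choice $\cH_r = \cH_k$, so the inclusion hypothesis $\cH_k \subset \cH_r$ holds trivially (with equality), the natural inclusion operator $I_{kr}: \cH_k \to \cH_k$ is exactly the identity operator $\Id$ on $\cH_k$, and the requirement $m \in \cH_r = \cH_k$ is precisely the assumption made in the statement. Thus the entire proof reduces to deciding, according to the dimension of $\cH_k$, whether $\Id: \cH_k \to \cH_k$ is Hilbert--Schmidt.

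Next I would compute the Hilbert--Schmidt norm of the identity directly. Fixing any orthonormal basis $(\psi_i)_{i \in I}$ of $\cH_k$, we have
$$
\| \Id \|_{\rm HS}^2 = \sum_{i \in I} \| \Id \psi_i \|_{\cH_k}^2 = \sum_{i \in I} \| \psi_i \|_{\cH_k}^2 = \sum_{i \in I} 1,
$$
which is finite if and only if the index set $I$ is finite, i.e., if and only if $\cH_k$ is finite dimensional. Hence $\Id$ is Hilbert--Schmidt exactly when $\cH_k$ is finite dimensional, and fails to be Hilbert--Schmidt when $\cH_k$ is infinite dimensional.

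Combining this elementary computation with the two cases of Theorem \ref{theo:gen-Driscol-theorem} then finishes the argument. If $\cH_k$ is infinite dimensional, $\Id$ is not Hilbert--Schmidt, so part~(2) of the theorem gives $\ssf \in \cH_k$ with probability $0$. If $\cH_k$ is finite dimensional, $\Id$ is Hilbert--Schmidt, so part~(1) produces a version $\tilde{\ssf}$ of $\ssf$ with $\tilde{\ssf} \in \cH_k$ with probability $1$.

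There is no serious obstacle here: once the reduction $r = k$ is in place, the only content is the crucial observation that the identity operator on a Hilbert space is Hilbert--Schmidt precisely in finite dimensions. The single point deserving a moment's care is that $\cH_k$ need not be separable, so that $I$ may be uncountable; the Hilbert--Schmidt norm computation above is nonetheless unaffected, since $\sum_{i \in I} 1$ is finite if and only if $I$ is a finite set, irrespective of any cardinality considerations.
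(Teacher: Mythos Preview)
Your proposal is correct and follows essentially the same approach as the paper: set $r=k$ so that the inclusion $I_{kr}$ is the identity on $\cH_k$, compute its Hilbert--Schmidt norm as $\sum_{i\in I}1$ over an orthonormal basis, and feed the finite/infinite dichotomy into the two cases of Theorem~\ref{theo:gen-Driscol-theorem}. Your extra remark on non-separability is a nice clarification but not needed for the argument.
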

\begin{proof}
Consider Theorem \ref{theo:gen-Driscol-theorem} with $r := k$, and let $I_{kk}:\cH_k \to \cH_k$ be the inclusion operator, which is the identity map.
Let $(\psi_i)_{i \in I} \subset \cH_k$ be an orthonormal basis of $\cH_k$, where $|I| = \infty$ if $\cH_k$ is infinite dimensional, and $|I| < \infty$ if $\cH_k$ is finite dimensional.
Then 
$\| I_{kr} \|_{\rm HS}^2 = \sum_{i \in I}  \| I_{kk} \psi_i  \|_{\cH_r}^2 = \sum_{i \in I}  \| \psi_i  \|_{\cH_r}^2 = \sum_{i \in I} 1$.
Thus, $\| I_{kr} \|_{\rm HS} = \infty$ if $|I| = \infty$, and $\| I_{kr} \|_{\rm HS} < \infty$ if $|I| < \infty$. 
The assertion then follows from Theorem \ref{theo:gen-Driscol-theorem}.
\end{proof}
\begin{remark}\rm
Based on the KL expansion \eqref{eq:KL-exp-informal}, \citet[p.~5]{wahba1990spline} gave an intuitive, but rather heuristic argument to show that a GP sample path does not belong to the corresponding RKHS almost surely; see also \citet[p.~66]{Berlinet2004} and \citet[Section 6.1]{RasmussenWilliams}.
The argument is as follows.
For $\ssf \sim \GP(0,k)$, consider a KL-expansion $\ssf = \sum_{i=1}^\infty \fz_i \lambda_i^{1/2} \phi_i$ with $\fz_i \sim \mathcal{N}(0,1)$, where $(\lambda_i^{1/2} \phi_i)_{i=1}^\infty$ is an ONB of the RKHS $\cH_k$, which is assumed to be infinite dimensional.
Defining $\ssf_m := \sum_{i=1}^m \fz_i \lambda_i^{1/2} \phi_i$ for $m \in \mathbb{N}$, the KL-expansion may be written as $\ssf = \lim_{m \to \infty} \ssf_m$.
Then,
$$
  \bE[\| \ssf_m \|_{\cH_k}^2] = \bE \left[ \sum_{i=1}^m \fz_i^2  \right] =  \sum_{i=1}^m \bE[\fz_i^2] = \sum_{i=1}^m 1 = m.
$$
Therefore we have $\lim_{m \to \infty}  \bE[\| \ssf_m \|_{\cH_k}^2] = \infty$. 
This {\em may} imply that $\bE[\| \ssf \|_{\cH_k}^2] = \infty$, and further that $\ssf \notin \cH_k$ with  probability $1$.
Note that, while this argument is intuitive, it is {\em not} a proof.
This is because, as shown in Theorem \ref{theo:KL-expansion}, the standard result for the convergence of the KL-expansion $\ssf = \lim_{m \to \infty} \ssf_m$ is in the mean-square sense (or in $L_2(\nu)$, as mentioned in Remark \ref{rem:KL-expnasion-convergence}).
That is, the convergence of the KL-expansion is, of course, {\em weaker} than the convergence in the RKHS norm, and therefore  $\lim_{m \to \infty}  \bE[\| \ssf_m \|_{\cH_k}^2] = \infty$ does {\em not} imply $\bE[\| \ssf \|_{\cH_k}^2] = \infty$.
This shows that the importance of carefully considering the convergence type of the KL-expansion, which was investigated and used for establishing GP-sample path properties by \citet{Ste17}. 
% \dino{we cannot change the order of expectation and summation since the series won't converge. Also, not clear that expectation of norm is infinite implies outside a.s. thus need to consider $f_n=\sum_{i=1}^n z_i \sqrt{\lambda_i}\phi_i$ and prove that $P(\Vert f_n\Vert^2\to \infty)=1$, which is obvious from SLLN given that $\frac 1 n \sum_{i=1}^n z_i^2 \to 1$ a.s.}
% \textcolor{red}{I think here you can argue this way. Since $z_i\sim N(0,1)$, $z^2_i\sim \chi^2_1$. Since $z_i$'s are independent, so are $z^2_i$. Sum of independent $\chi^2$ r.v's is also $\chi^2$ with the resultant degrees of freedom being the sums of degrees of the individual $\chi^2$ random variables. (This is a standard result which we can site). Therefore, $\sum^\infty_{i=1}z^2_i\sim\chi^2_\infty$ and the mean of $\chi^2_v$ random variable is $v$ which means we get the result to be infinite.}
\end{remark}
%}

The following example, which follows from Corollary \ref{coro:GP-path-not-in-RKHS}, recovers the well-known fact that Brownian motion is ``non-smooth'' while it is continuous.
\begin{example}
Let $\ssf$ be the standard Brownian motion on $[0,1]$, which is a Gaussian process with kernel $k(x,y) = \min(x,y)$ for $x, y \in [0,1]$.
The corresponding RKHS $\cH_k$ is a Cameron-Martin space \citep[p.~68]{AdlJon07} given by 
$$
\cH_k = \left\{ f \in L_2([0,1]): D f\ {\rm exists}\ {\rm and} \int (D f (x))^2 dx < \infty \right\},
$$
where $D f$ denotes the weak derivative of $f$; this is the first-order Sobolev space on $[0,1]$.
Corollary \ref{coro:GP-path-not-in-RKHS} implies that $\ssf$ does not belong to $\cH_k$ almost surely. % {\bk{wrong reference. Please fix it.}} 
In other words, the Brownian motion does not admit a square-integrable weak derivative.
\end{example}

\subsection{Powers of RKHSs as GP Sample Spaces} \label{sec:power-RKHS-sample-path}
%\blue{
Driscoll's theorem (Theorem \ref{theo:gen-Driscol-theorem}) shows a necessary and sufficient condition for a version of $\ssf \sim \GP(m,k)$ to be a member of an RKHS $\cH_r$, but it does not directly provide a way of constructing the RKHS $\cH_r$ (nor its reproducing kernel $r$) based on the given covariance kernel $k$.
This is what is done in \citet{Ste17}: $\cH_r$ can be constructed as a {\em power} of the RKHS $\cH_k$, and $r$ as the corresponding power of the kernel $k$; these are concepts introduced by \citet[Definition 4.1]{SteSco12} based on Mercer's theorem.
We review this result, showing that it can be easily derived from Theorem \ref{theo:gen-Driscol-theorem}.
%}

%\blue{
For simplicity, we assume here that a set $\cX$ is a compact metric space, a measure $\nu$ is a finite Borel measure with $\cX$ being its support, and a kernel $k$ is continuous on $\cX$.
However, we note that the results of \citet{Ste17} and \cite{SteSco12} hold under much weaker assumptions (while statements of the results should be modified accordingly).
We first introduce the definition of powers of RKHSs and kernels \citep[Definition 4.1]{SteSco12}.
\begin{definition}[Powers of RKHSs and kernels] \label{def:power-RKHS}
Let $\cX$ be a compact metric space, $k$ be a continuous kernel on $\cX$ with $\cH_k$ being its RKHS, and $\nu$ be a finite Borel measure whose support is $\cX$.
Let $0 < \theta \leq 1$ be a constant, and assume that $\sum_{i \in I} \lambda_i^\theta \phi_i^2(x) < \infty$ holds for all $x \in \cX$, where $(\lambda_i, \phi_i )_{i \in I}$ is the eigensystem of the integral operator in \eqref{eq:integral_operator}.
Then the $\theta$-th power of RKHS $\cH_k$ is defined as
\begin{equation}  \label{eq:power-RKHS}
\cH_k^\theta := \left\{ f = \sum_{i \in I} a_i \lambda_i^{\theta/2} \phi_i\ :\ \sum_{i \in I} a_i^2 < \infty \right\},
\end{equation} 
where the inner-product is given by
\begin{equation*}
  \left< f,g \right>_{\cH_k^\theta} = \sum_{i \in I} \alpha_i\beta_i \q\text{for}
  \q f := \sum_{i \in I} \alpha_i \lambda_i^{\theta/2} \phi_i \in \cH_k,\q g := \sum_{i\in I} \beta_i \lambda_i^{\theta/2} \phi_i \in \cH_k. 
\end{equation*}
The $\theta$-th power of kernel $k$ is a function $k^\theta: \cX \times \cX \to \Re$ defined by 
\begin{equation} \label{eq:power_kernel}
k^\theta(x,y) := \sum_{i\in I} \lambda_i^\theta \phi_i(x) \phi_i(y), \quad x,y \in \cX.
\end{equation}
\end{definition}
\begin{remark}\rm
The space $\cH_k^\theta$ defined as in \eqref{eq:power-RKHS} is in fact an RKHS, with its reproducing kernel being the $\theta$-th power of kernel \eqref{eq:power_kernel}, and $\cH_k^\theta$ and $k^\theta$ are uniquely determined independent of the chosen ONB $(\lambda_i^{1/2} \phi_i)_{i \in I}$ \citep[Proposition 4.2]{SteSco12}.
\end{remark}
\begin{remark}\rm
The power of the RKHS \eqref{eq:power-RKHS} is an intermediate space (or more precisely, an interpolation space) between $L_2(\nu)$ and $\H_k$, and the constant $0 < \theta \leq 1$ determines how close $\H_k^\theta$ is to $\H_k$ \citep[Theorem 4.6]{SteSco12}.
For instance, if $\theta = 1$ we have $\H_k^\theta = \H_k$, and  $\H_k^\theta$ approaches $L_2(\nu)$ as $\theta \to +0$. Indeed,  $\H_k^\theta$ is nesting with respect to $\theta$: 
\begin{equation*} \label{eq:nest_pow}
 \H_k = \H_k^1 \subset \H_k^\theta \subset \H_k^{\theta'}\subset L_2(\nu), \quad {\rm for\ all} \ \ 0 < \theta' < \theta < 1.
\end{equation*}
In other words, $\H_k^\theta$ gets larger as $\theta$ decreases. 
If $\H_k$ is an RKHS consisting of smooth functions (such as Sobolev spaces), then $\H_k^\theta$ contains less smooth functions than those in $\H_k$.
\end{remark}
The following result, which follows from Theorem \ref{theo:gen-Driscol-theorem}, provides a  characterization of GP-sample spaces in terms of powers of RKHSs $\cH_k^\theta$.
It is a special case of \citet[Theorem 5.2]{Ste17}, where assumptions required for $\cX$, $k$ and $\nu$ are much weaker.
\begin{theorem} \label{theo:gp-path-power-RKHS}
Let $\cX$ be a compact metric space, $k$ be a continuous kernel on $\cX$ with $\cH_k$ being its RKHS, and $\nu$ be a finite Borel measure whose support is $\cX$.
Let $0 < \theta < 1$ be a constant, and assume that $\sum_{i \in I} \lambda_i^\theta \phi_i^2(x) < \infty$ holds for all $x \in \cX$, where $(\lambda_i, \phi_i )_{i \in I}$ is the eigensystem of the integral operator in \eqref{eq:integral_operator}.
Consider $\ssf \sim \GP(0,k)$. 
Then the following statements are equivalent.
\begin{enumerate}
\item $\sum_{i \in I} \lambda_{i}^{1-\theta} < \infty$.
\item The inclusion operator $I_{k k^\theta}: \cH_k \to \cH_k^\theta$ is Hilbert-Schmidt.
\item There exists a version $\tilde{\ssf}$ of $\ssf$ such that $\tilde{\ssf} \in \cH_k^\theta$ with probability $1$. 
\end{enumerate}
\end{theorem}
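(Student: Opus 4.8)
The plan is to prove the chain of equivalences by reducing $(2)\Leftrightarrow(3)$ to the generalized Driscoll theorem (Theorem \ref{theo:gen-Driscol-theorem}) applied with $r := k^\theta$, and by establishing $(1)\Leftrightarrow(2)$ through a direct computation of the Hilbert--Schmidt norm of the inclusion $I_{k k^\theta}$ in the Mercer orthonormal bases. Before anything else I would verify that the hypotheses of Theorem \ref{theo:gen-Driscol-theorem} are satisfied, namely that the natural inclusion $\cH_k \hookrightarrow \cH_k^\theta$ exists. This follows from the nesting $\cH_k \subset \cH_k^\theta$ for $0<\theta<1$, which one sees quickly from the Mercer representation (Theorem \ref{theo:mercer-RKHS}) and the boundedness of the eigenvalues: writing $f=\sum_{i\in I}\alpha_i\lambda_i^{1/2}\phi_i\in\cH_k$ as $f=\sum_{i\in I}(\alpha_i\lambda_i^{(1-\theta)/2})\lambda_i^{\theta/2}\phi_i$, the new coefficients satisfy $\sum_i \alpha_i^2\lambda_i^{1-\theta}\le \lambda_1^{1-\theta}\sum_i\alpha_i^2<\infty$, so $f\in\cH_k^\theta$ by Definition \ref{def:power-RKHS}. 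The mean function $m=0$ trivially lies in $\cH_k^\theta$, so the theorem is applicable.

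For $(1)\Leftrightarrow(2)$ I would exploit that $(\lambda_i^{1/2}\phi_i)_{i\in I}$ is an orthonormal basis of $\cH_k$ (Theorem \ref{theo:mercer-RKHS}) while $(\lambda_i^{\theta/2}\phi_i)_{i\in I}$ is an orthonormal basis of $\cH_k^\theta$ (Definition \ref{def:power-RKHS}). The Hilbert--Schmidt norm of the inclusion is then $\|I_{k k^\theta}\|_{\rm HS}^2 = \sum_{i\in I}\|\lambda_i^{1/2}\phi_i\|_{\cH_k^\theta}^2$. Since $\lambda_i^{1/2}\phi_i = \lambda_i^{(1-\theta)/2}\,(\lambda_i^{\theta/2}\phi_i)$ and the parenthesized factor is a unit basis vector of $\cH_k^\theta$, each summand equals $\lambda_i^{1-\theta}$, giving $\|I_{k k^\theta}\|_{\rm HS}^2 = \sum_{i\in I}\lambda_i^{1-\theta}$. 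Hence $I_{k k^\theta}$ is Hilbert--Schmidt exactly when $\sum_{i\in I}\lambda_i^{1-\theta}<\infty$, which is the equivalence of (1) and (2).

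For $(2)\Rightarrow(3)$ I would apply part~1 of Theorem \ref{theo:gen-Driscol-theorem} directly with $r=k^\theta$: if $I_{k k^\theta}$ is Hilbert--Schmidt then there is a version $\tilde\ssf$ of $\ssf$ lying in $\cH_k^\theta$ with probability one. For the converse $(3)\Rightarrow(2)$ I would argue by contraposition using part~2. The key observation is that any version $\tilde\ssf$ of $\ssf$ shares the finite-dimensional distributions of $\ssf$ and is therefore itself a $\GP(0,k)$, so part~2 of Theorem \ref{theo:gen-Driscol-theorem} applies to $\tilde\ssf$ as well. Consequently, if $I_{k k^\theta}$ fails to be Hilbert--Schmidt, then $\tilde\ssf\in\cH_k^\theta$ with probability zero for \emph{every} version $\tilde\ssf$, so statement (3) cannot hold. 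This closes the cycle $(1)\Leftrightarrow(2)\Leftrightarrow(3)$.

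The step I expect to require the most care is precisely this reverse implication $(3)\Rightarrow(2)$: the zero--one law of Theorem \ref{theo:gen-Driscol-theorem} is phrased for a fixed process, whereas statement (3) quantifies existentially over versions, so I must justify that passing to a version preserves membership in the class $\GP(0,k)$ before invoking part~2. Everything else reduces to a routine basis computation once $\cH_k\subset\cH_k^\theta$ is in hand. I would also remark that the standing assumption $\sum_{i\in I}\lambda_i^\theta\phi_i^2(x)<\infty$ for all $x\in\cX$ is exactly what guarantees, via Definition \ref{def:power-RKHS}, that $\cH_k^\theta$ is a genuine RKHS with reproducing kernel $k^\theta$, so that the Driscoll machinery is legitimately applicable.
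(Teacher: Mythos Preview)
Your proposal is correct and follows essentially the same route as the paper: the equivalence $(1)\Leftrightarrow(2)$ is obtained by the identical Mercer-basis computation $\|I_{kk^\theta}\|_{\rm HS}^2=\sum_{i\in I}\lambda_i^{1-\theta}$, and $(2)\Leftrightarrow(3)$ is reduced to Theorem~\ref{theo:gen-Driscol-theorem} with $r=k^\theta$. Your treatment is in fact slightly more careful than the paper's one-line appeal to Driscoll, since you explicitly check the inclusion $\cH_k\subset\cH_k^\theta$ and, for the contrapositive $(3)\Rightarrow(2)$, justify that part~2 of Theorem~\ref{theo:gen-Driscol-theorem} applies to every version $\tilde\ssf$ (because versions share finite-dimensional marginals and are themselves $\GP(0,k)$); the paper leaves both of these points implicit.
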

\begin{proof}
The equivalence between 2.~and 3.~follows from Theorem \ref{theo:gen-Driscol-theorem} and the fact that $\cH_k^\theta$ is an RKHS with $k^\theta$ being its kernel.
The equivalence between 1.~and 2.~follows from
$$
\| I_{k k^\theta} \|_{\rm HS}^2 
= \sum_{i \in I} \| I_{k k^\theta} \lambda_i^{1/2} \phi_i \|_{\cH_k^\theta}^2 
= \sum_{i \in I} \| \lambda_i^{(1-\theta)/2} \lambda_i^{\theta/2} \phi_i \|_{\cH_k^\theta}^2 
=  \sum_{i \in I} \lambda_i^{1-\theta},
$$
where the first equality uses the definition of the Hilbert-Schmidt norm and the fact that $(\lambda_i^{1/2}\phi_i)_{i \in I}$ is an ONB of $\cH_k$, and the third follows from $(\lambda_i^{\theta/2}\phi_i)_{i \in I}$ being an ONB of $\cH_k^\theta$.
\end{proof}
%}

\begin{comment}
\paragraph{Powers of Tensor RKHSs.}
Let us mention the important case where RKHS $\H$ is given as the tensor product of individual RKHSs $\H_1,\dots,\H_d$ on the spaces $\X_1,\dots,\X_d$, i.e.,\ $\H = \H_1 \otimes \cdots \otimes \H_d$ and $\X = \X_1\times \cdots \times \X_d$.
In this case, if the distribution $Q$ is the product of individual distributions $Q_1,\dots,Q_d$ on $\X_1,\dots,X_n$, it can be easily shown that the power RKHS $\H^\theta$ is the tensor product of individual power RKHSs $\H_i^\theta$:
\begin{equation} \label{eq:power_tensorRKHS}
\H^\theta = \H_1^\theta \otimes \cdots \otimes \H_d^\theta.
\end{equation}
Let us consider the Korobov space $W_{\rm Kor}^{\alpha} ([0,1]^d)$ with $Q$ being the uniform distribution on $[0,1]^d$.
\begin{example}[Powers of Korobov spaces.]
From the expansion \eqref{eq:Mercer_Korobov}, the $\theta$-th power of the Korobov kernel $k_\alpha$ is given by
\begin{equation}
 k_\alpha^\theta(x,y) = 1 + \sum_{i = 1}^\infty \frac{1}{i^{2\alpha \theta}} [c_i(x)c_i(y) + s_i(x) s_i(y)].\nonumber
\end{equation}
Consequently, if $\alpha \theta \in \mathbb{N}$, this is exactly the kernel $k_{\alpha \theta}$ of the Korobov space $W_{\rm Kor}^{\alpha \theta} ([0,1])$ of lower order $\alpha \theta$.
In other words, $W_{\rm Kor}^{\alpha \theta} ([0,1])$ is nothing but the $\theta$-power of $W_{\rm Kor}^{\alpha} ([0,1])$.
One can also show that,  for $d \geq 2$, the $\theta$-th power of $W_{\rm Kor}^{\alpha} ([0,1]^d)$ is $W_{\rm Kor}^{\alpha \theta} ([0,1]^d)$.
\end{example}
\end{comment}
%

\begin{remark}\rm
Theorem \ref{theo:gp-path-power-RKHS} shows that the power of the RKHS $\cH_k^\theta$ contains the support of $\ssf \sim \GP(0,k)$, if the eigenvalues $(\lambda_i)_{i \in I}$ satisfy $\sum_{i=1}^\infty \lambda_{i}^{1-\theta} < \infty$ for $0 < \theta < 1$. 
Therefore, if one knows the eigensystem $(\lambda_i,\phi_i)_{i \in I}$ of the integral operator \eqref{eq:integral_operator}, one may construct the GP-sample space as $\cH_k^\theta$ with largest $0<\theta<1$ satisfying $\sum_{i=1}^\infty \lambda_{i}^{1-\theta} < \infty$.
Note that the condition $\sum_{i=1}^\infty \lambda_{i}^{1-\theta} < \infty$ is stronger for larger $\theta$, requiring that the eigenvalues should decay more rapidly (when $|I| = \infty$). Also note that functions in $\cH_k^\theta$ get smoother as $\theta$ increases. 
\end{remark}

\subsection{Examples of Sample Path Properties} \label{sec:examples-sample-path}
%\blue{
We provide concrete examples of GP sample path properties, as corollaries of the above results.
We first show sample path properties for GPs with square-exponential kernels in Example \ref{ex:square-exp-kernel}.
Intuitively, the result follows from Theorem \ref{theo:gp-path-power-RKHS} and that the eigenvalues for a square-exponential kernel decay exponentially fast; see Section \ref{sec:proof-gp-sample-path-se} for a complete proof.
\begin{corollary}[Sample path properties for square-exponential kernels] \label{coro:gauss-sample-path}
Let $\cX \subset \Re^d$ be a compact set with Lipschitz boundary, $\nu$ be the Lebesgue measure on $\cX$, $k_\gamma$ be a square-exponential kernel on $\cX$ with bandwidth $\gamma > 0$ with $\cH_{k_\gamma}$ being its RKHS.
Then for all $0 < \theta < 1$, the $\theta$-th power  $\cH_{k_\gamma}^\theta$ of $\cH_{k_\gamma}$ in Definition \ref{def:power-RKHS} is well-defined.
Moreover, for a given $\ssf \sim \GP(0,k_\gamma)$, there exists a version $\tilde{\ssf}$ such that $\tilde{\ssf} \in \cH_{k_\gamma}^\theta$ with probability $1$ for all $0 < \theta < 1$.
\end{corollary}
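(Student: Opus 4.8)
The plan is to read the statement off Theorem~\ref{theo:gp-path-power-RKHS}, applied with covariance kernel $k=k_\gamma$ and $\nu$ the Lebesgue measure on $\cX$. That theorem reduces the sample-path claim to two spectral facts about the eigensystem $(\lambda_i,\phi_i)_{i\in I}$ of the integral operator $T_{k_\gamma}$ in \eqref{eq:integral_operator}: (a) that $\cH_{k_\gamma}^\theta$ is well-defined in the sense of Definition~\ref{def:power-RKHS}, i.e.\ $\sum_{i\in I}\lambda_i^\theta\phi_i^2(x)<\infty$ for every $x\in\cX$; and (b) that $\sum_{i\in I}\lambda_i^{1-\theta}<\infty$, which is statement~1 of Theorem~\ref{theo:gp-path-power-RKHS} and, through statements~2 and~3, yields a version of $\ssf$ lying in $\cH_{k_\gamma}^\theta$ almost surely. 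So the entire proof is an estimate on how fast the Gaussian eigenvalues and eigenfunctions decay.

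First I would record the explicit eigensystem. In one dimension the orthonormal functions $e_i:=\lambda_i^{1/2}\phi_i$ of $T_{k_\gamma}$ are analytic, of the form $e_i(x)=c_i\,x^i e^{-x^2/\gamma^2}$ with $c_i=(2/\gamma)^i/i!$, and the eigenvalues are $\lambda_i=(2/\gamma^2)^{2i}(i!)^{-2}\int_\cX x^{2i}e^{-2x^2/\gamma^2}\,dx$; see \citet[Theorem 4.42]{Steinwart2008}, whose tensor-product construction also handles general $d$. On the compact set $\cX\subset[-R,R]^d$ these give, for suitable constants $A,B,\mu,c,C>0$, a super-exponential eigenfunction bound $\sup_{x\in\cX}|e_i(x)|\le A^i/i!$ together with two-sided eigenvalue bounds $c\,\mu^i/(i!)^2\le\lambda_i\le C\,B^i/(i!)^2$; the lower bound comes from integrating $x^{2i}e^{-2x^2/\gamma^2}$ over a fixed ball in the interior of $\cX$. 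The $d$-dimensional estimates follow by multiplying the one-dimensional ones across tensor factors, which is pure bookkeeping; I would defer this to Section~\ref{sec:proof-gp-sample-path-se}.

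Given these bounds the two conditions are short computations. For (b), since $1-\theta>0$ the factorial dominates and $\sum_{i\in I}\lambda_i^{1-\theta}\le C^{1-\theta}\sum_i B^{i(1-\theta)}(i!)^{-2(1-\theta)}<\infty$ for every $\theta\in(0,1)$. For (a) I would write $\lambda_i^\theta\phi_i^2(x)=\lambda_i^{\theta-1}e_i^2(x)$ and use the eigenvalue lower bound to control the growth $\lambda_i^{\theta-1}\le c^{\theta-1}\mu^{i(\theta-1)}(i!)^{2(1-\theta)}$; multiplying by $e_i^2(x)\le A^{2i}/(i!)^2$ leaves a bound of the form $\mathrm{const}^{\,i}(i!)^{-2\theta}$, which is summable precisely because $\theta>0$. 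Thus $\cH_{k_\gamma}^\theta$ is well-defined and Theorem~\ref{theo:gp-path-power-RKHS} applies, giving for each fixed $0<\theta<1$ a version $\tilde{\ssf}$ with $\tilde{\ssf}\in\cH_{k_\gamma}^\theta$ almost surely. To promote this to a single version valid for all $\theta$ simultaneously, I would fix a sequence $\theta_n\uparrow1$, intersect the countably many probability-one events $\{\tilde{\ssf}\in\cH_{k_\gamma}^{\theta_n}\}$, and use the nesting $\cH_{k_\gamma}^{\theta_n}\subset\cH_{k_\gamma}^\theta$ for $\theta<\theta_n$.

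The main obstacle is condition (a): unlike $\sum_i\lambda_i^{1-\theta}<\infty$, which is a pure eigenvalue statement, well-definedness genuinely needs both a lower bound on the eigenvalues (to cap the growth of $\lambda_i^{\theta-1}$) and a sup-norm bound on the eigenfunctions, and these competing super-exponential rates have to be tracked against each other. Propagating the clean one-dimensional formulas to general $d$ through the tensor-product eigensystem is where the remaining effort lies, though it is routine once the one-dimensional bounds are in hand.
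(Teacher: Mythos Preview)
Your approach has a genuine gap at its foundation: the functions $e_i(x)=c_i\,x^i e^{-x^2/\gamma^2}$ from \citet[Theorem~4.42]{Steinwart2008} are an orthonormal basis of the RKHS $\cH_{k_\gamma}$, but they are \emph{not} the Mercer eigensystem of the integral operator $T_{k_\gamma}$ with respect to Lebesgue measure $\nu$ on $\cX$. For the $e_i$ to be scaled eigenfunctions $\lambda_i^{1/2}\phi_i$, the $\phi_i$ would have to be $L_2(\nu)$-orthonormal; but for instance $\int_\cX x^0\cdot x^2\,e^{-2x^2/\gamma^2}\,dx\neq 0$, so $e_0$ and $e_2$ are not $L_2(\nu)$-orthogonal. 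Both Definition~\ref{def:power-RKHS} and the conditions in Theorem~\ref{theo:gp-path-power-RKHS} are stated in terms of the genuine Mercer eigenvalues and eigenfunctions, and no closed form for these is available when $\nu$ is Lebesgue on a general compact set (the explicit Hermite-type eigensystem in \citet[Section~4.3.1]{RasmussenWilliams} is for a \emph{Gaussian} measure). Consequently your claimed eigenvalue bounds, eigenfunction sup-norm bounds, and the calculations built on them do not apply to the quantities actually appearing in~(a) and~(b).

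The paper's proof avoids any explicit eigensystem. For~(b) it bounds the true eigenvalues $\lambda_i$ via entropy numbers of the embedding $\cH_{k_\gamma}\to L_2(\nu)$, which decay faster than any polynomial, giving $\sum_i\lambda_i^{1-\theta}<\infty$ for every $\theta<1$. For~(a) it does not touch $\lambda_i^\theta\phi_i^2(x)$ directly at all; instead it uses interpolation theory: since $\cH_{k_\gamma}\hookrightarrow W_2^m(\cX)$ for every $m$, one gets $[L_2(\nu),\cH_{k_\gamma}]_{\theta,2}\hookrightarrow [L_2(\nu),W_2^m(\cX)]_{\theta,2}=B^{\theta m}_{22}(\cX)\hookrightarrow L_\infty(\nu)$ once $\theta m>d/2$, and then \citet[Theorem~2.5]{Ste17} converts the $L_\infty$-embedding into the pointwise summability condition that defines $\cH_{k_\gamma}^\theta$. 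Your instinct that ``Gaussian eigenvalues decay super-exponentially, so everything sums'' is morally right, but it has to be implemented through these indirect tools rather than through an ONB that is not the Mercer basis.
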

\begin{remark}\rm
Since $\cH_{k_\gamma} \subset \cH_{k_\gamma}^\theta$ for $0 < \theta < 1$ and $\cH_{k_\gamma}^\theta$ approaches $\cH_{k_\gamma}$ as $\theta \to 1$,  Corollary \ref{coro:gauss-sample-path} shows that, informally, a GP sample path associated with a square-exponential kernel lies in a space that is infinitesimally larger than $\cH_{k_\gamma}$.
Therefore in practice one should not worry too much about the fact that a GP sample path  almost surely falls outside the RKHS $\cH_{k_\gamma}$, because it nevertheless lies almost surely on the ``infinitesimally small shell'' surrounding $\cH_{k_\gamma}$.
However, note that the situation is different for Mat\'ern kernels, of which the RKHSs have only a finite degree of smoothness; the above property for square-exponential kernels follows from, intuitively, that functions in the resulting RKHSs are infinitely smooth.
\end{remark}
Corollary \ref{coro:matern-sample-path} below provides sample path properties for GPs associated with Mat\'ern  kernels.
To state it, we need to introduce the {\em interior cone condition} \cite[Definition 3.6]{Wen05}, which requires that there is no `pinch point' (i.e.~a $\prec$-shape region) on the boundary of $\cX$. 
\begin{definition}[Interior cone condition] \label{def:interior_cone}
A set $\cX \subset \Re^d$ is said to satisfy an interior cone condition if 
there exist an angle $\theta \in (0,2\pi)$ and a radius $R > 0$ such that every $x \in \cX$ is associated with a unit vector $\xi(x)$ so that the cone $C(x, \xi(x), \psi, R)$ is contained in $\Omega$,
where
$$C(x, \xi(x), \psi, R) := \{ x + a y:\ y \in \Re^d,\ \|y\| = 1,\ \left<y, \xi(x) \right> \geq \cos \psi,\ a \in [0,R] \}.$$
\end{definition}
\begin{corollary}[Sample path properties for Mat\'ern  kernels] \label{coro:matern-sample-path}
Let $\cX \subset \Re^d$ be a bounded open set such that the boundary is Lipschitz and an interior cone condition is satisfied, and $k_{\alpha,h}$ be the Mat\'ern kernel on $\cX$ in Example \ref{ex:matern-kernel} with parameters $\alpha > 0$ and $h > 0$ such that $\alpha + d/2 \in \mathbb{N}$. 
Then for a given $\ssf \sim \GP(0,k_{\alpha,h})$, there exists a version $\tilde{\ssf}$ such that $\tilde{\ssf} \in \cH_{k_{\alpha', h'}}$ with probability $1$ for all $\alpha', h' > 0$ satisfying $\alpha > \alpha'+d/2 \in \mathbb{N}$, where $\cH_{k_{\alpha', h'}}$ is the RKHS of the Mat\'ern kernel $k_{\alpha', h'}$ with parameters $\alpha'$ and $h'$.
\end{corollary}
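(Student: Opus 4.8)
The plan is to deduce the statement from the generalized Driscoll theorem (Theorem~\ref{theo:gen-Driscol-theorem}) applied with $k := k_{\alpha,h}$ and $r := k_{\alpha',h'}$, by verifying that the inclusion of the corresponding RKHSs is Hilbert--Schmidt. Write $s := \alpha + d/2$ and $s' := \alpha' + d/2$; both are integers by hypothesis, and the assumption $\alpha > \alpha' + d/2$ reads $s - s' = \alpha - \alpha' > d/2$. By Example~\ref{ex:matern-rkhs}, the Lipschitz-boundary and interior-cone hypotheses guarantee that $\cH_{k_{\alpha,h}}$ is norm-equivalent to the Sobolev space $W_2^{s}(\cX)$ and $\cH_{k_{\alpha',h'}}$ to $W_2^{s'}(\cX)$. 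Since $s > s'$ are integers, $W_2^{s}(\cX) \subset W_2^{s'}(\cX)$ with inclusion of norm at most $1$ (the $W_2^{s'}$-norm sums a subset of the derivative terms appearing in the $W_2^{s}$-norm), so $\cH_{k_{\alpha,h}} \subset \cH_{k_{\alpha',h'}}$ and the prior mean $m \equiv 0$ trivially lies in $\cH_{k_{\alpha',h'}}$. Thus Theorem~\ref{theo:gen-Driscol-theorem}(1) yields the desired version $\tilde{\ssf} \in \cH_{k_{\alpha',h'}}$ once I show that the inclusion operator $I_{kr} : \cH_{k_{\alpha,h}} \to \cH_{k_{\alpha',h'}}$ is Hilbert--Schmidt.

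First I would reduce the Hilbert--Schmidt question to the Sobolev embedding $W_2^{s}(\cX) \hookrightarrow W_2^{s'}(\cX)$. The Hilbert--Schmidt operators form a two-sided ideal, so pre- and post-composing with the bounded isomorphisms implementing the norm-equivalences of Example~\ref{ex:matern-rkhs} does not affect finiteness of the Hilbert--Schmidt norm; hence $I_{kr}$ is Hilbert--Schmidt if and only if the Sobolev inclusion is. It therefore remains to prove the classical fact that, on such a domain, $W_2^{s}(\cX) \hookrightarrow W_2^{s'}(\cX)$ is Hilbert--Schmidt precisely when $s - s' > d/2$.

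For this last step I would use a spectral description of the Sobolev norms. Let $(e_j)_{j \geq 0}$ be the $L_2(\cX)$-orthonormal eigenfunctions of the (Neumann) Laplacian with eigenvalues $\mu_j$, which by Weyl's law grow of order $j^{2/d}$; the interior-cone and Lipschitz conditions ensure the required extension and embedding theory is available. For integer $t$ the norm $f \mapsto \big( \sum_{j} (1+\mu_j)^{t} |\langle f, e_j\rangle_{L_2(\cX)}|^2 \big)^{1/2}$ is equivalent to $\|\cdot\|_{W_2^{t}(\cX)}$. Working with these equivalent inner products, $\big( (1+\mu_j)^{-s/2} e_j \big)_j$ is an orthonormal basis of the $s$-space, and its image in the $s'$-space has squared norm $(1+\mu_j)^{s'-s}$, so the squared Hilbert--Schmidt norm equals $\sum_{j}(1+\mu_j)^{s'-s}$, which is of order $\sum_{j} j^{-2(s-s')/d}$ and hence finite exactly when $2(s-s')/d > 1$, i.e.\ $s - s' > d/2$. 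Since this coincides with the hypothesis $\alpha > \alpha' + d/2$, the inclusion is Hilbert--Schmidt and the corollary follows.

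The main obstacle is this last step: making the spectral characterization of $W_2^{s}(\cX)$ rigorous on a general bounded Lipschitz domain satisfying the interior cone condition, where one must invoke extension operators and the validity of Weyl's asymptotics rather than the clean Fourier argument available on $\Re^d$ or the torus. A self-contained alternative that sidesteps the Laplacian is to route the proof through Theorem~\ref{theo:gp-path-power-RKHS}: identifying $\cH_{k_{\alpha',h'}} \cong W_2^{s'}(\cX)$ with the power $\cH_{k_{\alpha,h}}^{\theta}$ for $\theta = s'/s$ and using that the integral-operator eigenvalues of a kernel whose RKHS is $W_2^{s}(\cX)$ decay of order $i^{-2s/d}$, the summability condition $\sum_i \lambda_i^{1-\theta} < \infty$ becomes $\sum_i i^{-2(s-s')/d} < \infty$, giving the same threshold; one may alternatively cite the metric-entropy criterion of \citet{GonDud93} for Hilbert--Schmidt inclusions.
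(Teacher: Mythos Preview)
Your approach is essentially the same as the paper's: reduce via the norm-equivalence of Example~\ref{ex:matern-rkhs} to showing that the Sobolev inclusion $W_2^{s}(\cX)\hookrightarrow W_2^{s'}(\cX)$ is Hilbert--Schmidt when $s-s'>d/2$, and then apply Theorem~\ref{theo:gen-Driscol-theorem}. The only difference is in how this Hilbert--Schmidt fact is established: the paper simply invokes Maurin's theorem \citep[Theorem~6.61]{AdaFou03}, which is exactly the statement that on a domain satisfying the cone condition the Sobolev embedding $W_2^{s}(\cX)\to W_2^{s'}(\cX)$ is Hilbert--Schmidt once $s-s'>d/2$. Your spectral argument via Neumann-Laplacian eigenfunctions and Weyl asymptotics is morally how Maurin's theorem is proved, so what you flagged as the ``main obstacle'' dissolves into a literature citation; there is no need for the detour through Theorem~\ref{theo:gp-path-power-RKHS} or the entropy criterion of \citet{GonDud93}.
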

\begin{proof} 
Let $s := \alpha + d/2$ and $\beta := \alpha' + d/2$.
Denote by $W_2^s(\cX)$ and $W_2^\beta(\cX)$ the Sobolev spaces of order $s$ and $\beta$ respectively, as defined in Example \ref{ex:matern-rkhs}.
Since $\cX$ satisfies an interior cone condition and we have $s - \beta > d/2$,  Maurin's theorem \citep[Theorem 6.61]{AdaFou03} implies that the embedding $W_2^s(\cX) \to W_2^\beta(\cX)$ is Hilbert-Schmidt.
Since the boundary of $\cX$ is Lipschitz, by \citet[Corollary 10.48]{Wen05} the RKHS $\cH_{k_{\alpha,h}}$ of $k_{\alpha,h}$ is norm-equivalent to $W_2^s(\cX)$, and $\cH_{k_{\alpha',h'}}$ is norm-equivalent to $W_2^\beta(\cX)$. (See also Example \ref{ex:matern-rkhs}.)
Therefore the embedding $\cH_{k_{\alpha,h}} \to \cH_{k_{\alpha',h'}} $ is also Hilbert-Schmidt.
The assertion then follows from Theorem \ref{theo:gen-Driscol-theorem}.
\end{proof}
\begin{remark}\rm \label{rem:matern-smoothness-path}
Recall that, as shown in Example \ref{ex:matern-rkhs}, the RKHS $\cH_{k_{\alpha,h}}$ of the Mat\'ern kernel $k_{\alpha,h}$ is norm-equivalent to the Sobolev space $W_2^s(\cX)$ of order $s := \alpha + d/2$, and $\cH_{k_{\alpha',h'}}$ is norm-equivalent to $W_2^\beta(\cX)$ with $\beta := \alpha' + d/2$.
From the assumption $\alpha > \alpha' + d/2$, we have $s  > \beta + d/2$.
Therefore, Corollary \ref{coro:matern-sample-path} shows that, roughly, the smoothness of a GP sample path with $k_{\alpha,h}$, which is $\beta$, is $d/2$-smaller than the smoothness $s$ of the RKHS $\cH_{k_{\alpha,h}}$.
\end{remark}
\begin{remark}\rm
The condition $\alpha > \alpha' + d/2$ can be shown to be sharp: If $\alpha = \alpha'+ d/2$, a sample path  $\ssf \sim \GP(0,k_{\alpha,h})$ does {\em not} belong to $\cH_{k_{\alpha', h'}}$ almost surely \citep[Corollary 5.6, ii]{Ste17}.
Also note that the condition $\alpha' + d/2 \in \mathbb{N}$ may be removed; $\alpha$ can be any positive real satisfying $\alpha > \alpha' + d/2$  \citep[Corollary 5.6, i]{Ste17}.
\end{remark}

\section{Convergence and Posterior Contraction Rates in Regression} \label{sec:rates-and-posterior-contraction}
%\blue{
In this section, we review asymptotic convergence results for Gaussian process and kernel ridge regression.
For both approaches, there have been extensive theoretical investigations, but it seems that the connections between the obtained results for the two approaches are rarely discussed.
We therefore discuss the connections between the convergence results for the two approaches in Section \ref{sec:convergence-GP-kernel}, and show that there is indeed a certain equivalence that highlights the role of regularization and the output noise assumption.
The key role in showing this equivalence is played by sample path properties discussed in Section \ref{sec:theory}.
We also review theoretical results from the kernel interpolation literature in Section \ref{sec:upper-bound-variance}.
Thanks to the worst case error viewpoint explained in Section \ref{sec:posterior_variance}, these results provide upper-bounds for marginal posterior variances in GP-regression.
Such bounds are useful in understanding what factors affect the speed of contraction of marginal posterior variances.
%}

\subsection{Convergence Rates for Gaussian Process and Kernel Ridge Regression} \label{sec:convergence-GP-kernel}

We here review existing convergence results for Gaussian process regression and kernel ridge regression.
Specifically, we compare the posterior contraction rates for GP-regression provided by \citet{VarZan11} and the convergence rates for kernel ridge regression by \citet{SteHusSco09}.
The rates obtained in these papers are minimax optimal for regression in Sobolev spaces.
Thus, it would be natural to ask how these two results are related.
We show that, by focusing on regression in Sobolev spaces, the rates of \citet{VarZan11} can be recovered from those of \citet{SteHusSco09}.
This highlights the equivalence between regularization in kernel ridge regression and the additive noise assumption in Gaussian process regression. 
The arguments are based on sample path properties reviewed in Section \ref{sec:theory}.

\paragraph{Gaussian process regression.}
The following model is considered in \citet{VarZan11}.
Let $\cX = [0,1]^d$ be the input space, and $f_0:[0,1]^d \to \Re$ be the unknown regression function to be estimated.
Let $(x,y) \in [0,1]^d \times \Re$ be a joint random variable such that $x \sim P_\cX$ for a distribution $P_\cX$ on $[0,1]^d$ and
\begin{equation} \label{eq:GP-model-theory}
y = f_0(x) + \varepsilon,  \quad \varepsilon \sim \N(0,\sigma^2),
\end{equation}
where $\sigma^2 > 0$ is the variance of independent noise $\varepsilon$.
$P_\cX$ is assumed to have a density function that is bounded away from zero and infinity. 
Denote by $P$ the joint distribution of $(x,y)$, and assume that one is given an i.i.d.~sample $\mathcal{D}_n := (x_i,y_i)_{i=1}^n$ of size $n$ from $P$ as training data.
A prior for $f_0$ is a zero-mean Gaussian process $\GP(0, k_s)$ with covariance kernel $k_s$, where $k_s$ denotes the Mat\'ern kernel (Example \ref{eq:matern-kernel}) whose RKHS is norm-equivalent to the Sobolev space $W_2^s[0,1]^d$ of order $s > d/2$.
(In the notation of Example \ref{eq:matern-kernel}, this corresponds to $\alpha := s - d/2$; see Example \ref{ex:matern-rkhs} for the RKHSs of Mat\'ern kernels.)

GP-regression is performed based on this GP prior, training data $\mathcal{D}_n$ and the likelihood given by the additive Gaussian noise model \eqref{eq:GP-model-theory}.
\citet[Theorem 5]{VarZan11} provided the following posterior contraction rate for this setting, assuming the unknown $f_0$ belongs to a Sobolev space $W_2^\beta[0,1]^d$ of order $\beta > d/2$.
Below $C^\beta([0,1]^d)$ denotes the H\"older space of order $\beta$, and $L_2(P_\cX)$ the Hilbert space of square-integrable functions with respect to $P_\cX$.

\begin{theorem} \label{theo:GP-rate-van}
Let $k_s$ be a kernel on $[0,1]^d$ whose RKHS is norm-equivalent to the Sobolev space $W_2^s([0,1]^d)$ of order $s := \alpha + d/2$ with $\alpha > 0$. 
If $f_0 \in C^\beta([0,1]^d) \cap W_2^\beta([0,1]^d)$ and $\min(\alpha,\beta) > d/2$, then we have
\begin{equation} \label{eq:post-contraction}
\bE_{\mathcal{D}_n|f_0} \left[ \int \| f - f_0 \|_{L_2(P_\cX)}^2  d\Pi_n(f|\mathcal{D}_n) \right] = O(n^{- 2 \min(\alpha,\beta) / (2
\alpha + d) }) \quad (n \to \infty),
\end{equation}
where $\bE_{X,Y|f_0}$ denotes the expectation with respect to $\mathcal{D}_n = (x_i,y_i)_{i=1}^n$ with the model \eqref{eq:GP-model-theory}, and $\Pi_n(f|\mathcal{D}_n)$ the posterior distribution given by GP-regression with kernel $k_s$. 
\end{theorem}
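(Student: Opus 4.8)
This statement is Theorem 5 of \citet{VarZan11}; the plan is to recall how it follows from the general theory of posterior contraction for Gaussian process priors developed in \citet{VarZan08}. The central object is the \emph{concentration function} of the prior at $f_0$,
$$
\varphi_{f_0}(\epsilon) := \inf_{h \in \cH_{k_s}:\, \|h - f_0\|_{L_2(P_\cX)} \le \epsilon} \tfrac{1}{2}\|h\|_{\cH_{k_s}}^2 \;-\; \log \Pi_0\!\left( \|f\|_{L_2(P_\cX)} < \epsilon \right),
$$
whose two summands encode, respectively, how well $f_0$ is approximated by RKHS elements of controlled norm, and the prior mass of small balls. The master theorem of \citet{VarZan08} states that any $\epsilon_n$ solving $\varphi_{f_0}(\epsilon_n) \le n\epsilon_n^2$ is, modulo the usual sieve and testing conditions, a posterior contraction rate in $L_2(P_\cX)$; the sieves are taken to be RKHS balls, whose complements carry negligible prior mass by the Borell--TIS inequality and whose $L_2$ metric entropy is controlled by the eigenvalue decay of $T_{k_s}$. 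Thus the whole problem reduces to estimating the two terms of $\varphi_{f_0}$ for the Matérn prior and then solving the rate equation.

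First I would estimate the small-ball term. The key input is the sample-path analysis of Section \ref{sec:theory}: a draw from $\GP(0,k_s)$ has Sobolev regularity $\alpha = s - d/2$ rather than $s$ (Corollary \ref{coro:matern-sample-path}). Consequently the small-ball exponent is governed by $\alpha$, giving $-\log \Pi_0(\|f\|_{L_2(P_\cX)} < \epsilon) \asymp \epsilon^{-d/\alpha}$; this follows from two-sided metric-entropy bounds for Sobolev balls combined with the Kuelbs--Li / Li--Linde duality between small-ball probabilities and the entropy of the RKHS unit ball. Second, for the decentering term I would approximate $f_0 \in W_2^\beta$ by spectral truncation in the Mercer basis $(\lambda_i,\phi_i)$ of $T_{k_s}$, where $\lambda_i \asymp i^{-2s/d}$. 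Truncating at index $J \asymp \epsilon^{-d/\beta}$ yields $L_2$ error $\lesssim \epsilon$ while $\|h\|_{\cH_{k_s}}^2 = \sum_{i \le J} h_i^2/\lambda_i \lesssim J^{2(s-\beta)/d} \asymp \epsilon^{-2(s-\beta)/\beta}$ in the interesting case $\beta < s$ (the term being merely $O(1)$ when $\beta \ge s$).

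Combining these, $\varphi_{f_0}(\epsilon) \asymp \epsilon^{-d/\alpha} + \epsilon^{-2(s-\beta)/\beta}$, and I would solve $\varphi_{f_0}(\epsilon_n) \le n\epsilon_n^2$ by balancing each summand against $n\epsilon_n^2$ separately and retaining the slower rate. The small-ball term yields $\epsilon_n \asymp n^{-\alpha/(2\alpha+d)}$, while the decentering term yields $\epsilon_n \asymp n^{-\beta/(2s)} = n^{-\beta/(2\alpha+d)}$, using $2s = 2\alpha+d$. The dominant (slower) of the two is $n^{-\min(\alpha,\beta)/(2\alpha+d)}$, and squaring reproduces the integrated-risk bound $n^{-2\min(\alpha,\beta)/(2\alpha+d)}$ of the statement. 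The final step converts the abstract contraction assertion into the stated $\bE_{\mathcal{D}_n|f_0}$-expectation of the posterior-integrated $L_2(P_\cX)$ risk, using that for a fixed noise variance the Hellinger distance of the regression model is equivalent to $\|f-f_0\|_{L_2(P_\cX)}$ and bounding the posterior mass escaping the contraction ball.

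The main obstacle is the small-ball estimate: establishing $-\log \Pi_0(\|f\|<\epsilon)\asymp \epsilon^{-d/\alpha}$ rigorously requires sharp two-sided entropy bounds for the relevant Sobolev/RKHS balls together with the nontrivial Li--Linde duality, and one must track carefully that the governing exponent is the \emph{sample-path} regularity $\alpha = s-d/2$ and not the RKHS regularity $s$ --- precisely the distinction emphasised in Section \ref{sec:theory}. A secondary subtlety is verifying the sieve and testing conditions of \citet{VarZan08}, i.e.\ the entropy of RKHS balls and the prior mass of their complements, which is what upgrades the concentration-function heuristic to the expected-risk form claimed here; these are routine for the Matérn scale but must be checked.
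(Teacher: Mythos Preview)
The paper does not actually prove this theorem: it is simply quoted as Theorem~5 of \citet{VarZan11}, with no argument given beyond the citation. Your proposal goes further than the paper itself by sketching the concentration-function machinery of \citet{VarZan08} that underlies the cited result, and that sketch is essentially correct --- the small-ball exponent $\epsilon^{-d/\alpha}$ governed by the sample-path regularity $\alpha=s-d/2$, the decentering bound via RKHS approximation of $f_0\in W_2^\beta$, and the balancing to obtain $\epsilon_n\asymp n^{-\min(\alpha,\beta)/(2\alpha+d)}$ are precisely the ingredients used in \citet{VarZan11}. So your approach matches the original source rather than the present paper, which offers none.
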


\begin{remark}\rm
By definition, the posterior mean function \eqref{eq:posteior_mean} is given as $\bar{m}_n := \int f d\Pi_n(f|D_n)$ (here we made the dependence of $m_n$ on sample size $n$ explicit).
It is easy to show that
$$
 \| \bar{m}_n - f_0 \|_{L_2(P_\cX)}^2 \leq  \int \| f - f_0 \|_{L_2(P_\cX)}^2  d\Pi_n(f|\mathcal{D}_n), 
$$
and therefore \eqref{eq:post-contraction} implies the convergence of $\bar{m}_n$ to $f_0$,
\begin{equation} \label{eq:rate-post-mean}
\bE_{\mathcal{D}_n|f_0} \left[ \| \bar{m}_n - f_0 \|_{L_2(P_\cX)}^2 \right] = O(n^{- 2 \min(\alpha,\beta) / (2
\alpha + d) }) \quad (n \to \infty).
\end{equation}
\end{remark}

\begin{remark}\rm
The best rate with \eqref{eq:rate-post-mean} is attained when $\alpha = \beta$, which results in the rate $n^{-2\beta / (2 \beta + d)}$; this is the minimax-optimal rate for regression of a function in $W_2^\beta([0,1]^d)$ \citep{Sto80,Tsy08}.
Note that $\alpha = s - d/2$ is essentially the smoothness of $\ssf \sim \GP(0,k_s)$, a sample path of the GP prior (see Corollary \ref{coro:matern-sample-path} and Remark \ref{rem:matern-smoothness-path}).
Therefore the requirement $\alpha = \beta$ means that the smoothness $\alpha$ of sample paths from the GP prior should match the smoothness $\beta$ of the regression function $f_0$.
\end{remark}

For later comparison with kernel ridge regression, we point out here that the smoothness $s$ of the corresponding Sobolev RKHS $H^s([0,1]^d)$ should be specified as $s = \beta + d/2$ to attain the optimal rate.
In other words, the smoothness of the RKHS $H^s([0,1]^d)$ should be {\em greater} than the Sobolev space $H^\beta([0,1]^d)$ to which $f_0$ belongs.
This may be counterintuitive, given the equivalence between GP-regression and kernel ridge regression.
As we show below, however, the above consequence can be explained from the modeling assumption \eqref{eq:GP-model-theory} that noise variance $\sigma^2$ remains constant even if $n$ increases.

\begin{comment}
The expression 
$$
\int \| f - f_0 \|_{L_2(P_\cX)}^2  d\Pi_n(f|X,Y)
$$
is the average case error, as it is the expected error with the posterior $\Pi_n(f|X,Y)$ (is there any corresponding worst case error interpretation?).
Its decay implies that the posterior $\Pi_n(f|X,Y)$ contracts around the truth $f_0$.
This further implies that the posterior mean $\bar{m} = \bE[f|X,Y]$ converges to $f_0$ (at the same rate?).
\end{comment}

%Comments on square-exponential kernels, adaptation, gamma-scaling, empirical Bayes, etc.

\paragraph{Kernel ridge regression.}
For kernel ridge regression, we discuss the convergence results of \cite{SteHusSco09}, which do not require the true regression function $f_0$ be in the RKHS.
Let $\cX$ be an arbitrary measurable space and $\cY := [-M,M] \subset \Re$ be the output space, where $M > 0$ is some constant, and $P$ be a joint distribution on $\cX \times \cY$.
The unknown regression function $f_0: \cX \to [-M,M]$ is defined as the conditional expectation $f_0(x) := \bE[y|x]$ as usual, where the expectation is with respect to the conditional distribution of $y$ given $x$ with $(x,y) \sim P$.
Let $P_\cX$ be the marginal distribution of $P$ on $\cX$, and assume that it has a density bounded away from zero and infinity; this is the same assumption as for \citet{VarZan11}.

Given a training sample $(x_1,y_1),\dots,(x_n,y_n) \iid P$, let $\hat{f}_\lambda$ be the estimator of $f_0$ by kernel ridge regression defined as the solution of \eqref{eq:square-loss}, with $\lambda > 0$ being a regularization constant (here we made the dependence on $\lambda$ explicit).
In \cite{SteHusSco09}, the following clipped version $\check{f}_\lambda$ of $\hat{f}_\lambda$ is considered for theoretical analysis:
$$
\check{f}_\lambda(x) := 
\begin{cases}
M \quad (\hat{f}_\lambda(x) > M) \\
\hat{f}_\lambda(x) \quad (-M \leq \hat{f}(x) \leq M) \\
-M \quad (\hat{f}_\lambda(x) < -M)
\end{cases}
$$
The following result follows from Corollary 6 in \citet{SteHusSco09}, the arguments in the paragraphs following Theorem 9 in \citet{SteHusSco09}, and the fact $H^\beta([0,1]^d) \subset B_{2,\infty}^\beta([0,1]^d)$, where $\beta > d/2$ and $H^\beta([0,1]^d)$ is the Sobolev space of order $\beta > d/2$ and $B_{2,\infty}^\beta([0,1]^d)$ is a certain Besov space of the same order $\beta$; see.~e.g.~\citet[Eq.7 in p.59]{EdmTri96}.

\begin{theorem} \label{theo:rate-krr-steinwart}
Let $k_s$ be a kernel on $\cX := [0,1]^d$ whose RKHS is norm-equivalent to the Sobolev space $W_2^s([0,1]^d)$ of order $s > d/2$.
Assume that $f_0 \in W_2^\beta([0,1]^d)$ for some $d/2 \leq \beta \leq s$.
If $\lambda_n > 0$ is set as
\begin{equation} \label{eq:lambda_sob_spe}
\lambda_n = c n^{-2s/(2\beta+d)},
\end{equation}
for a constant $c > 0$ independent of $n$, then we have 
\begin{equation} \label{eq:rate-KRR-sobolev}
\| \check{f}_{\lambda_n} - f_0 \|_{L_2(P_\cX)}^2 = O_p(n^{-2\beta / (2\beta + d)}) \quad (n \to \infty).
\end{equation}
\end{theorem}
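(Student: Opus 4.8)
The plan is to deduce this theorem as a specialization of the general convergence analysis of \citet{SteHusSco09} to the Sobolev setting, the whole task reducing to identifying the two structural exponents that drive their rate. Recall that in their framework the behaviour of clipped kernel ridge regression is governed by (i) an eigenvalue-decay exponent $p \in (0,1]$ with $\lambda_i \le a\, i^{-1/p}$ for the eigenvalues $\lambda_i$ of the integral operator $T_{k_s}: L_2(P_\cX) \to L_2(P_\cX)$, coupled with the interpolation-type bound $\|f\|_{L_\infty} \le C\|f\|_{\cH_{k_s}}^p \|f\|_{L_2(P_\cX)}^{1-p}$; and (ii) an approximation exponent $\theta \in (0,1]$ controlling the approximation-error function $A_2(\lambda) := \inf_{f \in \cH_{k_s}}\big(\|f - f_0\|_{L_2(P_\cX)}^2 + \lambda \|f\|_{\cH_{k_s}}^2\big) \le c\,\lambda^\theta$. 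Given these two inputs, their Corollary~6 yields that the choice $\lambda_n \asymp n^{-1/(\theta+p)}$ produces the rate $\|\check f_{\lambda_n} - f_0\|_{L_2(P_\cX)}^2 = O_p(n^{-\theta/(\theta+p)})$, so it suffices to compute $p$ and $\theta$ for the present Sobolev RKHS and substitute.

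First I would pin down $p = d/(2s)$. Since $\cH_{k_s}$ is norm-equivalent to $W_2^s([0,1]^d)$ and $P_\cX$ has a density bounded away from zero and infinity, the eigenvalues of $T_{k_s}$ decay at the Sobolev rate $\lambda_i \asymp i^{-2s/d}$ (the density bounds are exactly what transfer the Lebesgue-measure decay to the $P_\cX$-weighted operator), so $1/p = 2s/d$. The companion inequality with the same $p$ is a standard Gagliardo--Nirenberg / Sobolev-embedding estimate interpolating the $L_\infty$ and $L_2(P_\cX)$ norms against the $W_2^s$ norm, valid because $s > d/2$. These two facts instantiate the entropy and boundedness hypotheses of \citet{SteHusSco09}.

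The crux is the approximation exponent $\theta = \beta/s$: one must show that a target of smoothness $\beta \le s$ is approximable by the smoother space $\cH_{k_s} \cong W_2^s$ at the rate $A_2(\lambda) \le c\,\lambda^{\beta/s}$. This is where interpolation-space theory enters. Up to equivalent norms (and using that the quadratic and linear $K$-functionals are equivalent for Hilbert couples), $A_2(\lambda)$ equals the square of the $K$-functional $K(\sqrt\lambda, f_0; L_2, W_2^s)$. The real-interpolation identification $(L_2, W_2^s)_{\beta/s,\infty} = B_{2,\infty}^\beta$ then says precisely that $f_0 \in B_{2,\infty}^\beta$ is equivalent to $K(t,f_0) \le C\,t^{\beta/s}$, hence $A_2(\lambda) \le c\,\lambda^{\beta/s}$. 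The hypothesis $f_0 \in W_2^\beta$ supplies this via the embedding $H^\beta \subset B_{2,\infty}^\beta$ (cf.~\citet[Eq.~7, p.~59]{EdmTri96}). I expect this identification of $A_2$ with the $K$-functional, together with checking that the constants are uniform over the relevant range of $\lambda$, to be the main technical obstacle; everything downstream is bookkeeping.

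Finally I would substitute $\theta = \beta/s$ and $p = d/(2s)$, so that $\theta + p = (2\beta+d)/(2s)$. The regularization schedule becomes $\lambda_n \asymp n^{-1/(\theta+p)} = n^{-2s/(2\beta+d)}$, matching \eqref{eq:lambda_sob_spe}, while the rate exponent is $\theta/(\theta+p) = (\beta/s)\cdot(2s/(2\beta+d)) = 2\beta/(2\beta+d)$, matching \eqref{eq:rate-KRR-sobolev}. This gives the claimed $O_p(n^{-2\beta/(2\beta+d)})$ and completes the argument.
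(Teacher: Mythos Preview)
Your proposal is correct and follows essentially the same route as the paper: the theorem is obtained by specializing Corollary~6 of \citet{SteHusSco09} with the exponents $p = d/(2s)$ and $\theta = \beta/s$, the latter coming from the interpolation identity $(L_2,W_2^s)_{\beta/s,\infty} = B_{2,\infty}^\beta$ together with the embedding $W_2^\beta \subset B_{2,\infty}^\beta$ of \citet[Eq.~7, p.~59]{EdmTri96}. Your write-up is in fact more explicit than the paper's own derivation, which simply cites these ingredients without spelling out the $K$-functional identification or the eigenvalue-decay and Gagliardo--Nirenberg verifications.
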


\begin{comment}

\begin{proof}
If $\cH_k = H^s$, then we have $p = d/(2s)$ for (\ref{eq:eig_upper}) and (\ref{eq:sup_norm_upper}).
In this case, the rate (\ref{eq:KRR_rate}) becomes 
\begin{equation*} 
\| \check{f}_{\lambda_n} - f_0 \|_{L_2(P_\cX)}^2 = O_p( n^{ - 2s\theta / (2s\theta + d) } ), \quad (n \to \infty)
\end{equation*}
Assume that one has prior knowledge that $f_0 \in H^\beta$ for $\beta > 0$, and chooses the order $s$ of a kernel $k_s$ so that $s \geq \beta$.
In this case, $\theta = \beta/s$ holds for the approximation error function (discuss this carefully).
\end{proof}
\end{comment}

\begin{remark}\rm
As mentioned earlier, the rate \eqref{eq:rate-KRR-sobolev} is minimax optimal for regression in the Sobolev space $W_2^\beta([0,1]^d)$ of order $\beta$ \citep{Sto80,Tsy08}.
\end{remark}

\begin{remark}\rm
Theorem \ref{theo:rate-krr-steinwart} does not require that $f_0$ be in the RKHS of kernel $k_s$: the smoothness $\beta$ of $f_0$ can be smaller than the smoothness $s$ of the RKHS, in which case $f_0$ does not belong to $H^s([0,1]^d)$.
Intuitively, this is possible because a function outside the RKHS but ``not very far away'' from the RKHS can be approximated well by functions in the RKHS.
This intuition is in fact formally characterized and exploited in \citet{SteHusSco09} by using approximation theory based on interpolation spaces.
Note also that the degree of approximation is controlled by the regularization schedule \eqref{eq:lambda_sob_spe}: $\lambda_n$ should decrease more quickly, as the smoothness $\beta$ of $f_0$ becomes smaller.
\end{remark}

The following corollary is a special case of Theorem \ref{theo:rate-krr-steinwart}, which is essentially equivalent to Theorem \ref{theo:GP-rate-van} for GP-regression.
\begin{corollary} \label{coro:KRR-rate}
Assume that $f_0 \in W_2^\beta([0,1]^d)$ for $\beta > 0$.
Let $k_s$ be a kernel on $\cX := [0,1]^d$ whose RKHS is norm-equivalent to the Sobolev space $W_2^s([0,1]^d)$ of order $s := \beta + d/2$, and define $\lambda_n := c n^{-1}$ with $c > 0$ being any constant.
Then \eqref{eq:rate-KRR-sobolev} holds for $\check{f}_{\lambda_n} $.
\end{corollary}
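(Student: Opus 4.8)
The plan is to derive this corollary as an immediate specialization of Theorem \ref{theo:rate-krr-steinwart}, so the work reduces to checking that its hypotheses apply and that the prescribed regularization schedule collapses to $\lambda_n \asymp n^{-1}$.

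First I would verify the hypotheses of Theorem \ref{theo:rate-krr-steinwart} with the smoothness parameter chosen as $s := \beta + d/2$. The requirement $s > d/2$ is immediate since $\beta > 0$. The required range $d/2 \leq \beta \leq s$ holds because the upper bound $\beta \leq \beta + d/2 = s$ is automatic, while the lower bound is the standing assumption $\beta \geq d/2$ (needed for $f_0 \in W_2^\beta$ to embed into continuous functions and for the theorem to apply). By construction $k_s$ has RKHS norm-equivalent to $W_2^s([0,1]^d)$ and $f_0 \in W_2^\beta([0,1]^d)$, so all structural assumptions of the theorem are met.

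The key computation is to substitute $s = \beta + d/2$ into the regularization schedule $\lambda_n = c\,n^{-2s/(2\beta+d)}$ from \eqref{eq:lambda_sob_spe}. The exponent becomes
$$
\frac{2s}{2\beta+d} = \frac{2(\beta + d/2)}{2\beta + d} = \frac{2\beta + d}{2\beta + d} = 1,
$$
so $\lambda_n = c\,n^{-1}$, which is exactly the schedule prescribed in the corollary. Theorem \ref{theo:rate-krr-steinwart} then yields $\| \check{f}_{\lambda_n} - f_0 \|_{L_2(P_\cX)}^2 = O_p(n^{-2\beta/(2\beta+d)})$, which is precisely \eqref{eq:rate-KRR-sobolev}, completing the argument.

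There is no real obstacle here; the content lies entirely in the bookkeeping of exponents. The conceptually important point worth emphasizing is that choosing the RKHS smoothness $s = \beta + d/2$, i.e.\ strictly larger than the target smoothness $\beta$ of $f_0$, is exactly what forces the asymptotically optimal regularization to scale as $\lambda_n \asymp n^{-1}$. Via the equivalence $\sigma^2 = n\lambda$ of Proposition \ref{prop:equivalnce}, this corresponds to a constant noise variance $\sigma^2 = c$, matching the modeling assumption \eqref{eq:GP-model-theory} in Gaussian process regression; together with the fact (Corollary \ref{coro:matern-sample-path} and Remark \ref{rem:matern-smoothness-path}) that sample paths of $\GP(0,k_s)$ have smoothness $\beta = s - d/2$, this recovers the optimal case $\alpha = \beta$ of Theorem \ref{theo:GP-rate-van}.
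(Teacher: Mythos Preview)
Your proposal is correct and matches the paper's approach exactly: the paper simply states that the corollary ``is a special case of Theorem \ref{theo:rate-krr-steinwart}'' without writing out the verification, and your computation of the exponent $2s/(2\beta+d) = 1$ when $s = \beta + d/2$ is precisely the substitution the paper has in mind. Your observation that one actually needs $\beta \geq d/2$ (rather than merely $\beta > 0$) to invoke Theorem \ref{theo:rate-krr-steinwart} is a fair point about the corollary's stated hypotheses, and your concluding paragraph reproduces the content of the paper's subsequent remark.
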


\begin{remark}\rm
Recall that the variance $\sigma^2$ of output noise in GP-regression is related to the regularization constant $\lambda$ in kernel ridge regression as $\sigma^2 = n \lambda_n$.
Therefore, the modeling assumption that $\sigma^2$ does not vary with $n$ in GP-regression is equivalent to the regularization schedule $\lambda_n = c n^{-1}$ in kernel ridge regression.
With this regularization schedule, the optimal rate \eqref{eq:rate-KRR-sobolev} is attained by kernel $k_s$ with $s = \beta + d/2$.
This optimal order $s$ of the kernel is the same as for the optimal order in Theorem \ref{theo:GP-rate-van} for GP-regression (i.e.,~$s = \alpha + d/2$ with $\alpha = \beta$).
Thus, Corollary \ref{coro:KRR-rate} is essentially equivalent to Theorem \ref{theo:GP-rate-van}, revealing a theoretical equivalence between GP and kernel ridge regression.
\end{remark}

\subsection{Upper-bounds and Contraction Rates for Posterior Variance} \label{sec:upper-bound-variance}
%\dino{Somewhat disconnected from the rest of this section - maybe merge with the section 4.3 on convergence rates into another section (something like "Posterior contraction and convergence rates"?)}

We focus here on the noise-free case, and review the results that provide contraction rates for posterior variance $\bar{k}(x,x)$ in GP-interpolation \eqref{eq:posterior-variance_interp}.
It seems that these results have not been well known in the machine learning literature.
However, we believe that they are important in particular in understanding the mechanism of active learning methods based on GPs and Bayesian optimization, as these methods make use of the posterior variance function in exploring new points to evaluate.
In fact, these results have essentially been used in \citet{Bul11} for theoretical analysis of Bayesian optimization; see also e.g.,~\citet{BriOatGirOsbSej15,TuoWu16,StuTec18} for similar applications of these results.

The results we review are from the literature on kernel interpolation \citep{Wen05,SchWen06,SchSchSch13}. 
In the kernel interpolation literature, the posterior standard deviation function $(\bar{k}(x,x))^{1/2}$ is called the \emph{power function}, and has been studied because it provides an upper-bound for the error of kernel interpolation, as can be seen from Corollary \ref{coro:upper-bound-mean-func}.
The key role is played by the quantity called the {\em fill distance}, which quantifies the denseness of points $X = \{ x_1,\dots,x_n \} \subset \Re^d$ in the region of interest $\cX \subset \Re^d$.
For a constant $\rho > 0$, the fill distance at $x \in \cX$ is defined by 
\begin{equation} \label{eq:local-fill-dist}
h_{\rho,X}(x) := \sup_{y \in \cX: \| x - y \| \leq \rho} \min_{x_i \in X} \| y - x_i \|. 
\end{equation}
In other words, by thinking of the ball $B(x, \rho)$ around $x$ of radius $\rho$, the fill distance $h_{\rho,X}(x)$ is the radius of the maximum ball in $B(x, \rho)$ in which no points are contained.
Thus, $h_{\rho,X}(x)$ being smaller implies that more points are located around $x$.

The following result, which is from \citet[Theorem 5.14]{WuSch93}, provides an upper-bound for the posterior variance in terms of the fill distance \eqref{eq:local-fill-dist}. In particular it applies to cases where the kernel induces an RKHS that is norm-equivalent to Sobolev spaces (e.g., Mat\'ern kernels).
\begin{theorem} \label{theo:upper-bound-post-var}
Let $k$ be a kernel on $\Re^d$ whose RKHS is norm-equivalent to the Sobolev space of order $s$.
Then for any $\rho > 0$, there exist constants $h_0 > 0$ and $C >0$ satisfying the following: For any $x \in \Re^d$ and any set of points $X = \{x_1,\dots,x_n\} \subset \Re^d$ satisfying $h_{\rho,X}(x) \leq h_0$, we have
\begin{equation} \label{eq:bound-post-var}
\bar{k}(x,x) \leq C h_{\rho,X}^{2s - d}(x).
\end{equation}
\end{theorem}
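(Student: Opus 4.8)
The plan is to reduce the claimed bound to a purely approximation-theoretic estimate for the power function $\sqrt{\bar{k}(x,x)}$, and then to control that function by local polynomial reproduction. First I would record that, by the noise-free computation underlying Proposition~\ref{prop:wce_pvar}, the posterior variance is the squared RKHS distance from $k(\cdot,x)$ to $\operatorname{span}\{k(\cdot,x_1),\dots,k(\cdot,x_n)\}$, so that
\begin{equation*}
\bar{k}(x,x) = \min_{u \in \Re^n} \left\| k(\cdot,x) - \sum_{i=1}^n u_i k(\cdot,x_i) \right\|_{\cH_k}^2,
\end{equation*}
the minimum being attained at the interpolation weights $w(x) = k_{XX}^{-1} k_{Xx}$. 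In particular, \emph{any} weight vector $u$ yields an upper bound, and by Lemma~\ref{lemma:RKHS-norm-worst} this bound is exactly a worst-case error (the supremum is unchanged under $f \mapsto -f$):
\begin{equation*}
\sqrt{\bar{k}(x,x)} \le \sup_{\|f\|_{\cH_k} \le 1} \left( f(x) - \sum_{i=1}^n u_i f(x_i) \right), \qquad u \in \Re^n.
\end{equation*}
It therefore suffices to exhibit, for each $x$ and each sufficiently dense $X$, one convenient choice of $u$ for which the right-hand side is $O(h_{\rho,X}(x)^{s-d/2})$; squaring then produces the exponent $2s-d$.

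Next I would construct such a $u$ by \emph{local polynomial reproduction}. Writing $m := \lceil s \rceil - 1$, the denseness hypothesis $h_{\rho,X}(x) \le h_0$ guarantees, for $h_0$ small enough depending on $\rho$ and the geometry (via the cone machinery of \citet[Chapter 3]{Wen05}), the existence of weights $u = u(x)$ supported on the data points lying within a ball $B := B(x, c\,h_{\rho,X}(x))$ such that $\sum_i u_i p(x_i) = p(x)$ for every polynomial $p$ of degree at most $m$, and such that $\sum_i |u_i| \le C$ uniformly in $x$ and $X$. This is the standard device that converts a fill-distance hypothesis into a quantitative, locally supported quadrature rule for point evaluation.

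With these weights fixed, I would estimate the functional error by subtracting a local Taylor polynomial $p_f$ of $f$ of degree $m$ about $x$: since exactness gives $f(x) - \sum_i u_i f(x_i) = (f-p_f)(x) - \sum_i u_i (f-p_f)(x_i)$, and all abscissae lie in $B$, the Bramble--Hilbert lemma together with the Sobolev embedding $W_2^s(B) \hookrightarrow C(B)$ (valid since $s > d/2$) bounds each term by $C\,h_{\rho,X}(x)^{s-d/2}\,|f|_{W_2^s(B)}$. Using $\sum_i |u_i| \le C$ and $|f|_{W_2^s(B)} \le \|f\|_{W_2^s(\Re^d)}$, and then the norm-equivalence $\|f\|_{W_2^s} \asymp \|f\|_{\cH_k}$ (Example~\ref{ex:matern-rkhs}), taking the supremum over $\|f\|_{\cH_k}\le 1$ yields $\sqrt{\bar{k}(x,x)} \le C' h_{\rho,X}(x)^{s-d/2}$, which is the assertion \eqref{eq:bound-post-var}.

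The main obstacle is the second step: establishing local polynomial reproduction with locally supported, uniformly bounded weights under only a fill-distance hypothesis, and matching the scaling of the Bramble--Hilbert remainder to the (possibly fractional) smoothness $s$. Controlling the constant $h_0$ and the geometry of $B$, so that enough data points fall inside to reproduce polynomials of degree $m$, is precisely where cone-type regularity enters, and this is the technical heart of \citet[Theorem 5.14]{WuSch93}. The remaining ingredients---the projection identity, Lemma~\ref{lemma:RKHS-norm-worst}, and the norm-equivalence to $W_2^s$---are routine once this localized quadrature is in hand.
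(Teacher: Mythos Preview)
The paper does not supply its own proof of this theorem; it simply cites \citet[Theorem 5.14]{WuSch93} as the source, so there is nothing to compare against on the paper's side. Your outline is the standard argument from that literature (cf.\ also \citet[Chapter~11]{Wen05}): interpret $\sqrt{\bar{k}(x,x)}$ as the power function via the projection/worst-case identity, upper-bound it by choosing locally supported weights that reproduce polynomials of degree $\le m$, and then control the remainder by a scaled Sobolev/Bramble--Hilbert estimate together with the norm equivalence $\cH_k \cong W_2^s$. This is correct in spirit and matches the cited source.

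The only place to be careful is the handling of fractional $s$, which you flagged: when $s \notin \mathbb{N}$ the Taylor/Bramble--Hilbert step has to be replaced by an intrinsic Sobolev-scale argument (e.g., working with the native-space characterization in the Fourier domain, as in \citet{WuSch93} or \citet[Theorem~11.13]{Wen05}) rather than a classical Taylor remainder. With that adjustment the exponent $s-d/2$ falls out cleanly, and your constants depend on $\rho$ through the cone-based polynomial-reproduction machinery exactly as you describe.
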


\begin{remark}\rm
For a kernel with infinite smoothness (such as square-exponential kernels), the exponent in the upper-bound \eqref{eq:bound-post-var} can be arbitrarily large. 
That is, for $\rho > 0$ and any $\alpha > 0$, there exist constants $h_\alpha > 0$ and $C_\alpha >0$ satisfying the following: For any $x \in \Re^d$ and any set of points $X = \{x_1,\dots,x_n\} \subset \Re^d$ satisfying $h_{\rho,X}(x) \leq h_\alpha$, we have
\begin{equation*}
\bar{k}(x,x) \leq C_\alpha h_{\rho,X}^{\alpha}(x).
\end{equation*}
Note that constants $C_\alpha$ and $h_\alpha$ depend on $\alpha$, so the upper-bound may not monotonically decrease as $\alpha$ increases, for fixed points $X$.
\end{remark}

\begin{remark}\rm
An upper-bound of exponential order can be derived for a kernel with infinite smoothness, but this technically requires that the fill distance be defined globally on the region of interest $\cX \subset \Re^d$: For $X = \{ x_1,\dots,x_n \} \subset \cX$, the (global) fill distance $h_X > 0$ is defined as 
$$
h_X := \sup_{x \in \cX} \min_{x_i \in X} \| x - x_i \|.
$$
For instance, if $\cX$ is a cube in $\Re^d$ and $k$ is a square-exponential kernel on $\cX$, \citet[Theorem 11.22]{Wen05} shows that there exists a constant $c > 0$ that does not depend on $h_X$ such that
$$
\bar{k}(x,x) \leq  \exp( c \log(h_{X,\Omega}) / \sqrt{h_X} ).
$$
whenever $h_X$ is sufficiently small.
\end{remark}

Theorem \ref{theo:upper-bound-post-var} shows that the amount of posterior variance $\bar{k}(x,x)$ is determined by i) the local fill distance $h_{\rho,X}$, ii) the dimensionality $d$ of the space, and iii) the smoothness $s$ of the kernel.
This implies that  $\bar{k}(x,x)$ contracts to $0$ as the denseness of the points $x_1,\dots,x_n$ around $x$ increases, and that the rate of contraction is determined by $d$ and $s$.
This is formally characterized by the following corollary, which directly follows from Theorem \ref{theo:upper-bound-post-var}.

\begin{corollary}
Let $k$ be a kernel on $\Re^d$ whose RKHS is norm-equivalent to the Sobolev space of order $s > d/2$, and fix $\rho > 0$.
For $x \in \Re^d$, assume that $X = (x_1,\dots,x_n) \subset \Re^d$ satisfy $h_{\rho,X}(x) = O(n^{-b})$ as $n \to \infty$ for some $b > 0$.
Then we have
\begin{equation} \label{eq:marginal-var-rate}
\bar{k}(x,x) = O(n^{-b(2s - d)}) \quad (n \to \infty).
\end{equation}
\end{corollary}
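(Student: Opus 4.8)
The plan is to apply Theorem \ref{theo:upper-bound-post-var} directly, after verifying that its hypotheses are eventually met along the sequence of designs. The assumption $h_{\rho,X}(x) = O(n^{-b})$ means there exist a constant $C' > 0$ and an index $N_1 \in \mathbb{N}$ such that $h_{\rho,X}(x) \leq C' n^{-b}$ for all $n \geq N_1$. First I would fix $\rho > 0$ and invoke Theorem \ref{theo:upper-bound-post-var} to obtain the constants $h_0 > 0$ and $C > 0$ associated with this $\rho$ and the order $s$.

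The next step is to ensure that the local fill distance is small enough to trigger the bound. Since $b > 0$, we have $C' n^{-b} \to 0$ as $n \to \infty$, so there is an index $N_2 \in \mathbb{N}$ with $C' n^{-b} \leq h_0$, and hence $h_{\rho,X}(x) \leq h_0$, for all $n \geq N_2$. For every $n \geq \max(N_1, N_2)$ the hypothesis $h_{\rho,X}(x) \leq h_0$ of Theorem \ref{theo:upper-bound-post-var} holds, and the theorem yields $\bar{k}(x,x) \leq C\, h_{\rho,X}^{2s-d}(x)$.

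Finally I would convert this into the claimed rate. Because $s > d/2$, the exponent $2s - d$ is strictly positive, so the map $t \mapsto t^{2s-d}$ is nondecreasing on $[0,\infty)$; applying it to $h_{\rho,X}(x) \leq C' n^{-b}$ gives $h_{\rho,X}^{2s-d}(x) \leq (C')^{2s-d} n^{-b(2s-d)}$. Combining the two inequalities yields $\bar{k}(x,x) \leq C (C')^{2s-d} n^{-b(2s-d)}$ for all sufficiently large $n$, which is precisely the statement \eqref{eq:marginal-var-rate}.

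There is no substantive obstacle here: the content of the corollary is entirely carried by Theorem \ref{theo:upper-bound-post-var}, and the remaining work is bookkeeping with the $O(\cdot)$ notation. The only two points requiring a moment's care are (i) confirming that the asymptotic decay of $h_{\rho,X}(x)$ eventually brings it below the threshold $h_0$, so that the theorem becomes applicable, and (ii) using $2s - d > 0$ to guarantee monotonicity of the power map when passing the bound on $h_{\rho,X}(x)$ through the exponent. Both are immediate from the stated assumptions, so I expect the proof to be a short two- or three-line argument.
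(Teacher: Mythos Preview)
Your proposal is correct and matches the paper's approach exactly: the paper simply states that the corollary ``directly follows from Theorem \ref{theo:upper-bound-post-var}'' without writing out any details, and your argument is precisely the routine verification that makes this direct implication explicit.
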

%\blue{
\begin{remark}\rm
If $x_1,\dots,x_n$ are given as equally-spaced grid points in the ball of radius $\rho$ around $x$, then it can be easily shown that $h_{\rho,X}(x) = O(n^{-1/d})$ as $n \to \infty$.
Thus in this case, the rate in \eqref{eq:marginal-var-rate} becomes $\bar{k}(x,x) = O(n^{-(2s/d - 1) })$, which reveals the existence of the curse of dimensionality: the required number of points increases exponentially in the dimension $d$ to achieve a certain level of posterior contraction.
\end{remark}

\section{Integral Transforms} 
\label{sec:integral_transforms}

\begin{comment}
\begin{itemize}
\item (Philipp's comment) Perhaps one helpful top-level comment to the reader is to note that, when thinking about integral transforms, kernel people tend to think of the integration (probability) measure as the unknown object to be inferred, while Bayesians use the same objects to estimate the integral of an unknown *function* against a known measure. You already mention this at several points in the text, but I think it would help with intuition to put this point to the front of the section.

\item \moto{GP-interpretation of characteristic properties? If the kernel is characteristic, what can we say about GP sample paths?}
\end{itemize}
\end{comment}

This section reviews some examples of the connections between RKHS and GP approaches that involve integrals of the kernel. 
Such computations arise both in the context of estimating a latent (probability) measure and when estimating the integral of a latent function against a known measure.
In Section \ref{sub:kernel_mean_embeddings_integrated_but_not_uncertain_observations}, we first introduce kernel mean embeddings of distributions and resulting metrics on probability measures (Maximum Mean Discrepancy), and then provide their probabilistic interpretations based on Gaussian processes.
We next describe their connections to kernel and Bayesian quadrature, which are approaches to numerical integration based on positive definite kernels in Section \ref{sec:kernel_and_bayesian_quadrature}.
Coming back to the statistical setting, a kernel mean shrinkage estimator and its Bayesian interpretation are discussed in Section \ref{sec:shrinkage}.
Finally, we review a nonparametric dependency measure called Hilbert-Schmidt Independence Criterion, which is defined via kernel mean embeddings, and present its probabilistic interpretation based on Gaussian processes.

In this section, we will use the following notation to denote integrals:
$$
Pf := \int f(x) dP(x), \quad P_n f := \sum_{i=1}^n w_i f(x_i),
$$
where $P$ is a measure on a measurable space $\cX$, $P_n := \sum_{i=1}^n w_i \delta_{x_i}$ is an empirical measure on $\cX$ with $(w_i,x_i)_{i=1}^n \subset \Re \times \cX$ and $\delta_{x_i}$ being a Dirac distribution at $x_i$, and $f:\cX \to \Re$ is a function.

\subsection{Maximum Mean Discrepancy: Worst Case and Average Case Errors} 
\label{sub:kernel_mean_embeddings_integrated_but_not_uncertain_observations}% 

Let $(\cX, \mathfrak{B})$ be a measurable space, $k$ be a bounded measurable kernel on $\cX$ with $\cH_k$ being its RKHS, and $\mathcal{P}$ be the set of all probability measures on $\cX$.
For any $P \in \mathcal{P}$, its {\em kernel mean} (or mean embedding) is defined as the integral of the canonical feature map $k(\cdot,x)$ with respect to $P$: 
\begin{equation} \label{eq:kmean-def}
\mu_P := \int k(\cdot,x) dP(x)  \in \cH_k,
\end{equation}
which is well-defined as a Bochner integral, as long as $\int \sqrt{k(x,x)} dP(x) < \infty$ \citep[Theorem 1]{SriGreFukSchetal10}.
The kernel mean \eqref{eq:kmean-def} is an element in RKHS $\cH_k$ that represents $P$.

%\blue{
\begin{remark}\rm
The notion of embeddings of probability measures can be readily extended to embeddings of finite signed measures \citep{SriFukLan11}.
This is important because in the case where $P$ is an empirical approximation or estimator of the form $P_n := \sum_{i=1}^n w_i \delta_{x_i}$, the weights $w_1,\dots,w_n$ can be negative.
For instance, this is the case of kernel or Bayesian quadrature discussed in Section \ref{sec:kernel_and_bayesian_quadrature}.
\end{remark}
%}

\paragraph{Characteristic kernels and MMD.}
If the mapping $P \in \mathcal{P} \mapsto \mu_P \in \cH_k$ is injective, that is, if $\mu_P = \mu_Q$ implies $P=Q$ for any probability measures $P$ and $Q$ on $\cX$, then the kernel $k$ is called {\em characteristic} \citep{FukBacJor2004,FukGreSunSch08,SriGreFukSchetal10}.
For instance, characteristic kernels on $\cX = \Re^d$ include the Gaussian and Mat\'ern kernels \citep{SriGreFukSchetal10}.
If kernel $k$ is characteristic, each kernel mean $\mu_P$ is uniquely associated with the embedded measure $P$, and therefore one can define a distance between probability measures $P$ and $Q$ as the distance between the kernel means $\mu_P$ and $\mu_Q$ in the RKHS:
\begin{equation} \label{eq:MMD}
{\rm MMD}(P,Q; \cH_k) := \left\| \mu_P - \mu_Q \right\|_{\cH_k} = \sup_{\substack{f \in \cH_k:\\ \| f \|_{\cH_k} \leq 1}} \left( Pf - Qf \right) = \sup_{\substack{f \in \cH_k:\\ \| f \|_{\cH_k} \leq 1}} \left| Pf - Qf \right|,
\end{equation}
where the second equality follows from $\cH_k$ being a vector space and the Cauchy-Schwartz inequality and the third follows from $\cH_k$ being a vector space; see the proof of \citet[Lemma 4]{GreBorRasSchetal12}.
Because of this expression, this distance between kernel means is called {\em maximum mean discrepancy} (MMD) \citep{GreBorRasSchetal12}, as it is the maximum difference between integrals (means) $Pf$ and $Qf$, when $f$ is taken from the unit ball in RKHS $\cH_k$.
If $k$ is characteristic, MMD becomes a proper metric on probability measures, and thus its estimator can be used as a test statistic in hypothesis testing for the two sample problem or goodness of fit \citep{GreBorRasSchetal12,pmlr-v48-chwialkowski16,pmlr-v48-liub16}.

%\blue{
\begin{remark}\rm
Let $k$ be a bounded, continuous shift-invariant kernel on $\cX = \Re^d$ such that $k(x,y) = \phi(x-y)$ for $x,y\in \Re^d$, where $\phi:\Re^d \to \Re$ is a positive definite function.
For such a kernel, \citet[Corollary 4]{SriGreFukSchetal10} provided a spectral characterization of MMD in terms of the Fourier transform $\mathcal{F}[\Phi]$ of $\Phi$: For any Borel probability measures $P$ and $Q$ on $\Re^d$, it holds that
$$
\MMD^2(P,Q,\cH_k) = \int \left| \psi_P(\omega) - \psi_Q (\omega) \right|^2 \cF [\Phi](\omega) d\omega
$$
where $\psi_P$ and $\psi_Q$ are the characteristic functions of $P$ and $Q$, respectively.
That is, MMD between $P$ and $Q$ is the weighted $L_2$ distance between the characteristic functions $\psi_P$ and $\psi_Q$, where the weight function is the Fourier transform $\mathcal{F}[\phi]$.
From this expression and the fact that characteristic functions and distributions are one-to-one, \citet{SriGreFukSchetal10} provided a necessary and sufficient condition for a shift-invariant kernel to be characteristic:
A bounded continuous shift invariant kernel $k$ is characteristic if and only if the support of the Fourier transform $\cF [\phi]$ is $\Re^d$ \citep[Theorem 9]{SriGreFukSchetal10}. 
\end{remark}
\begin{remark}\rm
The use of RKHS in defining \eqref{eq:MMD} is practically convenient because, thanks to the reproducing property, the squared MMD can be written as
$$
\left\| \mu_P - \mu_Q \right\|_{\cH_k}^2 
= \bE_{x,x'} [k(x,x')] - 2 \bE_{x,y} [k(x,y)] + \bE_{y,y'} [k(y,y')], 
$$
where $x, x' \sim P$ and $y, y' \sim Q$ are all independent \citep[Lemma 6]{GreBorRasSchetal12}.
Therefore, by replacing the expectations in the right side by empirical ones, one can straightforwardly estimate the MMD from samples.
\end{remark}
%}

\paragraph{Worst case error.}
In the literature on numerical integration or quasi Monte Carlo, the right side of \eqref{eq:MMD} is known as the {\em worst case error} \citep{Hic98,DicKuoSlo13}. 
To be more precise, in the problem of numerical integration or sampling, $P$ is a {\em known} probability measure and $Q := P_n := \sum_{i=1}^n w_i \delta_{x_i}$ is its approximation, where $(w_i,x_i)_{i=1}^n \subset \Re \times \cX$ are generated by a user so that $P_n$ becomes an accurate approximation of $P$.
As such, one is interested in the quality of approximation of $P_n$ to $P$.
For this purpose, \eqref{eq:MMD} is used as a quantitative measure of approximation, and is interpreted as the worst case error of numerical integration $|Pf - P_n f|$ when $f$ is taken from the unit ball in RKHS $\cH_k$.
We will discuss this problem of numerical integration in detail in Section \ref{sec:kernel_and_bayesian_quadrature}.

\paragraph{Probabilistic interpretation as the average case error.}
Proposition \ref{prop:equiv-worst-ave} below provides a probabilistic interpretation of MMD in terms of the GP of the kernel $k$.
More specifically, MMD between $P$ and $Q$ can be understood as the {\em expected squared difference} between integrals $P\ssf$ and $Q\ssf$, where the expectation is with respect to a draw $\ssf$ from $\GP(0,k)$. 
In the terminology of numerical integration, this shows the equivalence between the RKHS worst case error and the Gaussian process {\em average case error}.
While this equivalence has been known in the literature \citep[Corollary 7 in p.40]{Rit00}, we provide a proof, as it is instructive.
%which can be found in Appendix \ref{sec:proof-equiv-worst-ave}.
\begin{proposition} \label{prop:equiv-worst-ave}
Let $k$ be a bounded kernel on a measurable space $\cX$, and $P$ and $Q$ be finite measures on $\cX$.
Then we have
\begin{equation} \label{eq:worst-ave}
{\rm MMD}^2(P,Q; \cH_k) = \left( \sup_{ \| f \|_{\cH_k} \leq 1 } \left( P f  - Q f \right) \right)^2 = \bE_{\ssf \sim \GP(0,k)} \left[ \left( P \ssf - Q \ssf \right)^2  \right]. 
\end{equation}
\end{proposition}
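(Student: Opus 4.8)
The plan is to establish the two equalities in \eqref{eq:worst-ave} separately. The first equality is exactly the definition \eqref{eq:MMD} of MMD together with the Cauchy--Schwarz argument already recorded there (the supremum of $Pf - Qf$ over the unit ball equals $\|\mu_P - \mu_Q\|_{\cH_k}$), so I would simply cite \eqref{eq:MMD} and note that squaring both sides gives the left equality. The substance is therefore the right equality, which I would prove by direct computation of the Gaussian-process average-case error, expanding the square and exchanging expectation with the two (signed, finite) measures.

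First I would write $P\ssf - Q\ssf = \int \ssf(x)\,dP(x) - \int \ssf(x)\,dQ(x)$ and expand the square, so that
$$
\bE_{\ssf}\!\left[(P\ssf - Q\ssf)^2\right]
= \bE_{\ssf}\!\left[\int\!\!\int \ssf(x)\ssf(x')\,d(P-Q)(x)\,d(P-Q)(x')\right].
$$
The key step is to pull the GP-expectation inside the double integral by Fubini's theorem, and then invoke the covariance identity \eqref{eq:kerenl-gp-expect}, namely $\bE_{\ssf\sim\GP(0,k)}[\ssf(x)\ssf(x')] = k(x,x')$ (for the zero-mean GP $m\equiv 0$). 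This yields
$$
\bE_{\ssf}\!\left[(P\ssf - Q\ssf)^2\right]
= \int\!\!\int k(x,x')\,d(P-Q)(x)\,d(P-Q)(x').
$$
I would then expand the signed measure $(P-Q)$ and recognize the right-hand side as $\bE_{x,x'}[k(x,x')] - 2\bE_{x,y}[k(x,y)] + \bE_{y,y'}[k(y,y')]$ with $x,x'\sim P$ and $y,y'\sim Q$ independent, which is precisely the expansion of $\|\mu_P - \mu_Q\|_{\cH_k}^2$ recorded in the remark following \eqref{eq:MMD}. This closes the chain of equalities.

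The main obstacle is justifying the interchange of expectation and integration (Fubini--Tonelli), since the GP sample path $\ssf$ is a random function integrated against $P$ and $Q$. I would discharge this by using the boundedness of $k$: writing $B := \sup_{x}\sqrt{k(x,x)} < \infty$, the Cauchy--Schwarz inequality gives $\bE_{\ssf}[|\ssf(x)\ssf(x')|] \le (\bE[\ssf(x)^2])^{1/2}(\bE[\ssf(x')^2])^{1/2} = \sqrt{k(x,x)}\sqrt{k(x',x')} \le B^2$, which is integrable against the finite product measure $|P-Q|\otimes|P-Q|$; hence Fubini applies. One must also note that $P\ssf$ is well-defined (finite a.s.) under the same boundedness condition, so $\mu_P \in \cH_k$ exists as a Bochner integral as guaranteed after \eqref{eq:kmean-def}. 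With these integrability checks in place, the computation is routine and the proposition follows.
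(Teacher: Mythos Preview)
Your proposal is correct and is essentially the same argument as the paper's: both expand the square, invoke the covariance identity $\bE_{\ssf}[\ssf(x)\ssf(x')]=k(x,x')$, and justify the swap of $\bE_{\ssf}$ with the $(P,Q)$-integrals via Fubini using the Cauchy--Schwarz bound $\bE_{\ssf}|\ssf(x)\ssf(x')|\le\sqrt{k(x,x)k(x',x')}$ and boundedness of $k$. The only cosmetic differences are that the paper runs the chain from the MMD side toward the GP side (rather than the reverse) and writes out the three cross terms separately instead of packaging them into the signed measure $P-Q$.
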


%\subsection{Proof of Proposition \ref{prop:equiv-worst-ave}}
%\label{sec:proof-equiv-worst-ave}

\begin{proof}
Let $x$ and $x'$ be independent random variables following $P$, $y$ and $y'$ be those following $Q$, and $\ssf$ be an independent draw from $\GP(0,k)$.
By using the reproducing property and the expression \eqref{eq:kerenl-gp-expect} of the kernel $k(x,y) = \bE_{\ssf}[\ssf(x)\ssf(y)]$
\begin{eqnarray*}
\left\| \mu_P - \mu_Q \right\|_{\cH_k}^2 
&=& \bE_{x,x'} [k(x,x')] - 2 \bE_{x,y} [k(x,y)] + \bE_{y,y'} [k(y,y')] \\
&=& \bE_{x,x'} [\bE_{\ssf}\ssf(x)\ssf(x')] - 2 \bE_{x,y} [\bE_{\ssf}\ssf(x)\ssf(y)] + \bE_{y,y'} [\bE_{\ssf}\ssf(y)\ssf(y')] \\
&\stackrel{*}{=}& \bE_{\ssf} \left[ \bE_{x,x'} [\ssf(x)\ssf(x')] - 2 \bE_{x,y} [\ssf(x)\ssf(y)] + \bE_{y,y'} [\ssf(y)\ssf(y')] \right] \\
&=& \bE_{\ssf} \left[ (\bE_{x} [\ssf(x)])^2 - 2 \bE_{x} [\ssf(x)] \bE_y [\ssf(y)] + (\bE_{y} [\ssf(y)])^2 \right] \\
&=&  \bE_{\ssf} \left[ \left( \bE_{x} [\ssf(x)] - \bE_{y}[\ssf(y)] \right)^2 \right],
\end{eqnarray*}
where $\stackrel{*}{=}$ follows from Fubini's theorem, %\blue{
which is applicable because we have
$$
\bE_{x,x'}\bE_{\ssf} |\ssf(x)\ssf(x')|\le \bE_{x,x'}\sqrt{\bE_{\ssf}\ssf^2(x)}\sqrt{\bE_{\ssf}\ssf^2(x')}=\bE_{x}\sqrt{k(x,x)})\bE_{x'}\sqrt{k(x',x')})<\infty,
$$ 
where the last inequality follows from $k$ being bounded.
%}
\end{proof}

\begin{remark}\rm 
Since $\ssf \sim \GP(0,k)$ is a mean-zero Gaussian process, the expectation of the real-valued random variable $P\ssf-Q\ssf$ is $0$.
Therefore the right side of \eqref{eq:worst-ave} can be seen as the variance of this random variable $P\ssf-Q\ssf$:
$$
\bE_{\ssf \sim \GP(0,k)} \left[ \left( P \ssf - Q \ssf \right)^2  \right]  = \Var[P\ssf-Q\ssf].  
$$
Since $P\ssf-Q\ssf$ is a linear transform of Gaussian process $\ssf$, it is a real-valued Gaussian random variable.
This implies that, when dealing with MMD between $P$ and $Q$, one implicitly deals with the distribution of $P\ssf - Q \ssf$, which is $\mathcal{N}(0, \sigma^2)$ where $\sigma^2 := \MMD(P,Q; \cH_k)$.
\end{remark}

\begin{comment}
For a GP draw $\ssf \sim \GP(0,k)$, the kernel mean can be written as
$$
\mu_P(x) = \int k(x,x')dP(x') = \int \bE[ \ssf(x) \ssf(x')]dP(x') = \bE\left[\ssf(x) \int \ssf(x')dP(x') \right] = \bE\left[\ssf(x) P \ssf \right],
$$
where a probability measure $P$ can be identified with
a {\em real-valued} random variable 
$$
h_P := P\ssf=\int\ssf\left(x\right)dP(x)=\int Z_{x}dP(x),
$$
which is well defined as a Bochner integral as long as
$
\int \Vert Z_{x}\Vert_{\mathcal G_k}dP(x)=\int \sqrt{k(x,x)}dP(x)<\infty.
$
\end{comment}

\begin{comment}
\red{It can be shown to lie in $\cG_k$ by constructing a sequence of discrete measures that converges to $P$ in $L_2$. To specify this random variable means to specify it as a point in
$\cG_k$. This means that it suffices to write its projections to all
$Z_{x'}$, $x'\in\mathcal{X}$ which span $\cG_k$: }
\begin{eqnarray*}
\left\langle P\ssf,Z_{x'}\right\rangle _{\cG_k} & = & \mathbb{E}_{\ssf}\left[\mathbb{E}_{X\sim P}\ssf\left(X\right)\ssf\left(x'\right)\right]=\mathbb{E}_{X\sim P}k(X,x'),
\end{eqnarray*}
which is exactly the view of the mean embedding as the function of $x'$ (and an element of $\cH_k$). 
Now, the (squared) MMD between two probability measures is simply 
\[
\left\Vert P\ssf-Q\ssf\right\Vert _{\cG_k}^{2}=\text{var}\left[P\ssf-Q\ssf\right].
\]
\end{comment}

The following corollary immediately follows from Proposition \ref{prop:equiv-worst-ave} and the definition of a kernel being characteristic.
It provides a probabilistic interpretation of a characteristic kernel in terms of the corresponding Gaussian process.
\begin{corollary} \label{coro:GP-characteristic}
Let $k$ be a bounded characteristic kernel on a measurable space $\cX$.
Then for any probability measures $P$ and $Q$ on $\cX$, we have $P=Q$ if and only if $P\ssf=Q\ssf$ holds almost surely for a Gaussian process $\ssf \sim \GP(0,k)$.
\end{corollary}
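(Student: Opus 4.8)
The plan is to read the corollary off directly from Proposition \ref{prop:equiv-worst-ave} together with the definition of a characteristic kernel, so almost no new work is required. First I would recall what "characteristic" buys us: by definition the map $P \mapsto \mu_P$ is injective, so $\mu_P = \mu_Q$ holds if and only if $P = Q$; equivalently, $\MMD(P,Q;\cH_k) = \|\mu_P - \mu_Q\|_{\cH_k} = 0$ if and only if $P = Q$. The only other ingredient is the average-case identity $\MMD^2(P,Q;\cH_k) = \bE_{\ssf \sim \GP(0,k)}[(P\ssf - Q\ssf)^2]$ established in \eqref{eq:worst-ave}. Its use here is legitimate because $k$ is bounded, so $\int \sqrt{k(x,x)}\,dP(x) < \infty$ (and likewise for $Q$), which makes $P\ssf$ and $Q\ssf$ well-defined real-valued (indeed Gaussian) random variables with finite second moment.

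For the forward implication I would simply observe that if $P = Q$ as measures, then for every realization $\ssf$ for which the integral exists we have $P\ssf = \int \ssf(x)\,dP(x) = \int \ssf(x)\,dQ(x) = Q\ssf$; since the integral is defined almost surely, $P\ssf = Q\ssf$ holds almost surely (in fact identically wherever defined). This direction is essentially vacuous.

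For the reverse implication I would argue through the second moment. If $P\ssf = Q\ssf$ almost surely, then the nonnegative random variable $(P\ssf - Q\ssf)^2$ vanishes almost surely, whence $\bE_{\ssf \sim \GP(0,k)}[(P\ssf - Q\ssf)^2] = 0$. By \eqref{eq:worst-ave} this expectation is exactly $\MMD^2(P,Q;\cH_k)$, so $\MMD(P,Q;\cH_k) = 0$, i.e.\ $\mu_P = \mu_Q$; and since $k$ is characteristic this forces $P = Q$, completing the argument.

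I do not expect a genuine obstacle, since the statement is really a restatement of the characteristic property through the average-case lens of Proposition \ref{prop:equiv-worst-ave}. The only point demanding a little care is the measure-theoretic bookkeeping: confirming that $P\ssf - Q\ssf$ is an honest finite-variance random variable, so that the passage between "almost surely zero" and "expected square equals zero" is valid. This is already secured by the boundedness of $k$ invoked in Proposition \ref{prop:equiv-worst-ave}, so nothing beyond that needs to be verified.
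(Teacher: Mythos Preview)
Your proposal is correct and follows exactly the approach the paper intends: the paper states only that the corollary ``immediately follows from Proposition \ref{prop:equiv-worst-ave} and the definition of a kernel being characteristic,'' and your argument spells out precisely that reasoning.
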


\subsection{Sampling and Numerical Integration} 
\label{sec:kernel_and_bayesian_quadrature}
Here we consider the problem of numerical integration or (deterministic) sampling.
Let $f: \cX \to \Re$ be an integrand and $P$ be a {\em known} probability measure on $\cX$, and assume that the integral $\int f(x)dP(x)$ cannot be computed analytically.
The task is to numerically compute the integral as a weighted sum of function values
$$
\sum_{i=1}^n w_i f(x_i) \approx \int f(x) dP(x).
$$
Therefore the problem is how to select the weighted points $(w_i, x_i)_{i=1}^n \subset \Re \times \cX$ so that this approximation becomes as accurate as possible.
If one has prior knowledge about certain properties of $f$ such as its smoothness, then one can use this information for the construction of a quadrature rule.
This can be done by making use of positive definite kernels.

\paragraph{Kernel quadrature.}
For simplicity, assume that design points $x_1,\dots,x_n \in \cX$ are already given and fixed.
In kernel quadrature, weights $w_1,\dots,w_n$ are obtained by the minimization of MMD between $Q := P_n := \sum_i w_i \delta_{x_i}$ and $P$:
\begin{equation*}
\min_{w_1,\dots,w_n \in \Re} \MMD(P_n,P;\cH_k) = \min_{w_1,\dots,w_n \in \Re} \sup_{ \| f \|_{\cH_k} \leq 1} \left| P_n f - Pf \right|, 
\end{equation*}
which is, as shown in the right side, equivalent to the minimization of the worst case error in the unit ball of RKHS $\cH_k$.
Assuming that kernel matrix $k_{XX}$ is invertible, this optimization problem can be solved in closed form, and the resulting weights are given by
\begin{equation} \label{eq:KQ_weights}
(w_1,\dots,w_n)^T = k_{XX}^{-1} \mu_X \in \Re^n,
\end{equation}
where $\mu_{X} := (\mu_P(x_i) )_{i=1}^n \in \Re^n$.
Using these weights, integral $Pf$ is approximated as 
\begin{equation} \label{eq:KQ-func-integral}
P_nf = \sum_{i=1}^n w_i f(x_i).
\end{equation}
We assumed here that design points $x_1,\dots,x_n$ are given at the beginning, but there are also approaches that obtain design points by aiming at the minimization of the worst case error; examples include quasi Monte Carlo methods \citep{DicKuoSlo13} and kernel herding \citep{CheWelSmo10}.

%\blue{
\begin{remark}\rm
To calculate the weights in \eqref{eq:KQ_weights}, one needs to be able to evaluate function values of the kernel mean $\mu_P = \int k(\cdot,x)dP(x)$, thus requiring a certain compatibility between $k$ and $P$.
For instance, this is possible when $k$ is square-exponential and $P$ is Gaussian on $\cX \subset \Re^d$.
For other examples, see \citet[Table 1]{BriOatGirOsbSej15}.
Note that one is also able to perform kernel quadrature using kernel Stein discrepancy; in this case the weights \eqref{eq:KQ_weights} can be calculated if one knows the gradient of log density of $P$ \citep{OatGirCho17,LiuLee17}.
This remark also applies to Bayesian quadrature explained blow.
\end{remark}
%}

\paragraph{Bayesian quadrature.}
Bayesian quadrature \citep{diaconis1988bayesian,Oha91,BriOatGirOsbSej15,KarOatSar18} is a probabilistic approach to numerical integration based on Gaussian processes.
As before, assume for simplicity that design points $x_1,\dots,x_n$ are fixed.
In this approach, a prior distribution is put on the integrand $f$ as a Gaussian process $\ssf \sim \GP(0,k)$.
Assume that functions values $f(x_1),\dots,f(x_n)$ at the design points are provided.
In Bayesian quadrature, these input-output pairs $(x_i,f(x_i))_{i=1}^n$ are regarded as ``observed data.''
Then the posterior distribution of the integral is given by 
\begin{equation} \label{eq:BQ-posterior}
P\ssf \mid (x_i,f(x_i))_{i=1}^n \sim \mathcal{N}(\mu_n, \sigma_n^2),
\end{equation}
where $\mu_n \in \Re$ and $\sigma_n^2 > 0$ are respectively the posterior mean and variance given by
\begin{eqnarray} 
 \mu_n &:=& \mu_X\Trans k_{XX}^{-1} f_X = P_n f,  \label{eq:BQ-mean-quad}  \\
 \sigma_n^2 &:=& \int \int  k(x,x') dP(x)dP(x') - \mu_X\Trans k_{XX} ^{-1} \mu_X, \label{eq:BQ-post-quad}
\end{eqnarray}
where $\mu_X := (\mu_P(x_i))_{i=1}^n \in \Re^n$ with $\mu_P$ being the kernel mean and $P_n := \sum_{i=1}^n w_i \delta_{x_i}$.

%\blue{
\begin{remark}\rm
As discussed in Section \ref{sec:GP-regression}, since we deal with noise-free observations $f(x_1), \dots,$ $f(x_n)$, there is no likelihood model in the above derivation (or, the likelihood function is degenerate).
Therefore \eqref{eq:BQ-posterior} is not a ``posterior distribution'' in the usual sense of Bayesian inference.
However, \eqref{eq:BQ-posterior} is still well-defined as a conditional distribution of $P\ssf$ given $(x_i,f(x_i))_{i=1}^n$ \citep[Section 2.5]{CocOatSulGir17}.
For discussion regarding what it means by ``Bayesian'' in the noise-free setting or in numerical analysis, we refer to
\citet{CocOatSulGir17}.
\end{remark}
%}

%\blue{
\begin{remark}\rm
Note that the posterior variance \eqref{eq:BQ-mean-quad} does not depend on the given integrand $f$, and determined only by the kernel $k$, the design points $x_1,\dots,x_n$ and the measure $P$.
\end{remark}
%}

First note that the posterior mean \eqref{eq:BQ-mean-quad} is identical to the integral estimate \eqref{eq:KQ-func-integral} of kernel quadrature.
The following result shows that the posterior variance \eqref{eq:BQ-post-quad} of Bayesian quadrature is also equal to the squared MMD between  $P_n$ and $P$, where the weights $w_1,\dots,w_n$ are given in \eqref{eq:KQ_weights}.
This identity has been known in the literature \citep{huszar2012optimally,BriOatGirOsbSej15}, but we provide a proof, as it is simple and instructive. % in Appendix \ref{sec:proof-BQ-KQ}
\begin{proposition} \label{prop:BQ-KQ-identity}
Let $\sigma_n^2$ be the posterior variance \eqref{eq:BQ-post-quad} of Bayesian quadrature, and $P_n := \sum_{i=1}^n w_i \delta_{x_i}$ be the empirical measure with the weights $w_1,\dots,w_n$ given in \eqref{eq:KQ_weights}.
Then
\begin{equation} \label{eq:quadrature-equivalence}
\sigma_n^2 = \left\| \mu_{P_n} - \mu_P \right\|_{\cH_k}^2 = \MMD^2(P_n,P;\cH_k).
\end{equation}
\end{proposition}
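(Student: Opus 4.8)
The plan is to expand the squared MMD between $P_n$ and $P$ using the definition of kernel means and then show it equals the expression \eqref{eq:BQ-post-quad} for $\sigma_n^2$ when the weights are chosen as in \eqref{eq:KQ_weights}. First I would write out the squared RKHS distance between the two kernel means. For the empirical measure $P_n = \sum_{i=1}^n w_i \delta_{x_i}$, its kernel mean is $\mu_{P_n} = \sum_{i=1}^n w_i k(\cdot, x_i)$, while $\mu_P = \int k(\cdot, x)\,dP(x)$. Expanding $\|\mu_{P_n} - \mu_P\|_{\cH_k}^2$ via the inner product gives three terms:
\begin{equation*}
\|\mu_{P_n} - \mu_P\|_{\cH_k}^2 = \langle \mu_{P_n}, \mu_{P_n}\rangle_{\cH_k} - 2\langle \mu_{P_n}, \mu_P\rangle_{\cH_k} + \langle \mu_P, \mu_P\rangle_{\cH_k}.
\end{equation*}

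Next I would evaluate each of these three inner products using the reproducing property and the fact that $\langle k(\cdot,x), \mu_P\rangle_{\cH_k} = \mu_P(x)$. The first term becomes $\sum_{i,j} w_i w_j k(x_i, x_j) = \vec{w}\Trans k_{XX} \vec{w}$, where $\vec{w} := (w_1,\dots,w_n)\Trans$. The cross term becomes $\sum_i w_i \mu_P(x_i) = \vec{w}\Trans \mu_X$. The last term becomes $\langle \mu_P, \mu_P\rangle_{\cH_k} = \int\int k(x,x')\,dP(x)\,dP(x')$, which is exactly the double integral appearing in \eqref{eq:BQ-post-quad}. Assembling these yields
\begin{equation*}
\|\mu_{P_n} - \mu_P\|_{\cH_k}^2 = \vec{w}\Trans k_{XX}\vec{w} - 2\vec{w}\Trans \mu_X + \int\int k(x,x')\,dP(x)\,dP(x').
\end{equation*}

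Finally I would substitute the kernel quadrature weights $\vec{w} = k_{XX}^{-1}\mu_X$ from \eqref{eq:KQ_weights} into this expression. The first two terms combine: $\vec{w}\Trans k_{XX}\vec{w} = \mu_X\Trans k_{XX}^{-1} k_{XX} k_{XX}^{-1}\mu_X = \mu_X\Trans k_{XX}^{-1}\mu_X$ and $2\vec{w}\Trans \mu_X = 2\mu_X\Trans k_{XX}^{-1}\mu_X$, so their combination gives $-\mu_X\Trans k_{XX}^{-1}\mu_X$. The result is precisely $\int\int k(x,x')\,dP(x)\,dP(x') - \mu_X\Trans k_{XX}^{-1}\mu_X = \sigma_n^2$, matching \eqref{eq:BQ-post-quad}. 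The second equality in \eqref{eq:quadrature-equivalence} is then immediate from the definition of MMD as the RKHS distance between kernel means in \eqref{eq:MMD}. This proof is essentially a routine linear-algebra computation; the only mild subtlety is justifying that $\mu_P \in \cH_k$ and that the inner products involving the Bochner integral $\mu_P$ may be evaluated by interchanging integration and the inner product, which follows from the boundedness of $k$ as already noted after \eqref{eq:kmean-def}. I do not anticipate a genuine obstacle here, since the boundedness hypothesis guarantees all quantities are well-defined.
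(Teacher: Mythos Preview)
Your proposal is correct and follows essentially the same approach as the paper: expand $\|\mu_{P_n}-\mu_P\|_{\cH_k}^2$ into three inner-product terms via the reproducing property, rewrite them as $w\Trans k_{XX}w - 2w\Trans\mu_X + \iint k(x,x')\,dP(x)\,dP(x')$, and then substitute $w = k_{XX}^{-1}\mu_X$ to collapse the first two terms to $-\mu_X\Trans k_{XX}^{-1}\mu_X$. The paper's proof is slightly terser but otherwise identical.
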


%\subsection{Proof of Proposition \ref{prop:BQ-KQ-identity}}
%\label{sec:proof-BQ-KQ}

\begin{proof}
By the reproducing property and the definition of the weights $w = (w_1,\dots,w_n)\Trans \in \Re^n$, we have
\begin{eqnarray*}
\left\| \mu_{P_n} - \mu_P \right\|_{\cH_k}^2
&=& \| \mu_{P_n} \|_{\cH_k}^2 - 2 \left< \mu_{P_n}, \mu_P \right>_{\cH_k} + \| \mu_{P} \|_{\cH_k}^2 \\
&=& w\Trans k_{XX} w - 2 w\Trans \mu_X + \int \int k(x,x') dP(x)dP(x') \\
&=& - \mu_X\Trans k_{XX}^{-1} \mu_X + \int \int k(x,x') dP(x)dP(x'),
\end{eqnarray*}
and the result follows.
\end{proof}
%\blue{
\begin{remark}\rm
Proposition \ref{prop:BQ-KQ-identity} shows the equivalence between the average case error w.r.t.~GPs and the (squared) worst case error in the RKHS, in the setting of numerical integration.
This is because \eqref{eq:quadrature-equivalence} can be written as
$$
\bE_{\ssf} \left[(P\ssf - \mu_n)^2 \mid (x_i,f(x_i))_{i=1}^n \right] = \left( \sup_{\| f \|_{\cH_k} \leq 1} \left| P_n f - P f \right| \right)^2.
$$
This equivalence has been used by \citet{BriOatGirOsb15,BriOatGirOsbSej15} to establish posterior contraction rates of Bayesian quadrature, by transferring results on the worst case error \citep{BacJulObo12,DicKuoSlo13} to the probabilistic or Bayesian setting.
\end{remark}
%}

%BQ via Mercer representation or KL-expansion? Discuss Ritter's book and works by numerical analysists such as Wozniakowski... Mention the works by O'hagan, Diaconis, Briol, Bach etc...

\paragraph{Noisy observations and robustness to misspecification.}
If one's knowledge about integrand $f$ of interest is limited, it could happen that it does not belong to RKHS $\cH_k$, that is, misspecification of the hypothesis class may occur.
\citet{KanSriFuk16,KanSriFuk17} showed that kernel quadrature can be made robust to such misspecification, by introducing a quadratic regularizer for the weights, i.e., %The optimization problem now becomes
\begin{equation} \label{eq:kernel-quadrature-quad-reg}
\min_{w_1,\dots,w_n \in \Re^n} \MMD^2(P_n,P; \cH_k) + \lambda \sum_{i=1}^n w_i^2,
\end{equation}
where $\lambda > 0$ is a regularization constant.
The resulting weights are then given by
\begin{equation} \label{eq:KQ_weights_reg}
(w_1,\dots,w_n)\Trans = (k_{XX} + n \lambda I_n)^{-1} \mu_X \in \Re^n.
\end{equation}
Kernel quadrature with quadratic weight regularization has also been studied by \citet{Bac17}, who pointed out such regularization is equivalent to assuming the existence of additive noises in the function values. That is, assume that, instead of observing the exact function values $f(x_1),\dots,f(x_n)$, one is given noisy observations $y_i = f(x_i) + \varepsilon_i$, where $\varepsilon_i \iid \mathcal{N}(0,\sigma^2)$ with $\sigma^2 := n \lambda$.
Bayesian quadrature under this assumption yields the posterior distribution of the integral as $P\ssf \mid (x_i,y_i)_{i=1}^n \sim \mathcal{N}(\mu_n, \sigma_n^2)$, where 
\begin{eqnarray*}
\mu_n &:=& \mu_X\Trans (k_{XX} + \sigma^2 I_n)^{-1} Y = \sum_{i=1}^n w_i y_i, \\ \sigma_n^2 &:=& \int \int  k(x,x') dP(x)dP(x') - \mu_X\Trans (k_{XX} + \sigma^2 I_n)^{-1} \mu_X,
\end{eqnarray*}
where $w_1,\dots,w_n$ are given by \eqref{eq:KQ_weights_reg}.
These are regularized versions of \eqref{eq:BQ-mean-quad} and \eqref{eq:BQ-post-quad}, with $f_X$ replaced by $Y = (y_1,\dots,y_n)\Trans \in \Re^n$.

\paragraph{Discussion on the difference between the kernel and Bayesian approaches.}
The optimization viewpoint of kernel quadrature allows one to directly incorporate a constraint on quadrature weights $w_1,\dots,w_n$.
For instance, \citet{LiuLee17} proposed to optimize the weights under a constraint that the weights be non-negative. 
Such a constraint is not straightforward to be realized only with a probabilistic perspective.
On the other hand, with Bayesian quadrature one can express prior knowledge about the integrand that is not easy to be incorporated with the kernel approach.
For instance, \citet{gunter_sampling_2014} proposed to model an integrand that is non-negative as a squared GP (i.e., as a chi-squared process).
Such modeling can be realized because one expresses the prior knowledge as a generative model, but this is not straightforward to achieve through the optimization of weights.

\subsection{Kernel Mean Shrinkage Estimator and Its Bayesian Interpretation}
\label{sec:shrinkage}
Given an i.i.d.~sample $x_1,\dots,x_n \sim P$, an empirical estimator of the kernel mean \eqref{eq:kmean-def} is defined as
\begin{equation} \label{eq:emp_kmean}
\hat{\mu}_P := \frac{1}{n} \sum_{i=1}^n k(\cdot,x_i),
\end{equation}
which satisfies $\bE\left[\| \hmu_P - \mu_P \|_{\cH_k}\right] = O(n^{-1/2})$ as $n \to \infty$, if $k$ is bounded \citep{SmoGreSonSch07,TohSriMua17}.
One way to compute MMD empirically is to substitute this estimate (and that for $\mu_Q$) in (\ref{eq:MMD}): this results in a V-statistic estimate of MMD \citep[Eq.~5]{GreBorRasSchetal12}.
\citet{TohSriMua17} showed that the rate $n^{-1/2}$ is minimax-optimal and thus cannot be improved, meaning that \eqref{eq:emp_kmean} is an {\em asymptotically} optimal estimator.
%\blue{
However, when the sample size $n$ is {\em fixed}, it is known that there exists a ``better'' estimator that improves upon \eqref{eq:emp_kmean}, which was shown by \citet{JMLR:v17:14-195}.
More precisely, consider an estimator $ \hat{\mu}_{P,\alpha}$ defined by 
\begin{equation} \label{eq:kmean-shrinkage-first}
\hat{\mu}_{P,\alpha} := f^* + (1-\alpha) \hat{\mu}_P,
\end{equation} 
where $f^* \in \cH_k$ is fixed and arbitrary and $\alpha$ is a constant. 
\citet[Theorem 1]{JMLR:v17:14-195} proved that, if (and only if) the constant satisfies $0 < \alpha < 2 \bE[\| \hat{\mu}_P - \mu_P \|_{\cH_k}^2] / (  \bE[\| \hat{\mu}_P - \mu_P \|_{\cH_k}^2] + \| f^* - \mu_P \|_{\cH_k}^2 )$, then $\hat{\mu}_{P,\alpha}$ produces a smaller mean-squared error than $\hat{\mu}_P$:
$$
\bE\left[\| \hat{\mu}_{P,\alpha} - \mu_P \|_{\cH_k}^2 \right] < \bE\left[\| \hmu_P - \mu_P \|_{\cH_k}^2 \right].
$$
A motivation for \cite{JMLR:v17:14-195} was the so called {\em Stein phenomenon}, which states that the standard empirical estimator for the mean of a $d$-dimensional Gaussian distribution with $d \geq 3$ is {\em inadmissible}, meaning that there exists an estimator that yields smaller mean squared error for a fixed sample size \citep{Ste56}.
Motivated by this old result, \cite{JMLR:v17:14-195} proposed a number of shrinkage estimators including the one in \eqref{eq:kmean-shrinkage-first}, some of which have been theoretically proven to be ``better'' than the standard empirical estimator in \eqref{eq:emp_kmean} in terms of the mean squared RKHS error.
%}
%\dino{Maybe worth being more concrete here and mention admissibility and Stein phenomenon?}
%\moto{Discuss also \citet{TohSriMua17}}

%\textcolor{red}{We did not show in the paper that $\check{\mu}_P$ is better than $\hat{\mu}_P$ like the way you mentioned, i.e., the inequality is not satisfied strictly. It improves up to lower order terms. If the distribution is Gaussian, it does improve strictly but that is not part of the shrinkage paper. This is something that I worked out and so nothing concrete to cite.}

%$\sqrt{n}(\hmu - \mu) $ converges to a zero mean Gaussian process on $\cH$ \citep[Chapter 4, Section 9.1, Theorem 108]{Berlinet2004}.
%\moto{Write the V-statistic; Discuss the minimax optimality of this estimator by citing \citet{TohSriMua17}}

We review here a certain shrinkage estimator proposed by \citet[Section 4]{JMLR:v17:14-195} called the {\em spectral kernel mean estimator} (SKME), and its Bayesian interpretation given by \citet{FlaSejCunFil2016}; this provides another instance of the connection between the kernel and Bayesian approaches.
Different from \eqref{eq:kmean-shrinkage-first}, however, the SKME has {\em not} been shown to be theoretically better than the empirical estimator \eqref{eq:emp_kmean}, but has only been shown to yield better empirical performance.
Given an i.i.d.~sample $X = (x_1,\dots,x_n) \sim P$, the SKME is defined as
\begin{equation} \label{eq:kmean-shrink}
\cmu_{P,\lambda} := \sum_{i=1}^n w_i k(\cdot, x_i)
\end{equation}
where the weights $w_1,\dots,w_n \in \Re$ are given by
\begin{equation} \label{eq:weights-BQ-shrin}
(w_1,\dots,w_n)^T := (k_{XX} + n \lambda I_n)^{-1} \hmu_X \in \Re^n,
\end{equation}
with $\hmu_X := (\hmu_P(x_i))_{i=1}^n \in \Re^n$, $k_{XX} \in \Re^{n \times n}$ being the kernel matrix, and $\lambda > 0$ being a regularization constant.
The estimator in \eqref{eq:kmean-shrink} was originally derived from a certain RKHS-valued ridge regression problem.
Alternatively, the estimator can be derived by solving the following minimization problem:
\begin{equation} \label{eq:weights_skme}
 \min_{w_1,\dots,w_n \in \Re} \left\| \sum_{i=1}^n w_i k(\cdot,X_i) - \hat{\mu}_P \right\|_\cH^2 + \lambda \| w \|^2,
\end{equation}
where $\hmu_P$ is the empirical estimator in \eqref{eq:emp_kmean}.
The weights in \eqref{eq:weights-BQ-shrin} are given as the solution of this optimization problem.
This interpretation shows that as the regularization constant $\lambda$ increases, the estimator \eqref{eq:kmean-shrink} shrinks towards zero in the RKHS, thus reducing the variance of the estimator while introducing bias.
Therefore $\lambda$ controls the bias-variance trade off; this is beneficial in practice when the sample size is relatively small, in which case the variance of the empirical estimator \eqref{eq:emp_kmean} may be large.
%\blue{
\begin{remark}\rm
The weights \eqref{eq:weights-BQ-shrin} of the SKME are essentially the same as those for kernel quadrature with quadratic regularization \eqref{eq:KQ_weights_reg}. 
The only difference is that, while the information of the true kernel mean $\mu_P$ is used in \eqref{eq:KQ_weights_reg}, the empirical mean $\hat{\mu}_P$ is used in \eqref{eq:weights-BQ-shrin}, since in the statistical setting $\mu_P$ is an unknown quantity to be estimated.
Note also the essential equivalence of the two optimization problems \eqref{eq:kernel-quadrature-quad-reg} and \eqref{eq:weights_skme}, based on which the weights \eqref{eq:KQ_weights_reg} and \eqref{eq:weights-BQ-shrin} are respectively derived.
\end{remark}

\paragraph{Bayesian interpretation of the shrinkage estimator.}
\citet{FlaSejCunFil2016} showed that there exists a Bayesian interpretation of the shrinkage estimator in \eqref{eq:kmean-shrink}.
We formulate their approach by using the powered kernel (\ref{eq:power_kernel}) in Section \ref{sec:power-RKHS-sample-path}.
The prior for the kernel mean $\mu_P$ is defined as a Gaussian process:
\begin{equation} \label{eq:flexman-prior}
\mu_P \sim \GP(0, k^\theta),
\end{equation}
where $k^\theta$ is the powered kernel (\ref{eq:power_kernel}) with the power $\theta \geq 1$ appropriately chosen so that $\mu_P$ can be a sample path of the GP.
On the other hand, by regarding the evaluations $\hmu_P$ at sample points $x_1,\dots,x_n$ as ``observations'', \citet{FlaSejCunFil2016} proposed to define a likelihood function as an additive Gaussian-noise model
\begin{equation} 
 \hmu_P(x_i) = \mu_P(x_i) + \varepsilon_i, \quad \varepsilon_i \iid \N(0, \sigma^2), \quad i=1,\dots,n. \label{eq:flexman-likelihood}
\end{equation}
Note that these assumptions are essentially those of GP-regression.
(We will discuss the validity of this assumption shortly.)
Therefore a similar argument as in Section \ref{sec:GP-regression} implies that the posterior distribution of $\mu_P$ is also a GP and given by
\begin{equation*}
 \mu_P \mid (x_i,\hmu_P(x_i))_{i=1}^n\   \sim\ \GP( \bar{\mu}_P, \bar{k}^\theta), 
\end{equation*}
where $\bar{\mu}_P$ and $\bar{k}^\theta$ are the posterior mean function and the posterior covariance function, respectively, and are given by
\begin{eqnarray}
\bar{\mu}_P (x) &:=& k_{xX}^\theta (k_{XX}^\theta + \sigma^2 I_n )^{-1} \hmu_{X}, \quad x \in \cX, \label{eq:post-kmean} \\
\bar{k}^\theta(x,x') &:=& k^\theta(x,x') - k^\theta_{xX}(k_{XX}^\theta + \sigma^2 I_n)^{-1} k^\theta_{Xx'}, \quad x,x' \in \cX,  \label{eq:post-var-kmean}
\end{eqnarray}
where $k_{Xx}^\theta = {k_{xX}^\theta}\Trans := (k^\theta(x,x_i))_{i=1}^n \in \Re^n$, $k_{XX}^\theta := (k^\theta(x_i,x_j)) \in \Re^{n \times n}$ and $\hmu_X := (\hmu_P(x_i))_{i=1}^n \in \Re^n$.

\begin{remark}\rm
If $\theta = 1$ and $\lambda = \sigma^2/n$, the posterior mean function \eqref{eq:post-kmean} is equal to the shrinkage estimator \eqref{eq:kmean-shrink}.
Therefore in this case, the modeling assumptions \eqref{eq:flexman-prior} and \eqref{eq:flexman-likelihood} provide a probabilistic interpretation for the shrinkage estimator \eqref{eq:kmean-shrink}.
In other words, the shrinkage estimator \eqref{eq:kmean-shrink} implicitly performs Bayesian inference under the probabilistic model \eqref{eq:flexman-prior} and \eqref{eq:flexman-likelihood}.
This probabilistic viewpoint turns out to be practically useful.
For instance, \cite{FlaSejCunFil2016} made use of the probabilistic formulation for selecting the kernel hyper-parameter (and the noise variance $\sigma^2$) in an unsupervised fashion, by using the empirical Bayes method; \cite{LawSutSejFla2017} used the posterior covariance function \eqref{eq:post-var-kmean} to enable uncertainty quantification in application to distribution regression.
\end{remark}

%Therefore from the discussion in Section \ref{sec:shrinkage}, the posterior mean converges to $\hmu_P$ in $\cH$ at the rate $O(n^{-1/2})$, \red{ as long as the noise variance satisfies $\sigma^2 = O(n^{1/2})$ as $n \to \infty$. }

\paragraph{Discussion on the modeling assumption.}
We make some remarks on the modeling assumptions \eqref{eq:flexman-prior} and \eqref{eq:flexman-likelihood}.
First, as mentioned above, the GP prior \eqref{eq:flexman-prior} should be defined so that the kernel mean $\mu_P$ can be a sample path of the GP.
If $\theta = 1$, this is not the case: As reviewed earlier, GP sample paths do not belong to the RKHS of the covariance kernel with probability one.
This fact motivated \citet{FlaSejCunFil2016} to use a certain kernel that is smoother than the kernel defining the kernel mean, in order to guarantee that the kernel mean can be a GP sample path.
%\blue{
We instead defined the GP prior \eqref{eq:flexman-prior} using the powered kernel $k^\theta$: If $\theta > 1$, the kernel $k^\theta$ is ``smoother'' than the original kernel $k$, and there is ``sufficiently large'' $\theta$ to guarantee that a GP sample path lies in the RKHS.
For instance, if $k$ is a square-exponential kernel it can be shown from Corollary \ref{coro:gauss-sample-path} that, the choice $\theta = 1 + \varepsilon$ with $\varepsilon > 0$ being arbitrarily small guarantees that 
%\textcolor{red}{I do not follow what you mean by sufficiently smooth $\theta$} 
sample paths of $\GP(0,k^\theta)$ belong to the RKHS of $k$ with probability one.
In other words, $k^\theta$ can be chosen so that the resulting power of RKHS $\cH_k^\theta$ is ``infinitesimally smaller'' than the original RKHS $\cH_k$. 
This may explain why the use of $\theta = 1$ with a square-exponential kernel resulted in good empirical performance in \citet{LawSutSejFla2017}.
%}
%The reason why we did not use the smooth kernel of \citet{FlaSejCunFil2016} is that the resulting sample paths can be shown to be ``too much smooth,'' compared to the smoothness of the RKHS $\cH_k$.
%\dino{this is too informal.Flaxman uses essentially $k^2$ with respect to a gaussian base measure "can be shown to be too much smooth"? I intuitively get this, but what result is it based on?}

We note that \citet{FlaSejCunFil2016} introduced the likelihood model \eqref{eq:flexman-likelihood} for computational feasibility, i.e., to obtain the posterior distribution of $\mu_P$ as a GP.
Therefore, the assumption \eqref{eq:flexman-likelihood} may not be conceptually well-motivated.
For instance, if the kernel $k$ is bounded, so is $\hmu_P = \frac{1}{n}\sum_{i=1}^n k(\cdot,x_i)$; this fact is not captured by the assumption that noise $\varepsilon_i$ is additive Gaussian, which is unbounded.
Moreover, noise $\varepsilon_i$ is neither independent nor of constant variance in general, since the differences $\hat{\mu}_P(x_i) - \mu_P(x_i)$ at different points $x_i$ are dependent. 

\paragraph{Open questions.}
If $\theta > 1$, which is required to ensure that $\mu_P$ is a sample path of $\GP(0,k^\theta)$, then the resulting posterior mean function \eqref{eq:post-kmean} does not coincide with the shrinkage estimator \eqref{eq:kmean-shrink}.
This raises the following question: Can the use of the smoother kernel $k^\theta$ lead to ``better'' performance for estimation of $\mu_P = \int k(\cdot,x)dP(x)$ in terms of the mean-square error with a fixed sample size, when compared to the standard empirical estimator \eqref{eq:emp_kmean}?
\citet{JMLR:v17:14-195} was not able to show such superiority of the shrinkage estimator; this may be because they used the kernel $k^\theta$ with $\theta = 1$ in  \eqref{eq:kmean-shrink}, which is not supported from the Bayesian interpretation.

%What is the characterization of the range space? 
%What is the meaning of the assumption that the kernel mean belongs to the range?

%Covariance can be computed directly however -- it's an integrated kernel wrt $P$.

%the GP-prior on the kernel mean $\mu_P$ does not capture the fact that the kernel mean is a non-negative function, if the kernel and the measure are non-negative.

\begin{comment}
However, if we think of a class of finite singed measures in general, instead of the set of probability measures, then it turns out that these modeling assumptions can be justified.
Let $\nu$ be a signed measure in general, and define the kernel mean 
$$
\mu_\nu := \int k(\cdot,x)d\nu(x).
$$
If one only knows that $\nu$ is a finite signed measure, then
\end{comment}

\paragraph{Relation to Bayesian quadrature.}
For any RKHS function $f \in \cH$, the integral $\int f(x)dP(x)$ can be estimated as a weighted sum $\sum_{i=1}^n w_i f(x_i)$, where the weighted points $(w_i,x_i)_{i=1}^n$ are those expressing $\cmu_{P,\lambda}$ as in \eqref{eq:kmean-shrink}.
This follows from the inequality
$$
\left|\sum_{i=1}^n w_i f(x_i) - \int f(x)dP(x) \right| = \left| \left<\cmu_{P,\lambda} - \mu_P, f\right>_{\cH_k} \right| \leq \left\| \cmu_{P,\lambda} - \mu_P \right\|_{\cH_k} \|f\|_{\cH_k}
$$
and that $\cmu_P$ should be close to $\mu_P$.
It is easy to show that the weighted sum can be written as
$$
\sum_{i=1}^n w_i f(x_i) = \frac{1}{n} \sum_{i=1}^n \bar{m}(x_i),
$$
where $\bar{m}:\cX \to \Re$ is the posterior mean function \eqref{eq:posteior_mean} in GP regression.
In other words, the weighted sum is equal to the empirical mean of the the fitted function $\bar{m}$.
As shown in Section \ref{sec:kernel_and_bayesian_quadrature}, this is essentially Bayesian quadrature using the empirical measure $P_n = \frac{1}{n} \sum_{i=1}^n \delta_{x_i}$.

\subsection{Gaussian Process Interpretation of Hilbert Schmidt Independence Criterion} \label{sec:dependence}

Here we deal with a popular kernel-based dependency measure known as the {\em Hilbert-Schmidt Independence Criterion} (HSIC) \citep{GreBouSmoSch05}, which can be defined as the MMD between the joint distribution of two random variables and the product of their marginals.
HSIC is a nonparametric dependency measure, and as such does not require a specific parametric assumption about the form of dependencies between random variable variables.
Since it also can be calculated in a simple way using kernels, it has found a wide range of applications including independence testing \citep{GreFukTeoSonSchSmo08,ZhaFilGreSej18}, variable selection \citep{SonSmoGreBedetal12,YamJitSigXinSug14}, post selection inference \citep{pmlr-v84-yamada18a}, and causal discovery \citep{NikBuhSchPet18}, to name a few.
We provide here a probabilistic interpretation of HSIC in terms on GPs; to the best of our knowledge, this probabilistic interpretation of general HSIC measures is novel and it recovers Brownian distance covariance of \citet{SzeRiz09}, known to be a special case of HSIC \citep{SejSriGreFuk13}.

Let $X$ and $Y$ be random variables taking values in measurable spaces $\cX$ and $\cY$ respectively.
Denote by $P_{\cX \times \cY}$ the joint distribution of $X$ and $Y$, and let $P_\cX$ and $P_\cY$ be its marginal distributions on $\cX$ and $\cY$, respectively.
Let $k$ and $\ell$ be positive definite kernels on $\cX$ and $\cY$ respectively, and let $\cH_\cX$ and $\cH_\cY$ be their respective RKHSs.
For the product kernel $k \otimes \ell: (\cX \times \cY) \times (\cX \times \cY) \to \Re$ defined by $(k \otimes \ell) ((x,y),(x',y')) := k(x,x')\ell(y,y')$, the corresponding RKHS is denoted by $\cH_\cX \otimes \cH_\cY$, which is the tensor product of $\cH_\cX$ and $\cH_\cY$.

While HSIC was originally defined by \citet{GreBouSmoSch05} as the Hilbert-Schmidt norm of a certain cross-covariance operator \citep{FukBacJor2004}, 
we follow here an equivalent definition given by \citet{SmoGreSonSch07}: HSIC is defined as the (squared) MMD between $P_{\cX \times \cY}$ and $P_\cX P_\cY$ in the RKHS $\cH_\cX \otimes \cH_\cY$:
\begin{equation} \label{eq:HSIC-def}
\HSIC(X,Y) := \MMD^2(P_{\cX \times \cY}, P_\cX P_\cY) = \left\| \mu_{P_{\cX \times \cY}} - \mu_{P_\cX} \otimes \mu_{P_\cY} \right\|_{\cH_\cX \otimes \cH_\cY}^2,
\end{equation}
where $\mu_{P_{\cX \times \cY}}$ and $\mu_{P_\cX} \otimes \mu_{P_\cY}$ are the kernel means of $P_{\cX \times \cY}$ and $P_\cX P_\cY$, respectively.
If the kernel $k \otimes \ell$ is characteristic, then the HSIC is zero if and only if $X$ and $Y$ are independent; see \citet{szabo17characteristic_TR} for thorough analysis of conditions for $k \otimes \ell$ being characteristic.
Thus in this case, HSIC is qualified as a nonparametric measure of dependence. Thanks to the reproducing property, HSIC can be expressed in terms of expectations of the kernels \cite[Lemma 1]{GreBouSmoSch05}:
\begin{eqnarray}
&& \HSIC(X,Y, k, \ell) = \bE_{X,Y,X',Y'} \left[ k(X,X') \ell(Y,Y') \right] \label{eq:HSIC-kernel} \\
&& - 2 \bE_{X,Y}\left[ \bE_{X'}[k(X,X')] \bE_{Y'} \ell(Y,Y')\right] + \bE_{X,X'} \left[ k(X,X') \right] \bE_{Y,Y'}\left[ \ell(Y,Y') \right], \nonumber
\end{eqnarray}
where $X'$ and $Y'$ are respectively independent copies of $X$ and $Y$.
Given an i.i.d.~sample $((X_i,Y_i))^n_{i=1}$ from the joint distribution $P_{\cX \times \cY}$, an empirical estimator of HSIC is straightforwardly given by replacing the expectations in (\ref{eq:HSIC-kernel}) by the corresponding empirical averages; for details see \cite{GreBouSmoSch05}.

\begin{comment}
The (cross-)covariance operator \citep{FukBacJor2004} $C_{XY}:\cH_\cX \to \cH_\cY$ is a Hilbert-Schmidt operator such that
$$
\left<g, C_{XY} f \right>_{\cH_\cY} = \bE\left[ \left( f(X)-\bE[f(X)] \right) \left( g(Y) - \bE[g(Y)] \right)  \right] = \mathrm{Cov}(f(X),g(Y)).
$$
The (squared) HSIC is the Hilbert-Schmidt norm of this covariance operator (which is the original definition by  \citet[Definition 1]{GreBouSmoSch05}).

Note that the tensor product RKHS $\cH_\cX \otimes \cH_\cY$ can be identified with the Hilbert space of all Hilbert Schmidt operators from $\cH_\cX$ to $\cH_\cY$ (See e.g.~\citet[Section 3.2]{MuaFukSriSch17})...

%\dino{Is it worth making this connection explicit, i.e. by talking how $\cH_\cX \otimes \cH_\cY$ can be identified with the space of Hilbert Schmidt operators from $\cH_\cX$ to $\cH_\cY$?}
\end{comment}

\paragraph{Gaussian Process Interpretation.}
Consider independent draws from the zero-mean GPs of the covariance kernels $k$ and $\ell$:
\begin{equation*} \label{eq:GPs-for-HSIC}
\ssf \sim \GP(0,k), \quad \ssg \sim \GP(0,\ell).
\end{equation*}
The following result provides a probabilistic interpretation of HSIC in terms of these GPs. 
\begin{proposition} \label{prop:HSIC-GP}
Let $k$ and $\ell$ be positive definite kernels on measurable spaces $\cX$ and $\cY$ respectively, and let $\ssf \sim \GP(0,k)$ and $\ssg \sim \GP(0,\ell)$ be independent Gaussian processes. 
For random variables $X$ and $Y$ taking values respectively in $\cX$ and $\cY$, we have
\begin{equation} \label{eq:HSIC-GP-Interp}
\HSIC(X,Y) = \bE_{\ssf,\ssg} \cov^2(\ssf(X), \ssg(Y)),
\end{equation}
where $\HSIC(X,Y)$ is defined by \eqref{eq:HSIC-def}, and
\begin{equation*}
\cov(\ssf(X), \ssg(Y)) := \bE_{X,Y}\left[ \left(\ssf(X) - \bE[\ssf(X)|\ssf]\right) \left( \ssg(Y) - \bE[ \ssg(Y) | \ssg ] \right) | \ssf,\ssg \right] .
\end{equation*}
\end{proposition}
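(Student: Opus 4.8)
The plan is to reduce the right-hand side of \eqref{eq:HSIC-GP-Interp} to the kernel expansion of HSIC in \eqref{eq:HSIC-kernel}, exploiting the fundamental identities $k(x,x') = \bE_{\ssf}[\ssf(x)\ssf(x')]$ and $\ell(y,y') = \bE_{\ssg}[\ssg(y)\ssg(y')]$ from \eqref{eq:kerenl-gp-expect}, together with the independence of $\ssf$ and $\ssg$. First I would evaluate the inner conditional covariance in closed form. Since $\ssf$ and $\ssg$ are held fixed inside $\cov(\cdot,\cdot)$, we have $\bE[\ssf(X)\mid\ssf] = P_\cX \ssf$ and $\bE[\ssg(Y)\mid\ssg] = P_\cY\ssg$, so that
$$\cov(\ssf(X),\ssg(Y)) = P_{\cX \times \cY}(\ssf\otimes\ssg) - (P_\cX\ssf)(P_\cY\ssg),$$
where $(\ssf\otimes\ssg)(x,y):=\ssf(x)\ssg(y)$. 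I abbreviate the two terms by $A:=P_{\cX \times \cY}(\ssf\otimes\ssg)$ and $B:=(P_\cX\ssf)(P_\cY\ssg)$, which are real-valued random variables depending on $\ssf,\ssg$.

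Next I would expand $\bE_{\ssf,\ssg}[(A-B)^2] = \bE[A^2] - 2\bE[AB] + \bE[B^2]$ and compute each moment by writing $A$ and $B$ as (multiple) integrals against $P_{\cX \times \cY}$, $P_\cX$, $P_\cY$, then pushing the expectation over $(\ssf,\ssg)$ inside. For $\bE[A^2]$ this produces a fourfold integral of $\bE_{\ssf,\ssg}[\ssf(x)\ssg(y)\ssf(x')\ssg(y')] = k(x,x')\ell(y,y')$, which equals $\bE_{X,Y,X',Y'}[k(X,X')\ell(Y,Y')]$ with $(X,Y)$, $(X',Y')$ independent copies from $P_{\cX \times \cY}$. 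By independence of $\ssf$ and $\ssg$, the term $\bE[B^2]$ factorizes into $\bE_{X,X'}[k(X,X')]\,\bE_{Y,Y'}[\ell(Y,Y')]$, and the cross term gives $\bE[AB]=\bE_{X,Y}[\bE_{X'}[k(X,X')]\,\bE_{Y'}[\ell(Y,Y')]]$. Collecting the three contributions reproduces \eqref{eq:HSIC-kernel} verbatim, which is exactly $\HSIC(X,Y)$ as defined in \eqref{eq:HSIC-def}, establishing \eqref{eq:HSIC-GP-Interp}.

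The one genuinely delicate point is justifying the interchange of the expectation over the Gaussian processes with the integrals against $P_{\cX \times \cY}$, $P_\cX$ and $P_\cY$ (Fubini's theorem), exactly as in the proof of Proposition \ref{prop:equiv-worst-ave}. I would control the mixed moments via Cauchy--Schwarz, using $\bE_{\ssf}|\ssf(x)\ssf(x')| \le \sqrt{k(x,x)}\sqrt{k(x',x')}$ and the analogous bound for $\ssg$, so that the required absolute integrability holds once the kernels are bounded (or, more generally, integrable against the relevant product measures), which is the same integrability hypothesis implicitly underlying the definition of HSIC. Everything else is routine bookkeeping of the three moment terms, so I do not anticipate further obstacles beyond this measure-theoretic justification.
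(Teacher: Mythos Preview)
Your proposal is correct and rests on exactly the same mechanism as the paper's proof: write the conditional covariance as an integral, square it (thereby introducing an independent copy of $(X,Y)$), swap the expectation over $(\ssf,\ssg)$ with the integrals via Fubini, and invoke $\bE_{\ssf}[\ssf(x)\ssf(x')]=k(x,x')$, $\bE_{\ssg}[\ssg(y)\ssg(y')]=\ell(y,y')$ together with the independence of $\ssf$ and $\ssg$.

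The only organisational difference is in the target expression. You expand $(A-B)^2$ into three scalar terms and match them to the kernel-expectation form \eqref{eq:HSIC-kernel}. The paper instead keeps the centred quantities $X_\ssf=\ssf(X)-P_\cX\ssf$ and $Y_\ssg=\ssg(Y)-P_\cY\ssg$ bundled, first computes $\bE_\ssf[X_\ssf X'_\ssf\mid X,X']=\langle k(\cdot,X)-\mu_{P_\cX},\,k(\cdot,X')-\mu_{P_\cX}\rangle_{\cH_\cX}$ (and similarly for $Y$), and then uses the tensor-product structure of $\cH_\cX\otimes\cH_\cY$ to land directly on the MMD form $\|\mu_{P_{\cX\times\cY}}-\mu_{P_\cX}\otimes\mu_{P_\cY}\|^2$. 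Your route is slightly more elementary in that it never needs the RKHS tensor product; the paper's route is a bit more structural, making the connection to the embedding definition \eqref{eq:HSIC-def} explicit rather than going through the equivalent expansion \eqref{eq:HSIC-kernel}. Neither approach requires anything the other does not.
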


Note that $\cov(\ssf(X), \ssg(Y))$ is the covariance between the {\em real-valued} random variables  $\ssf(X)$ and $\ssg(Y)$, with $\ssf$ and $\ssg$ being fixed.
Proposition \ref{prop:HSIC-GP} thus shows that HSIC is the expectation of the square of this covariance with respect to the draws $\ssf \sim \GP(0,k)$ and $\ssg \sim \GP(0,\ell)$.
In other words, the computation of $\HSIC(X,Y)$ amounts to simultaneously considering various nonlinear transformations $\ssf(X)$ and $\ssg(Y)$ of random variables $X$ and $Y$ (as defined by the GPs), and then computing the average of the (squared) covariance between these transformed variables.
%From this interpretation it follows that the transformations $\ssf$ and $\ssg$ should be ``rich enough,''  if one wants to have $\HSIC(X,Y) = 0$ {\em if and only if} $X$ and $Y$ are independent.
We believe that this interpretation provides a simple way to understand HSIC as a measure of dependence, in particular for people who are familiar with GPs but not with RKHSs.

\paragraph{Connection to Brownian Covariance.}
We mention that the expression in the right side of \eqref{eq:HSIC-GP-Interp} is related to the {\em Brownian (distance) covariance} introduced by \citet{SzeRiz09}.
To describe this, let $X_\ssf$ and $Y_\ssg$ be {\em real-valued} random variables such that
\begin{eqnarray}
X_\ssf &:=& \ssf(X) - \bE[\ssf(X)|\ssf] = \ssf(X) - \int \ssf(x)dP_\cX(x), \label{def:centered_X} \\
Y_\ssg &:=& \ssg(Y) - \bE[\ssg(Y)|\ssg] = \ssg(Y) - \int \ssg(y)dP_\cY(y) \label{def:centered_Y} .
\end{eqnarray}
Then the covariance between $\ssf(X)$ and $\ssg(Y)$ (with $\ssf$ and $\ssg$ being fixed) can be written as
\begin{eqnarray*}
\mathrm{cov}(\ssf(X), \ssg(Y)) 
&=& \bE_{X,Y}\left[ \left(\ssf(X) - \bE[\ssf(X)|\ssf] \right) \left( \ssg(Y) - \bE[ \ssg(Y) | \ssg ] \right) | \ssf, \ssg \right] \\
&=& \bE_{X,Y}\left[ X_\ssf Y_\ssg | \ssf, \ssg \right], 
\end{eqnarray*}
and therefore it follows that
\begin{equation} \label{eq:brownian-cov}
\bE_{\ssf,\ssg} \cov^2(\ssf(X),\ssg(Y)) = \bE_{\ssf,\ssg,(X,Y),(X',Y')} \left[ X_\ssf X'_\ssf Y_\ssg Y'_\ssg \right],
\end{equation}
where $(X',Y')$ is an independent copy of the joint random variable $(X,Y)$, and $X'_\ssf$ and $Y'_\ssg$ are defined similarly to (\ref{def:centered_X}) and (\ref{def:centered_Y}). The right side in \eqref{eq:brownian-cov} coincides with the definition of a dependence measure given by \citet[Definition 5]{SzeRiz09}, where they consider as $\ssf$ and $\ssg$ arbitrary stochastic processes on Euclidean spaces.
Specifically, the right side in \eqref{eq:brownian-cov} is the definition of the Brownian covariance \citep[Definition 4]{SzeRiz09}, if $\cX = \Re^p$ and $\cY = \Re^q$ for $p, q \in \mathbb{N}$ and if $\ssf$ and $\ssg$ are respectively the Brownian motions with the covariance kernels $k$ and $\ell$ given by
\begin{eqnarray}
k(x,x') &:=& \| x \| + \| x' \| - 2 \| x - x' \|, \quad x,x' \in \Re^p, \label{eq:cov-dist1} \\
\ell(y,y') &:=& \| y \| + \| y' \| - 2 \| y - y' \|, \quad y,y' \in \Re^q. \label{eq:cov-dist2}  
\end{eqnarray}
In this case, the Brownian covariance is further identical to the {\em distance covariance} \citep[Theorem 8]{SzeRiz09}, a nonparametric measure of dependence for random variables taking values in Euclidean spaces \citep[Definition 1]{SzeRiz09}.
Therefore our result implies that HSIC is identical to the distance covariance, when the kernels are given by \eqref{eq:cov-dist1} and \eqref{eq:cov-dist2}; we thus have recovered the result of \citet[Theorem 24]{SejSriGreFuk13} based on the probabilistic interpretation of HSIC.

\section{Conclusions}

In machine learning, statistics and numerical analysis, both the notion of a kernel and that of a Gaussian process play central roles in theoretical analysis. 
In fact,  they are so central in machine learning that they may be seen as placeholders for statistical learning theory and Bayesian analysis, the two mathematical frameworks that have historically provided the theoretical foundation of the field.
Kernel methods are founded on notions like regularization and optimization, while Gaussian processes are generative models operating in terms of marginal and conditional distributions. 
The present text provided a review of the intersection of these two areas, covering both fundamental equivalences and differences. It is important to clarify and understand these relationships to facilitate the transfer of knowledge and methods from one side to the other. At a time when machine learning is arguably expecting the emergence of a third, still only vaguely discernible new theoretical foundation in particular for deep models, this paper is also an opportunity to note that ``frequentist'' and ``Bayesian'' statistics are not always as different from each other as they may appear at first sight. 
%We hope that by the connections between the two sides outlined in this text may be beneficial to researchers trying to transfer results between the two camps.
We hope that this contribution is an important step towards developing a common language between the two fields, which will lead to further advances in each field, which otherwise would have been much more difficult to achieve.
\begin{comment}

It is also important to understand differences between the frequentist and GP approaches.

design of a
important differences, non-square loss, SVM, classification problems

model selection, cross validation, empirical Bayes

Computational Methods for Scaling Up
For scaling up GP-based methods, a popular approach is sparse Gaussian processes or inducing point methods, which optimize the locations of a small number of pseudo inputs so that the marginal likelihood is maximized.
One question is whether there exists a frequentist interpretation for this approach.

On the other hand, in the both regimes, low rank approximation methods for kernel matrices (such as incomplete Cholesky or Nystr\"om

Frequentist interpretation of inducing point methods?
Bayesian interpretation of low rank approximation such as nystrom methods?
random Fourier features, spectral learning

\paragraph{Normalized Cross-Covariance Operators (NOCCO)}
Kernel CCA,...

\subsubsection{Infinite Dimensional Exponential Families}
Vector-valued regression, Fisher divergence,

\citep[Eq.(7.3)]{Par61}
$$
p_\ssf(x) \propto \exp( \ssf(x) ), \quad \ssf \sim \GP(0,k)
$$
Posterior mean $=$ Bharath's estimator?
What is the meaning of regularization?

\verb|https://arxiv.org/pdf/1703.09531.pdf|

\verb|https://www.jstor.org/stable/2984749?seq=1#page_scan_tab_contents|

\verb|https://www.jstor.org/stable/2288870?seq=1#page_scan_tab_contents|

\verb|https://projecteuclid.org/euclid.aos/1211819570|

\end{comment}

\subsection*{Acknowledgements}
We would like to thank Mark van der Wilk for fruitful discussions.
The original idea for this manuscript arose during Workshop 16481 of the Leibniz-Centre for Computer Science at Schlo{\ss} Dagstuhl. The authors would like to express the Centre for their hospitality and support. MK and PH acknowledge support by the European Research Council (StG Project PANAMA). BKS is supported by NSF-DMS-1713011. DS is supported in part by The Alan Turing Institute (EP/N510129/1) and by the ERC (FP7/617071). All authors except the first are arranged in an alphabetical order.

%Our first motivation was originated from the Dagstuhl Seminar (16481) ``New Directions for Learning with Kernels and Gaussian Processes''...
%\textcolor{red}{To be completed by the rest.}

\appendix

\section{Proofs}

\subsection{Proof of Lemma \ref{lemma:RKHS-norm-worst}}
\label{sec:proof-norm-worst}

\begin{proof}
By the reproducing property, the right side of \eqref{eq:RKHS-norm-worst} can be written as
\begin{equation} \label{eq:norm-worst-proof}
 \sup_{\| f \|_{\cH_k} \leq 1} \sum_{i=1}^m c_i f(x_i) = \sup_{\| f \|_{\cH_k} \leq 1} \left< \sum_{i=1}^m c_i k(\cdot,x_i), f \right>_{\cH_k}.
\end{equation}
By the Cauchy-Schwartz inequality, the right side of this equality is upper-bounded as 
$$
\sup_{\| f \|_{\cH_k} \leq 1} \left< \sum_{i=1}^m c_i k(\cdot,x_i), f \right>_{\cH_k} \leq \sup_{\| f \|_{\cH_k} \leq 1} \left\| \sum_{i=1}^m c_i k(\cdot,x_i) \right\|_{\cH_k} \left\| f \right\|_{\cH_k} = \left\| \sum_{i=1}^m c_i k(\cdot,x) \right\|_{\cH_k}.
$$
On the other hand, defining $g := \sum_{i=1}^m c_i k(\cdot,x_i) /  \left\| \sum_{i=1}^m c_i k(\cdot,x) \right\|_{\cH_k}$, we have $\| g \|_{\cH_k} = 1$, and thus the right side of \eqref{eq:norm-worst-proof} can be lower-bounded as
$$
\sup_{\| f \|_{\cH_k} \leq 1} \left< \sum_{i=1}^m c_i k(\cdot,x_i), f \right>_{\cH_k} \geq \left< \sum_{i=1}^m c_i k(\cdot,x_i), g \right>_{\cH_k} =  \left\| \sum_{i=1}^m c_i k(\cdot,x) \right\|_{\cH_k}.
$$
The assertion follows from \eqref{eq:norm-worst-proof} and these lower and upper bounds.
\end{proof}

\begin{comment}
\subsection{Proof for Example \ref{ex:square-exp-mercer}}
\label{sec:proof_ex_sq_expansion}
\red{
\begin{proof}
From \citet[Eq.~4.35 and Theorem 4.42]{Steinwart2008}, the orthonormal basis $(\psi_i)_{i=1}^\infty$ of the RKHS $\cH_{k_\gamma}$ of the square-exponential kernel $k_\gamma$ are given 
\begin{equation} \label{eq:basis-sq-RKHS}
\psi_i(x) = \sqrt{\frac{2^i}{ \gamma^{2i} i! }}  x^i \exp \left( - \frac{x^2}{\gamma^2} \right), \quad i = 1,2,\dots .
\end{equation}
Let $T_{k_\gamma}: L_2(\cX) \to L_2(\cX)$ be the integral operator defined as $T_{k_\gamma} f = \int k(\cdot,x)f(x)dx$ for $f \in L_2(\cX)$, and denote by $T_{k_\gamma}^{1/2}$ the operator such that $T_{k_\gamma}^{1/2} T_{k_\gamma}^{1/2} = T_{k_\gamma}$.
From \citet[Theorem 4.51]{Steinwart2008},  $T_{k_\gamma}^{1/2}$ is an isometric isomorphism from $L_2(\cX)$ to $\cH_{k_\gamma}$, and it holds that $\psi_i = T_{k_\gamma}^{1/2} \phi_i = \lambda_i^{1/2} \phi_i$, where $\phi_i$ are orthonormal eigenfunctions of $T_{k_\gamma}$ and $\lambda_i$ are the associated eigenvalues.
Therefore, 
$$
\left\| \psi_i \right\|_{L_2(\cX)}^2 = \int_\cX \psi_i(x)^2 dx = \lambda_i \int_\cX \phi_i(x)^2 dx = \lambda_i. 
$$
The assertion then follows from this identity and Eq.~(\ref{eq:basis-sq-RKHS}).
\end{proof}
}

\end{comment}

%\subsection{Proof of Proposition \ref{prop:wce_pvar}}

%\subsection{Proof of Proposition \ref{prop:post-var-worst-noisy}}
%\label{sec:proof-post-var-worst-noisy}

\subsection{Proof of Corollary \ref{coro:gauss-sample-path}} \label{sec:proof-gp-sample-path-se}
%\blue{
For Banach spaces $A$ and $B$, we denote by $A \hrarrow B$ that $A \subset B$ and that the inclusion is continuous.
We first need to the notion of {\em interpolation spaces}; for details, see e.g.,~\citet[Section 7.6]{AdaFou03}, \citet[Section 5.6]{Steinwart2008}, \citet[Section 4.5]{CucZho07} and references therein.
\begin{definition}
Let $E$ and $F$ be Banach spaces such that $E \hrarrow F$.
Let $K: E \times \Re_+ : \to \Re_+$ be the $K$-functional defined by
$$
K(x,t) := K(x,t,E,F) :=  \inf_{y \in F} \left( \| x - y \|_E + t \| y \|_F \right), \quad x \in E,\ t > 0.
$$
Then for $0 < \theta \leq 1$, the interpolation space $[E, F]_{\theta,2}$ is a Banach space defined by
$$
[E, F]_{\theta,2} := \left\{ x \in E:\ \| x \|_{[E, F]_{\theta,2}} < \infty \right\},
$$
where the norm is defined by  
$$
\| x \|_{[E, F]_{\theta,2}}^2 := \int_0^\infty \left( t^{-\theta} K(x,t) \right)^2 \frac{dt}{t}.
$$
\end{definition}
We will need the following lemma.
\begin{lemma} \label{lemma:interp-space-embed}
Let $E$, $F$ and $G$ be Banach spaces such that $G \hrarrow F \hrarrow E$.
Then we have $[E,G]_{\theta,2} \hrarrow [E,F]_{\theta,2}$ for all $0 < \theta \leq 1$.
\end{lemma}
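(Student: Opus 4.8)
The plan is to unwind the definitions and show that the norm inequality follows directly from monotonicity of the $K$-functional. The statement to prove is that $G \hookrightarrow F \hookrightarrow E$ implies $[E,G]_{\theta,2} \hookrightarrow [E,F]_{\theta,2}$ for all $0<\theta\le 1$. Since both interpolation spaces are defined as subsets of the same ambient space $E$ carved out by a finiteness condition on an integral of the squared $K$-functional, it suffices to compare the two $K$-functionals $K(x,t,E,F)$ and $K(x,t,E,G)$ pointwise in $t$, and then conclude a norm bound and hence the continuous inclusion.

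The key observation is as follows. Because $G \hookrightarrow F$, there is a constant $c>0$ with $\|y\|_F \le c\,\|y\|_G$ for all $y\in G$. Fix $x\in E$ and $t>0$. In the infimum defining $K(x,t,E,G)$ we range over $y\in G$, whereas in $K(x,t,E,F)$ we range over $y\in F\supseteq G$. For any $y\in G$ we have
$$
\|x-y\|_E + t\,\|y\|_F \le \|x-y\|_E + t\,c\,\|y\|_G,
$$
and since the infimum defining $K(x,t,E,F)$ is over the larger set $F$, we obtain
$$
K(x,t,E,F) = \inf_{y\in F}\bigl(\|x-y\|_E + t\,\|y\|_F\bigr) \le \inf_{y\in G}\bigl(\|x-y\|_E + t\,\|y\|_F\bigr) \le \inf_{y\in G}\bigl(\|x-y\|_E + tc\,\|y\|_G\bigr).
$$
The rightmost expression is $K(x,tc,E,G)$ up to the rescaling of $t$ inside the second term; more precisely $\inf_{y\in G}(\|x-y\|_E + tc\|y\|_G) = K(x,tc,E,G)$. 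Thus $K(x,t,E,F) \le K(x,tc,E,G)$ for every $t>0$.

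First I would insert this pointwise bound into the integral defining $\|x\|_{[E,F]_{\theta,2}}^2$ and perform the substitution $s=tc$ to absorb the constant $c$. Concretely,
$$
\|x\|_{[E,F]_{\theta,2}}^2 = \int_0^\infty \bigl(t^{-\theta}K(x,t,E,F)\bigr)^2\,\frac{dt}{t} \le \int_0^\infty \bigl(t^{-\theta}K(x,tc,E,G)\bigr)^2\,\frac{dt}{t},
$$
and substituting $s=tc$ (so $dt/t = ds/s$ and $t^{-\theta}=c^{\theta}s^{-\theta}$) gives
$$
\int_0^\infty \bigl(t^{-\theta}K(x,tc,E,G)\bigr)^2\,\frac{dt}{t} = c^{2\theta}\int_0^\infty \bigl(s^{-\theta}K(x,s,E,G)\bigr)^2\,\frac{ds}{s} = c^{2\theta}\,\|x\|_{[E,G]_{\theta,2}}^2.
$$
Hence $\|x\|_{[E,F]_{\theta,2}} \le c^{\theta}\,\|x\|_{[E,G]_{\theta,2}}$ for all $x\in[E,G]_{\theta,2}$. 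This simultaneously shows that finiteness of the right-hand side forces finiteness of the left (so $[E,G]_{\theta,2}\subseteq[E,F]_{\theta,2}$ as sets) and that the inclusion map is bounded with norm at most $c^{\theta}$, which is precisely the continuous embedding $[E,G]_{\theta,2}\hookrightarrow[E,F]_{\theta,2}$.

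I do not anticipate a serious obstacle here; the proof is essentially bookkeeping with the $K$-functional. The one point requiring mild care is the handling of the embedding constant $c$ and ensuring the substitution $s=tc$ is carried out cleanly with the invariant measure $dt/t$, so that $c$ enters only through the harmless factor $c^{\theta}$. One should also note explicitly that the infimum over the smaller set $G$ dominates the infimum over $F$, which is where the direction of the embedding $G\hookrightarrow F$ is used; reversing the roles would fail. Everything else is a direct application of the definitions stated in the excerpt.
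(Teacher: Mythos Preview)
Your proof is correct and follows essentially the same route as the paper: bound $K(x,t,E,F)\le K(x,ct,E,G)$ using $G\hookrightarrow F$ and the larger infimum domain, then substitute $s=ct$ in the defining integral. Your constant $c^{2\theta}$ is in fact the correct one; the paper's proof records $c^{-2\theta}$, which is a minor sign slip in the exponent that does not affect the conclusion.
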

\begin{proof}
First note that, since $G \hrarrow F$, there exists a constant $c > 0$ such that $ \| z \|_{F} \leq c \| z \|_G$ holds for all $z \in G$.
Therefore, for all $x \in E$ and $t > 0$, we have
\begin{eqnarray*}
K(x, t,E,F) 
&=&  \inf_{y \in F} \left( \| x - y \|_E +  t \| y \|_F \right) \\
&\leq& \inf_{z \in G} \left( \| x - z \|_E +  t \| z \|_F \right) \\
&\leq& \inf_{z \in G} \left( \| x - z \|_E +  c t \| z \|_G \right) = K(x,ct,E,G). 
\end{eqnarray*}
Thus, we have
\begin{eqnarray*}
\| x \|_{[E, F]_{\theta,2}}^2 &=&  \int_0^\infty \left( t^{-\theta} K(x,t,E,F) \right)^2 \frac{dt}{t} \\
&\leq&  \int_0^\infty \left( t^{-\theta} K(x,ct,E,G) \right)^2 \frac{dt}{t} \\
&=& c^{-2\theta} \int_0^\infty \left( s^{-\theta} K(x,s,E,G) \right)^2 \frac{ds}{s} \quad (s := c t) \\
&=& c^{-2\theta} \| x \|_{[E, G]_{\theta,2}}^2, \quad \quad x \in [E, G]_{\theta,2}.
\end{eqnarray*} 
which implies the assertion.
\end{proof}
We are now ready to prove Corollary \ref{coro:gauss-sample-path}.
%}
\begin{proof}
%\blue{
Fix $\theta \in (0,1)$.
First we show that $\sum_{i = 1}^\infty \lambda_i^\theta \phi_i^2(x) < \infty$ holds for all $x \in \cX$, which implies that the power of RKHS $\cH_{k_\gamma}^\theta$ is well defined.
To this end, by \citet[Theorem 2.5]{Ste17}, it is sufficient to show that 
\begin{equation} \label{eq:interp-sup}
[L_2(\nu), \cH_{k_\gamma} ]_{\theta,2} \hrarrow L_\infty(\nu),
\end{equation}
where $L_2(\nu)$ and $L_\infty(\nu)$ are to be understood as quotient spaces with respect to $\nu$, and $\cH_{k_\gamma}$ as the embedding in $L_2(\nu)$; see \citet[Section 2]{Ste17} for precise definition.
%}

%\blue{
Let $m \in \mathbb{N}$ be such that $\theta m > d /2$, and $W_2^m(\cX)$ be the Sobolev space of order $m$ on $\cX$. 
By \citet[Theorem 4.48]{Steinwart2008}, we have $\cH_{k_\gamma} \hrarrow W_2^m(\cX)$. 
Therefore Lemma \ref{lemma:interp-space-embed} implies that  $[ L_2(\nu), \cH_\gamma]_{\theta, 2} \hrarrow [ L_2(\nu), W_2^m(\cX)]_{\theta, 2}$.
Note that, since $\nu$ is the Lebesgue measure, $[ L_2(\nu), W_2^m(\cX)]_{\theta, 2}$ is the Besov space $B_{22}^{\theta m}(\cX)$ of order $\theta m$ \citep[Section 7.32]{AdaFou03}.
Since $\cX$ is a bounded Lipschitz domain and $\theta m > d/2$, we have $B_{22}^{\theta m}(\cX) \hrarrow L_\infty(\nu)$ \citep[Proposition 4.6]{Tri06}; see also \citet[Theorem 7.34]{AdaFou03}.
Combining these embeddings implies \eqref{eq:interp-sup}.
%}

%\blue{
We next show that $\sum_{i = 1}^\infty \lambda_i^{1-\theta} < \infty$, which implies the assertion by Theorem \ref{theo:gp-path-power-RKHS}.
To this end, for $i \in \mathbb{N}$,  define the $i$-th (dyadic) entropy number of the embedding ${\rm id}: \cH_{k_\gamma} \to L_2(\nu)$ by
$$
\varepsilon_i({\rm id}: \cH_{k_\gamma} \to L_2(\nu)) := \inf \left\{ \varepsilon > 0: \exists\ (h_j)^{2^{i-1}}_{j=1} \subset L_2(\nu)\ {\rm s.t.}\ B_{\H_{k_\gamma}} \subset \bigcup_{j=1}^{2^{i-1}} (h_j + \varepsilon B_{L_2(\nu)})  \right\},
$$
where $B_{\H_{k_\gamma}}$ and $B_{L_2(\nu)}$ denote the centered unit balls in $\H_{k_\gamma}$ and $B_{L_2(\nu)}$, respectively.
Note that since $\cX$ is bounded, there exists a ball of radius $r \geq \gamma$ that contains $\cX$.
From \citet[Theorem 12]{MeiSte17}, for all $p \in (0,1)$, we have
$$
\varepsilon_i({\rm id}: \cH_{k_\gamma}(\cX) \to L_2(\nu)) \leq c_{p,d,r,\gamma}\ i^{-1/2p}, \quad i \geq 1.
$$
$c_{p,d,r,\gamma} > 0$ is a constant depending only on $p$, $d$, $r$ and $\gamma$.
Using this inequality, the $i$-th largest eigenvalue $\lambda_i$ is upper-bounded as
$$
\lambda_i \leq 4 \varepsilon_i^2 ({\rm id}: \cH_{k_\gamma} \to L_2(\nu)) \leq 4 c_{p,d,r,\gamma}^2 i^{-1/p}, \quad i \geq 1,
$$
where the first inequality follows from \citet[Lemma 2.6 Eq.~23]{Ste17}; this lemma is applicable since we have $\int_\cX k_\gamma(x,x)d\nu(x) < \infty$ and thus the embedding ${\rm id}: \cH_{k_\gamma} \to L_2(\nu)$ is compact \citep[Lemma 2.3]{SteSco12}.
Therefore we have
$$
\sum_{i=1}^\infty \lambda_{i}^{1-\theta} < (4 c_{p,d,r,\gamma}^2)^{1-\theta} \sum_{i=1}^\infty  i^{-(1-\theta)/p}. 
$$
The right side is bounded, if we take $p \in (0,1)$ such that $1-\theta > p$. 
This implies $\sum_{i=1}^\infty \lambda_{i}^{1-\theta} < \infty$.
%}
\end{proof}

\subsection{Proof of Proposition \ref{prop:HSIC-GP}}

\begin{proof}
Since we have the identity \eqref{eq:brownian-cov}, it is sufficient to prove that the right side of \eqref{eq:brownian-cov} is equal to the HSIC \eqref{eq:HSIC-def}.
First note that
\begin{eqnarray*}
&& X_\ssf X'_\ssf = 
\left( \ssf(X) - \int \ssf(x)dP_\cX(x) \right) \left( \ssf(X') - \int \ssf(x)dP_\cX(x) \right) \\
&& = 
 \ssf(X) \ssf(X') - \int \ssf(x) \ssf(X') dP_\cX(x)   - \int \ssf(X) \ssf(x)dP_\cX(x)  + \int \int \ssf(x) \ssf(x') dP_\cX(x) dP_\cX(x').
\end{eqnarray*}
By using the identity $k(x,x') = \bE_\ssf [\ssf(x)\ssf(x')]$ for $x,x' \in \cX$, we then obtain
\begin{eqnarray*}
&& \bE_\ssf[  X_\ssf X'_\ssf | X,X',Y,Y' ] \\
&=& k(X,X') - \int k(x,X')dP_\cX(x) - \int k(X,x)dP_\cX(x) + \int \int k(x,x') dP_\cX(x) dP_\cX(x') \\
&=& k(X,X') - \mu_{P_\cX}(X') - \mu_{P_\cX}(X) + \left\| \mu_{P_\cX} \right\|_{\cH_\cX}^2 \\
&=& \left< k(\cdot,X) - \mu_{P_\cX}, k(\cdot,X') - \mu_{P_\cX} \right>_{\cH_\cX}.
\end{eqnarray*}
Similarly, one can show that
$$
\bE_\ssf[  Y_\ssg Y'_\ssg | X,X',Y,Y' ] = \left< \ell(\cdot,Y) - \mu_{P_\cY}, \ell(\cdot,Y') - \mu_{P_\cY} \right>_{\cH_\cY}.
$$
Define $\overline{k}(\cdot,X):=k(\cdot,X) - \mu_{P_\cX}$ and $\overline{l}(\cdot,Y):=l(\cdot,Y) - \mu_{P_\cY}$. Therefore, the right side of \eqref{eq:brownian-cov} can be written as
\begin{eqnarray*}
&& \bE[ X_\ssf X'_\ssf  Y_\ssg Y'_\ssg ] \\
&=& \bE_{X, X', Y, Y'} \left[ \bE_{\ssf} \left[ X_\ssf X'_\ssf | X, X', Y, Y'  \right]\ \bE_{\ssg} \left[  Y_\ssg Y'_\ssg | X, X', Y, Y' \right] \right] \\
&=& \bE_{X, X', Y, Y'}  \left[ \left< k(\cdot,X) - \mu_{P_\cX}, k(\cdot,X') - \mu_{P_\cX} \right>_{\cH_\cX} \left< \ell(\cdot,Y) - \mu_{P_\cY}, \ell(\cdot,Y') - \mu_{P_\cY} \right>_{\cH_\cY} \right] \\
&=& \bE_{X, X', Y, Y'} \left[ \left< \overline{k}(\cdot,X) \otimes \overline{\ell}(\cdot,Y), \overline{k}(\cdot,X') \otimes \overline{\ell}(\cdot,Y') \right>_{\cH_\cX \otimes \cH_\cY} \right] \\
&=&  \left< \bE_{X,Y}  \left[ \overline{k}(\cdot,X) \otimes \overline{\ell}(\cdot,Y) \right], \bE_{X',Y'} \left[ \overline{k}(\cdot,X') \otimes \overline{\ell}(\cdot,Y') \right] \right>_{\cH_\cX \otimes \cH_\cY} \\
&=&  \left\| \bE_{X,Y}  \left[ \overline{k}(\cdot,X) \otimes \overline{\ell}(\cdot,Y) \right]  \right\|^2_{\cH_\cX \otimes \cH_\cY} \\
&=& \left\| \bE_{X,Y} \left[ k(\cdot,X) \otimes \ell(\cdot,Y) - k(\cdot,X) \otimes \mu_{P_\cY} - \mu_{P_\cX} \otimes \ell(\cdot,Y) + \mu_{P_\cX} \otimes \mu_{P_\cY} \right]  \right\|^2_{\cH_\cX \otimes \cH_\cY} \\
&=& \left\| \mu_{P_{\cX \times \cY}} - \mu_{P_\cX} \otimes \mu_{P_\cY} \right\|^2_{\cH_\cX \otimes \cH_\cY} = \HSIC(X,Y).
\end{eqnarray*}
\end{proof}

\bibliographystyle{abbrvnat}
\bibliography{bibfile}

\end{document}